\tikzstyle{ellipsoid} = [draw, ellipse, minimum height=3em, minimum width=3em]
\tikzstyle{block} = [draw, rectangle, minimum height=3em, minimum width=3em]
\tikzstyle{round} = [draw, circle, minimum height=3em, minimum width=2em]
\tikzstyle{virtual} = [coordinate]
\newcommand\R{\mathbb{R}}
\newcommand\W{\mathbf{W}}
\newcommand\argmin{\operatorname*{\arg\min}}
\newcommand{\1}{\mathsf{(i)}}
\newcommand{\2}{\mathsf{(ii)}}
\newcommand{\3}{\mathsf{(iii)}}
\newcommand{\4}{\mathsf{(iv)}}
\newcommand{\real}[1]{\mathbb{R}^{#1}}
\newcommand{\sgn}{\mathsf{sgn}}
\newcommand{\tran}{\ensuremath{^\top}}
\newcommand{\hessian }[1]{\ensuremath{\nabla^2}}
\newcommand{\taninv}[1]{\ensuremath{\mathsf{tan}^{-1}\left(#1\right)}}
\newcommand{\alphahat}{\ensuremath{\widehat{\alpha}}}
\newcommand{\survivalb}{\ensuremath{\mathsf{SU}}}
\newcommand{\contaminationb}{\ensuremath{\mathsf{CN}}}
\newcommand{\normalconstant}[1]{\ensuremath{\frac{1}{\sqrt{2 \pi}}}}
\newcommand{\normal}[2]{\ensuremath{\mathcal{N}\left( #1, #2\right)}}
\newcommand{\snr}{\ensuremath{\mathsf{SNR}}}
\newcommand{\truek}{\ensuremath{t}}
\newtheorem{assumption}{Assumption}
\newcommand{\inprod}[2]{\ensuremath{\langle #1 , \, #2 \rangle}}
\newcommand{\Xvec}{\ensuremath{\mathbf{X}}}
\newcommand{\avec}{\ensuremath{\boldsymbol{\phi}}}
\newcommand{\z}{\ensuremath{\mathbf{z}}}
\newcommand{\y}{\ensuremath{\mathbf{y}}}
\newcommand{\alphastar}{\ensuremath{\boldsymbol{\alpha^{*}}}}
\newcommand{\betastar}{\ensuremath{\nu^{*}}}
\newcommand{\alphabold}{\ensuremath{\boldsymbol{\alpha}}}
\newcommand{\alphahatbold}{\ensuremath{\boldsymbol{\widehat{\alpha}}}}
\newcommand{\betabold}{\ensuremath{\boldsymbol{\beta}}}
\newcommand{\sumjdk}{\ensuremath{\sum\limits_{j=1, j \neq \truek}^d}}
\newcommand{\A}{\ensuremath{\mathbf{A}}}
\newcommand{\eig}{\mu}
\newcommand{\M}{\ensuremath{\mathbf{M}}}
\newcommand{\regloss}{\ensuremath{\mathcal{R}}}
\newcommand{\classloss}{\ensuremath{\mathcal{C}}}
\newcommand{\PP}{\ensuremath{\mathbb{P}}}
\newcommand{\EE}{\ensuremath{\mathbb{E}}}
\newcommand{\Sigmabold}{\ensuremath{\boldsymbol{\Sigma}}}
\newcommand{\lambdabold}{\ensuremath{\boldsymbol{\lambda}}}
\newcommand{\evec}{\ensuremath{\mathbf{e}}}
\newcommand{\Atrain}{\ensuremath{\boldsymbol{\Phi}_{\mathsf{train}}}}
\newcommand{\Ainv}{\ensuremath{\A^{-1}}}
\newcommand{\E}{\ensuremath{\mathbf{E}}}
\newcommand{\xtrain}{\mathbf{X}_{\mathsf{train}}}
\newcommand{\Ytrain}{\mathbf{Y}_{\mathsf{train}}}
\newcommand{\Ztrain}{\mathbf{Z}_{\mathsf{train}}}
\newcommand{\lambdatd}{\ensuremath{{\widetilde{\lambda}}}}
\newcommand{\Sigmaboldtd}{\ensuremath{{\Sigmabold}_{-\truek}}}
\newcommand{\Amk}{\ensuremath{\mathbf{A}_{-\truek}}}
\newcommand{\vvec}{\mathbf{v}}
\newcommand{\uvec}{\mathbf{u}}
\newcommand{\Amkinv}{\ensuremath{\Amk^{-1}}}
\newcommand{\Ctd}{\ensuremath{\mathbf{\widetilde{C}}}}
\newcommand{\B}{\ensuremath{\mathbf{B}}}
\newcommand{\C}{\ensuremath{\mathbf{C}}}
\newcommand{\lambdatk}{\ensuremath{\lambda_{\truek}}}
\newcommand{\tr}{\ensuremath{\mathsf{tr}}}
\newcommand{\sumjd}{\ensuremath{\sum\limits_{j=1}^d}}
\newcommand{\e}{\ensuremath{\mathbf{e}}}
\newcommand{\ztk}{\ensuremath{\z_{\truek}}}
\newcommand{\ytk}{\ensuremath{\y_{\truek}}}
\newcommand{\npm}[2]{\ensuremath{n^{-\frac{#1}{#2}}}}
\newcommand{\np}[2]{\ensuremath{n^{\frac{#1}{#2}}}}
\newcommand{\lambdasmall}{\ensuremath{\frac{(1-\gamma)d}{d-s}}}
\newcommand{\lambdabig}{\ensuremath{\frac{\gamma d}{s}}}
\newcommand{\survivalupperbound}{\ensuremath{\survival^U(n)}}
\newcommand{\survivallowerbound}{\ensuremath{\survival^L(n)}}
\newcommand{\survivalupperboundreal}{\ensuremath{\survivalr^U(n)}}
\newcommand{\oneminussurvivallowerboundreal}{\ensuremath{\widebar{\survivalr}^L(n)}}
\newcommand{\oneminussurvivalupperboundreal}{\ensuremath{\widebar{\survivalr}^U(n)}}
\newcommand{\survivallowerboundreal}{\ensuremath{\survivalr^L(n)}}
\newcommand{\contaminationupperbound}{\ensuremath{\contamination^U(n)}}
\newcommand{\SUClower}{\ensuremath{\snr^L(n)}}
\newcommand{\SUCupper}{\ensuremath{\snr^U(n)}}
\newcommand{\contaminationupperboundreal}{\ensuremath{\contaminationr^U(n)}}
\newcommand{\contaminationlowerboundreal}{\ensuremath{\contaminationr^L(n)}}
\newcommand{\phibold}{\ensuremath{\boldsymbol{\phi}}}
\newcommand{\Mbold}{\ensuremath{\mathbf{M}}}
\newcommand{\norm}[1]{\left\lVert#1\right\rVert}
\newcommand{\abs}[1]{\left| #1 \right|}
\newcommand{\reglossn}{\regloss(\alphahatbold_{2,\mathsf{real}};n)}
\newcommand{\classlossn}{\classloss(\alphahatbold_{2,\mathsf{binary}};n)}
\newcommand{\survivalr}{\survivalb_r}
\newcommand{\contaminationr}{\contaminationb_r}
\newcommand{\survival}{\survivalb_b}
\newcommand{\contamination}{\contaminationb_b}
\newcommand{\survivaln}{\survival(1;n)}
\newcommand{\contaminationn}{\contamination(1;n)}
\newcommand{\survivalnreal}{\survivalb_r(1;n)}
\newcommand{\contaminationnreal}{\contaminationb_r(1;n)}
\newcommand{\qed}{\ensuremath{\hfill\blacksquare}}
\let\save@mathaccent\mathaccent
\newcommand*\if@single[3]{\setbox0\hbox{${\mathaccent"0362{#1}}^H$}\setbox2\hbox{${\mathaccent"0362{\kern0pt#1}}^H$}\ifdim\ht0=\ht2 #3\else #2\fi
  }
\newcommand*\rel@kern[1]{\kern#1\dimexpr\macc@kerna}
\newcommand*\widebar[1]{\@ifnextchar^{{\wide@bar{#1}{0}}}{\wide@bar{#1}{1}}}
\newcommand*\wide@bar[2]{\if@single{#1}{\wide@bar@{#1}{#2}{1}}{\wide@bar@{#1}{#2}{2}}}
\newcommand*\wide@bar@[3]{\begingroup
  \def\mathaccent##1##2{\let\mathaccent\save@mathaccent
\if#32 \let\macc@nucleus\first@char \fi
\setbox\z@\hbox{$\macc@style{\macc@nucleus}_{}$}\setbox\tw@\hbox{$\macc@style{\macc@nucleus}{}_{}$}\dimen@\wd\tw@
    \advance\dimen@-\wd\z@
\divide\dimen@ 3
    \@tempdima\wd\tw@
    \advance\@tempdima-\scriptspace
\divide\@tempdima 10
    \advance\dimen@-\@tempdima
\ifdim\dimen@>\z@ \dimen@0pt\fi
\rel@kern{0.6}\kern-\dimen@
    \if#31
      \overline{\rel@kern{-0.6}\kern\dimen@\macc@nucleus\rel@kern{0.4}\kern\dimen@}\advance\dimen@0.4\dimexpr\macc@kerna
\let\final@kern#2\ifdim\dimen@<\z@ \let\final@kern1\fi
      \if\final@kern1 \kern-\dimen@\fi
    \else
      \overline{\rel@kern{-0.6}\kern\dimen@#1}\fi
  }\macc@depth\@ne
  \let\math@bgroup\@empty \let\math@egroup\macc@set@skewchar
  \mathsurround\z@ \frozen@everymath{\mathgroup\macc@group\relax}\macc@set@skewchar\relax
  \let\mathaccentV\macc@nested@a
\if#31
    \macc@nested@a\relax111{#1}\else
\def\gobble@till@marker##1\endmarker{}\futurelet\first@char\gobble@till@marker#1\endmarker
    \ifcat\noexpand\first@char A\else
      \def\first@char{}\fi
    \macc@nested@a\relax111{\first@char}\fi
  \endgroup
}
\begin{document}

\title{Classification vs regression in overparameterized regimes: Does the loss function matter?}

\author{\name Vidya Muthukumar$^*$ \email vmuthukumar8@gatech.edu \\
\addr Electrical and Computer Engineering and Industrial and Systems Engineering\\
Georgia Institute of Technology\\
Atlanta, GA-30332, USA
\AND
\name Adhyyan Narang$^*$  \email adhyyan@uw.edu  \\
\addr Department of Electrical and Computer Engineering\\
University of Washington\\
Seattle, WA-98115, USA
\AND
\name Vignesh Subramanian$^*$  \email vignesh.subramanian@eecs.berkeley.edu  \\
\addr Department of Electrical Engineering and Computer Sciences\\
University of California Berkeley\\
Berkeley, CA-94720, USA
\AND
\name Mikhail Belkin \email mbelkin@ucsd.edu   \\
\addr Halicio\u{g}lu Data Science Institute\\
UC San Diego\\
La Jolla, CA-92093, USA
\AND
\name Daniel Hsu \email djhsu@cs.columbia.edu \\
\addr Department of Computer Science and Data Science Institute\\
Columbia University\\
New York, NY-10027, USA
\AND
\name Anant Sahai \email sahai@eecs.berkeley.edu  \\
\addr Department of Electrical Engineering and Computer Sciences\\
University of California Berkeley\\
Berkeley, CA-94720, USA
}

\editor{Simon Lacoste-Julien}

\maketitle

\makeatletter
\def\blfootnote{\xdef\@thefnmark{}\@footnotetext}
\makeatother

\begin{abstract}%
We\blfootnote{$^*$indicates equal contribution among authors who were students at the time of submission. 
Faculty are listed alphabetically after the students.
The key results in this paper were unveiled at the ITA workshop in San Diego in February 2020.} compare classification and regression tasks in an overparameterized linear model with Gaussian features.
On the one hand, we show that with sufficient overparameterization {\it all training points are support vectors}: solutions obtained by least-squares minimum-norm interpolation, typically used for regression, are identical to those produced by the hard-margin support vector machine (SVM) that minimizes the hinge loss, typically used for training classifiers.
On the other hand, we show that there exist regimes where these interpolating solutions generalize well when evaluated by the 0-1 test loss function, but do not generalize if evaluated by the square loss function, i.e.~they approach the null risk.
Our results demonstrate the very different roles and properties of loss functions used at the training phase (optimization) and the testing phase (generalization).
\end{abstract}

\begin{keywords}
  classification, regression, overparameterized, support vector machines, survival, contamination
\end{keywords}

\section{Introduction}

Paradigmatic problems in supervised machine learning (ML) involve predicting an output response from an input, based on patterns extracted from a (training) data set.
In classification, the output response is (finitely) discrete and we need to classify input data into one of these discrete categories.
In regression, the output is continuous, typically a real number or a vector.  Owing to this  important distinction in output response, the two tasks are typically treated differently.
The differences in treatment manifest in two phases of modern ML: optimization (training), which consists of an algorithmic procedure to
extract a predictor from the training data, typically by minimizing the training loss (also called \textit{empirical risk}); and generalization (testing), which consists of an evaluation of the obtained predictor on a separate test, or validation, data set.

Traditionally, the choice of loss functions for both phases is starkly different across classification and regression tasks.
The square loss function is typically used both for the training and the testing phases in regression.
In contrast, the hinge  or logistic (\textit{cross-entropy} for multi-class problems) loss functions are typically used in the training phase of classification, even though the 0-1 loss function is used for testing.
The use of the logistic and hinge losses can be motivated by their computational and statistical properties.
For example, the theory of \emph{surrogate losses}~\citep{zhang2004statistical,bartlett2006convexity,ben2012minimizing} gives theoretical arguments favoring the logistic and hinge losses over other convex surrogates, including the square loss.\footnote{Also see, e.g., Section 8.1.2 in~\citet{Goodfellow-et-al-2016} for a representative informal discussion.}
Yet, there have been indications that the reality is more complex, both in underparameterized and overparameterized regimes of ML.
For example, \citet{rifkin2002everything} extensively compared the hard-margin \textit{support vector machine} (SVM), which minimizes the hinge loss, and \textit{regularized least-squares classification} (RLSC), which minimizes the square loss --- ultimately concluding that ``the performance of the RLSC is essentially equivalent to that of the SVM across a wide range of problems, and the choice between the two should be based on computational tractability considerations.''
Quite similar results\footnote{In fact, while the results were generally close, in a majority of classification tasks models trained using the square loss outperformed models
trained with cross-entropy.} for a comparison between the square loss and cross-entropy loss have recently been obtained in~\citet{hui2021evaluation} for a range of modern neural
architectures and data sets across several application domains.
The latter is the current dominant standard for training neural networks.

In an important separate development, we have recently seen compelling evidence that overparameterized deep neural networks, as well as other models trained to
\textit{interpolate} the data (i.e.~achieve zero, or near zero,
training loss), are capable of good test performance,  questioning
conventional\footnote{For example, Hastie, Tibshirani and Friedman
  say, on page $221$ of their popular statistical learning
  textbook~\citep{friedman2001elements}, that ``a model with zero
  training error is overfit to the training data and will typically
  generalize poorly.''} statistical wisdom~\citep{neyshabur2014search,zhang2016understanding,belkin2018understand, geiger2019jamming,belkin2019reconciling}.
Since then, the theoretical ML community has identified regimes \textit{for regression tasks} under which overfitting is benign ~\citep[to borrow the language from][]{bartlett2020benign}, and interpolating \textit{noisy} data with solutions arising from empirical risk minimization is compatible with good generalization~\citep{bartlett2020benign,belkin2020two,hastie2019surprises,mei2019generalization,muthukumar2020harmless}.
It is worth noting that the ensuing test loss of interpolating solutions is unrelated to their training loss, which is identically zero for all such solutions.
This demonstrates, again, that the relationship between training and test losses --- both for regression and classification tasks --- is more complex than often assumed.

This paper introduces a direct comparison between the different loss functions used in classification and regression, in both the training and testing phases.
We analyze the modern overparameterized regime under the linear model with Gaussian features and uncover a remarkable phenomenon of overparameterized training: in sufficiently overparameterized settings, with high probability, {\it every training data point is a support vector}. 
Consequently, the outcome of optimization (with gradient descent) is the same whether we use the hinge loss, logistic loss, or the square loss.

On the other hand, we show that the choice of test loss function results in a significant asymptotic difference between classification and regression tasks.
In particular, we identify truly overparameterized regimes for which predictors will generalize poorly for regression tasks (measured by the square loss), but well for classification tasks (measured by the 0-1 loss).
The fact that regression and classification can be different has been understood for some time. 
For example, Devroye, Gyorgi and Lugosi point out in Chapter 6.7 of their classic textbook~\citep{devroye2013probabilistic} that \textit{``classification is easier than regression function estimation"}, in reference to sample complexity.
Artificial examples are also given where accurate regression estimates of
the density are impossible given the hypothesis class, but classification succeeds because of the separability of the two classes. In contrast, our paper demonstrates
that in overparameterized regimes, this phenomenon can be quite generic. Approximability is not the underlying issue; rather, it is a consequence of learning.

\subsection{Our contributions}
Our study investigates differences and commonalities  between classification and regression, using the
overparameterized linear model with Gaussian features.
On the side of commonality, we connect the hard-margin-maximizing SVM to the minimum-$\ell_2$-norm interpolator of classification training data (i.e.~binary labels), by showing that they are identical once the degree of ``effective overparameterization" is sufficiently large (Theorem~\ref{thm:everythingsupport}).
This shows that, contrary to the prevailing low-dimensional intuition, there is no difference between maximum margin and least-squares solutions in high-dimensional settings. In particular, using the appropriate optimization methods, minimizing the logistic, hinge, and square loss yield exactly identical predictors.
This phenomenon is not restricted to Gaussian feauturization: in fact, as Figure~\ref{fig:svmlsmaintext} illustrates, we first noticed that all training points tend to become support vectors in the case of Fourier features on regularly spaced training data.
For a detailed description of this ``ultra-toy" model and experiment, see Appendix~\ref{appendix:ultratoy}.

\begin{figure}[t]
  \centering
  \begin{tabular}{p{0.35\textwidth}p{0.35\textwidth}}
    \centering
    \includegraphics[width=0.35\textwidth]{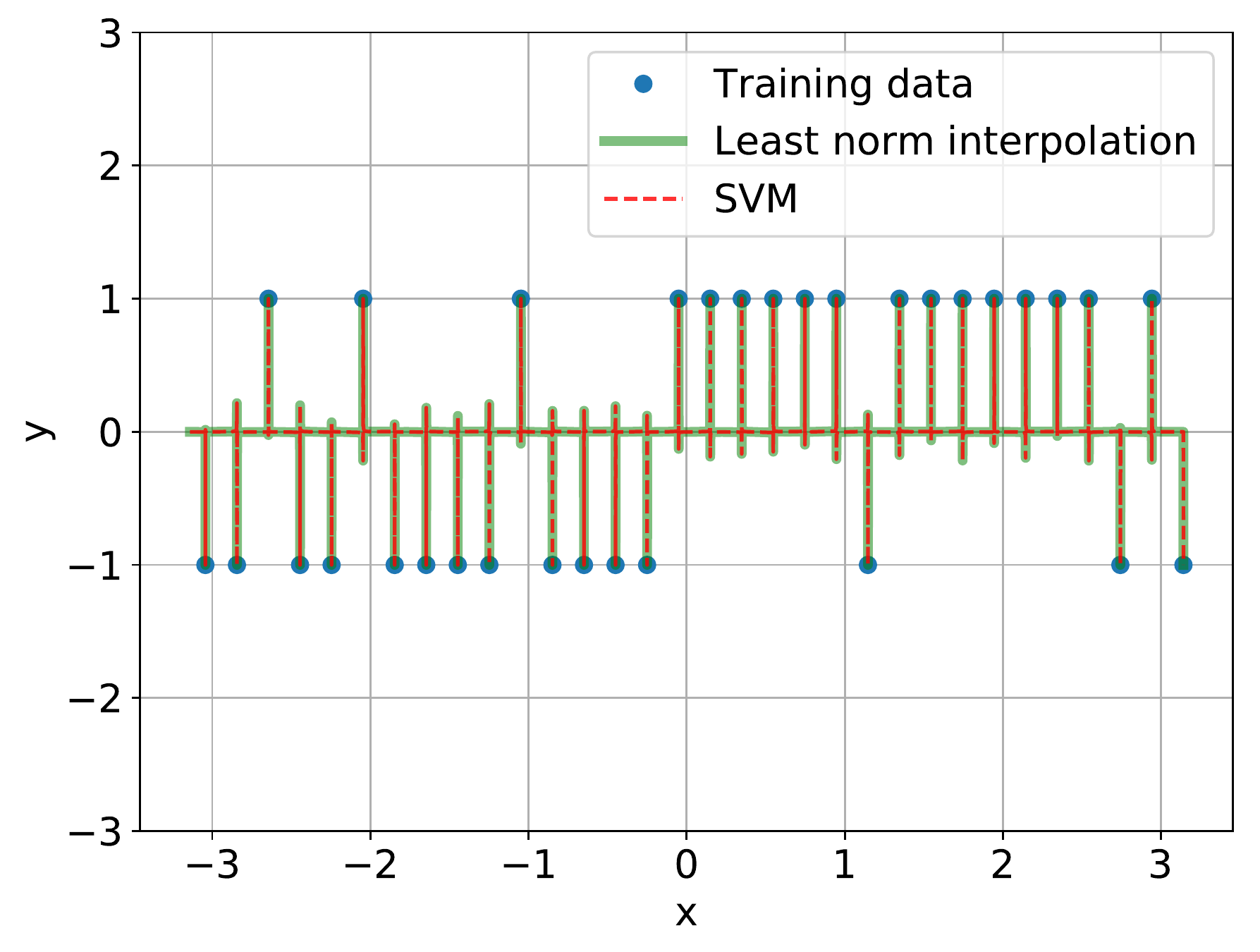}
    &
    \centering
    \includegraphics[width=0.35\textwidth]{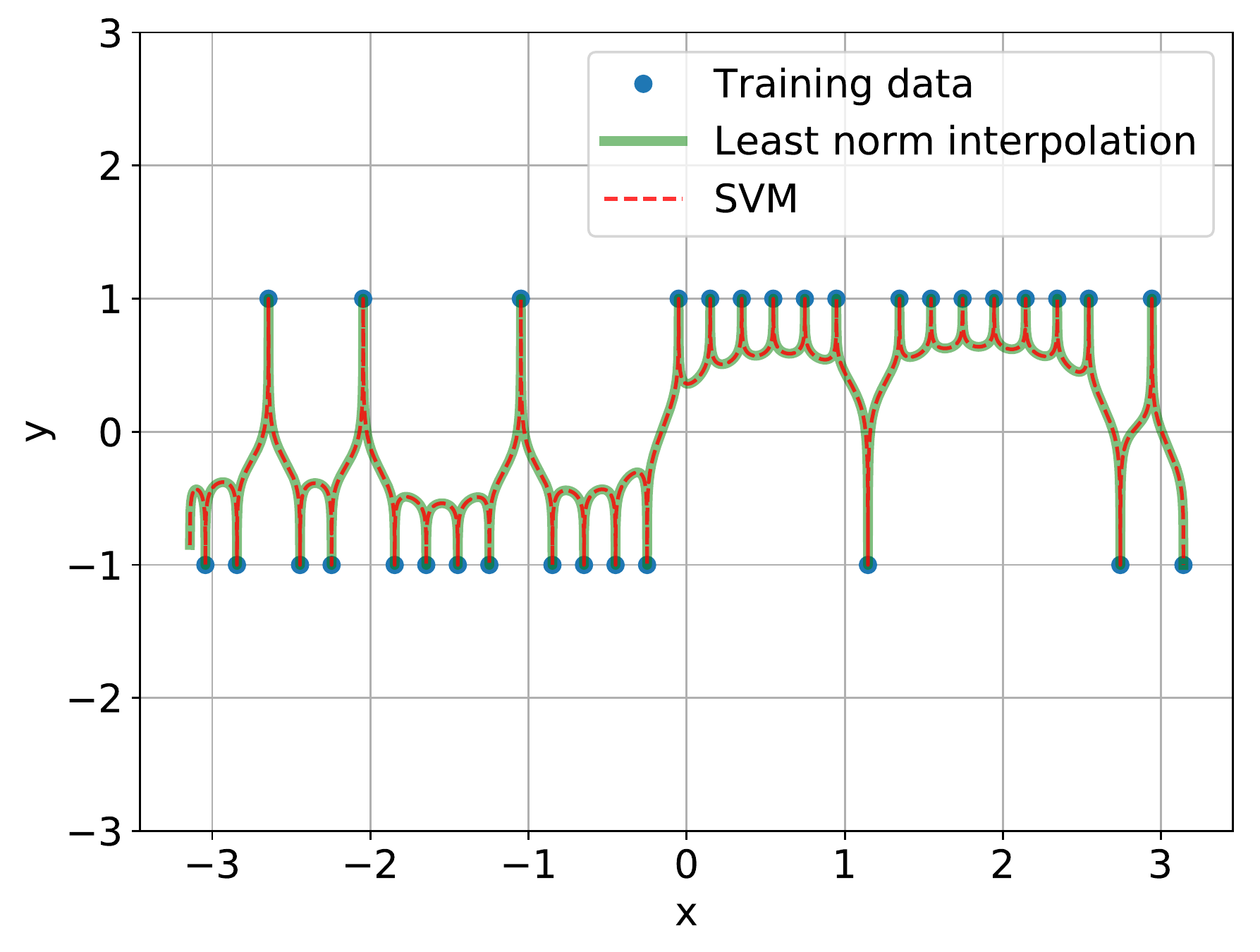}
    \tabularnewline
    \centering
    (a) $m=0$
    &
    \centering
    (b) $m=1$
    \tabularnewline
    \centering
    \includegraphics[width=0.35\textwidth]{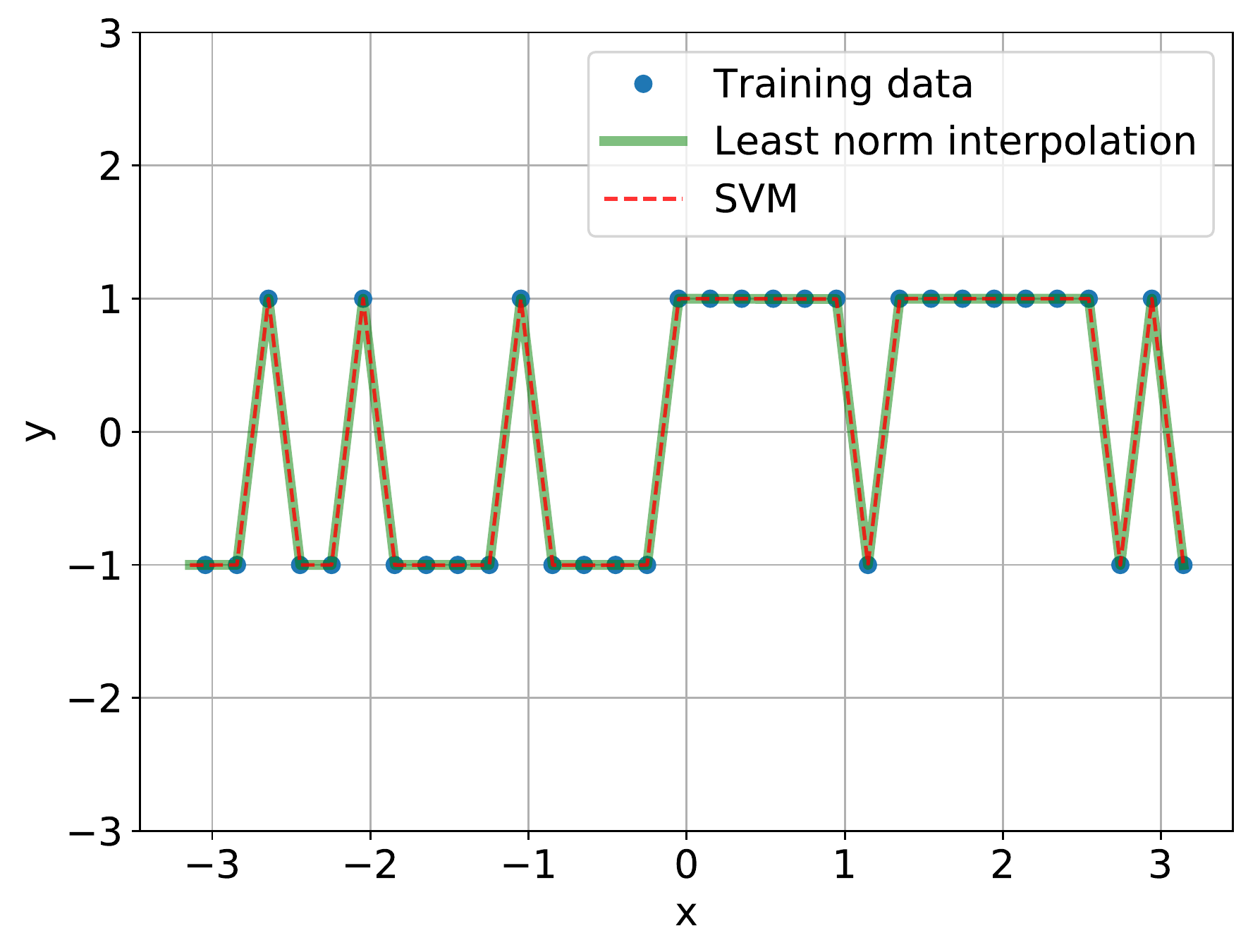}
    &
    \centering
    \includegraphics[width=0.35\textwidth]{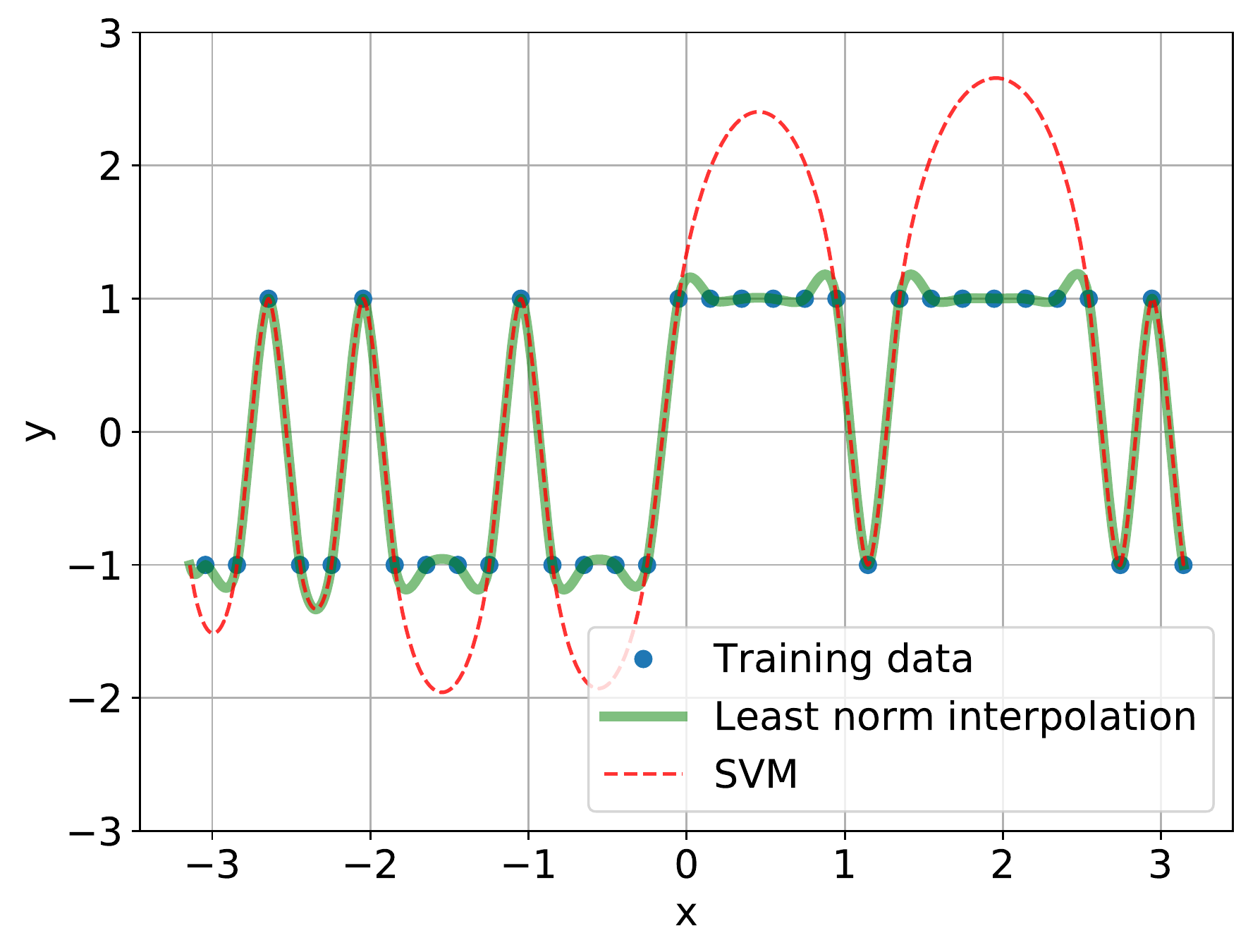}
    \tabularnewline
    \centering
    (c) $m=2$
    &
    \centering
    (d) $m=3$

  \end{tabular}
  \caption{Correspondence between the SVM and minimum-$\ell_2$-norm
    interpolation, illustrated by Fourier features on regularly spaced
    training data with $10\%$ label noise (for various rates of
    feature scaling corresponding to $\lambda_k = \frac{1}{k^{m}}$ in the optimization to adjust the preference for lower frequencies as given in Definition~\ref{eg:fourier} in Appendix~\ref{appendix:ultratoy}). For all the figures, the number of samples $n = 32$ and the number of features $d = 2^{14}$. Notice that below $m=2$, all points are support vectors and the SVM solution agrees with least-norm interpolation.}
  \label{fig:svmlsmaintext}
\end{figure}

In contrast, we show that the choice of loss function used on \textit{test data} yields significant differences between classification and regression.
Depending on the extent of ``effective overparameterization", the same minimum-norm solution can:
\begin{itemize}
  \item succeed at both regression and classification,
  \item succeed at classification and fail at regression, or
  \item fail at both,
\end{itemize}
as we show in Theorem~\ref{thm:main}.
The intermediate regime of special interest is the one for which minimum-$\ell_2$-norm interpolators generalize poorly in regression tasks, but well in classification tasks.
Underlying these results is a sharp non-asymptotic analysis of the minimum-$\ell_2$-norm interpolator \textit{for the classification task}.
We conceptually link the techniques introduced in recent analysis of this interpolator for the regression task~\citep{bartlett2020benign} to the classification task, using a signal-processing (Fourier-theoretic) interpretation of the overparameterized regime that was introduced in~\citet{muthukumar2020harmless}.
This constitutes a first analysis of this type for classification tasks, providing non-asymptotically matching upper and lower bounds.
In Section~\ref{sec:margin}, we demonstrate that the classic \textit{upper bounds}, based on training data margin, fail to produce meaningful results or useful intuition in our setting.

\section{Related work}

The phenomenon of overparameterization and interpolation yielding significantly improved empirical performance across a variety of models as well as tasks~\citep{neyshabur2014search,zhang2016understanding,geiger2019jamming,belkin2019reconciling} has received significant research attention over the last few years.
In this section, we contextualize our results in this research landscape.

\subsection{The role of the training loss function (and optimization algorithm)}\label{sec:relwork_lossfunction}

At a high level, any solution obtained in an overparameterized regime that generalizes well must have some sort of \textit{regularization}, i.e.~special structural constraints on the values it can take.
Thus, we need to understand the influence of the choice of training loss function on the resulting solution and its generalization guarantee.
In the overparameterized regime, there are infinitely many solutions that interpolate training data, and indeed even more that \textit{separate} discretely labeled data.
Thus, characterizing the \textit{implicit regularization}~\citep{ji2019implicit,soudry2018implicit,gunasekar2018characterizing,woodworth2019kernel,nacson2019convergence,azizan2020study} induced by the choice of optimization algorithm is important to understand properties of the obtained solutions.
For the linear model, we have a concrete understanding of the solutions obtained by the most common choices of training loss functions:
\begin{enumerate}
\item If we minimize the logistic loss using gradient descent on separable training data\footnote{The implicit bias has also been characterized for the more difficult non-separable case~\citep{ji2019implicit}, but we focus here on separable training data as this will always be the case for an overparameterized setting.}, we will converge to the hard-margin SVM~\citep{ji2019implicit,soudry2018implicit}.
    \item If we minimize the square loss on training data using gradient descent while also using an overparameterized model, we will converge to the minimum-$\ell_2$-norm interpolation~\citep[Theorem~6.1]{engl1996regularization} \textit{provided the initialization is equal to zero.}
\end{enumerate}

As mentioned in the introduction, conventional wisdom recommends the choice of the logistic loss, or the hinge loss, for classification tasks.
It is sometimes implied (without theoretical justification) that instead minimizing the square loss would be suboptimal for generalization.
However, our first main result (Theorem~\ref{thm:everythingsupport}) shows that with sufficient overparameterization, the SVM itself interpolates the binary labels --- as pictured in Figure~\ref{fig:equivalence}, this implies an equivalence in solutions corresponding to several choices of training loss function.
Moreover, our subsequent Theorem~\ref{theorem:threeregimes} shows that the \textit{interpolating} solution generalizes well in classification tasks, for a wide range of overparameterized regimes.
And when it does generalize poorly, so does the SVM!
These results add theoretical weight to the empirical evidence~\citep{rifkin2002everything,pmlr-v51-que16} that the hinge loss (and, by extension, the cross-entropy loss) is not necessarily the superior choice for classification tasks.

Our Theorem~\ref{thm:everythingsupport} establishes a link between the hard-margin SVM and the minimum-$\ell_2$-norm interpolation by exhibiting an overparameterized and separable setting where every training example is a support vector.
Previous works have related the number of support vectors in \textit{soft-margin} SVMs and the Bayes risk and conditional probability estimation~\citep{steinwart2003sparseness,bartlett2007sparseness}, but do not apply to the hard-margin SVM on separable data sets.

The SVM maximizes \textit{training data margin} in feature space.
Theoretical analyses of generalization error as a function of the margin have been proposed to explain the success of models such as boosting and neural networks~\citep{schapire1998boosting,bartlett1998sample,bartlett2017spectrally}.
Explanations based on margin bounds are sometimes credited, in a heuristic manner, for the empirical success of interpolated models in classification tasks.
This is, in fact, a misleading explanation~\citep[as also noted in][]{shah2018minimum}; in Section~\ref{sec:margin}, we provide experimental evidence for the tautology of generalization upper bounds as a function of the feature-space margin, when applied to sufficiently overparameterized models.
This evidence corroborates the recent perspectives on modern ML which argue against generalization bounds that tie training loss to the expected loss on test data~\citep{belkin2018understand,nagarajan2019uniform}.
We instead favor a first-principles approach to analyzing high-dimensional models for classification, inspired by recent progress in regression.

\begin{figure}
\centering
\begin{tikzpicture}[>=stealth,auto, node distance=2cm]
\node[block, align = left](logistic){Logistic loss};
    \node[block, right = 7cm of logistic] (square) {Square loss};
\node [ellipsoid, below = 2cm of logistic] (svm)     {Hard-margin SVM};
    \node [ellipsoid, below = 2cm of square] (regression)    {Min $\ell_2$-norm interpolation};
\draw [<->, thick] (svm) -- node {Theorem \ref{thm:everythingsupport}, this paper} (regression);
    \draw [->, thick] (square) -- node[left, align = left] {Gradient Descent, \\ initialized at 0\\\citep{engl1996regularization}} (regression);
    \draw [->, thick] (logistic) -- node[right, align = left] {Gradient Descent \\ \citep{ji2019implicit}\\\citep{soudry2018implicit}} (svm);
\end{tikzpicture}
\caption{Equivalence of training procedures in overparameterized settings.
Theorem~\ref{thm:everythingsupport} in this paper highlights exact equivalence \textit{with high probability} between the hard-margin SVM and minimum-$\ell_2$-norm interpolation under sufficient \textit{effective overparameterization}.}
\label{fig:equivalence}
\end{figure}
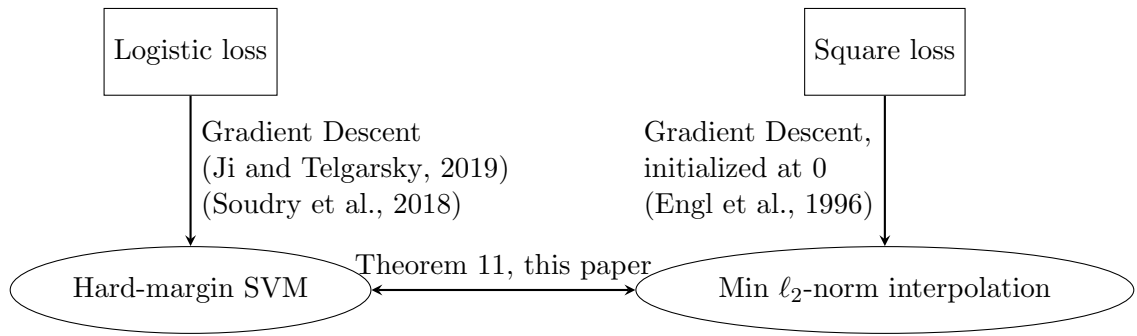

\subsection{Insights from least-squares regression}\label{sec:relwork_regression}

The recently observed phenomenon of \textit{double
  descent}~\citep{geiger2019jamming,belkin2019reconciling} made
concrete explicit empirical benefits of
overparameterization. Subsequent\\
work~\citep{bartlett2020benign,belkin2020two,hastie2019surprises,mei2019generalization,muthukumar2020harmless,mitra2019understanding,
nakkiran2019more} has identified theoretical conditions under which overparameterization and interpolation can be helpful, or at the very least, harmless in linear regression with different feature families.
The main insight can be crystallized as follows: for overparameterized solutions to interpolate ``benignly", the feature family needs to satisfy a delicate balance between having a few important directions that favor the true signal (unknown function), and a large number of unimportant directions that \textit{absorb} the noise in a harmless manner.
This trade-off was explored both for minimum-$\ell_2$-norm~\citep{bartlett2020benign,belkin2020two,hastie2019surprises,mei2019generalization,muthukumar2020harmless,mitra2019understanding} and minimum-$\ell_1$-norm interpolations \citep{muthukumar2020harmless,mitra2019understanding}.

In this paper, we build on these insights from regression tasks, to show that overparameterized models can similarly generalize well for classification tasks.
In fact, it turns out that the conditions for classification are milder, and there is an intermediate regime of overparameterization where the regression problem is ``hard'' --- but the classification  problem is ``easy''.
Focusing on the well-specified case, we show that \textit{the balance between preserving signal and absorbing noise does not have to be as delicate for classification tasks as it is for regression tasks}.
This conclusion cannot be made directly from the regression analyses, as 0-1 classification error is quite different from the mean-square regression error.
To bridge the gaps, we use a signal processing perspective on the overparameterized regime that was first developed in \citet{muthukumar2020harmless}, where the conditions for low test error are linked to notions of \textit{survival} of the true features and \textit{contamination} by falsely discovered features.
We will see (in Section~\ref{sec:pathtoanalysis}) that these same quantities show up explicitly in the analysis of classification test error.

\subsection{Recent work on high-dimensional classification/logistic regression}\label{sec:concurrentwork}

High-dimensional \textit{logistic regression} and classification are naturally closely connected, and statisticians have studied the former in a number of contexts.
Properties of penalized maximum likelihood estimators in overparameterized logistic regression have received substantial attention (an incomplete list is~\citet{bunea2008honest,van2008high,kakade2010learning,fan2011nonconcave}).
Here, the penalty, or regularizer, is typically in $\ell_1$-norm and its relatives, and the studied solutions do not interpolate training data.
In contrast, our focus is on classification problems and thus the
properties of the $\ell_2$-margin-maximizing support-vector-machines,
and moreover we make explicit connections to solutions that
interpolate binary labels.

Most pertinent to our setting, we acknowledge a recent line of work~\citep{deng2019model,montanari2019generalization,kini2020analytic} that identifies precise asymptotics for the generalization error of the SVM as a function of the overparameterization factor.
The main technical tool common to these works is the convex Gaussian min-max theorem~\citep{thrampoulidis2015regularized}, a generalization of Gordon's min-max theorem~\citep{gordon1985some} that has seen substantial application to obtain precise asymptotics in high-dimensional regression.
It is worth noting that these elegant analyses specifically assume isotropic featurization, and do not study the ramifications of anisotropy, which is known to be critical for good generalization of $\ell_2$-regularized solutions.
While~\citet{kini2020analytic} do compare the outcomes of logistic and square loss in binary classification, and show that the test error is almost identical under extreme overparameterization, they do not show an explicit link between the actual solutions.

Concurrent to our work, ~\citet{chatterji2020finite} directly
\textit{upper bound} the generalization error of the max-margin SVM
under overparameterized linear discriminant models that are not
isotropic and, like us, explore how anisotropic the situation needs to be for good generalization.
Both their results and techniques are quite different from ours in that they study the iterates of gradient descent leveraging the implicit regularization perspective of optimization algorithms.
 \section{Setup}\label{sec:setup}

We begin with some basic notation.
Thereafter, we describe the setup for training and test data, evaluation of classification and regression tasks, and choices of featurization (in that order).

\subsection{Basic notation}

We describe basic notation for vectors, matrices, and functions.

\subsubsection{Vector and matrix notation}
Let $\evec_i$ represent the $i^{th}$ standard basis vector (with the dimension implicit).
For a given vector $\mathbf{v}$, the functional $\sgn(\mathbf{v})$ denotes the sign operator applied element-wise.
Let $\eig_i(\Mbold)$ denote the $i^{th}$ largest eigenvalue of positive semidefinite matrix $\Mbold$, and $\eig_{\mathsf{max}}(\Mbold)$ and $\eig_{\mathsf{min}}(\Mbold)$ denote in particular the maximal and minimal eigenvalue respectively.
Further, we use $||\Mbold||_{\mathsf{op}}, \tr(\Mbold)$ and $||\Mbold||_{\mathsf{F}}$ to denote the operator norm, trace norm, and Frobenius norm respectively.

\subsubsection{Function-specific notation}
For two functions $f(n)$ and $g(n)$, we write $f \asymp g$ \textit{iff} there exist universal positive constants $(c, C, n_0)$ such that
$$c\abs{g(n)} \leq \abs{f(n)} \leq C\abs{g(n)} \ \forall n \geq n_0.$$
(In most places where we apply the above inequality, the functions $f$ and $g$ are positive valued and so we automatically drop the absolute value signs.)

\subsection{Data}

Let $\mathcal{X}$ denote the space of input data.
For classification, our training data are \textit{input data-binary label} pairs $(X_1, Y_1),\dotsc,(X_n,Y_n)$ taking values in $\mathcal{X} \times \{-1,+1\}$; for regression, the training data are \textit{input data}-real output pairs $(X_1, Z_1), \dotsc, (X_n,Z_n)$ taking values in $\mathcal{X} \times \R$.
We assume that there is a feature map $\avec \colon \mathcal{X} \to \R^d$, target linear function parameterized by $\alphastar \in \R^d$, and label noise parameter $0 \leq \betastar < 1/2$ such that for every $i \in \{1,2,\ldots,n\}$, we have

\begin{align}\label{eq:generativemodel}
    Z_i &= \inprod{\avec(X_i)}{\alphastar} \text{ and } \\
    Y_i &= \begin{cases}
        \sgn(Z_i) \text{ with probability } (1 - \betastar) \\
        - \sgn(Z_i) \text{ with probability } \betastar .
    \end{cases}
\end{align}
Here, the feature map $\avec$ is known, but the target parameter
$\alphastar$ (which we refer to as the signal) is unknown. The label
noise in $Y_i$ is assumed to be independent of everything else.

Let $\avec(x) = \begin{bmatrix} \phi_1(x) & \ldots & \phi_d(x)\end{bmatrix}^T$ for $x \in \mathcal{X}$, i.e. $\phi_j(x)$ is the value of the $j^{\text{th}}$ feature in $\avec(x)$.
We will consider the training data $\{X_i\}_{i=1}^n$ to be mutually independent and identically distributed (iid).
Let $\Sigmabold = \EE[\avec(X) \avec(X)^\top]$ denote the covariance matrix of the feature vector $\avec(X)$ for $X$ following the same distribution as $X_i$.
We assume $\Sigmabold$ is invertible, so its square-root-inverse $\Sigmabold^{-1/2}$ exists.

We define shorthand notation for the training data: let
\begin{align*}
    \Atrain := \begin{bmatrix}
    \avec(X_1) & \avec(X_2) & \dotsb & \avec(X_n)
  \end{bmatrix}^\top \in \R^{n \times d}
\end{align*}
denote the data (feature) matrix; $\Ztrain := \begin{bmatrix} Z_1 & \ldots & Z_n\end{bmatrix}^\top \in \real n$ denote the regression output vector; and $\Ytrain := \begin{bmatrix} Y_1 & \ldots & Y_n\end{bmatrix}^\top$ denote the classification output vector.
Note that if there is no label noise (i.e.~$\betastar = 0$), then we have $\Ytrain = \sgn(\Ztrain)$.

\subsection{Classification, regression, and interpolation}

The overparameterized regime constitutes the case in which the dimension (or number) of features is greater than the number of samples, i.e. $d \geq n$.
We define the two types of solutions that we will primarily consider in this regime, starting with interpolating solutions.
\begin{definition}\label{def:interpolators}
We consider solutions $\alphabold$ that satisfy one of the following \textit{feasibility conditions} for interpolation:
\begin{subequations}
\begin{align}
    \Atrain \alphabold &= \Ytrain \text{ or } \label{eq:interpolatingsoln_binary}\\
    \Atrain \alphabold &= \Ztrain \label{eq:interpolatingsoln_real}
\end{align}
\end{subequations}
In particular, we denote the \textit{minimum-$\ell_2$-norm interpolation on binary labels} as

\begin{align*}
  \alphahatbold_{2,\mathsf{binary}} := \argmin_{\alphabold \in \R^d} \|\alphabold\|_2 \text{ s.t. Equation~\eqref{eq:interpolatingsoln_binary} holds} .
\end{align*}
Similarly, we denote the \textit{minimum-$\ell_2$-norm interpolation on real labels} as
\begin{align*}
  \alphahatbold_{2,\mathsf{real}} := \argmin_{\alphabold \in \R^d} \|\alphabold\|_2 \text{ s.t. Equation~\eqref{eq:interpolatingsoln_real} holds}.
\end{align*}
\end{definition}
Recall from our discussion in Section~\ref{sec:relwork_lossfunction} that these interpolations arise from minimizing the square loss on training data.
If we instead minimized the logistic or hinge loss, we would obtain the hard-margin \textit{support vector machine} (SVM), defined below.

\begin{definition}\label{def:l2SVM}
For \textit{linearly separable data}, the hard-margin Support Vector Machine (SVM) is $\alphahatbold_{\mathsf{SVM}} \in \R^d$, defined by
\begin{align}
  \alphahatbold_{\mathsf{SVM}} &:= \argmin_{\alphabold \in \R^d}  \quad \|\alphabold\|_2 \nonumber \\
  \text{s.t.} & \quad Y_i \avec(X_i)^\top \alphabold \geq 1 \quad \text{for all $i=1,\dotsc,n$} . \label{eq:l2SVM}
\end{align}
Note that data is defined to be linearly separable \textit{iff} the constraints in Equation~\eqref{eq:l2SVM} can be feasibly satisfied by some parameter vector $\alphabold$.
\end{definition}

As long as $d \geq n$, any solution that interpolates the binary labels $\{Y_i\}_{i=1}^n$ satisfies Equation~\eqref{eq:l2SVM} with equality \textit{almost surely} for any continuous distribution on the features.
Thus, in the overparameterized regime, the training data is trivially linearly separable.
Note, however, that the feasibility constraints do not require the SVM solution to interpolate the binary labels.

The standard metrics for test error in regression and classification tasks are, respectively, the mean-square-error (MSE) and classification error, defined as follows.
In these definitions, we have ignored the irreducible error terms arising from possible additive noise in real outputs and label noise in binary outputs respectively.
This reflects the practical goal of all prediction to get the underlying true output right, as opposed to matching noisy measurements of that underlying true output.

\begin{definition}
  The excess mean-square-error (MSE) of $\alphahatbold \in \R^d$ is
  \begin{align}
    \regloss(\alphahatbold) & := \EE[\inprod{\avec(X)}{\alphastar - \alphahatbold}^2] \label{eq:regloss} .
  \end{align}
 The excess classification error of $\alphahatbold \in \R^d$ is given by
  \begin{align}
    \classloss(\alphahatbold) & := \EE\left[\mathbb{I}\left[\sgn(\inprod{\avec(X)}{\alphastar}) \neq \sgn(\inprod{\avec(X)}{\alphahatbold}) \right]\right] \nonumber \\
      & \hphantom:=
    \Pr \left[\sgn(\inprod{\avec(X)}{\alphastar}) \neq \sgn(\inprod{\avec(X)}{\alphahatbold}) \right] .
      \label{eq:classloss}
  \end{align}
Here, all expectations (and ensuing probabilities) are only over the random sample $X$ of test data.
As is standard, we will characterize the regression and classification test errors with high probability over the randomness in the training data $\{X_i,Y_i\}_{i=1}^n$.
\end{definition}
As a final comment, we will typically construct an empirical estimate of both test error metrics from $n_{test}$ \text{test samples} of data drawn without any label noise.
This is for ease of empirical evaluation.

\subsection{Featurization}

We consider zero-mean Gaussian featurization, i.e. for every $i \in \{1,\ldots,n\}$, we have
\begin{align}\label{eq:gaussiandef}
    \phi(X_i) \sim \mathcal{N}(\boldsymbol{0},\Sigmabold) .
\end{align}
We denote the spectrum of the (positive definite) covariance matrix $\Sigmabold$ by the vector $\lambdabold := \begin{bmatrix} \lambda_1 & \ldots & \lambda_d \end{bmatrix}$, where the eigenvalues are sorted in descending order, i.e. we have $\lambda_1 \geq \lambda_2 \geq \ldots \geq \lambda_d > 0$.

Throughout, we will consider various \textit{overparameterized ensembles} obtained by scaling the covariance parameter $\Sigmabold$ as a function of both the number of training data points, $n$, and the number of features, $d$.
We theoretically characterize the performance of solutions for classification and regression tasks using two representative ensembles, defined below.

\begin{definition}[Isotropic ensemble$(n,d)$]\label{def:isotropic_covariance}
    The \textit{isotropic ensemble}, parameterized by $(n,d)$, considers isotropic Gaussian features, $\Sigmabold = \mathbf{I}_d$.
    For this ensemble, we will fix $n$ and study the evolution of various quantities as a function of $d \geq n$.
\end{definition}
Note that the isotropic ensemble constitutes the ``maximal" level of effective overparameterization (as defined in the second effective rank in~\citet{bartlett2020benign}) for a given choice of $(n,d)$.

\begin{definition}[Bi-level ensemble$(n,p,q,r)$]
\label{def:bilevel_covariance}
The \textit{bi-level ensemble} is parameterized by $(n,p,q,r)$, where\footnote{We restrict $(p,q,r)$ to this range to ensure that a) the regime is truly overparameterized (choice of $p$), b) the eigenvalues of the ensuing covariance matrix are always positive and ordered correctly (choice of $q$), c) the number of ``high-energy" directions is sub-linear in $n$ (choice of $r$).} $p > 1, 0 \leq r < 1$ and $0 < q < (p-r)$. Here, parameter $p$ controls the extent of artificial overparameterization), $r$ sets the number of preferred features, and $q$ controls the weights on preferred features and thus effective overparameterization. In particular, this ensemble sets parameters
\begin{align*}
    d &:= n^p \\
    s &= n^r \text{ and } \\
    a &= n^{-q} .
\end{align*}
The covariance matrix of the Gaussian features $\Sigmabold(p,q,r)$ is set to be a diagonal matrix, whose entries are given by:
\begin{align*}
    \lambda_j=  \begin{cases}
       \frac{a d}{s}, & 1 \leq j \leq s\\
            {\frac{(1 - a)d}{d-s}}, & \text{otherwise}.
     \end{cases}
\end{align*}
For this ensemble, we will fix $(p,q,r)$ and study the evolution of various quantities as a function of $n$.
\end{definition}
The bi-level covariance matrix is %
parameterized by the choice for the top $s$ eigenvalues and the bottom $(d-s)$ eigenvalues, with the sum of eigenvalues being invariant(equal to $d$).
The parameters of critical importance are $p$, which determines the extent of overparameterization (i.e. number of features), $r$, which determines the number of larger eigenvalues, and $q$, which determines the relative values of larger and smaller eigenvalues (all as a function of the number of training points $n$).
We make a few remarks below on this ensemble.

\begin{remark}
This bi-level ensemble is inspired by the study of estimation of high-dimensional \textit{spiked covariance matrices} ~\citep[e.g.][]{wang2017asymptotics,mahdaviyeh2019risk} when the number of samples is much smaller than the dimension.
In these spiked matrices, the parameter $s$ is typically set to a constant (that does not grow with $n$), and the top $s$ eigenvalues are highly spiked with respect to the other $(d-s)$ eigenvalues.
In fact, it is assumed that there exists a universal positive constant $C$, such that the smaller eigenvalues are bounded and the top (larger) eigenvalues grow with $(d,n)$ in the following way:
\begin{subequations}
\begin{align}\label{eq:spikedensemble}
    \lambda_j &\geq \frac{d}{Cn} \quad \text{ for all } j \in \{1,\ldots, s\} \\
    \lambda_j &\leq C \quad \text{ for all } j \in \{s + 1,\ldots, d\} .
\end{align}
\end{subequations}
Under these conditions, the ratio of the top to the bottom eigenvalues grows as $\Omega\left(\frac{d}{n}\right)$, and Wang and Fan~\citep{wang2017asymptotics} show that the top $s$ estimated eigenvalues of the high-dimensional covariance matrix can be estimated reliably from samples, even when less than the dimension (i.e. $n < d$).
This condition, which is also critical for good generalization\footnote{In particular, avoiding signal shrinkage, as also shown in~\citet{bartlett2020benign}.} in \textit{regression problems}, can be verified to be equivalent to the condition $q \leq (1 - r)$ in our bi-level ensemble (see Theorem~\ref{theorem:threeregimes} for a full statement).
Our definition of the bi-level ensemble allows further flexibility in the choice of these parameters, and we will later show that classification tasks can generalize well even in the absence of this condition.
\end{remark}

\begin{remark}
The bi-level ensemble can be verified to match the isotropic ensemble (Definition~\ref{def:isotropic_covariance}) as a special case when the parameters are set as $q + r = p$.
This case represents the maximal level of effective overparameterization, and in general we take $q \leq (p-r)$ to ensure correct ordering of the eigenvalues.
The smaller the value of $q$, the less the effective overparameterization. The models of~\cite{chatterji2020finite} are spiritually related in how they also use an exponent like $q$ to control the effective overparameterization.
\end{remark}

\begin{remark}
We know that for ``benign overfitting"~\citep{bartlett2020benign} of additive noise to occur in regression problems, we need to have sufficiently many (growing super-linearly in $n$) ``unimportant" directions, corresponding to the lower level of eigenvalues.
The choice of parameters $p > 1$ and $r < 1$ ensures that the number of such ``unimportant" directions is equal to $(d - s) = (n^p - n^r) \gg n$, and so the bi-level ensemble as defined does not admit the regime of harmful overfitting of noise for any choice of parameters $(p,q,r)$.
This allows us to isolate signal shrinkage as the principal reason for large regression error, and also study the ramifications of such shrinkage for classification error.
\end{remark}
In addition to the above, we empirically study (in Section~\ref{sec:margin}) the behavior of various quantities for two other ensembles defined below, both of which have been previously studied in regression tasks.

\begin{definition}[``Weak features" ensemble]
\label{def:weakfeatures}
This ensemble is a
simplification of feature families introduced by~\citet{belkin2020two,hastie2019surprises,mei2019generalization}, all of which demonstrate an explicit benefit of overparameterization in generalization for regression tasks.
The features consist of raw, $1$-dimensional random variable $X_i \sim \mathcal{N}(0,\sigma^2)$, and independent $d$-dimensional random variables $\mathbf{W}_i \text{ i.i.d.} \sim \mathcal{N}(\mathbf{0}, \mathbf{I}_d)$ for $i \in \{1,\ldots,n\}$.
Then, for a given value of $d > n$, each lifted feature is given by:

\begin{align}\label{eq:weakfeaturevector}
\avec(X_i) = X_i\mathbf{1_d} + \mathbf{W}_i.
\end{align}
We define the real output $Z_i$, corresponding to input $X_i$, as $Z_i = X_i$, and similarly the binary output $Y_i$ is defined as $Y_i = \mathsf{sgn}(X_i)$.
For this ensemble, we will fix $n$ and study the evolution of various quantities as a function of $d$. This model allows us to study model mis-specification because the real output is not exactly representable in the feature space.
\end{definition}

Define the column vector of raw features as $\xtrain = \begin{bmatrix} X_1 & X_2 & \ldots & X_n\end{bmatrix}^\top$ and the matrix $\mathbf{W} \in \R^{n \times d}$ whose  $i_{th}$ row is $\mathbf{W}_i$. For the above featurization, the training matrix admits the simple form $\begin{bmatrix} \xtrain & \xtrain & \ldots & \xtrain \end{bmatrix} + \mathbf{W}$. 

\begin{definition}[``Polynomial decay of eigenvalues" ensemble]\label{def:kpweighting}
This ensemble is inspired by commonly chosen reproducing kernel Hilbert spaces and is parameterized by $m \geq 0$.
We set the spectrum of the covariance matrix $\Sigmabold$ to be

\begin{align}
\label{eq:kpweighting}
    \lambda_k = \frac{1}{k^m} \text{ for all } k  \in \{1,2,\ldots,d\} .
\end{align}
For this ensemble, we will fix $(n,d)$ and study the evolution of various quantities as a function of the parameter $m$.
\end{definition}

\section{Approximating the SVM by exact interpolation}\label{sec:svm}

\begin{figure}
  \centering
  \begin{tabular}{p{0.48\textwidth}p{0.48\textwidth}}
    \centering
    \includegraphics[width=0.5\textwidth]{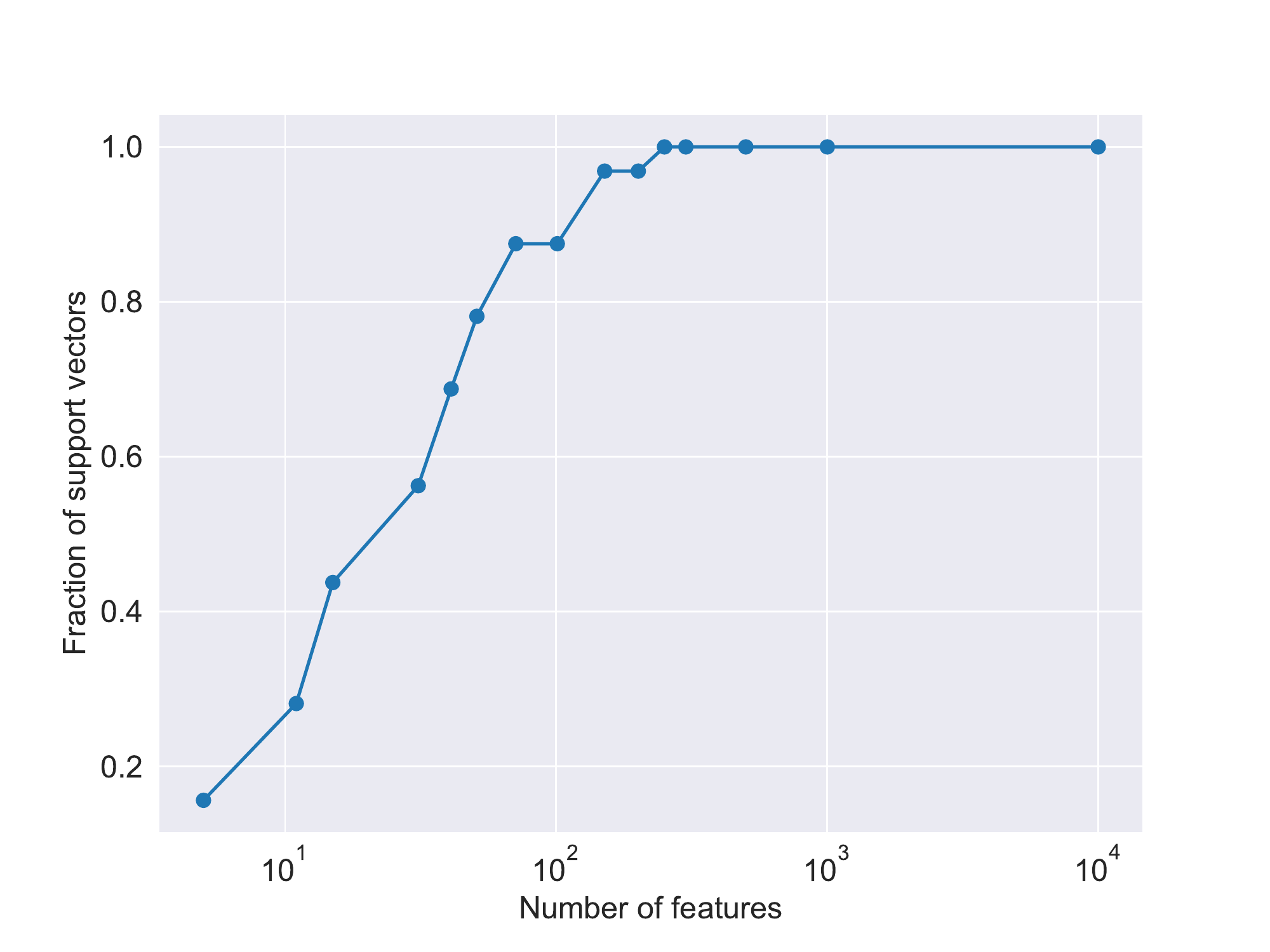}
    &
    \centering
    \includegraphics[width=0.5\textwidth]{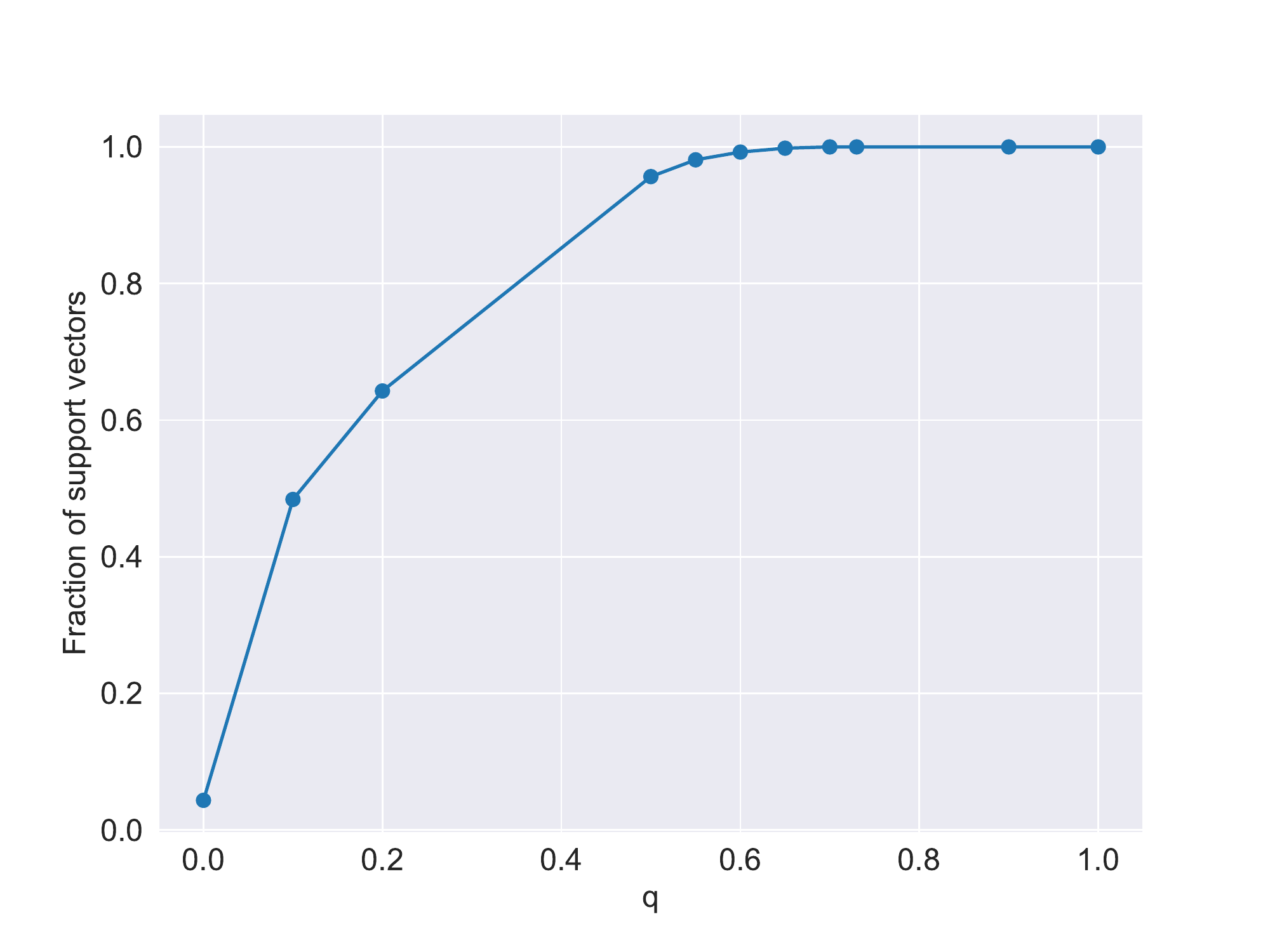}
    \tabularnewline
    \centering
    (a) Isotropic features with $n = 32$ fixed.
    Equation~\eqref{eq:everythingsupportconditiongaussian} would require $d \geq 1141$.
    &
    \centering
    (b) Bi-level ensemble with~$(n = 529, d = 12167)$ fixed and parameters $(p = 3/2, r = 1/2)$ fixed.
    As the parameter $q$ increases, the fraction of support vectors  increases.
  \end{tabular}
  \caption{Experimental illustration of Theorem~\ref{thm:everythingsupport} for Gaussian features: fraction of the training points that are support vectors increases as effective overparameterization increases.}
  \label{fig:frac_support}
\end{figure}

From the optimization objective and constraints defined in Equation~\eqref{eq:l2SVM}, we can see that there is a continuum of margins, defined by $Y_i \cdot \phi(X_i)^\top \alpha$, that is possible for each training point. 
Thus, unlike in least-squares regression, even obtaining an exact expression for the margin-maximizing SVM solution, $\alphahatbold_{\mathsf{SVM}}$, appears difficult in the overparameterized regime.

The heart of our approach to tractably analyzing the SVM involves making an explicit link to minimum-$\ell_2$-norm interpolation, by showing that \textit{all the training data points usually become support vectors} in a sufficiently overparameterized regime.
We actually first identified this phenomenon in visualizations of the
SVM and the minimum-$\ell_2$-norm interpolation using \textit{Fourier
  features} on one-dimensional data; details of this auxiliary
experiment are contained in Appendix~\ref{appendix:fourier}. The
number of support vectors was also recently empirically
observed\footnote{In \citep{snyder2020sample}, the interest is primarily in showing that the number of support vectors constrains the complexity of the learned model and can be related to generalization performance. Our results here show that good generalization is possible even when everything becomes a support vector.} to increase with the number of model parameters~\citep{snyder2020sample}.
In fact, we believe that these ideas are spiritually connected to the well-known folk wisdom that the number of support vectors tends to  proliferate when increasing the ``bandwidth" of kernels like the radial basis function (RBF) kernel.

Such a phenomenon, when true, explicitly links the concept of support-vector-machines with a positive margin constraint to exact interpolation of the training data labels, and suggests a roadmap to analyzing the generalization error of the SVM by analyzing the latter solution (which we do subsequently in Section~\ref{sec:generalization}).
We show in the theorem below that this phenomenon manifests with high probability provided there is sufficient \textit{effective overparameterization}.

\begin{theorem}\label{thm:everythingsupport}
Let $\Atrain$ follow the Gaussian featurization from Equation~\eqref{eq:gaussiandef} with covariance matrix $\Sigmabold$, and let $\alphahatbold_{\mathsf{SVM}}$ be the solution to the optimization problem in Equation~\eqref{eq:l2SVM}.
  \begin{enumerate}
    \item
      If $\Sigmabold$ satisfies
      \begin{align}\label{eq:everythingsupportcondition}
          ||\lambdabold||_1 \geq 72 \left(||\lambdabold||_2 \cdot n\sqrt{\ln n} + ||\lambdabold||_{\infty} \cdot n\sqrt{n} \ln n + 1\right) ,
      \end{align}
      the vector $\alphahatbold_{\mathsf{SVM}}$ satisfies the binary label interpolation constraint (Equation~\eqref{eq:interpolatingsoln_binary}) simultaneously for every $\Ytrain \in \{\pm1\}^n$ with probability at least $\left(1 - \frac{2}{n}\right)$.
\item
      If $\Sigmabold = \mathbf{I}_d$ (i.e., $\Atrain$ follows the isotropic ensemble), and
      \begin{align}\label{eq:everythingsupportconditiongaussian}
          d > 10n \ln n + n - 1 ,
      \end{align}
      then the vector $\alphahatbold_{\mathsf{SVM}}$ satisfies the binary label interpolation constraint (Equation~\eqref{eq:interpolatingsoln_binary}) for any fixed $\Ytrain \in \{\pm1\}^n$ with probability at least $\left(1 - \frac{2}{n}\right)$.
  \end{enumerate}
\end{theorem}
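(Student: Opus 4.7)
The plan is to reduce ``every training point is a support vector'' to a leave-one-out (LOO) prediction condition on the minimum-$\ell_2$-norm interpolator, and then control that condition via a mix of Gaussian and anisotropic Wishart concentration.

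First, observe that with $d \geq n$ and Gaussian features, $\Atrain\Atrain^\top$ is almost surely invertible, so $\alphahatbold_{2,\mathsf{binary}} = \Atrain^\top (\Atrain\Atrain^\top)^{-1} \Ytrain$ is feasible for the SVM program in Equation~\eqref{eq:l2SVM} (meeting every constraint with equality). By strict convexity, $\alphahatbold_{2,\mathsf{binary}}$ coincides with $\alphahatbold_{\mathsf{SVM}}$ precisely when the induced Lagrange multipliers $\mu_i = Y_i\bigl[(\Atrain\Atrain^\top)^{-1} \Ytrain\bigr]_i$ are all non-negative. A Schur-complement computation yields the identity
$$\mu_i \;=\; \bigl[(\Atrain\Atrain^\top)^{-1}\bigr]_{ii}\,\bigl(1 - Y_i\,\avec(X_i)^\top \alphahatbold^{(-i)}\bigr),$$
where $\alphahatbold^{(-i)}$ is the minimum-$\ell_2$-norm interpolator on the other $n-1$ training points. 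Since $[(\Atrain\Atrain^\top)^{-1}]_{ii}>0$ automatically, the theorem reduces to the uniform LOO condition $Y_i\,\avec(X_i)^\top \alphahatbold^{(-i)} < 1$ for all $i \in \{1,\dots,n\}$.

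Second, condition on $\{(X_j,Y_j)\}_{j\neq i}$. By independence, $\avec(X_i)\sim\mathcal{N}(\mathbf{0},\Sigmabold)$ is independent of $\alphahatbold^{(-i)}$ and of $Y_i$, so $\avec(X_i)^\top \alphahatbold^{(-i)}$ is a zero-mean Gaussian with variance $\sigma_i^2 := \|\Sigmabold^{1/2}\alphahatbold^{(-i)}\|_2^2$. A standard Gaussian tail bound gives the LOO condition with probability at least $1 - 2\exp(-1/(2\sigma_i^2))$, so it suffices to make $\sigma_i^2$ small enough in $i$ to absorb the relevant union bound.

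Third, writing $\alphahatbold^{(-i)} = \Atrain_{-i}^\top(\Atrain_{-i}\Atrain_{-i}^\top)^{-1}\Ytrain_{-i}$ gives
$$\sigma_i^2 \;=\; \Ytrain_{-i}^\top (\Atrain_{-i}\Atrain_{-i}^\top)^{-1}\bigl(\Atrain_{-i}\Sigmabold\Atrain_{-i}^\top\bigr)(\Atrain_{-i}\Atrain_{-i}^\top)^{-1}\Ytrain_{-i}, \quad \|\Ytrain_{-i}\|_2^2 = n-1.$$
Hanson--Wright / matrix-Bernstein inequalities for anisotropic Gaussian Wisharts deliver, with high probability,
$$\lambda_{\min}\bigl(\Atrain_{-i}\Atrain_{-i}^\top\bigr) \,\gtrsim\, \|\lambdabold\|_1 - O\bigl(\|\lambdabold\|_2\sqrt{n} + \|\lambdabold\|_\infty\, n\bigr), \quad \lambda_{\max}\bigl(\Atrain_{-i}\Sigmabold\Atrain_{-i}^\top\bigr) \,\lesssim\, \|\lambdabold\|_2^2 + \text{l.o.t.},$$
so that $\sigma_i^2 \lesssim n\|\lambdabold\|_2^2/\|\lambdabold\|_1^2$ up to corrections controlled by $\|\lambdabold\|_\infty$. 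In part (ii), a union bound over $i\in\{1,\dots,n\}$ requires $\sigma_i^2 \lesssim 1/\log n$, which in the isotropic case ($\|\lambdabold\|_1 = \|\lambdabold\|_2^2 = d$) becomes $d \gtrsim n\log n$, reproducing Equation~\eqref{eq:everythingsupportconditiongaussian}. In part (i), we additionally union-bound over all $2^n$ label patterns $\Ytrain$; the extra $\sqrt{n\log 2}$ factor in the Gaussian tail demands $\sigma_i^2 \lesssim 1/n$ and, after combining with the Wishart deviation terms, produces exactly the two summands $\|\lambdabold\|_2 n\sqrt{\ln n}$ and $\|\lambdabold\|_\infty n\sqrt{n}\ln n$ in Equation~\eqref{eq:everythingsupportcondition}.

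The main obstacle is the matrix-concentration step: we need sharp two-sided eigenvalue control of both $\Atrain_{-i}\Atrain_{-i}^\top$ and the weighted Gram $\Atrain_{-i}\Sigmabold\Atrain_{-i}^\top$, with deviations expressed cleanly in terms of the three summary statistics $\|\lambdabold\|_1,\|\lambdabold\|_2,\|\lambdabold\|_\infty$ and numerical constants tight enough to reproduce the factor $72$. General anisotropy rules out black-box isotropic Wishart bounds, and a careful Hanson--Wright / Bernstein argument is needed to isolate the precise linear combination of norms appearing in Equation~\eqref{eq:everythingsupportcondition}.
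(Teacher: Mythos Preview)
Your leave-one-out reduction is correct and elegant: the Schur-complement identity
\[
Y_i\bigl[(\Atrain\Atrain^\top)^{-1}\Ytrain\bigr]_i \;=\; \bigl[(\Atrain\Atrain^\top)^{-1}\bigr]_{ii}\,\bigl(1 - Y_i\,\avec(X_i)^\top \alphahatbold^{(-i)}\bigr)
\]
does reduce the support-vector condition to the LOO prediction bound, and your conditional-Gaussian argument on the fresh sample $\avec(X_i)$ is valid. This is a genuinely different route from the paper, and for Part~2 it lands in essentially the same place: in the isotropic case $\sigma_i^2 = \Ytrain_{-i}^\top(\Atrain_{-i}\Atrain_{-i}^\top)^{-1}\Ytrain_{-i}$, whose reciprocal is $\chi^2(d-n+2)$-distributed up to a scalar, so the condition $d \gtrsim n\ln n$ falls out of the Gaussian tail plus union bound over $i$.

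The paper, however, takes a more direct route for Part~1 that avoids LOO entirely. It shows via an $\epsilon$-net argument that $\A^{-1}$ is close to $\tfrac{1}{\|\lambdabold\|_1}\Ibold_n$ in operator norm, and then simply writes $Y_i\evec_i^\top \A^{-1}\Ytrain = \tfrac{1}{\|\lambdabold\|_1} + Y_i\evec_i^\top\E'\Ytrain$ with $|Y_i\evec_i^\top\E'\Ytrain| \leq \sqrt{n}\,\|\E'\|_{\mathsf{op}}$. This buys uniformity over all $\Ytrain\in\{\pm1\}^n$ \emph{for free} from a single operator-norm bound, whereas your approach pays a $2^n$ union bound and must additionally control the weighted Gram matrix $\Atrain_{-i}\Sigmabold\Atrain_{-i}^\top$ --- a second random matrix whose concentration naturally involves $\|\lambdabold\|_4$ rather than just $\|\lambdabold\|_1,\|\lambdabold\|_2,\|\lambdabold\|_\infty$. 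For Part~2, the paper instead uses a polarization identity $Y_i\beta_i^* \propto (\vvec_i+\Ytrain)^\top\A^{-1}(\vvec_i+\Ytrain) - (\vvec_i-\Ytrain)^\top\A^{-1}(\vvec_i-\Ytrain)$ together with the exact $\chi^2$ law of $1/(\uvec^\top\A^{-1}\uvec)$; this and your LOO argument are morally the same inverse-Wishart trick through different decompositions.

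One caution: your claim that the LOO route ``produces exactly the two summands'' in Equation~\eqref{eq:everythingsupportcondition} is too optimistic. Pushing through your bounds, the dominant constraint from the $2^n$ union bound is $\|\lambdabold\|_1 \gtrsim n\|\lambdabold\|_2$ (no $\sqrt{\ln n}$), while the eigenvalue-deviation terms and the extra weighted-Gram control yield conditions with different powers of $n$ and $\ln n$ on the $\|\lambdabold\|_\infty$ term than the paper's $n\sqrt{n}\ln n$. You will prove a theorem of the same flavor, but with a different (and not obviously comparable) sufficient condition; the factor $72$ and the precise form of Equation~\eqref{eq:everythingsupportcondition} are artifacts of the paper's $\epsilon$-net choices ($\epsilon = 1/(36\sqrt n)$, $\tau = \ln n$), not something your route will reproduce.
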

Theorem~\ref{thm:everythingsupport} is proved in Appendix~\ref{appendix:svmproofs}.
Both conditions are proved by showing that a complementary slackness condition on the dual of the SVM optimization problem holds with high probability --- where the conditions differ is in the application of concentration bounds.
The condition in Equation~\eqref{eq:everythingsupportcondition} is proved using a broadly applicable ``epsilon-net" argument to bound the operator norm of a random matrix, while the sharper condition in Equation~\eqref{eq:everythingsupportconditiongaussian} leverages Gaussian isotropy and precise properties of the inverse Wishart distribution.
Note that the first result holds for all label vectors $\Ytrain \in \{\pm1\}^n$ simultaneously, while the second result holds for any fixed $\Ytrain$ (but independent of the features).

We now remark on the result for the isotropic and the bi-level ensemble.
\begin{remark}
Plugging in the condition in Equation~\eqref{eq:everythingsupportcondition} into the bi-level ensemble (Definition~\ref{def:bilevel_covariance}), the following conditions on $(p,q,r)$ are \textit{sufficient} for all training points to become support vectors with high probability (see Appendix~\ref{appendix:svm_bilevel} for a full calculation):
\begin{subequations}\label{eq:everythingsupportcondition_bilevel}
\begin{align}
p &> 2 \text{ and } \label{eq:everythingsupportcondition_bilevel_p} \\
q &> \left(\frac{3}{2} -r \right) \label{eq:everythingsupportcondition_bilevel_q} .
\end{align}
\end{subequations}
There is an intuitive interpretation for each of these conditions in light of the second ``effective rank" condition that is sufficient for benign overfitting~\citep{bartlett2020benign} of noise (although our proof technique is quite different).
First, the condition $p > 2$ mandates an excessively large number of unimportant directions, i.e.~corresponding to lower-level (smaller) eigenvalues ($(n^p - n^r)$ of them).
Second, the condition $q > \left(\frac{3}{2} - r\right)$ mandates that the ratio between the important directions, i.e. higher-level eigenvalues, and the unimportant directions, is sufficiently small --- thus, the unimportant directions are sufficiently weighted.
This second condition appears to be strictly stronger than what is required for benign overfitting of noise.

Equation~\eqref{eq:everythingsupportcondition_bilevel} is quite strong as a sufficient condition, but nevertheless admits non-trivial regimes for which classification can generalize well or poorly (see the text accompanying Theorem~\ref{theorem:threeregimes} for a full discussion).
However, there is ample evidence to suggest that this condition is not necessary.
Notably, Figure~\ref{fig:frac_support}(b) shows that with a choice of parameterization $(p = 3/2, r = 1/2)$, the fraction of support vectors becomes equal to $1$ around when $q \geq 0.7$.
This choice of parameters for the bi-level ensemble clearly violates both conditions in Equation~\eqref{eq:everythingsupportcondition_bilevel}.
Thus, all training points become support vectors more often than our theory predicts.
Subsequent work to ours~\citep{hsu2021proliferation} tightened the condition in Equation~\eqref{eq:everythingsupportcondition} by providing a new deterministic equivalent to the phenomenon of all training points becoming support vectors.
\end{remark}

From this section, we have identified an interesting phenomenon by which all training points become support vectors with sufficient effective overparameterization.
Moreover, this phenomenon is even more prevalent empirically than our current theory predicts.

\section{Generalization analysis for interpolating solution with Gaussian features}\label{sec:generalization}

In Section~\ref{sec:svm}, we showed that the SVM solution often exactly corresponds to the \textit{minimum-$\ell_2$-norm} interpolation on binary labels, denoted by $\alphahatbold_{2,\mathsf{binary}}$.
In this section, we attempt an approximate characterization of the ensuing classification error of this interpolation.
Our hope is that we can leverage comprehensive analyses of minimum-$\ell_2$-norm interpolation for least-squares regression~\citep{bartlett2020benign,muthukumar2020harmless}.
However, it turns out that direct plug-ins of these analyses do not work for a number of reasons:
\begin{enumerate}
    \item Even with clean data (i.e. zero label noise), the classification setup admits misspecification noise of the form $Y_i - \phibold(X_i)^\top \alphastar$.
    The misspecification noise is clearly non-zero mean, and is non-trivially correlated with the features.
    This resists a clean decomposition of generalization error into the error arising from signal identifiability (or lack thereof) + error arising from overfitting of noise, as in~\citet{bartlett2020benign}.
    \item For a given interpolation $\alphahatbold$, the expression for classification error is distinctly different from mean-square-error (we will see this explicitly in Theorem~\ref{theorem:classloss}).
    In particular, we will see that characterizing this expression sharply requires novel analysis of the individual recovered coefficients as a result of interpolation.
\end{enumerate}
Our analysis is subsequently non-trivial to engage with both of these difficulties, and directly addresses both of them by analyzing the minimum-$\ell_2$-norm interpolator of binary labels from first principles.
This is, roughly speaking, in two steps: first, by characterizing the expected generalization error in terms of 0-1 classification loss \textit{for any solution} (regardless of whether it interpolates or not) as a function of \textit{survival} and \textit{contamination} factors; second, by obtaining sharp characterizations of these factors for the minimum-$\ell_2$-norm interpolator of binary labels.

\subsection{Setup and result}

We state our main result for this section in the context of the bi-level ensemble (Definition~\ref{def:bilevel_covariance}).
We fix parameters $p > 1$ (which represents the extent of artificial overparameterization), and~$r \in [0,1)$ (which sets the number of preferred features), and $q \in [0,p-r]$ (which controls the weights on preferred features, thus effective overparameterization); and study the evolution of regression and classification risk as a function of $n$.
For the purpose of this section, we denote the regression and classification test losses under the bi-level ensemble as $\reglossn$ and $\classlossn$, to emphasize that these losses vary with $n$.

In addition to this and the broad setup as described in Section~\ref{sec:setup} we make a \textit{$1$-sparse assumption} on the unknown parameter vector $\mathbf{\alphastar}$, as described below.

\begin{assumption}[$1$-sparse linear model]
Recall that the bi-level ensemble sets $s := n^r$.
For some unknown\footnote{The intuition for this condition, also motivated in prior analyses of minimum-$\ell_2$-norm interpolation~\citep{muthukumar2020harmless}, is that for any reasonable preservation of signal, the true feature needs to be sufficiently preferred, therefore weighted highly.} $t \in \{1,\ldots,s\}$, we assume that $\mathbf{\alphastar} = \frac{1}{\sqrt{\lambda_t}} \cdot \evec_t$, i.e. the parameter vector $\mathbf{\alphastar}$ is $1$-sparse.
\label{assumptionsimple}
\end{assumption}
Assumption~\ref{assumptionsimple} is most useful to for us to derive clean expressions for regression and classification error in terms of natural notions of ``survival" and ``contamination", as detailed subsequently in Section~\ref{sec:pathtoanalysis}.
While this assumption appears rather strong, it is actually without loss of generality within the bi-level ensemble \textit{for analyzing the performance of minimum-$\ell_2$-norm interpolation specifically}.
If the true parameter vector $\alphastar$ has support only within the $s$ favored directions, then we can choose another orthonormal coordinate system in which this $\alphastar$ is only along the first direction.
Because minimum-$\ell_2$-norm interpolation does not care about orthonormal coordinate changes and such a change will not change the underlying covariance matrix, we just assume $1$-sparsity to capture the representability of the true model by the favored features.

Under Assumption~\ref{assumptionsimple}, we now show the existence of a regime, corresponding to choice of $(p,q,r)$ above, for which the regression test loss stays prohibitively high, but the classification test loss goes to $0$ as $n \to \infty$.
(We also derive non-asymptotic versions of these results in Appendix~\ref{appendix:bilevel}, but only state the asymptotic results here for brevity.)

\begin{theorem} \label{thm:main}
Assume that the true data generating process is $1$-sparse (Assumption~\ref{assumptionsimple}). For the bi-level covariance matrix model, the limiting classification and regression error of the minimum-$\ell_2$-norm interpolation (of binary labels and real labels respectively) converge in probability, over the randomness in the training data, as a function of the parameters $(p,q,r)$ in the following way:
\begin{enumerate}
    \item For $0 \leq q < (1 - r)$, we have
\begin{align*}
    \lim_{n \rightarrow \infty} \reglossn =0 ,\\
       \lim_{n \rightarrow \infty} \classlossn = 0 .
\end{align*}
In this regime, both regression and classification generalize well.

\item For $(1 - r) < q < (1 - r) + \frac{(p-1)}{2}$, we have
\begin{align*}
    \lim_{n \rightarrow \infty} \reglossn =1 ,\\
       \lim_{n \rightarrow \infty} \classlossn =0 .
\end{align*}

In this regime, classification generalizes well but regression does not.
\item For $(1 - r) + \frac{(p-1)}{2} < q \leq (p - r)$, we have
\begin{align*}
    \lim_{n \rightarrow \infty} \reglossn =1 ,\\
       \lim_{n \rightarrow \infty} \classlossn = \frac{1}{2} .
\end{align*}
In this regime, the generalization is poor for both classification and regression.
\end{enumerate}

Note that the presence of label noise $\nu^*$ does not affect these asymptotic scalings (since $\nu^* < 0.5$).

\label{theorem:threeregimes}
\end{theorem}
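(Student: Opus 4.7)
The plan uses the signal-processing perspective of~\citet{muthukumar2020harmless}: express each generalization error in terms of a scalar \emph{survival} $\survival$ (how much of the true $t$-th coordinate is recovered) and \emph{contamination} $\contamination^2$ (variance leaked into spurious directions). Under Assumption~\ref{assumptionsimple} and the diagonality of $\Sigmabold$, the coordinates of $\avec(X)$ are mutually independent, so conditional on the training data one can decompose
\begin{align*}
\avec(X)^\top \alphahatbold \;=\; \survival \cdot \avec(X)^\top \alphastar + W, \qquad W \sim \NORMAL(0, \contamination^2), \quad W \perp \avec(X)^\top \alphastar.
\end{align*}
A short calculation then yields $\reglossn = (1-\survivalr)^2 + \contaminationr^2$ and, via the bivariate Gaussian sign-disagreement identity, $\classlossn = \tfrac{1}{\pi}\arccos\bigl(\survival/\sqrt{\survival^2 + \contamination^2}\bigr)$. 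Hence regression succeeds iff $\survivalr \to 1$ and $\contaminationr \to 0$; classification succeeds iff $\survival/\contamination \to \infty$, and reaches error $1/2$ iff $\survival/\contamination \to 0$.

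Next I would compute $\survival$ and $\contamination$ under the bi-level ensemble. Whitening $\avec(X_i) = \Sigmabold^{1/2} g_i$ with $g_i \sim \NORMAL(\mathbf{0}, \Ibold_d)$ and writing $\alphahatbold = \Atrain^\top \K^{-1} \Ytrain$ with $\K = \Atrain\Atrain^\top$, both $\survival$ and $\contamination$ become quadratic forms in $\K^{-1}$ applied to columns of the iid Gaussian matrix $G$. Split $\K = \lambda_H G_H G_H^\top + \lambda_L G_L G_L^\top$ with $\lambda_H \asymp n^{p-q-r}$, $\lambda_L \to 1$, $G_H \in \R^{n \times s}$, $G_L \in \R^{n \times (d-s)}$; Marchenko--Pastur concentration gives $G_L G_L^\top \approx (d-s)\Ibold_n$ and $G_H^\top G_H \approx n \Ibold_s$ (valid since $s = n^r \ll n$). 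Applying the Sherman--Morrison leave-one-out identity and using $g_{:t}^\top \K_{-t}^{-1} g_{:t} \approx \tr(\K_{-t}^{-1}) \asymp n/[\lambda_L(d-s)]$ (Hanson--Wright, since $g_{:t} \perp \K_{-t}$) yields
\begin{align*}
\survivalr \;\asymp\; \frac{\lambda_H n}{\lambda_L(d-s) + \lambda_H n} \;\asymp\; \frac{\gamma}{1+\gamma}, \qquad \gamma := n^{1-q-r}.
\end{align*}
The binary survival $\survival$ has the same scaling, differing only by the constant $\sqrt{2/\pi} = \EE|g|$ from $\EE[g_{:t}\sgn(g_{:t})^\top] = \sqrt{2/\pi}\,\Ibold_n$. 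An analogous leave-one-out analysis for the contamination sum (isolating $g_{:j}$ from $\K_{-j}^{-1}$, then summing over $j \neq t$ by Bernstein) gives $\contamination^2 \asymp n\bigl[\lambda_L^2(d-s) + \lambda_H^2 s\bigr]/[\lambda_L(d-s) + \lambda_H n]^2$.

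Substituting the bi-level scalings identifies the two thresholds. The first, $q + r = 1$ (where $\gamma$ crosses $1$), separates Regime 1 ($\survival \to 1$, regression succeeds) from Regimes 2--3 ($\survival \to 0$, regression fails at error $1$). A direct computation gives $(\survival/\contamination)^2 \asymp \min\bigl(n^{1+p-2q-2r},\, n^{1-r}\bigr)$; since $1-r > 0$, the binding threshold is $1+p-2q-2r = 0$, equivalently $q = (1-r) + (p-1)/2$, which separates Regime 2 ($\survival/\contamination \to \infty$, classification succeeds) from Regime 3 ($\survival/\contamination \to 0$, classification reaches $1/2$). The constraint $q \leq p - r$ together with $p > 1$ ensures that both thresholds lie in $[0, p-r]$, so all three regimes are non-empty.

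The central technical obstacle is converting these heuristics into high-probability non-asymptotic statements. $\contamination^2$ is a sum over $d-1$ dependent terms (all sharing $\K$), so naive concentration fails; the Sherman--Morrison identity $g_{:j}^\top \K^{-1} \Ytrain = (g_{:j}^\top \K_{-j}^{-1} \Ytrain)/(1 + \lambda_j g_{:j}^\top \K_{-j}^{-1} g_{:j})$ isolates the fresh randomness in each $g_{:j}$ from the resolvent, after which Hanson--Wright controls the numerators and Bernstein concentrates the sum across $j$. Extra care is needed for the binary interpolator: $\sgn(g_{:t})$ is sub-Gaussian but not Gaussian, so one must separately justify that the conditional law of $W$ is close enough to Gaussian for the sign-disagreement identity, and the label-noise component (independent $\pm 1$ flips with probability $\betastar < 1/2$) adds a mean-zero Rademacher contribution to $\Ytrain$ that only multiplies $\contamination^2$ by an $O(1)$ constant, leaving every threshold intact.
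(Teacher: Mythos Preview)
Your proposal is correct and follows essentially the same route as the paper: the survival/contamination decomposition via Gaussianity (your $\arccos$ formula is equivalent to the paper's $\tfrac12 - \tfrac1\pi\arctan(\survivalb/\contaminationb)$), the Sherman--Morrison leave-one-out computation of $\survivalb$ giving $\gamma/(1+\gamma)$ with $\gamma = n^{1-q-r}$ and the extra $\sqrt{2/\pi}(1-2\betastar)$ factor in the binary case, Hanson--Wright concentration for the quadratic forms, and the bi-level substitution yielding the two thresholds $q = 1-r$ and $q = (1-r)+(p-1)/2$. One small overconcern: you do not need to argue that the contamination $W$ is ``close enough to Gaussian'' --- under the diagonal-$\Sigmabold$ Gaussian featurization, $W = \sum_{j\neq t}\alphahat_j\phi_j(X)$ is \emph{exactly} Gaussian and independent of $\phi_t(X)$ conditional on the training data, so the sign-disagreement identity holds exactly (this is the paper's Proposition~\ref{theorem:classloss}).
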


\begin{figure}[t]
  \centering
  \begin{tabular}{p{0.49\textwidth}p{0.49\textwidth}}
    \centering
    \includegraphics[width=0.5\textwidth]{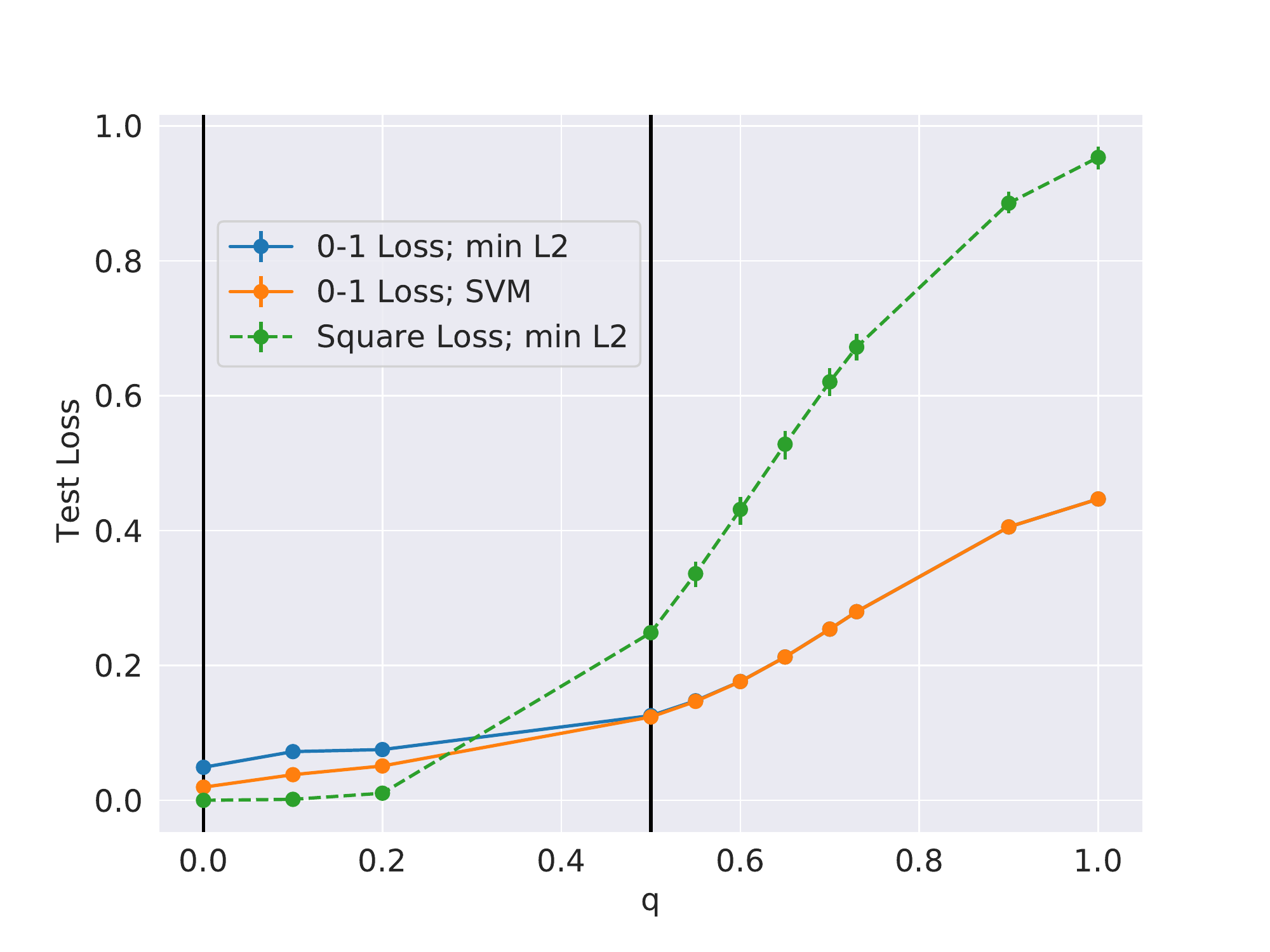}
    &
    \centering
    \includegraphics[width=0.5\textwidth]{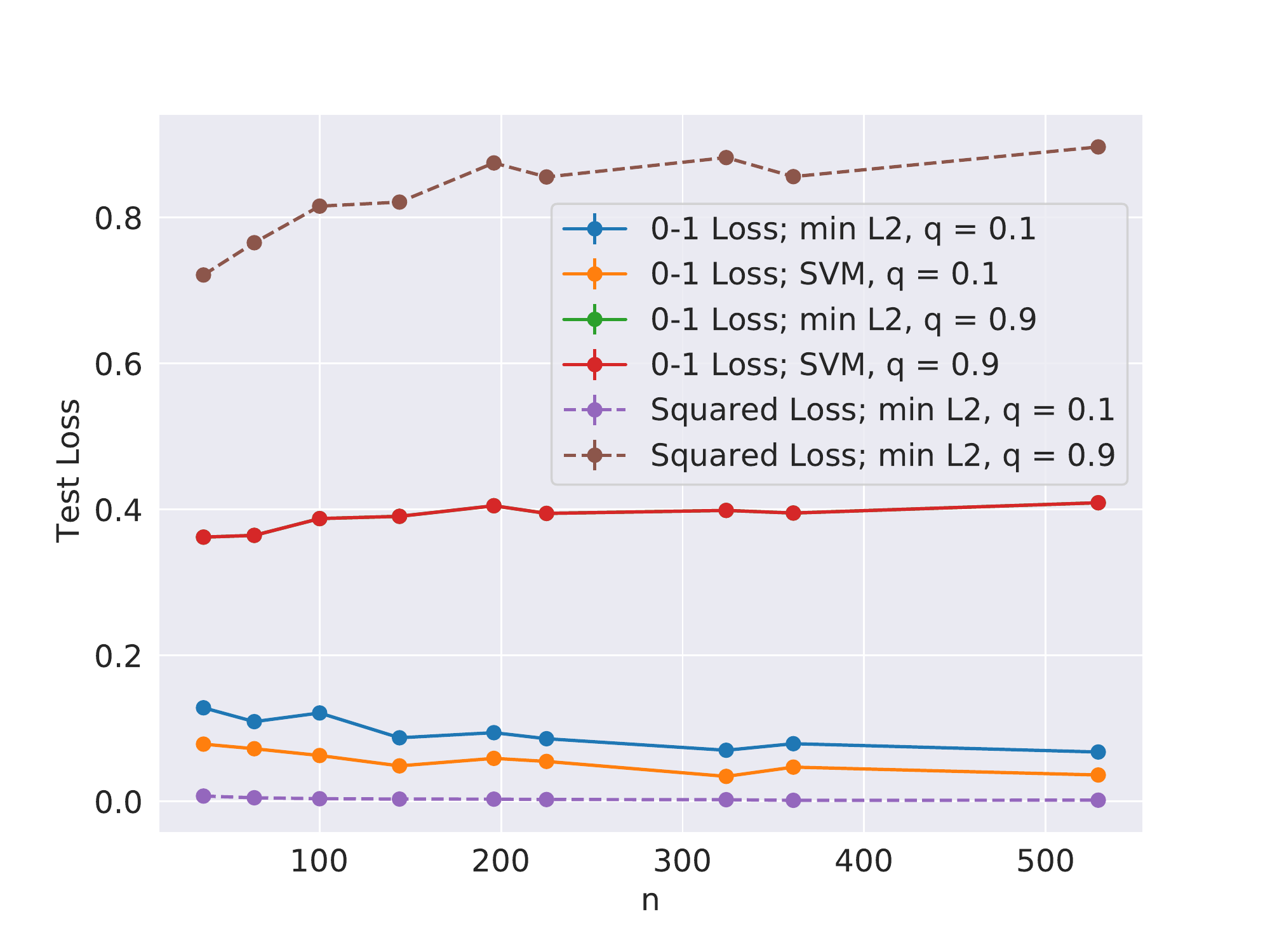}
    \tabularnewline
    \centering
    (a) Test loss vs $q$ 
    &
    \centering
    (b) Test loss vs number of training points
  \end{tabular}
  \caption{Comparison of test classification and regression error on solutions obtained by minimizing different choices of training loss on the bi-level ensemble. For both figures, parameters $(p = 3/2, r = 1/2)$ are fixed. On the left, $n=529, d=12167$ are fixed. Here, the dashed green curve corresponds to $\alphahatbold_{2,\mathsf{real}}$ (Equation \ref{eq:interpolatingsoln_real}), the orange curve corresponds to $\alphahatbold_{2,\mathsf{binary}}$ (Equation~\ref{eq:interpolatingsoln_binary}), the solid blue curve corresponds to $\alphahatbold_{\mathsf{SVM}}$ (Equation \ref{eq:l2SVM}), and the black lines demarcate the regimes from Theorem~\ref{theorem:threeregimes}. On the right, $d$ varies as $n^{\frac{3}{2}}$.}
  \label{fig:classification_vs_regression}
\end{figure}

Figure~\ref{fig:classification_vs_regression}(a) shows the evolution of classification and regression error as a function of the parameter $q$, fixing $p = 3/2$ and $r = 1/2$.
The classification error is plotted for both the SVM and the minimum-$\ell_2$-norm interpolation --- as we expect from Theorem~\ref{thm:everythingsupport}, these are remarkably similar.
Figure~\ref{fig:classification_vs_regression}(b) shows that the empirical quantities converge to the limiting quantities from Theorem~\ref{theorem:threeregimes}.

The new regime of principal interest that we have identified is values of $q\in(1-r,1-r+\frac{p-1}{2})$ for which classification generalizes, but regression does not.
The entire proof of Theorem~\ref{theorem:threeregimes} is deferred to
Appendices~\ref{appendix:gaussianproofs} and~\ref{appendix:bilevel},
but we briefly illustrate the intuition for this discrepancy between classification and regression tasks in Section~\ref{sec:pathtoanalysis}.
In particular, we will see that good generalization for classification requires a far less stringent condition on coefficient recovery than regression.

We now provide some intuition for the scalings described in Theorem~\ref{theorem:threeregimes} for the bi-level ensemble.

\begin{remark}\label{remark:signalregression}
Observe that in this ensemble, regression tasks generalize well \textit{iff} we have $q < (1 - r)$, which is a condition directly related to signal preservation.
Recall that for fixed values of $(p,r)$, the parameter $q$ controls the relative ratio of the larger eigenvalues to the smaller eigenvalues (corresponding to unimportant directions).
The higher the value of $q$, the smaller this ratio, and the harder it is to preserve signal.
The results on ``benign overfitting"~\citep{bartlett2020benign} \textit{upper bound} the contribution of (bounded $\ell_2$-norm) pure signal to regression error.
This upper bound can also be verified to decay with $n$ iff we have $q \leq (1 - r)$.
Furthermore, as we already remarked on Definition~\ref{def:bilevel_covariance}, the bi-level ensemble is designed to always avoid harmful \textit{noise overfitting}.
(We will, however, see in the next remark that the rate of effective noise absorption is important.)
\end{remark}

\begin{remark}
The regime that we have identified that is of principal interest is \textit{intermediate values} of $q$, i.e. $(1 - r) < q < (1 - r) + \frac{(p-1)}{2}$.
This highlights a fascinating role that overparameterization, in the form of the parameter $p$, plays in allowing the good generalization of interpolating solutions in classification tasks.
Recall that the larger the value of $p$, the larger the total number of features $d = n^p$.
Thus, there are several ``unimportant directions" in the bi-level ensemble all corresponding to the smaller eigenvalue --- which helps in harmless absorption of effective noise.
In the proof of Theorem~\ref{theorem:threeregimes}, we will identify an explicit mechanism by which having many unimportant directions helps in good generalization for classification, even though the signal is not preserved.
At a high level, this mechanism constitutes the spreading out of attenuated signal across several features in a relatively ``harmless" way, to exhibit minimal influence on classification performance.
In fact, this influence is quantified by a notion of ``contamination" by falsely discovered features (defined in Section~\ref{sec:pathtoanalysis}) that can be directly linked to the contribution of noise overfitting to regression error.
\end{remark}
Finally, we remark that Theorem~\ref{theorem:threeregimes} provides a connection between classification and regression test error when both tasks are solved using the minimum-$\ell_2$-norm interpolation, i.e. minimizing the square loss on training data.
Since we explicitly linked the minimum-$\ell_2$-norm interpolation and the SVM in the preceding Section~\ref{sec:svm}, it is natural to ask whether the generalization results in Theorem~\ref{theorem:threeregimes} help us directly compare the SVM for classification tasks and the minimum-$\ell_2$-norm interpolation for regression tasks.
We can indeed do this in a slightly more restricted regime of the bi-level ensemble, described below.

\begin{corollary}\label{cor:svmgeneralization}
Assume that the true data generating process is $1$-sparse (Assumption~\ref{assumptionsimple}).
Consider the bi-level ensemble with $p > 2$.
Then, the classification error of the SVM (on binary labels), and the regression error of the minimum-$\ell_2$-norm interpolation (on real labels), converge in probability as follows:
\begin{enumerate}
    \item For $\left(\frac{3}{2} - r\right) < q < (1 - r) + \frac{(p-1)}{2}$, we have
    \begin{align*}
    \lim_{n \rightarrow \infty} \regloss(\alphahatbold_{2,\mathsf{real}}) =1 ,\\
       \lim_{n \rightarrow \infty} \classloss(\alphahatbold_{\mathsf{SVM}}) =0 .
    \end{align*}
    \item For $(1 - r) + \frac{(p-1)}{2} < q \leq (p - r)$, we have
     \begin{align*}
    \lim_{n \rightarrow \infty} \regloss(\alphahatbold_{2,\mathsf{real}}) =1 ,\\
       \lim_{n \rightarrow \infty} \classloss(\alphahatbold_{\mathsf{SVM}}) = \frac{1}{2}.
    \end{align*}
\end{enumerate}
\end{corollary}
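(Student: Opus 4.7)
The plan is to derive the corollary by composing the two main theorems already established in the paper, with essentially no new analytic work beyond checking that the hypotheses align. Concretely, I would combine Theorem~\ref{thm:everythingsupport} (which, in its bi-level form, guarantees that the hard-margin SVM coincides with the minimum-$\ell_2$-norm interpolator of binary labels when the effective overparameterization is large enough) with Theorem~\ref{theorem:threeregimes} (which characterizes the asymptotic regression and binary-interpolation classification errors).

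First, I would verify that the SVM$=\alphahatbold_{2,\mathsf{binary}}$ equivalence is in force throughout the parameter range stated in the corollary. The sufficient bi-level condition from Equation~\eqref{eq:everythingsupportcondition_bilevel} is $p > 2$ together with $q > \tfrac{3}{2} - r$. The hypothesis $p > 2$ is built into the corollary. In regime (1), the lower bound $q > \tfrac{3}{2} - r$ is precisely the stated assumption. In regime (2), the lower bound $q > (1-r) + \tfrac{p-1}{2} = \tfrac{p+1}{2} - r$ is strictly stronger than $\tfrac{3}{2} - r$ because $p > 2$ implies $\tfrac{p+1}{2} > \tfrac{3}{2}$; so the equivalence condition is automatically satisfied. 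Thus in both regimes, Theorem~\ref{thm:everythingsupport} yields an event $E_n$ with $\Pr[E_n] \geq 1 - 2/n$ on which $\alphahatbold_{\mathsf{SVM}} = \alphahatbold_{2,\mathsf{binary}}$.

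Second, I would note that the two parameter ranges in the corollary are subsets of regimes (2) and (3) of Theorem~\ref{theorem:threeregimes} respectively (recall that $\tfrac{3}{2}-r > 1-r$ for all $r$, so regime (1) of the corollary sits inside regime (2) of the theorem). Directly reading off from Theorem~\ref{theorem:threeregimes}, we obtain $\reglossn \to 1$ in probability in both regimes, $\classlossn \to 0$ in regime (1), and $\classlossn \to \tfrac{1}{2}$ in regime (2), where $\classlossn$ denotes the classification loss of $\alphahatbold_{2,\mathsf{binary}}$. The regression part of the corollary's conclusion is therefore immediate, since it is literally the same statement about $\alphahatbold_{2,\mathsf{real}}$.

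Third, I would transfer the classification conclusion from $\alphahatbold_{2,\mathsf{binary}}$ to $\alphahatbold_{\mathsf{SVM}}$ via the event $E_n$. For any $\varepsilon > 0$, the event $\{|\classloss(\alphahatbold_{\mathsf{SVM}}) - L| > \varepsilon\}$ is contained in $\{|\classloss(\alphahatbold_{2,\mathsf{binary}}) - L| > \varepsilon\} \cup E_n^c$, where $L$ is the claimed limit ($0$ or $\tfrac{1}{2}$). Since $\Pr[E_n^c] \leq 2/n \to 0$ and, by Theorem~\ref{theorem:threeregimes}, $\Pr[|\classloss(\alphahatbold_{2,\mathsf{binary}}) - L| > \varepsilon] \to 0$, a union bound gives $\classloss(\alphahatbold_{\mathsf{SVM}}) \to L$ in probability. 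This yields both stated conclusions for the SVM.

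The proof is essentially bookkeeping, and the only mild subtlety is making sure the two high-probability events (equivalence of SVM with $\alphahatbold_{2,\mathsf{binary}}$, and concentration of the classification loss of $\alphahatbold_{2,\mathsf{binary}}$) are composed correctly so that convergence in probability survives the transfer. There is no genuinely new technical obstacle here: all the work was already done in Theorem~\ref{thm:everythingsupport} and Theorem~\ref{theorem:threeregimes}, and the parameter ranges in the corollary were chosen precisely so that the hypotheses of both theorems hold simultaneously.
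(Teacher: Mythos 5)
Your proposal is correct and follows essentially the same route as the paper: invoke the bi-level sufficient condition $p>2$, $q>\tfrac{3}{2}-r$ from Theorem~\ref{thm:everythingsupport} to identify $\alphahatbold_{\mathsf{SVM}}$ with $\alphahatbold_{2,\mathsf{binary}}$ on a high-probability event, note that $p>2$ gives $(1-r)+\tfrac{p-1}{2}>\tfrac{3}{2}-r$ so the stated ranges sit inside regimes (2) and (3) of Theorem~\ref{theorem:threeregimes}, and read off the limits. Your explicit union-bound transfer of convergence in probability from $\classlossn$ to $\classloss(\alphahatbold_{\mathsf{SVM}})$ is just a more careful spelling-out of the bookkeeping the paper leaves implicit.
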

Observe that Corollary~\ref{cor:svmgeneralization} directly follows from plugging in the condition required in the bi-level ensemble for all training points usually becoming support vectors (Equation~\eqref{eq:everythingsupportcondition_bilevel}), and noting that for $p > 2$, we have
\begin{align*}
    (1 - r) + \frac{(p-1)}{2} > (1 - r) + \frac{1}{2} = \left(\frac{3}{2} - r\right) .
\end{align*}
Importantly, we have identified that even highly overparameterized regimes, in which all training points become support vectors, can yield good generalization for classification tasks when the hard-margin SVM is used.

\subsection{Path to analysis: Classification vs regression test error}\label{sec:pathtoanalysis}

The first step to proving Theorem~\ref{theorem:threeregimes} is obtaining clean expressions for both classification and regression test error.
The $1$-sparsity assumption that we have made on the unknown signal enables us to do this as a function of natural quantities corresponding to the preservation of the true feature (\textit{survival}) and the pollution due to  false features (\textit{contamination}).
If we assume that the real labels are generated by the ${\truek}^{\text{th}}$ feature, $\alpha^*_{\truek}$, then we can define these quantities for any solution $\alphahatbold$.
First, as classically observed in statistical signal processing, the estimated coefficient corresponding to the true feature $\alpha^*_{\truek}$ will experience \textit{shrinkage} and be attenuated by a factor that we denote as \textit{survival}.
From Assumption~\ref{assumptionsimple}, we defined $\boldsymbol{\alphastar} := \frac{1}{\sqrt{\lambda_t}} \cdot \evec_t$, and so we have
 \begin{align}
     \survivalb(\alphahatbold, \truek) = {\frac{\alphahat_{\truek}}{\alpha^*_{\truek}}} = \sqrt{\lambda_\truek}{\alphahat_{\truek}}  \label{eq:survivalformula}
 \end{align}
Second, we have the \textit{false discovery of features}.
We measure the effect of this false discovery for prediction on a test point $X$ by a \textit{contamination} term:
\begin{align}\label{eq:falsediscovery}
      B = \sumjdk \alphahat_j \phibold_j(\Xvec).
 \end{align}
Recall that $X$ is random, and the features $\avec(X)$ are zero-mean.
Therefore, $B$ is a zero-mean random variable.
Accordingly, we can define the standard deviation of the contamination term on a test point as below:
\begin{align}
     \contaminationb(\alphahatbold, \truek) &= \sqrt{\EE\left[B^2\right]} \nonumber \\
    &= \sqrt{\sumjdk \lambda_j \alphahat_j^2}. \label{eq:contaminationformula}
\end{align}
where the last step follows from the orthogonality of the $d$ features.
The ideas of survival and contamination can be related to the classical signal processing concept of \textit{aliasing}; Figure~\ref{fig:survivalcontamination} in Appendix~\ref{appendix:ultratoy} provides an illustration.

We state and prove the following proposition, which directly expresses regression and classification test loss in terms of these terms.

\begin{proposition}
Under the $1$-sparse noiseless linear model, the regression test loss (excess MSE) is given by:
\begin{align}
    \regloss(\alphahatbold) = (1 - \survivalb(\alphahatbold,\truek))^2 + \contaminationb^2(\alphahatbold,\truek). \label{eq:reglosssuc}
\end{align}
and the classification test loss (excess classification error) is given by:
\begin{align}
   \classloss(\alphahatbold) =  \frac{1}{2} - \frac{1}{\pi}\taninv{\frac{\survivalb(\alphahatbold, \truek)}{\contaminationb(\alphahatbold, \truek)}}. \label{eq:classloss_formula}
\end{align}
We can think of the quantity $\survivalb(\alphahatbold, \truek)/\contaminationb(\alphahatbold, \truek)$ as the effective ``signal-to-noise ratio" for classification problems.
\label{theorem:classloss}
\end{proposition}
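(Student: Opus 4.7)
My plan is to decompose the prediction $\inprod{\avec(X)}{\alphahatbold}$ of an arbitrary estimator along the true feature $\phi_{\truek}$ and the remaining $(d-1)$ coordinates, and then exploit the fact that the Gaussian features, which have diagonal covariance $\Sigmabold$ under the bi-level ensemble, are mutually independent. Write $\inprod{\avec(X)}{\alphahatbold} = \alphahat_{\truek}\,\phi_{\truek}(X) + B$ where $B = \sumjdk \alphahat_j\, \phi_j(X)$, and recall that under Assumption~\ref{assumptionsimple} the target satisfies $\inprod{\avec(X)}{\alphastar} = \phi_{\truek}(X)/\sqrt{\lambda_{\truek}}$. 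Both test-loss expressions can then be recast in terms of the two independent zero-mean Gaussian scalars $U := \phi_{\truek}(X) \sim \NORMAL(0,\lambda_{\truek})$ and $B \sim \NORMAL(0,\contaminationb^2(\alphahatbold,\truek))$, where the latter variance uses the diagonality of $\Sigmabold$ and matches Equation~\eqref{eq:contaminationformula}.

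For the regression identity in Equation~\eqref{eq:reglosssuc}, I would expand $\regloss(\alphahatbold) = \EE\bigl[((1/\sqrt{\lambda_{\truek}} - \alphahat_{\truek})\,U - B)^2\bigr]$, use $\EE[UB]=0$ (from independence and $\EE[B]=0$) to kill the cross term, and then plug in $\EE[U^2]=\lambda_{\truek}$ and $\EE[B^2] = \contaminationb^2(\alphahatbold,\truek)$. The pure-signal term $(1/\sqrt{\lambda_{\truek}} - \alphahat_{\truek})^2\,\lambda_{\truek}$ collapses algebraically to $(1 - \sqrt{\lambda_{\truek}}\,\alphahat_{\truek})^2 = (1 - \survivalb(\alphahatbold,\truek))^2$ by the definition of survival in Equation~\eqref{eq:survivalformula}, which yields the claimed formula.

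For the classification identity in Equation~\eqref{eq:classloss_formula}, I would standardize by setting $U' := U/\sqrt{\lambda_{\truek}}$ and $V' := B/\contaminationb(\alphahatbold,\truek)$, so that $(U', V')$ is a pair of independent standard Gaussians. Since $\sgn(\inprod{\avec(X)}{\alphastar}) = \sgn(U')$ and $\sgn(\inprod{\avec(X)}{\alphahatbold}) = \sgn\bigl(\survivalb \cdot U' + \contaminationb \cdot V'\bigr)$, the quantity $\classloss(\alphahatbold)$ equals the sign-disagreement probability of the centered jointly Gaussian pair $\bigl(U',\; \survivalb\, U' + \contaminationb\, V'\bigr)$. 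This pair has Pearson correlation $\rho = \survivalb/\sqrt{\survivalb^2 + \contaminationb^2}$; Sheppard's orthant-probability formula then gives the disagreement probability as $\tfrac{1}{2} - \tfrac{1}{\pi}\arcsin(\rho)$, and the identity $\arcsin(\rho) = \taninv{\rho/\sqrt{1 - \rho^2}}$ rewrites this as $\tfrac{1}{2} - \tfrac{1}{\pi}\taninv{\survivalb/\contaminationb}$, producing the claim.

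The main (mild) obstacle is the classification step: one must recognize the disagreement probability as a classical orthant probability, invoke Sheppard's formula, and then handle the $\arcsin \to \arctan$ conversion so that the ratio appears in the correct orientation $\survivalb/\contaminationb$ rather than its reciprocal. The regression step is a short orthogonality computation with no real subtlety, and neither derivation requires any special structure on $\alphahatbold$ (e.g., that it interpolates the training data) --- only the $1$-sparsity of $\alphastar$ and the diagonality of $\Sigmabold$ are used.
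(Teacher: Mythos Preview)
Your proposal is correct. The regression argument is essentially identical to the paper's (both exploit orthogonality of the Gaussian features and the $1$-sparse assumption to split the MSE into a survival term and a contamination term). For the classification identity, the paper takes a slightly different and somewhat more direct route: after writing the misclassification event as $\survivalb\, U^2 + \contaminationb\, U V \leq 0$ for independent standard normals $U,V$, it divides through by $U^2$ to obtain $V/U \leq -\survivalb/\contaminationb$ and then reads off the answer from the standard Cauchy CDF $F(t)=\tfrac12+\tfrac1\pi\arctan(t)$. Your approach via Sheppard's orthant formula plus the $\arcsin\!\to\!\arctan$ identity reaches the same endpoint; the Cauchy-ratio trick simply bypasses the need to compute the correlation $\rho$ and perform the trigonometric conversion, landing on the $\arctan(\survivalb/\contaminationb)$ form in one step.
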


\begin{proof}
We first prove Equation~\eqref{eq:reglosssuc}.
Recall that for any estimator $\alphahatbold$, the excess MSE is given by
\begin{align}
     \regloss(\alphahatbold) :&= \EE[(\inprod{\avec(X)}{\alphastar-\alphahatbold})^2] \nonumber \\
     &= \sum\limits_{j=1}^d \lambda_j (\alpha^*_j - \alphahat_j)^2, \nonumber
\end{align}
and then substituting in the $1$-sparse Assumption~\ref{assumptionsimple} gives us Equation~\eqref{eq:reglosssuc}.

Next, we prove Equation~\eqref{eq:classloss_formula}.
Since $\avec(X) = \Sigmabold^{1/2} \W$ for $\W = (W_1,\dotsc,W_d) \sim \mathcal{N}(\mathbf0,\mathbf I_d)$, we can write
  $\avec(X)^\top \alphastar = W_{\truek}$
  and
  $\avec(X)^\top \alphahatbold = \sum_{j=1}^d \sqrt{\lambda_j} W_j \alphahat_j$.
Thus, the excess classification error of $\alphahatbold$ is given by
  \[
    \classloss(\alphahatbold)
    = \PP\left( \avec(X)^\top \alphahatbold \avec(X)^\top \alphastar \leq 0 \right)
    = \PP\left( \sqrt{\lambda_{\truek}} \alphahat_{\truek} W_{\truek}^2 + W_{\truek} \cdot \sum_{j \neq \truek} \sqrt{\lambda_j} \alphahat_j W_j \leq 0 \right)
    .
  \]
Now, the random sum $\sum_{j \neq \truek} \sqrt{\lambda_j} \alphahat_j W_j$ has a Gaussian distribution with mean zero and variance $\contaminationb(\alphahatbold, \truek)^2$.
  Since the $\{ W_j \}_{j=1}^d$ are independent, the classification test error of $\alphahatbold$ is the probability of the following event:
  \[
    \survivalb(\alphahatbold, \truek) U^2 + U \cdot \contaminationb(\alphahatbold, \truek) V \leq 0 ,
  \]
  where $U$ and $V$ are independent standard Gaussian random variables.
  This event is equivalently written as
  \[
    \frac{V}{U} \leq - \frac{\survivalb(\alphahatbold, \truek)}{\contaminationb(\alphahatbold, \truek)} .
  \]
  Since $V/U$ follows the standard Cauchy distribution with cumulative distribution function $F(t) = \tfrac12 + \tfrac1\pi \taninv{t}$, the claim follows.
\end{proof}
Equations~\eqref{eq:reglosssuc} and~\eqref{eq:classloss_formula} give us an initial clue as to why classification test error can be easier to minimize than regression test error.
For the right hand side of Equation~\eqref{eq:reglosssuc} to be small, we need $\survivalb \to 1$ to avoid shrinkage, as well as $\contaminationb \to 0$ to avoid contamination.
However, for the right hand side of Equation~\eqref{eq:classloss_formula} to be small, we only require the ratio of contamination to survival to be small (i.e.~$\contaminationb/\survivalb \to 0$).
Clearly, the former condition directly implies the latter, showing that classification is ``easier" than regression\footnote{Our decomposition of classification error is reminiscent of the decomposition by~\citet{friedman1997bias} into the ratio of terms depending on the variance (like contamination) and bias (like survival) respectively.
Because our data is Gaussian, Proposition~\ref{theorem:classloss} allows an \textit{exact} decomposition.}.
Theorem~\ref{theorem:threeregimes} is proved fully in Appendices~\ref{appendix:gaussianproofs} and~\ref{appendix:bilevel} in the following series of steps:
\begin{enumerate}
    \item Matching (non-asymptotic) upper and lower bounds are proved on both survival and contamination for interpolation of both real and binary labels.
    The full statements for these bounds are contained in Theorems~\ref{theorem:survivalbounds} and~\ref{theorem:contaminationbounds} in Appendix~\ref{appendix:survivalcontaminationtheoremstatements}.
    \item These bounds are substituted into the bi-level ensemble to get asymptotic scalings for classification and regression test error (Appendix~\ref{appendix:bilevel}).
\end{enumerate}
The bulk of the technical work is involved in proving the matching bounds on survival and contamination, i.e.~Theorems~\ref{theorem:survivalbounds} and~\ref{theorem:contaminationbounds}.
Although these results are inspired by the calculations provided in Appendix~\ref{appendix:fourierproof} for the Fourier case, we build on the techniques provided in \citet{bartlett2020benign} for Gaussian features, particularly making use of fundamental concentration bounds that were proved on ``leave-one-out" matrices in that work.
We build on these techniques to sharply bound both the ``survival" and ``contamination" terms, and thus obtain matching upper and lower bounds for the classification test error.
Crucially, our analysis needs to circumvent issues that stem from effective misspecification in the linear model that arise from the sign operator.
While we do not provide a generic analysis of ``misspecification noise,'' we exploit the special misspecification induced by the sign operator in a number of technical equivalents of the aforementioned random matrix concentration results.

We essentially show that this induced misspecification makes no difference, asymptotically, to classification error arising from interpolation from binary labels, and the behavior is essentially the same as though we had instead interpolated the real output.
This is another interesting consequence of requiring only the ratio $\frac{\contaminationb}{\survivalb} \to 0$, as opposed to the stronger requirements for regression, $\contaminationb \to 0$ and $\survivalb \to 1$.
We will see in Appendix~\ref{appendix:bilevel} that in the asymptotic limit $n \to \infty$, interpolation of binary \textit{noiseless} labels attenuates the signal by a factor exactly equal to $\sqrt{\frac{2}{\pi}}$.
This also corresponds to the attenuation factor of signal that has been traditionally been observed as a result of $1$-bit quantization applied before a matched filter\footnote{Recall that~\citet{muthukumar2020harmless} naturally connected matched filtering to minimum-$\ell_2$-norm interpolation.
}~\citep{1454543,chang1982presampling}.
Since this factor is strictly positive, it does not affect the asymptotic classification error.

In fact, the non-asymptotic scalings of survival and contamination terms are unaffected even by non-zero label noise on classification training data, provided that the label noise still preserves non-trivial information about the signal.
The survival is further attenuated by a non-zero factor of $(1 - 2\betastar)$, which is strictly positive as long as $\betastar < 1/2$.
Observe that this is equivalent to a hypothetical scenario where the binary labels take on ``shrunk" values $\{- (1 - 2\betastar), (1 - 2\betastar)\}$ instead of the usual $\{-1,1\}$.
As long as $\betastar < 1/2$, the magnitude of the labels is strictly non-zero and so the labels still provide useful information for classification.

Finally, it is natural to ask how fundamental our assumptions of Gaussianity on data and bi-level covariance structure are to our main generalization result (Theorem~\ref{theorem:threeregimes}). 
We chose the bi-level ensemble to illustrate the separation between classification and regression in the cleanest possible way.
However, Theorems~\ref{theorem:survivalbounds} and~\ref{theorem:contaminationbounds} do provide non-asymptotic expressions for survival and contamination for \textit{arbitrary} covariance matrices.
In principle, these expressions can be plugged into Proposition~\ref{theorem:classloss} to get upper and lower bounds on classification error for arbitrary covariance matrices.
Further, the analysis of benign overfitting in linear
regression~\citep{bartlett2020benign, muthukumar2020harmless} extends to sub-Gaussian features.
In the same spirit, we can show that the results --- including the existence of the intermediate regime, in
which classification works but regression does not --- extend to a
weaker assumption of \textit{independence} and sub-Gaussianity on the underlying features.
This extension uses an argument similar to the Fourier-case argument
given in Appendix~\ref{appendix:fourierproof} but requires a more
direct treatment of the approximation error arising from
misspecification.
We provide this argument in a forthcoming note.
Our results \textit{do not} extend to kernel settings, where there can be complex dependencies among the (infinite-dimensional) features.
This is an important direction for future work.

\section{Examining margin-based explanations for generalization}\label{sec:margin}

In this section, we explore the potential for generalization bounds as a function of training data margin to explain the behavior we have observed for classification tasks in the overparameterized regime.
Through simple experiments, we demonstrate that margin-based generalization bounds are  uninformative in sufficiently overparameterized settings.

\subsection{The historical role of margin}

For a particular function class $\mathcal{F}$, \textit{uniform convergence bounds} conservatively approximate the generalization error of $f \in \mathcal{F}$ by that of the least generalizable function in $\mathcal{F}$.
The ensuing generalization bounds typically depend on measures of complexity, such as the Vapnik-Chervonenkis dimension, which increase with the number of parameters in the model.
Thus, the uniform convergence approximation is not as good when $\mathcal{F}$ is large, e.g. the model has several parameters.
This shortcoming of uniform convergence-based bounds was first brought into focus by the remarkable success of boosting with a very large number of primitive classifiers \citep{schapire1998boosting}.
The main observation was that even after the training 0-1 loss became zero, increasing the number of primitive classifiers in the boosted model still reduced the test error.

An analysis in terms of the \textit{training data margin} was proposed as a possible explanation for this behavior for classifiers $f(\cdot)$ that make their predictions by discretizing the outputs of a real-valued function $g \in \mathcal{G}$, i.e.~$f(X) = \sgn(g(X))$.
The training margin, $\gamma := \min_i Y_i g(X_i)$ can be intuitively interpreted as a measure of prediction confidence; for linear classifiers, it is precisely the minimum (over training points) distance to the decision boundary.
The worst-case margin is not the only quantity that has been considered: generalization bounds based on a weighted combination of margin on all training data points have also been considered and demonstrated to be sharper in certain settings~\citep{gao2013doubt}.
In the settings we investigate, all training data points become support vectors -- therefore {\it the margins at each training point are equal}, and all such notions of margin become equivalent.

Under certain conditions, margin-based generalization bounds can scale far slower with the number of parameters in the model than uniform convergence bounds; for example, in boosting, the dependence is reduced to $\ln(\text{\# of primitive classifiers})$.
Since  the margin $\gamma$ could be artificially increased (without changing any of the predictions) simply by rescaling  the  real-valued function $g(\cdot)$ , the quantity of interest
is an appropriately \textit{normalized} margin, e.g.~the margin normalized by the Lipschitz constant of the learned function $g(\cdot)$ or its approximation.

The decrease of generalization error despite increasing complexity in ``modern'' overparameterized regimes is strongly reminiscent of the observations from boosting with a large number of primitive classifiers.
It is of particular interest to examine the ensuing generalization bounds for the hard-margin SVM, which maximizes margin on linearly separable data.
For the case of linear classifiers, the normalized margin is defined as $\gamma_N = \frac{\gamma}{\norm{\alphahatbold}_2}$.
We can now state the ensuing classification test error (i.e.~0-1 test loss) as a function of the normalized margin.
Notation in the statement is adapted to be consistent with the notation in this paper --- for an elementary verification, see Appendix~\ref{appendix:spectral_margin}.

\begin{theorem}[Theorem 21,~\citep{bartlett2002rademacher}]
For a random test \newline point $(X,Y)$ drawn from the same distribution as the training data, the following holds with probability $(1 - \delta)$ over the training data $\{X_i,Y_i\}_{i=1}^n$:

\begin{equation}
\label{eq:spectral_gen_bound}
   \Pr[\mathsf{sgn}(\alphahatbold^T \avec(X)) \neq Y] \leq \frac{1}{n} \sum_{i=1}^n l_{\gamma}(\alphahatbold^\top \avec(X_i) \cdot Y_i) + \frac{4}{\gamma_{N}} \cdot  \frac{\norm{\Atrain}_{\mathsf{F}}}{n} +\left(\frac{8}{\gamma} + 1\right) \cdot  \sqrt{\frac{\ln(4/\delta)}{2n}}
\end{equation}
where the ramp loss function $l_{\gamma}$ is defined by
\begin{align*}
    l_{\gamma}(z) := \begin{cases}
    1 \text{ if } z \leq 0 \\
    1 - \frac{z}{\gamma} \text{ if } 0 < z \leq \gamma \\
    0 \text{ if } z > \gamma .
    \end{cases}
\end{align*}
\end{theorem}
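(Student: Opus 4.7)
The plan is to control the $0$-$1$ loss by the surrogate ramp loss and then apply a standard Rademacher-complexity uniform convergence bound over the linear hypothesis class. First, observe that pointwise for any $\alphabold \in \R^d$ we have $\mathbb{I}[\sgn(\alphabold^\top \avec(X)) \neq Y] \leq l_\gamma(\alphabold^\top \avec(X) \cdot Y)$, since $l_\gamma$ dominates the indicator $\mathbb{I}[\cdot \leq 0]$. Hence it suffices to upper bound the population mean $\EE[l_\gamma(\alphahatbold^\top \avec(X) \cdot Y)]$ in terms of its empirical average plus a complexity penalty and a high-probability slack.

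Next, I would apply the standard symmetrization-plus-McDiarmid inequality: for any class $\mathcal{H}$ of functions taking values in $[0,1]$, with probability at least $1-\delta$,
\[
\sup_{h \in \mathcal{H}} \left(\EE[h(X,Y)] - \frac{1}{n}\sum_{i=1}^n h(X_i,Y_i)\right) \leq 2\hat{R}_n(\mathcal{H}) + \sqrt{\frac{\ln(2/\delta)}{2n}},
\]
where $\hat R_n(\mathcal{H})$ is the empirical Rademacher complexity. Taking $\mathcal{H}$ to be the composition of $l_\gamma$ with the class of margin functions $(x,y) \mapsto \alphabold^\top \avec(x) \cdot y$, I would then invoke Talagrand's contraction lemma: because $l_\gamma$ is $(1/\gamma)$-Lipschitz and satisfies $l_\gamma(0) \leq 1$, the composed class has empirical Rademacher complexity at most $(1/\gamma)$ times that of the underlying linear class $\mathcal{G} = \{(x,y) \mapsto \alphabold^\top \avec(x) \cdot y\}$.

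For the linear class itself I would use the classical bound
\[
\hat{R}_n\bigl(\{(x,y) \mapsto \alphabold^\top \avec(x) \cdot y : \norm{\alphabold}_2 \leq W\}\bigr) \leq \frac{W \cdot \norm{\Atrain}_{\mathsf{F}}}{n},
\]
which follows by Jensen applied to $\EE_\sigma \norm{\sum_i \sigma_i Y_i \avec(X_i)}_2 \leq \sqrt{\EE_\sigma \norm{\sum_i \sigma_i Y_i \avec(X_i)}_2^2} = \norm{\Atrain}_{\mathsf{F}}$ together with Cauchy--Schwarz. Setting $W = \norm{\alphahatbold}_2$ and recalling $\gamma_N = \gamma/\norm{\alphahatbold}_2$, the ratio $W/\gamma$ becomes $1/\gamma_N$, and after tracking the factors of $2$ from symmetrization and contraction one recovers the $\frac{4}{\gamma_N} \cdot \frac{\norm{\Atrain}_{\mathsf{F}}}{n}$ term.

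The main obstacle, which is the source of the apparently exotic $(8/\gamma + 1)\sqrt{\ln(4/\delta)/(2n)}$ coefficient, is that the statement must hold for the \emph{data-dependent} radius $\norm{\alphahatbold}_2$ rather than a pre-specified $W$. The standard remedy is a union bound over a geometric grid of candidate radii $W_k = 2^k$: one applies the fixed-$W_k$ bound at confidence $\delta_k \propto \delta/k^2$ and then, for the realized $\alphahatbold$, selects the smallest $k^\star$ with $W_{k^\star} \geq \norm{\alphahatbold}_2$, losing at most a factor of $2$ in the radius. The contribution of this union bound to the McDiarmid slack is amplified by the Lipschitz/range interaction of the ramp loss, producing the additive $8/\gamma$ factor inside the square-root term while the residual $+1$ absorbs the fixed confidence term at the realized scale. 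Putting these pieces together (ramp domination, Rademacher symmetrization, Talagrand contraction, linear-class bound, and the scale union bound) yields the stated inequality.
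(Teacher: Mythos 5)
Your overall architecture (ramp-loss domination of the 0--1 loss, symmetrization, Talagrand contraction with Lipschitz constant $1/\gamma$, and the bound $\hat{R}_n \leq W\norm{\Atrain}_{\mathsf{F}}/n$ for the norm-$W$ linear class) is the right skeleton, and it is essentially how Bartlett and Mendelson prove this result. Note, though, that the paper itself does not re-prove the statement: it imports Theorem~21 of \citet{bartlett2002rademacher} wholesale, and Appendix~\ref{appendix:spectral_margin} only verifies the notational translation --- for the linear kernel $k(X,X')=\avec(X)^\top\avec(X')$ one has $\sqrt{\sum_i k(X_i,X_i)}=\norm{\Atrain}_{\mathsf{F}}$, and the radius $B$ in the cited bound is instantiated as an upper bound on $\norm{\alphahatbold}_2$, so that $\frac{4B}{\gamma n}\sqrt{\sum_i k(X_i,X_i)}$ becomes $\frac{4}{\gamma_N}\cdot\frac{\norm{\Atrain}_{\mathsf{F}}}{n}$.

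The genuine gap in your write-up is the treatment of the last term. In the cited theorem both $\gamma$ and the norm bound $B$ are fixed in advance (the radius appears explicitly in the second term of the bound), so no union bound over a geometric grid of radii is needed; and your proposed grid $W_k=2^k$ with confidences $\delta_k\propto\delta/k^2$ could not in any case produce $\left(\frac{8}{\gamma}+1\right)\sqrt{\ln(4/\delta)/(2n)}$: it yields a slack of order $\sqrt{\ln(k^\star/\delta)/n}$ with $k^\star\approx\log_2\norm{\alphahatbold}_2$, i.e., an extra additive dependence on the realized norm, not an additive $8/\gamma$. The stated constant instead comes from the bookkeeping in Bartlett--Mendelson: two concentration (McDiarmid) steps, each run at confidence $\delta/2$ (hence $\ln(4/\delta)$) --- one for the usual sup-deviation of the $[0,1]$-valued ramp losses (the $+1$), and one to pass from the expected Rademacher complexity of the $1/\gamma$-Lipschitz ramp-composed class to the empirical kernel quantity, where the bounded-differences constant carries the $1/\gamma$ factor (the $8/\gamma$); the factor $4$ in the middle term likewise comes from symmetrization together with their contraction/normalization conventions, not from a factor-of-two loss on a radius grid. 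A related small slip: the uniform-convergence inequality you quote, $2\hat{R}_n(\mathcal{H})+\sqrt{\ln(2/\delta)/(2n)}$ with the \emph{empirical} Rademacher complexity, is not the standard statement (the empirical-complexity version costs an extra concentration term). None of this changes the qualitative story, but since the point of this theorem in the paper is to evaluate the bound numerically with its exact constants, your argument as written does not establish the quoted inequality; the clean route is either to cite Theorem~21 and check the kernel-to-linear dictionary, as the paper does, or to reproduce the two-step concentration argument of Bartlett--Mendelson with their constants.
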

When the training data are separable, we apply the above bound setting the first (average training loss) term to $0$, and only consider the the second term in the bound, i.e.~we ignore the high-probability term.
Equation~\eqref{eq:spectral_gen_bound} reminds us that there is a critical dependence on the intrinsic data dimension, captured by the term $\|\Atrain\|_{\mathsf{F}}$.
We will shortly see that this dependence is critical to track in the overparameterized regime.

\subsection{Can margin track performance of overparameterized models?}

We now investigate whether this generalization bound is effective in tracking the true test classification error for the hard-margin SVM in our setting for a number of choices of featurization.
Importantly, we consider the solution $\alphahatbold_{\mathsf{SVM}}$ only in sufficiently overparameterized settings under which all training points become support vectors with high probability; therefore, the un-normalized margin $\gamma = 1$ and the normalized margin of the SVM solution is exactly equal to $\gamma_{N} = \frac{1}{\norm{\alphahatbold}_2}$.

We study the evolution of margin, the ensuing upper bound in Equation~\eqref{eq:spectral_gen_bound}, and the true test classification error as we increase the level of overparameterization for two choices of featurizations:  isotropic Gaussian features (Definition~\ref{def:isotropic_covariance}) which generalize poorly according to Theorem~\ref{theorem:threeregimes} and weak features (Definition~\ref{def:weakfeatures}), which are known to exhibit the double-descent behavior.
For the case of isotropic features, we retain our 1-sparse assumption from Section \ref{sec:generalization}.
For the case of weak features, we consider $Y_i = \sgn(U_i)$ for $i \in \{1,\ldots,n\}$.

Figure~\ref{fig:margin_gaussian} plots the isotropic case, and Figure~\ref{fig:margin_wiggly} plots the weak features case. For both figures, we hold the number of training points, $n$ constant and vary the number of features, $d$, to tune the extent of overparameterization.
In both Figures~\ref{fig:margin_gaussian}(a) and~\ref{fig:margin_wiggly}(a), the normalized margin increases with increasing $d$, since the optimizer can use more features to meet the constraint in Equation~\eqref{eq:interpolatingsoln_binary}. The generalization bounds in Figure~\ref{fig:margin_gaussian}(b) and Figure~\ref{fig:margin_wiggly}(b) are consequently very similar as well.
However, while the test classification loss increases with $d$ for isotropic features, it decreases with $d$ for weak features.

\begin{figure}
  \centering
  \begin{tabular}{p{0.48\textwidth}p{0.48\textwidth}}
    \centering
    \includegraphics[width=0.5\textwidth]{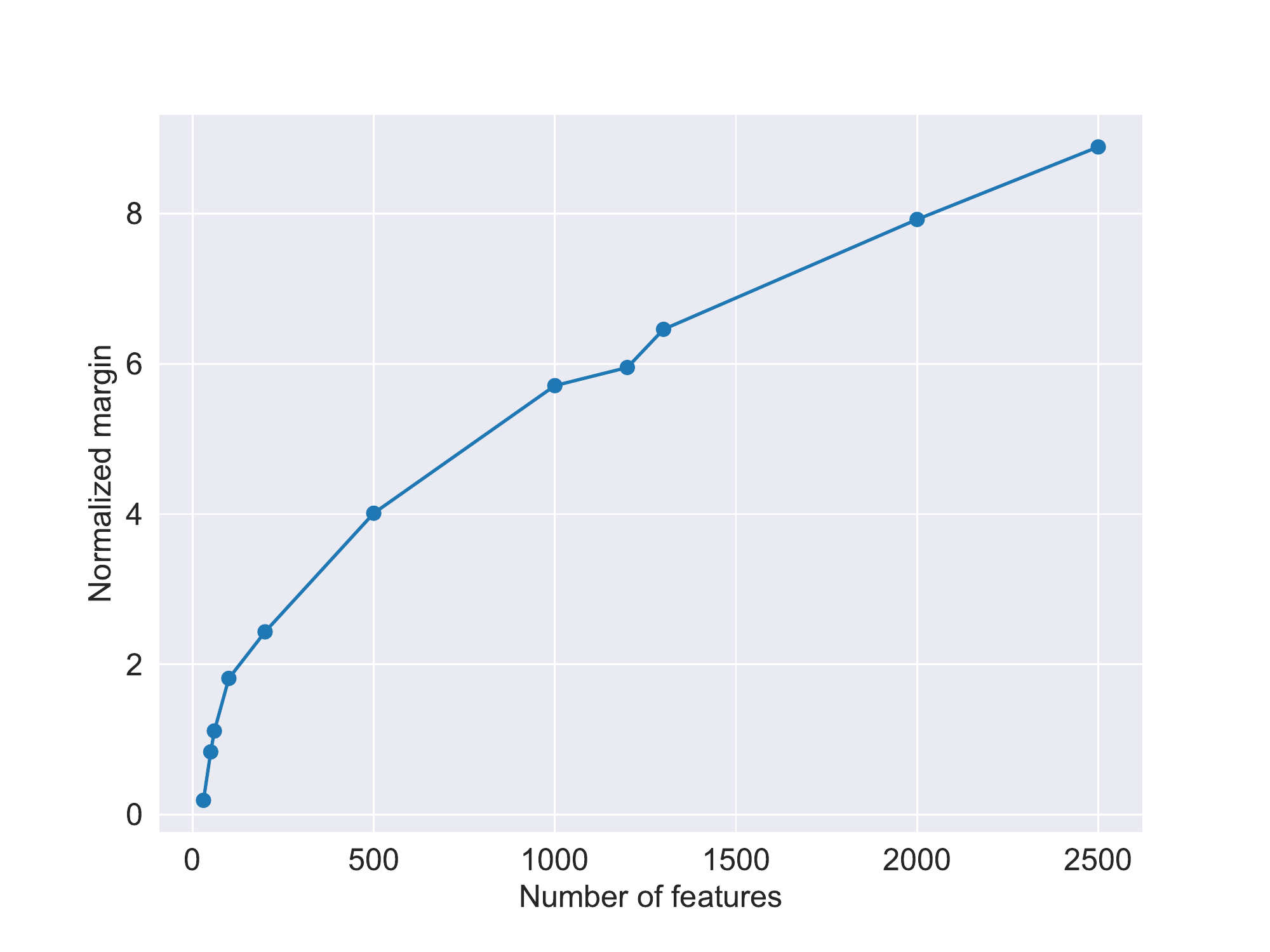}
    &
    \centering
    \includegraphics[width=0.5\textwidth]{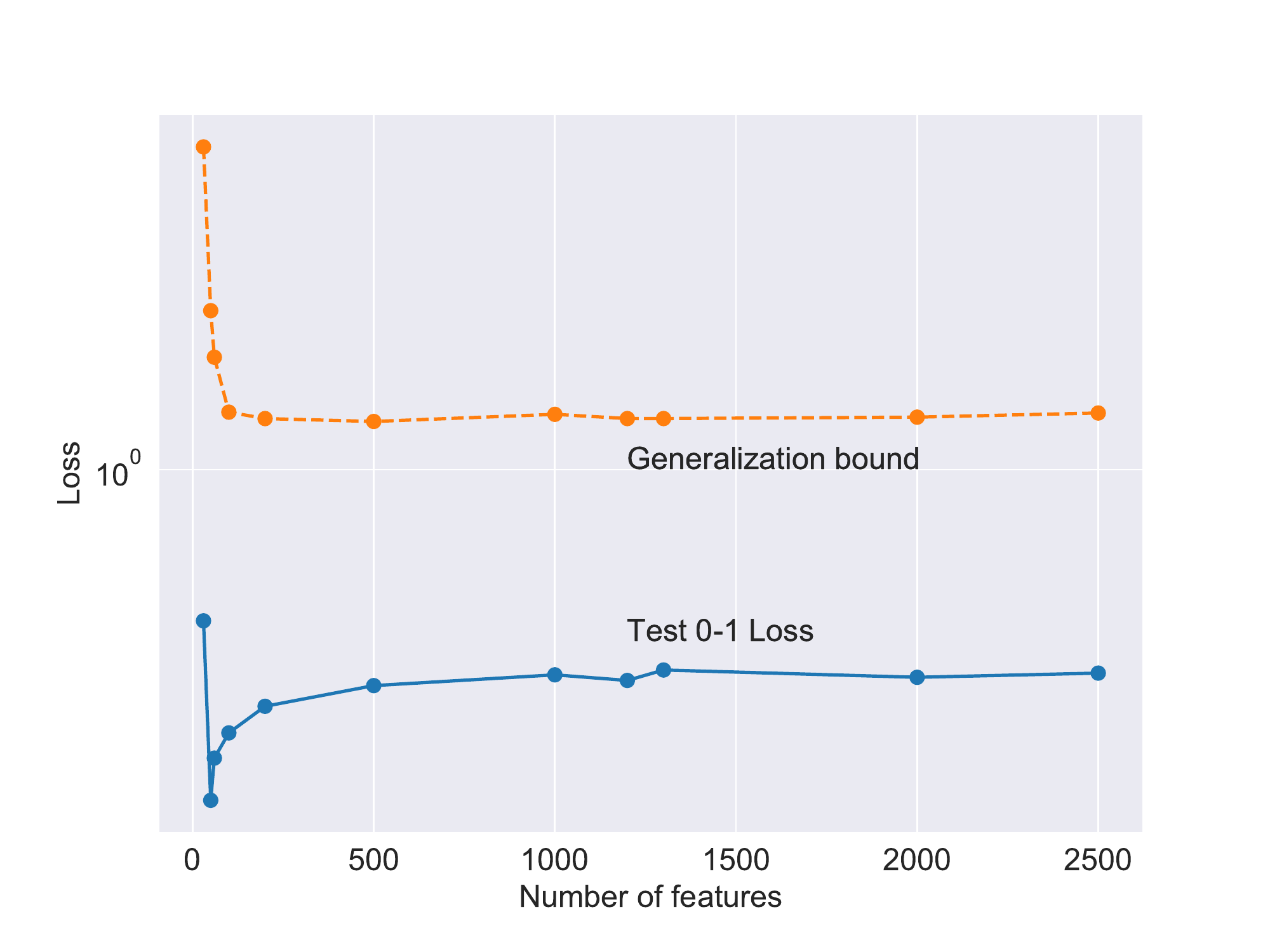}
    \tabularnewline
    \centering
    (a) Normalized margin.
    &
    \centering
    (b) Comparison of the generalization bound \eqref{eq:spectral_gen_bound} with true test classification loss.
  \end{tabular}
  \caption{Evolution of normalized margin, ensuing generalization bound and true classification test loss as a function of number of features $d$ for isotropic Gaussian features ($n = 32$ fixed). Observe that the terms $\norm{\Atrain}_{\mathsf{F}}$ and $\norm{\alphahatbold}_2$ cancel each other's effect on the bound, leading to a roughly constant bound. The true test error increases as $d$ is increased.}
  \label{fig:margin_gaussian}
\end{figure}

\begin{figure}
  \centering
  \begin{tabular}{p{0.48\textwidth}p{0.48\textwidth}}
    \centering
    \includegraphics[width=0.5\textwidth]{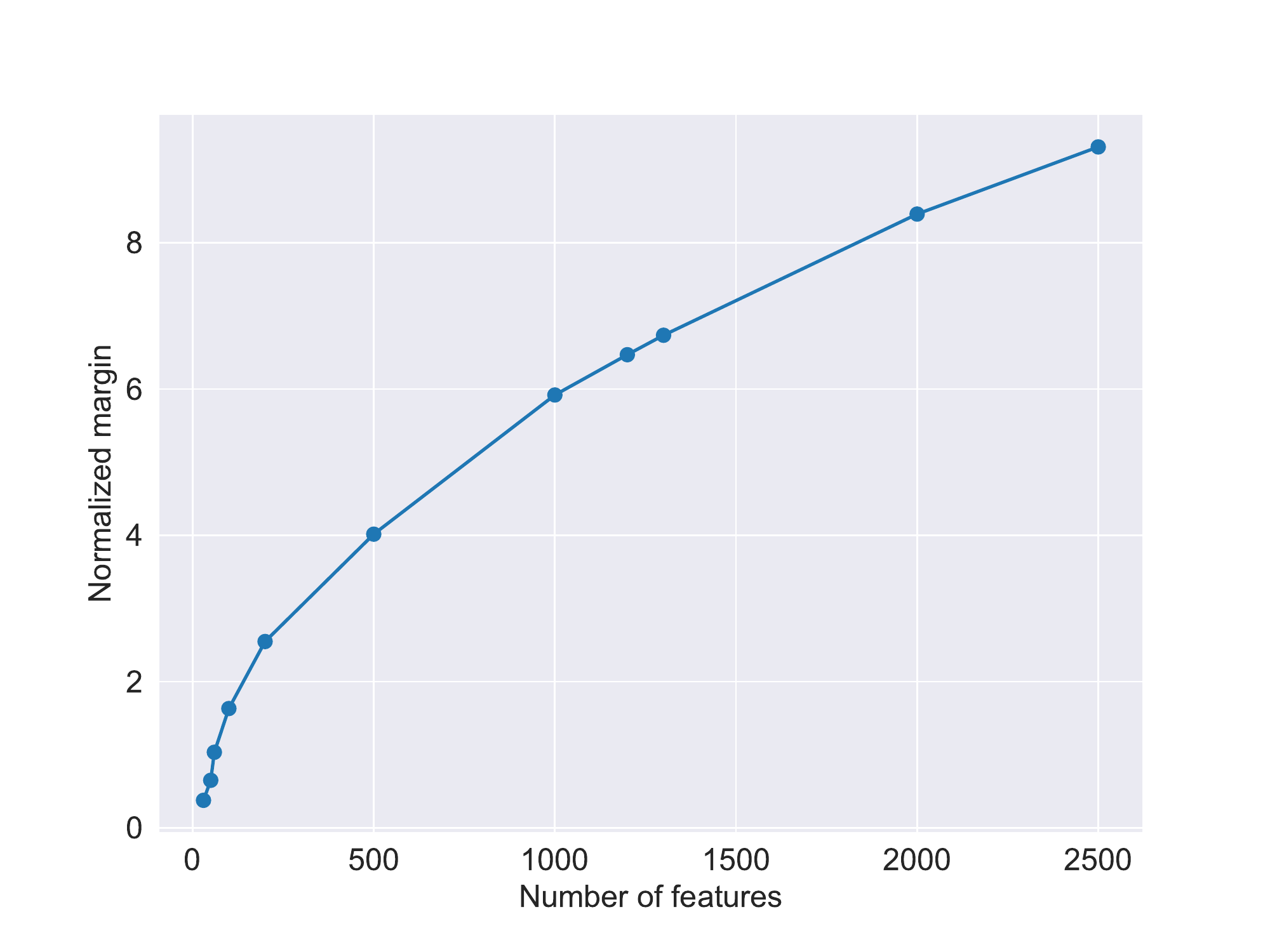}
    &
    \centering
    \includegraphics[width=0.5\textwidth]{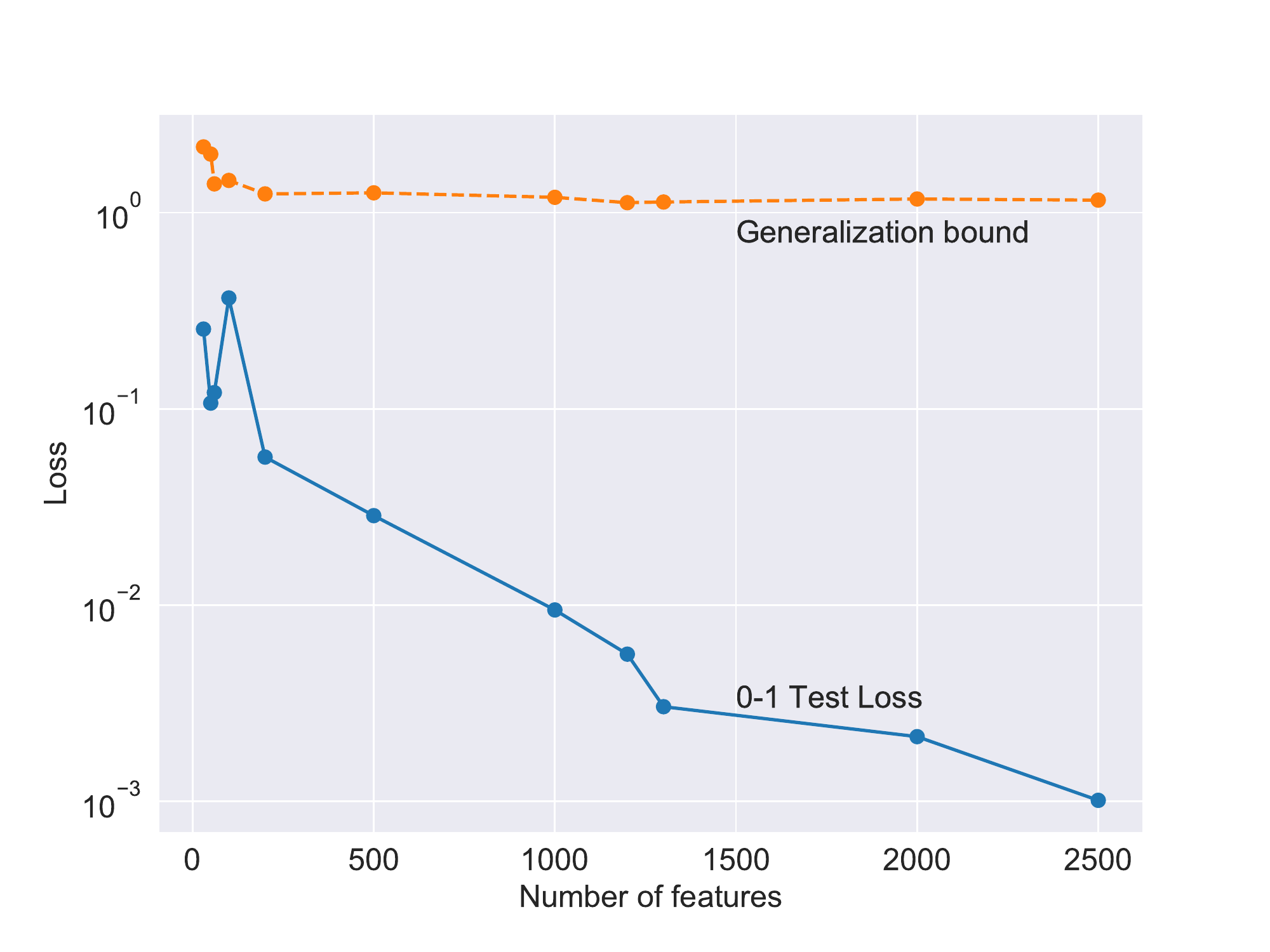}
    \tabularnewline
    \centering
    (a) Normalized margin.
    &
    (b) Comparison of the generalization bound \eqref{eq:spectral_gen_bound} with true test classification loss.
  \end{tabular}
  \caption{Evolution of normalized margin, ensuing generalization bound and true classification test loss as a function of number of features $d$ for weak features in Definition~\ref{def:weakfeatures} ($n = 32$ fixed, $\sigma = 0.1$). Observe that the terms $\norm{\Atrain}_{\mathsf{F}}$ and $\norm{\alphahatbold}_2$ cancel each other's effect on the bound, leading to a roughly constant bound. The true test error decreases as $d$ is increased.}
  \label{fig:margin_wiggly}
\end{figure}

Figures~\ref{fig:margin_gaussian} and~\ref{fig:margin_wiggly} together show that the relationship between margin and generalization is more complex than typically assumed in highly overparameterized regimes.
We highlight a few observations:
\begin{enumerate}
  \item In both featurizations, the generalization bounds are always greater than $1$, and hence, tautological. A similar empirical observation was made by \citet{dziugaite2017computing} for a two-layer neural network.
    The $\norm{\Atrain}_{\mathsf{F}}$ term, which represents the scale of the data, effectively cancels out any beneficial effect of increasing normalized margin\footnote{In fact, for the case of minimum-$\ell_2$-norm interpolation and isotropic features, this can be verified quantitatively, as we know that $||\alphahatbold||_2 \sim \sqrt{\frac{n}{d}}$ and $||\Atrain||_{\mathsf{F}} \sim \sqrt{nd}$ with high probability.
    }.
Intuitively, it is clear that \textit{feature-space} margin-based bounds will have to scale with the intrinsic input dimension, which itself is overparameterized for Gaussian featurization.
    \item Whether margin is \textit{qualitatively} predictive of generalization is also unclear, as evidenced by the contrasting examples of weak features and isotropy.
    Under both featurizations, the normalized margin increases with increased overparameterization; but the actual test error behaves very differently (decreasing for weak features, but increasing for isotropy).
\end{enumerate}
Thus, we see that margin-based bounds are not predictive of the behavior of overparameterized models in our setting.
It is still possible that an appropriate sense of large margin implies good generalization in certain cases.
In particular, for linear models, maximizing the margin is equivalent to minimizing the norm --- which, as we have seen, has important generalization properties.
However, evidence of this needs to come from first-principles analysis, not from the existing bounds.

The recent work of ~\citet{negrea2019defense} suggests a way forward in the
interpolating regime via the introduction of appropriate surrogates,
that implicitly capture the good generalization properties vis-a-vis
the underlying patterns and the learning algorithm used.
It would be interesting to see how these ideas could be unified with the
survival/contamination perspective developed here across all three
regimes identified in Theorem~\ref{theorem:threeregimes}.

\acks{We would like to acknowledge the Simons Institute Summer 2019 program on ``Foundations of Deep Learning", which facilitated the initial collaboration that led to this work.
More broadly, we thank the participants of this program for many stimulating research discussions that inspired this collaboration --- especially Suriya Gunasekar and Matus Telgarsky.

MB acknowledges federal support from NSF and a Google Research Award.
DH acknowledges support from NSF grant CCF-1740833 and a Sloan Research Fellowship.
AS acknowledges the support of the ML4Wireless center member companies and NSF grants AST-144078 and ECCS-1343398.}

\newpage

\appendix
\appendix
\section{Fourier features on regularly spaced training data: An ``ultra-toy" model}
\label{appendix:ultratoy}
In~\citet{muthukumar2020harmless}, the case of Fourier features on regularly spaced training data was introduced and studied as an ``ultra-toy", or caricature model to highlight the consequences of overparameterization in linear regression on noisy data.
The ramifications of $\ell_2$-minimization are clearly illustrated through this model, as an explicit connection can be made to the classical phenomenon of \textit{aliasing} that is involved to understand the under-sampling of continuous time signals.
Using this signal processing perspective, survival and contamination are natural quantities of interest, as illustrated in Figure~\ref{fig:survivalcontamination}(a) for the $1$-sparse case.
In Figure~\ref{fig:survivalcontamination}(b), we see how these concepts would \textit{qualitatively} manifest more generally when the underlying signal is hard-sparse.

\begin{figure}
  \centering
  \begin{tabular}{p{0.48\textwidth}p{0.48\textwidth}}
    \centering
    \includegraphics[width=0.4\textwidth]{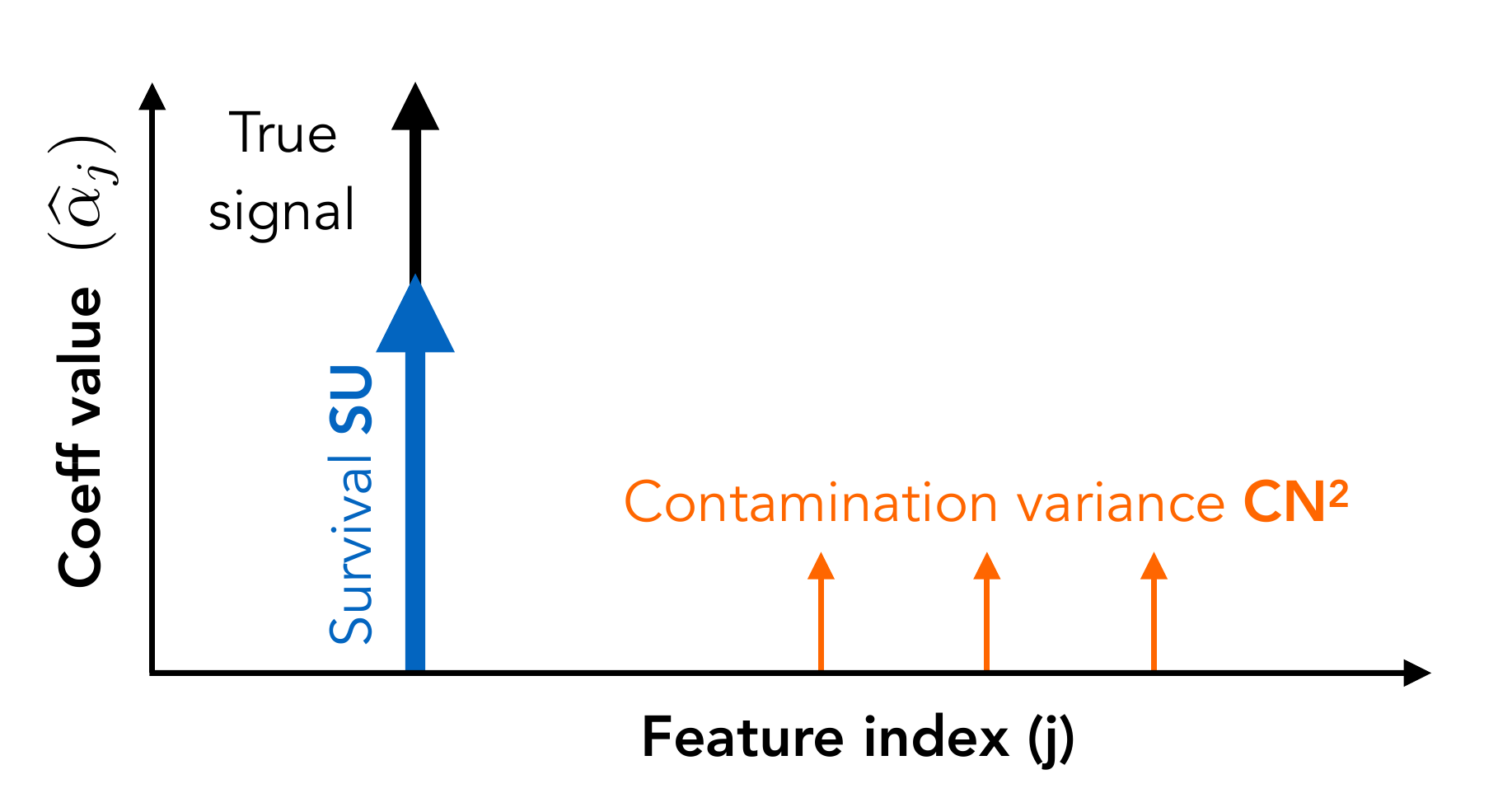}
    &
    \centering
    \includegraphics[width=0.4\textwidth]{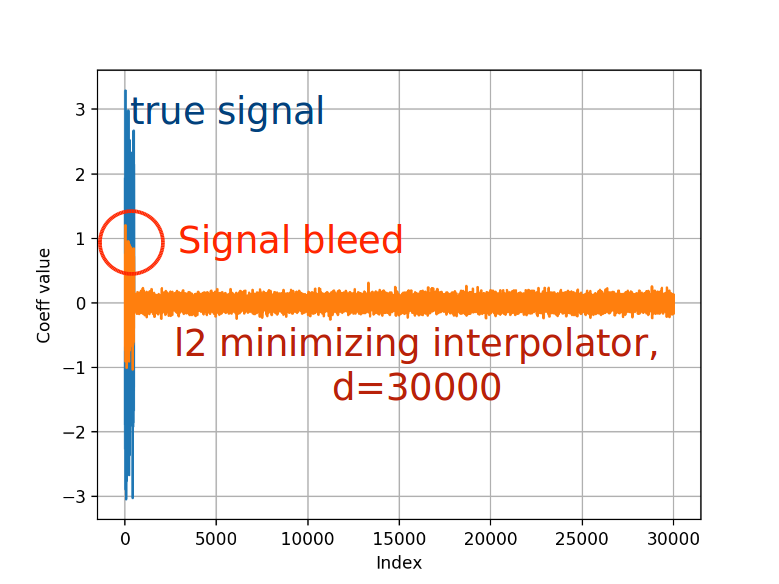}
    \tabularnewline
    \centering
    (a) Illustration for $1$-sparse signal (Assumption~\ref{assumptionsimple}).
    &
    \centering
    (b) Actual signal ``bleed" and ``contamination" for $30000$ isotropic Gaussian features, and $1000$ samples of $500$-sparse signal.
  \end{tabular}
  \caption{Illustrations of survival and contamination factors that affect both classification and regression test error. Here, signal ``bleed'' refers to the qualitative phenomenon where magnitude of recovered components of the true signal is much smaller than the magnitude of true signal components and is an extension of the survival concept to the hard-sparse setting.}
  \label{fig:survivalcontamination}
\end{figure}

As we illustrate in this section, appropriate weightings of these features under this ``ultra-toy" model also helped us conjecture all of the main results of this paper.

The Fourier ensemble is defined below.

\begin{definition}[Weighted Fourier features on regularly spaced data]\label{eg:fourier}
We consider $n$ (odd) regularly spaced training points from $(-\pi,+\pi)$ --- specifically the sequence $(-\pi+\frac{\pi}{n}, -\pi+\frac{3\pi}{n}, \ldots, -\frac{2\pi}{n}, 0, +\frac{2 \pi}{n}, \ldots, +\pi - \frac{ \pi}{n})$, a test distribution of $X$ drawn uniformly at random from $(-\pi, +\pi)$, and the $d$ (odd multiple of $n$) features chosen to be the standard real orthonormal Fourier features: $$\frac{1}{\sqrt{2\pi}}, \frac{1}{\sqrt{\pi}}\sin(x), \frac{1}{\sqrt{\pi}}\cos(x),  \ldots, \frac{1}{\sqrt{\pi}}\sin\left(\frac{d-1}{2} x\right), \frac{1}{\sqrt{\pi}}\cos\left(\frac{d-1}{2} x\right).$$
For doing interpolative inference using a weighted norm minimization,
we define the weights\footnote{If $\alpha_j$ represents the learned
  coefficient on the cosine at frequency $f$, and $\beta_j$ the
  learned coefficient on the sine at frequency $f$, the minimization
  is of $\sum_{j} \frac{\alpha_j^2 + \beta_j^2}{\lambda_j}$. A higher
  $\lambda_j$ means that frequency is favored.} corresponding to sines and cosines of frequency $j$ by
$\{\lambda_j\}_{j=0}^{\frac{(d-1)}{2}}$.
Following the convention of the rest of the paper, we take the weights $\{\lambda_j\}$ to be a decreasing, strictly positive sequence.
\end{definition}
\textit{Exact aliases} are defined as distinct features that agree with each other (possibly up to a constant multiple) on all the sampled points.
The Fourier featurization allows exact aliases to exist. There are three different groups of these exact aliases:
\begin{itemize}
    \item The initial constant feature is essentially aliased by the cosines at every multiple\footnote{For ease of exposition, the minor issue of the constant feature having a slightly different scaling vis-a-vis its aliases is going to be ignored in this treatment, but this is simply a matter of keeping track of notation. Alternatively, we could eliminate this by using complex Fourier features. We will finesse this issue here by simply not allowing the true signal to have a constant term in it.} of $n$.
    \item Each  cosine feature in the first $n$ features (namely corresponding to a frequency $j \in \{1,2,\ldots, \frac{n-1}{2}\}$) picks up $(\frac{d}{n} -1)$ cosine aliases with frequencies $(n-j), (n+j), (2n-j), (2n+j), \ldots$. This is because cosine is an even function and the training samples are symmetrically distributed about 0.

    \item Similarly, each sine feature in the first $n$ features (corresponding to a frequency $j \in \{1,2,\ldots, \frac{n-1}{2}\}$) picks up $(\frac{d}{n} -1)$ sine aliases with frequencies $(n-j), (n+j), (2n-j), (2n+j), \ldots$. However, because sine is an odd function, these aliases have their signs alternating with the $(kn-j)$ ones being multiplied by $(-1)$ and the $(kn+j)$ ones being exact aliases.
\end{itemize}

\subsection{Empirical evidence that all training points become support vectors}\label{appendix:fourier}

Consider, first, the case of $d = n$ and $\lambda_j = 1$ for all $j$.
The orthonormality of the Fourier features above essentially (this argument would be exact if we used the complex Fourier features) makes the first $n$ columns of the training data feature matrix look like a rotation of a scaled version of the identity.
In such a situation, minimizing the $\ell_2$-norm of the learned parameters in a hard-margin SVM essentially forces\footnote{Why? Suppose the scalar prediction on one of the training points was larger than $+1$. If so, we could reduce the norm of the learned parameters without impacting any other training point's prediction by making this point have a scalar prediction of $+1$. Norm-minimization and a full complement of orthonormal vectors in the training matrix forces every point to be a support vector. This can alternatively be viewed as a consequence of extreme symmetry --- under a Fourier featurization and the 2-norm, no training point is special if they are regularly spaced.}  every point to be a support vector. Adding more aliases (in a balanced way) for all features is not going to change this.
This leads to every point becoming a support vector in the isotropic case for any $d$ that is a multiple of $n$.

The case of non-isotropic overparameterized models is more complex.
Here, we describe the experiment underlying Figure~\ref{fig:svmlsmaintext} in the main text that first showed that all training points became support vectors in high dimensions (Theorem~\ref{thm:everythingsupport}).
We conducted this experiment for regularly spaced training data and the Fourier featurization as defined above,
with polynomial decay in the weights used, i.e.~$\lambda_k = \frac{1}{k^m}$ for $m \geq 0$, and (since the training data $\{X_i\}_{i=1}^n$ is $1$-dimensional), visualized both the minimum-$\ell_2$-norm interpolation and the SVM.
Figure~\ref{fig:svmlsmaintext} in the main text shows a remarkable equivalence between the two solutions for various values of $m$, and provides initial empirical evidence for this phenomenon (that we theoretically established for Gaussian featurization in Theorem~\ref{thm:everythingsupport}).

\subsection{Regression vs Classification} \label{appendix:fourierproof}

To see the counterpart of Theorem~\ref{theorem:threeregimes}, which compares classification and regression test error of interpolating solutions, we consider the underlying true function to be $\cos(x)$.
At training time, we get actual real-valued outputs $z_j = \cos(x_j)$ corresponding to the $n$ regularly spaced points $\{x_j\}$.
The minimum-$\ell_2$-norm interpolation of real-valued output leads to the following coefficients on the $d$ Fourier features:
\begin{align}
  \alphahatbold = \argmin_{\boldsymbol{\alpha} \big{|} \Atrain \boldsymbol{\alpha} = \mathbf{z}_{\text{train}}} \left(\frac{1}{\lambda_H} \sum_{j=0}^{s-1} \alpha_j^2\right) + \sum_{j=s}^{d-1} \alpha_j^2 \label{eq:optimizationfourier}
\end{align}
Because of the presence of \textit{exact aliases}, the training data matrix $\Atrain$ consists of $n$ distinct columns that repeat again and again.
In keeping with the  bi-level covariance model in Definition~\ref{def:bilevel_covariance}, we scale the parameters $(s, \lambda_H, d)$ with $n$ in a coordinated way.
Recall that the number of prioritized features is given by $s := n^r$ for $r \in [0,1)$, and the number of features $d = n + n^p$ for $p > 1$. (We added an extra term of $n$ to make it easier to count the aliases. This has no asymptotic effect when $p>1$ and $n \rightarrow \infty$.) The $\lambda_H$ represents how much we favor the special features and in keeping with the scaling in Definition~\ref{def:bilevel_covariance}, we set $\lambda_H = n^{p-r-q}$ for some $q \in [0,p-r]$.

Because of the known orthogonality of the sine and cosine features on $n$ regularly spaced points, the first $n$ columns of $\Atrain$ are orthogonal.
This means that the solution $\alphahatbold$ will only have nonzero entries in the positions that correspond to the $\frac{d}{n} = 1+ n^{p-1}$ different columns of $\Atrain$ that are copies of the column corresponding to the feature $\cos(x)$. Since $s < n$, exactly one of these will be favored and so the optimization problem in Equation~\eqref{eq:optimizationfourier} turns into the much simpler problem:
\begin{align}
  \min_{a,b \big{|} a + n^{p-1} b = 1} \frac{a^2}{n^{p-r-q}} + n^{p-1} b^2 \label{eq:simplifiedoptimization}
\end{align}
where $a$ represents the recovered coefficient corresponding to the true underlying feature $\cos(x)$ and $b$ represents the coefficients on all of its exact aliases.

\begin{figure}
    \centering
    \includegraphics[width=0.7\textwidth]{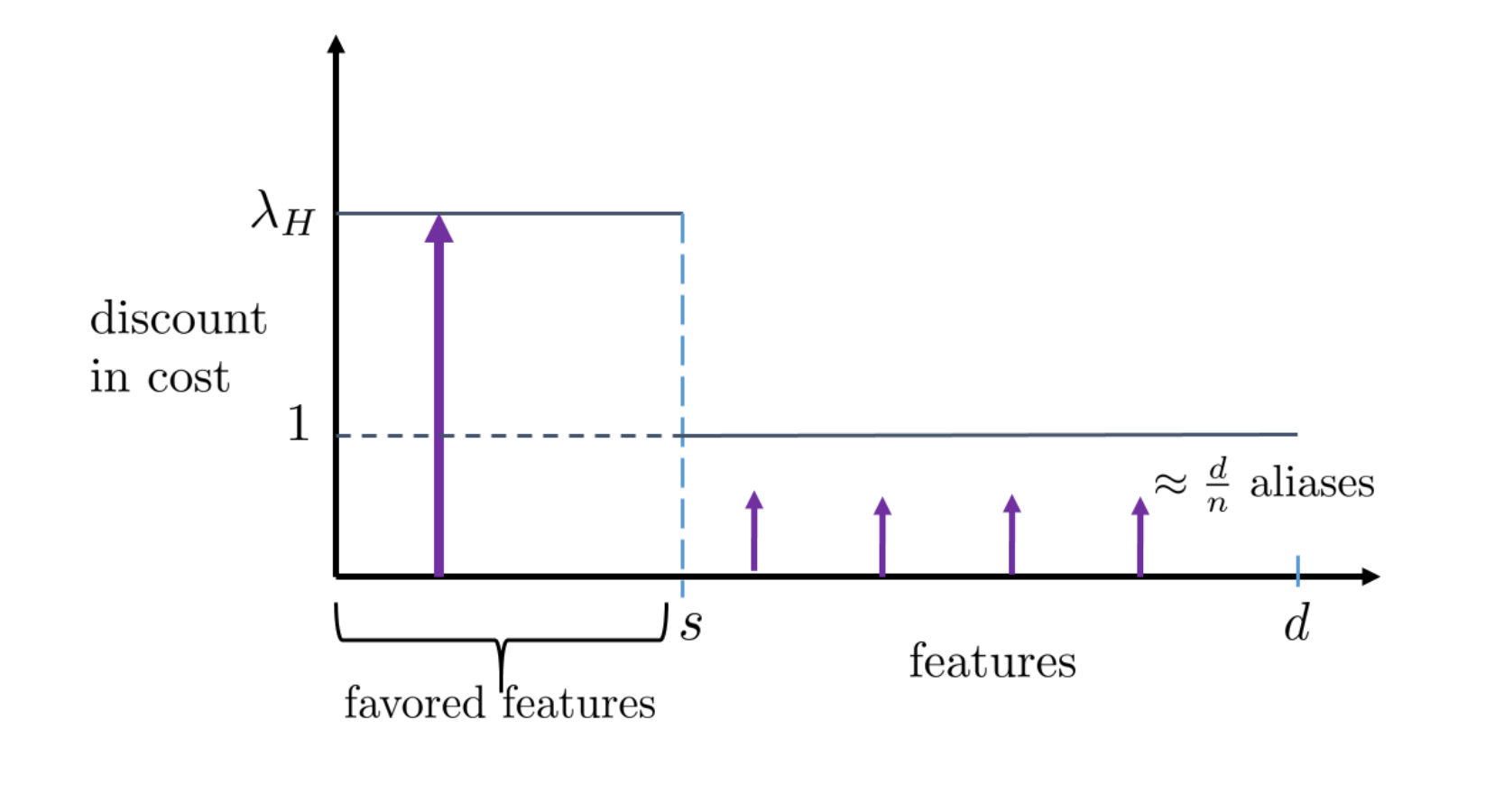}
    \caption{An illustration of the bi-level model for the Fourier features.}
    \label{fig:weight_app}
\end{figure}
An elementary calculus calculation\footnote{See the appendix of \citet{muthukumar2020harmless} for more discussion of this calculation and its connection to matched filtering in signal processing.} shows that Equation~\eqref{eq:simplifiedoptimization} is solved by:
\begin{subequations}
\begin{align}
  a &= \frac{\lambda_H}{\lambda_H + \left(\frac{d}{n} - 1\right)} = \frac{1}{1 + n^{q - (1-r)}}
  \label{eq:survivalaexpression} \text{ and } \\
  b &= \frac{1}{\lambda_H + \left(\frac{d}{n} - 1\right)} = \frac{1}{n^{p-r-q} + n^{p-1}} .
  \label{eq:individualaliasamount}
\end{align}
\end{subequations}
The reader can verify that $a$ represents the survival of the true signal.
For large enough $n$, this is approximated\footnote{In the style of the Bode Plot of a one-pole low pass filter.} by
\begin{align}
  a \approx \begin{cases}
    1 & \text{if}\ q < 1-r \\
    n^{-(q - (1-r))} & \text{if}\ q > 1-r
  \end{cases} . \label{eq:survivalapproximation}
\end{align}
This approximation is guaranteed to be good to within a factor of 2 everywhere and is usually much better.
Notice that Equation~\eqref{eq:survivalapproximation} is the Fourier-feature counterpart of the upper and lower bounds on survival in Lemmas~\ref{lemma:survivalspikedl2} (binary labels) and \ref{lemma:survivalspikedl2real} (real-valued output).
Now, taking $n\rightarrow \infty$, we get
\begin{align}
  a_\infty = \begin{cases}
    1 & \text{if}\ q < (1-r) \\
    0 & \text{if}\ q > (1-r)
  \end{cases} \label{eq:survivallimit}
\end{align}
which shows that the signal only fully survives if $q < (1-r)$.

Let us now measure the \textit{contaminating} effect of falsely discovered features.
Following Equation~\eqref{eq:falsediscovery}, we denote $B(\mathbf{X})$ as the random variable that represents the contribution of all of the aliases to the predictions. Each of the Fourier features of non-zero frequency is zero-mean and has variance $1$.
From the orthonormality (in expectation over test data) of the aliases, we get
\begin{align}
  \mbox{Var}[B(\mathbf{X})]
  &= n^{p-1} b^2 \nonumber \\
  &= \left(\frac{1 }{n^{\frac{p}{2} + \frac{1}{2} -r-q} + n^{\frac{(p-1)}{2}}}\right)^2 , \label{eq:contamvariance}
\end{align}
where in the last step, we substituted Equation~\eqref{eq:individualaliasamount}.
Notice that $\frac{(p-1)}{2} > \frac{p}{2} + \frac{1}{2} -r-q $ whenever $q > (1-r)$, and so asymptotically we get
\begin{align}
\contaminationb = \sigma_{CN} \approx \begin{cases}
 n^{-\left(\frac{p+1}{2} -(q+r)\right)  } & \text{if}\ q < (1-r) \\
 n^{-\frac{(p-1)}{2}}  & \text{if}\ q > (1-r)
\end{cases}
\label{eq:contaminationapproximation}
\end{align}
This approximation is guaranteed to be good to within a factor of 2 everywhere and is usually much better.
Notice that this expression is the Fourier-feature counterpart of the lower bound on contamination established for Gaussian features in Lemma~\ref{lemma:contaminationlowerboundspikedl2}.

Thus, provided that $q < (1-r)$, the expression in Equation~\eqref{eq:contaminationapproximation} always decays to zero as $n \rightarrow \infty$, regardless of which case we are in.
The combination of Equations~\eqref{eq:contaminationapproximation} and~\eqref{eq:survivallimit} tells us that regression in this problem can work\footnote{In fact, the argument also works for \textit{noisy training} data, i.e.~$\mathbf{Z}_i = \sin (x_i) + \mathbf{W}_i$ where the noise $\mathbf{W}_i$ is iid and has zero mean, bounded support, and finite variance $\sigma^2$. The formal argument is in~\citet{muthukumar2020harmless}, but is summarized here for the ease of the reader. From the central limit theorem and the theory of wide-sense-stationary random variables, in the limit, the representation of the noise part will look marginally Gaussian in the basis of the first $n$ columns of $\Atrain$, where each of them will be $~ \mathcal{N}\left(0,\frac{\sigma^2}{n}\right)$. The first $s$ of these will survive and thereby contribute a variance of approximately $\sigma^2 n^{r-1} a^2$ to test points, while the other $(n-s)$ of these will be absorbed across all the aliases and thereby contribute a contamination variance of $\sigma^2 n^{-p}$. The total contribution will be dominated by the $\sigma^2 n^{-(1-r)}a^2$ term. If $q < (1-r)$ and $a \approx 1$, this term vanishes as $n \rightarrow \infty$ and so additive noise does not contribute to regression error unless the noise variance $\sigma^2$ itself grows with $n$ (at a rate faster than $n^{1-r}$).}
to get mean-square-error approaching zero as long as $q <(1-r)$. On the other hand, when $q>(1-r)$, signal does not asymptotically survive and regression MSE approaches the null risk.

\begin{figure}
    \centering
    \includegraphics[width = 0.7\textwidth]{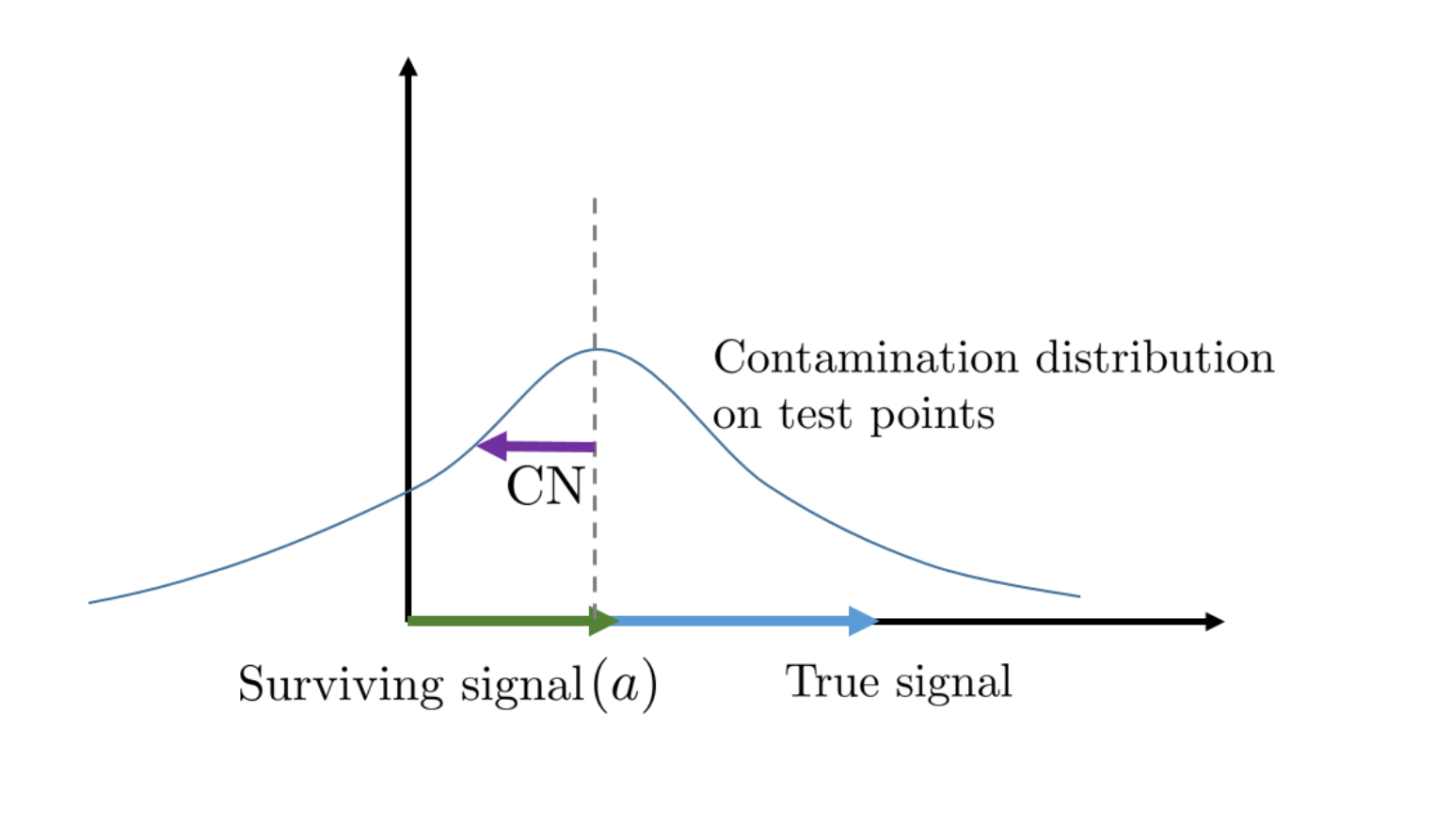}
    \caption{Illustration of how contamination can flip the sign of the prediction at a test point.
The survival \textit{relative} to the standard deviation of the contamination, $\contaminationb$, is what matters --- if the latter is much smaller than the former, then the probability of classification error is low.}
    \label{fig:su_vs_c_app}
\end{figure}

\subsubsection{Implications for classification: existence of the separating regime}

For classification, we only care about predicting $\sgn(\cos(\mathbf{X}))$ correctly with high probability when $\mathbf{X} \sim \text{Unif}\left[-\pi,\pi\right]$.
Clearly, classification also works under the conditions for which regression works (i.e.~$q < (1 - r)$), but, as we showed in Theorem~\ref{theorem:threeregimes}, can work even in the absence of these conditions.
Recall that when $q > (1-r)$, the survival factor $a \rightarrow 0$ as $n \rightarrow \infty$.
However, if the contamination is small enough, i.e.~$\sigma_{CN} \ll a$, the probability of classification error is extremely low, as illustrated in Figure~\ref{fig:su_vs_c_app}. We observe from Equations~\eqref{eq:contaminationapproximation} and~\eqref{eq:survivalapproximation} that $\sigma_{CN} \ll a$ if $q < (1-r) + \frac{(p-1)}{2}$. When that happens, classification will asymptotically work.

\begin{figure}[!htb]
  \centering
  \begin{tabular}{p{0.48\textwidth}p{0.48\textwidth}}
    \centering
    \includegraphics[width=0.64\textwidth]{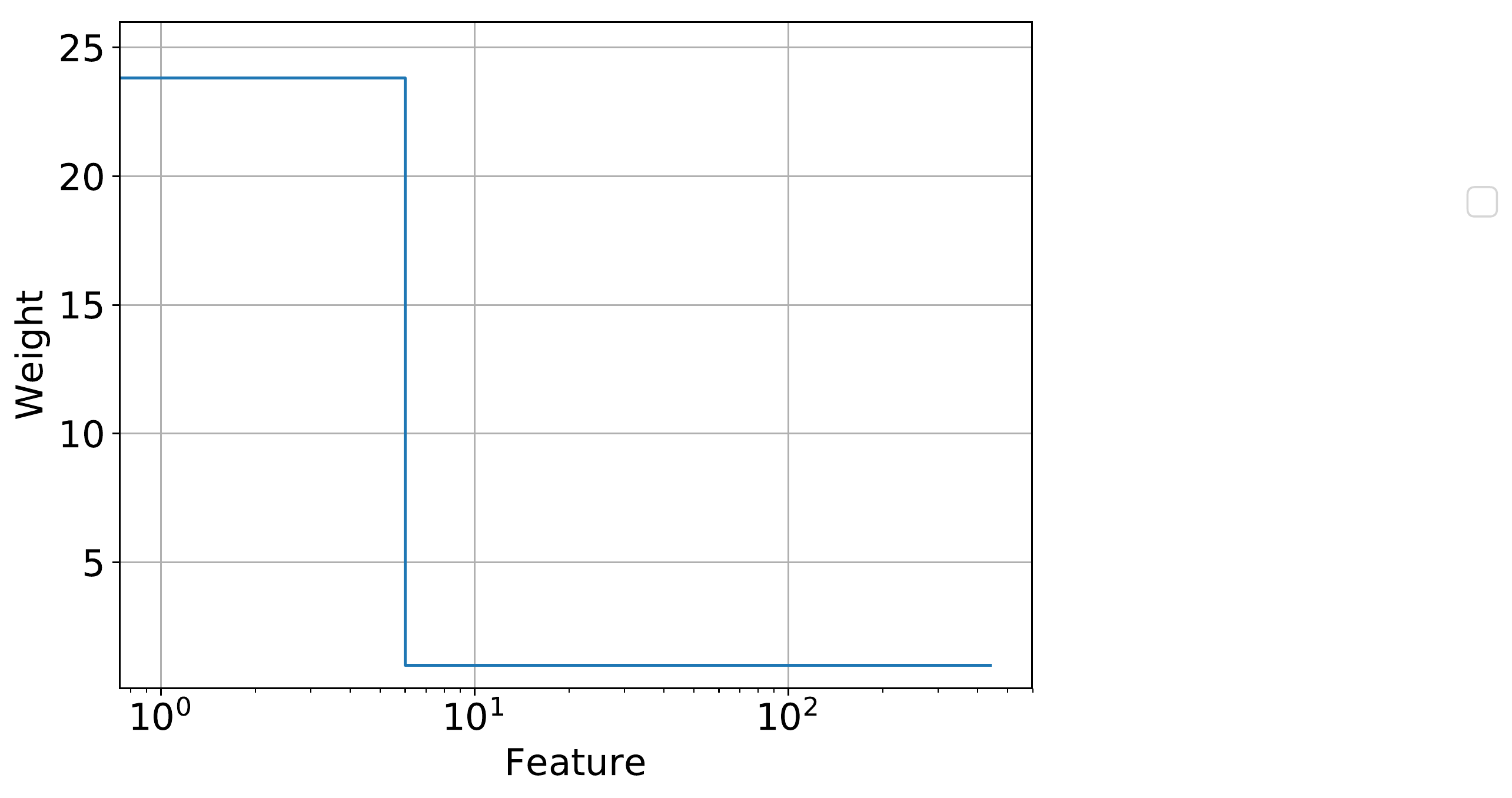}
    &
    \centering
    \includegraphics[width=0.48\textwidth]{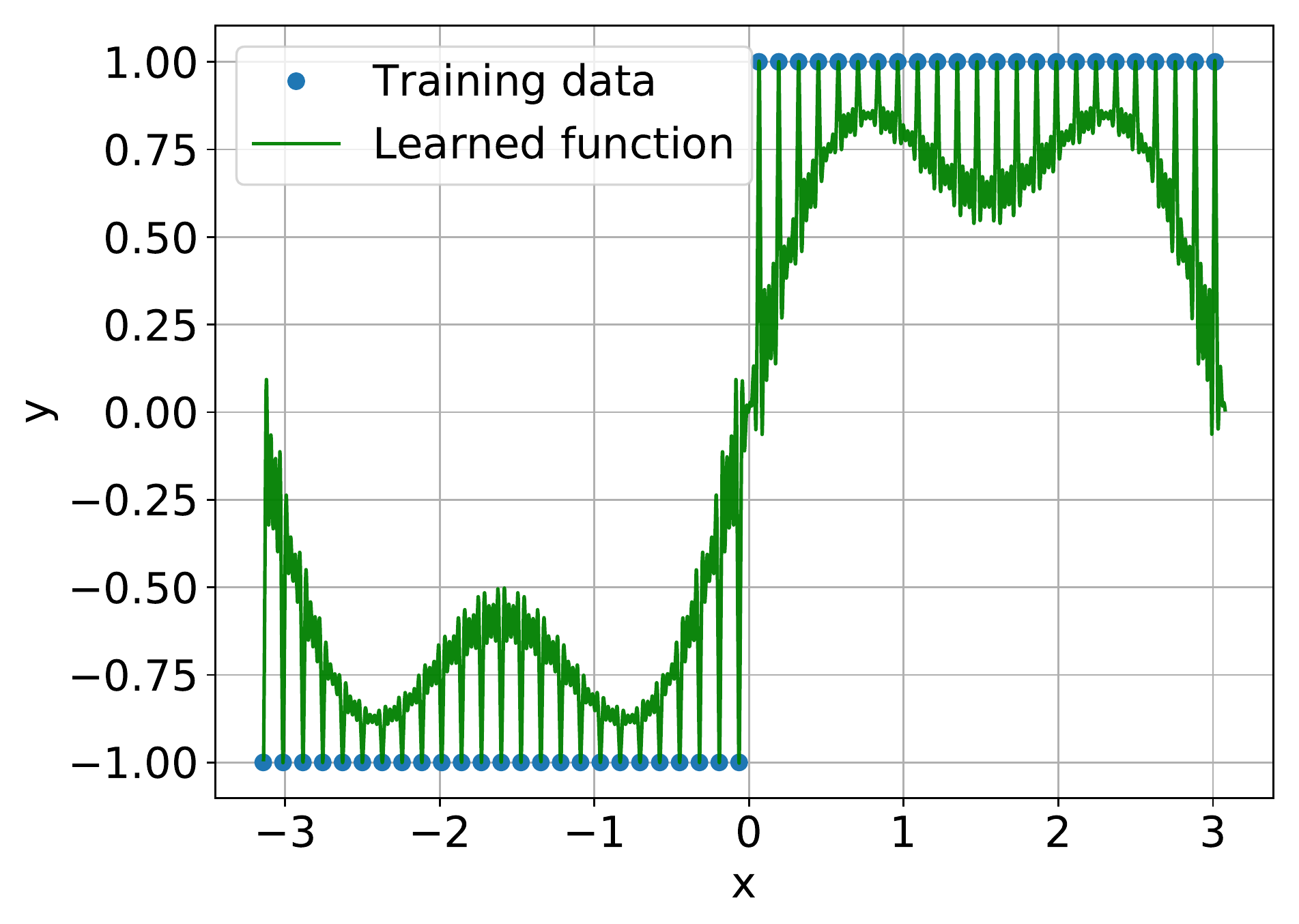}
    \tabularnewline
    \centering
    (a) $\lambda_H = 23.81$
    &
    \centering
    (b) $\lambda_H = 23.81$
    \tabularnewline
    \centering
    \includegraphics[width=0.64\textwidth]{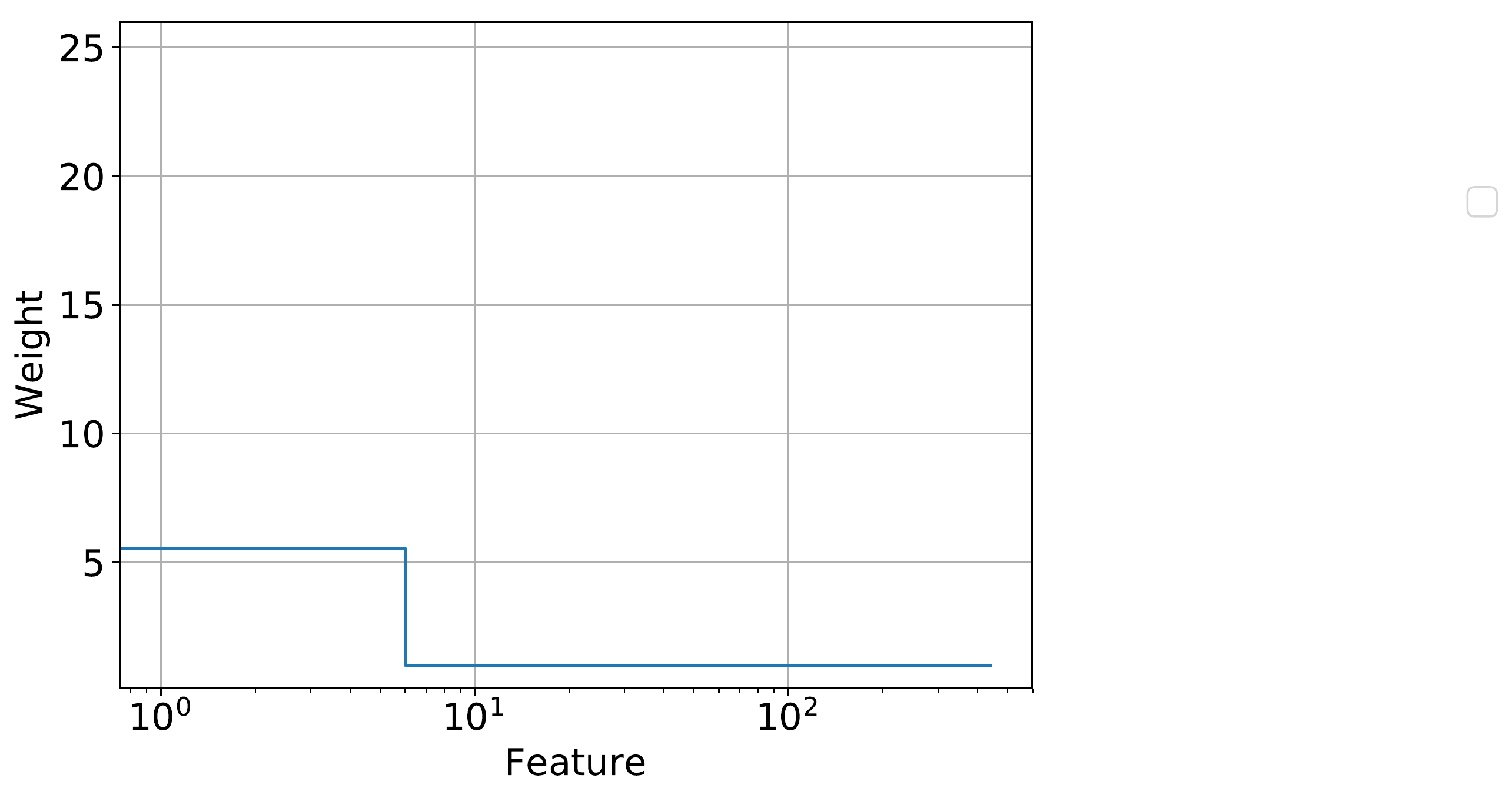}
    &
    \centering
    \includegraphics[width=0.48\textwidth]{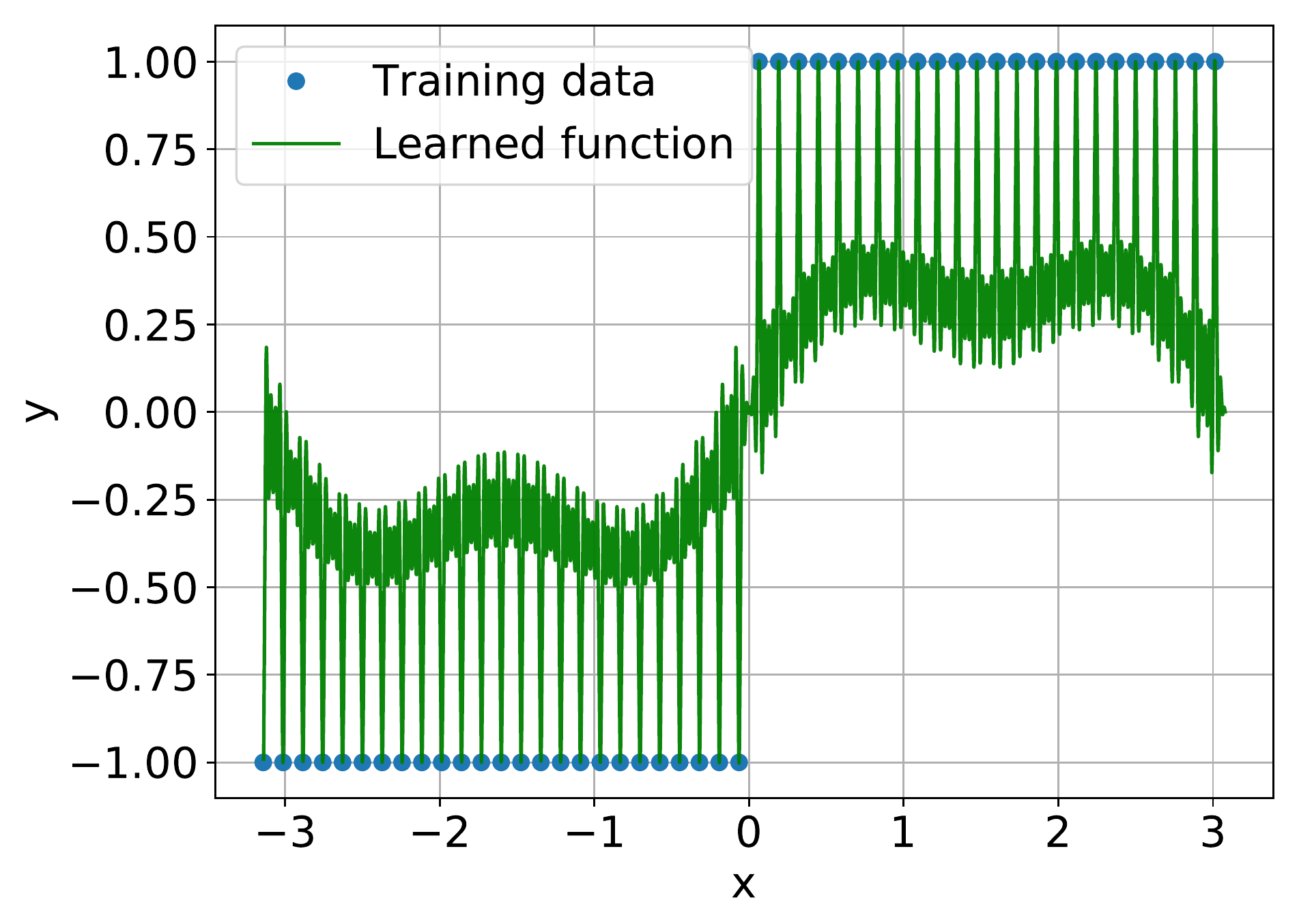}
    \tabularnewline
    \centering
    (c) $\lambda_H = 5.53$
    &
    \centering
    (d) $\lambda_H = 5.53$
    \tabularnewline
    \centering
    \includegraphics[width=0.64\textwidth]{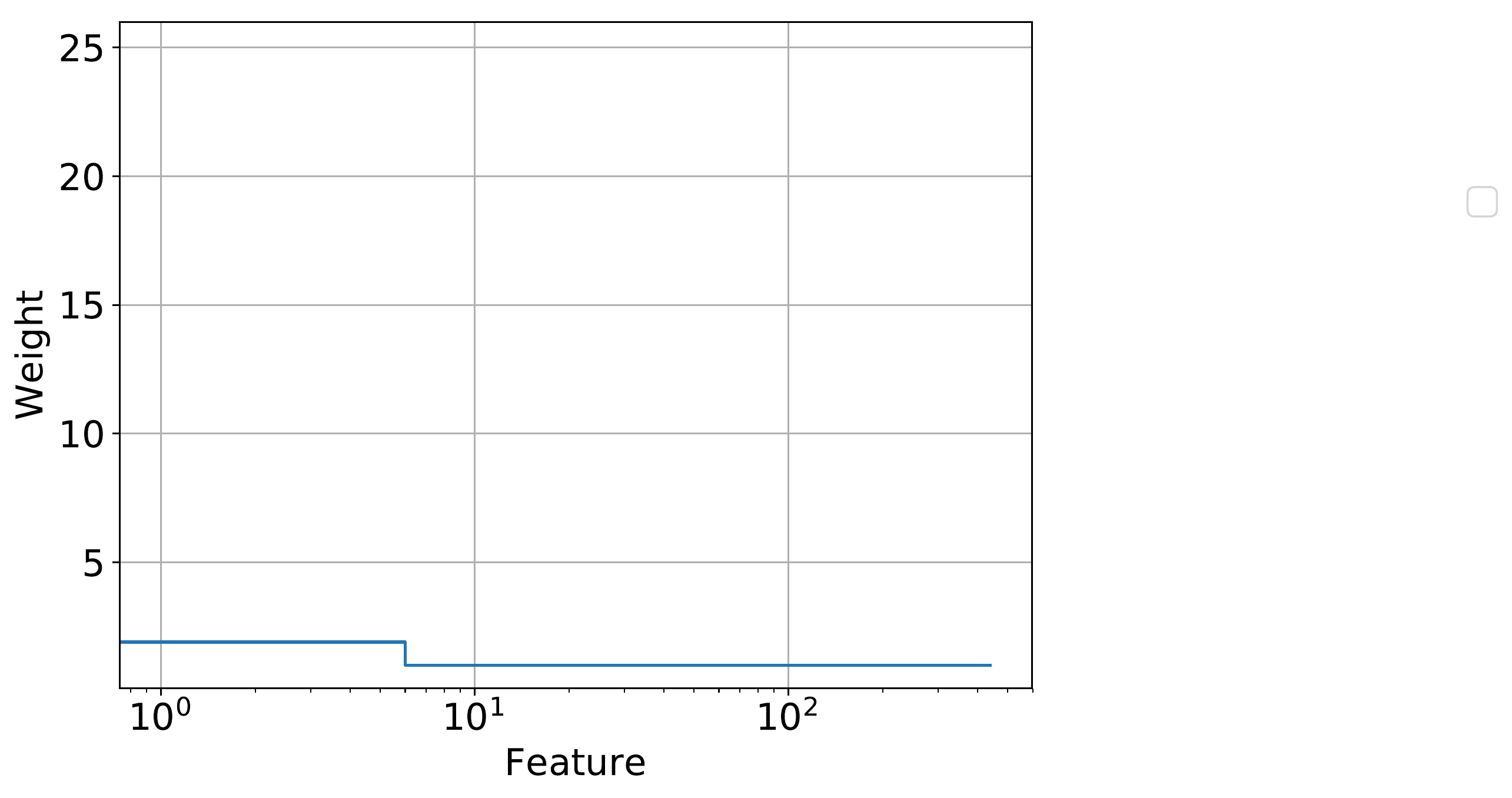}
    &
    \centering
    \includegraphics[width=0.48\textwidth]{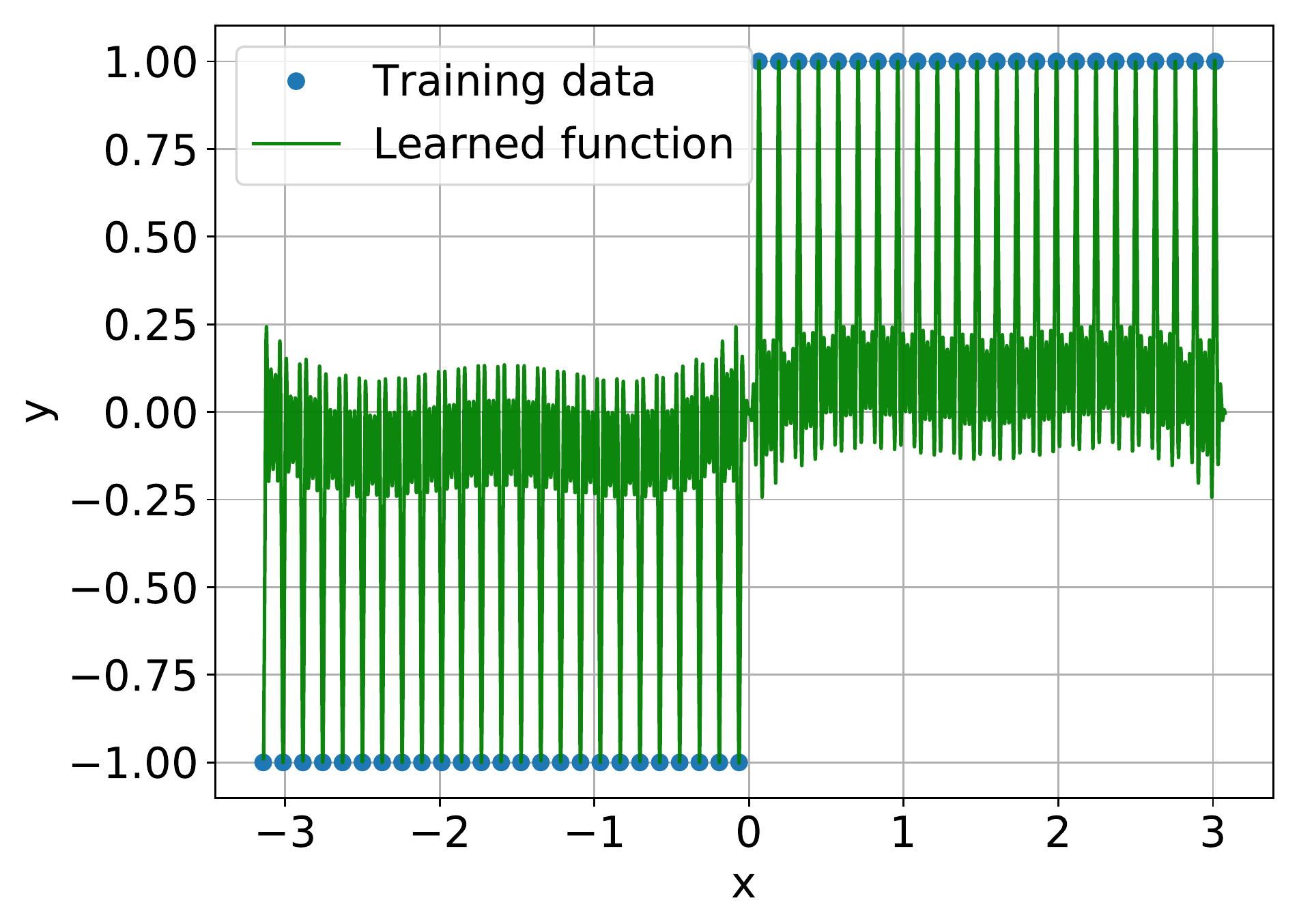}
    \tabularnewline
    \centering
    (e) $\lambda_H = 1.89$
    &
    \centering
    (f) $\lambda_H = 1.89$
  \end{tabular}
  \caption{The three qualitative regimes illustrated using Fourier features and regularly spaced training points. The top corresponds to both regression and classification succeeding, the middle one is the intermediate regime where only classification works, and the bottom one is where neither works. Here $n = 49, s = 7, d = 441$. }
  \label{fig:spikedl2_weights_reg}
\end{figure}

To see this more formally, we can upper bound the expression of classification error and show that it goes to zero as $n \to \infty$ under these conditions\footnote{The \textit{exact} Gaussian-feature expression for classification error in Proposition~\ref{theorem:classloss} depends solely on the ratio $a/\sigma_{CN}.$ Characterizing the exact expression for Fourier features is more challenging because the contamination does not have a clean distribution, but we can upper bound the probability of classification error using the standard deviation alone.}.
We use a union bound together with Chebyshev's inequality in a manner reminiscent of typicality proofs in information theory \citep{cover2012elements}.
Let $\epsilon = \frac{(p-1)}{2} - (q - (1-r))$ be the difference between the relevant two exponents of $n$.
Then, we define the events $\mathcal{A} := \{x |\  |\cos(x)| < 2n^{-\frac{\epsilon}{2}}\}$ and $\mathcal{B} := \{x |\  |B(x)| > n^{-\frac{\epsilon}{2}} n^{-(q-(1-r))} \}$.
The event $\mathcal{A}$ corresponds to having an atypically weak signal in the true feature, and the event $\mathcal{B}$ corresponds to having an atypically large contamination term.
Observe that if neither event $\mathcal{A}$ nor event $\mathcal{B}$ holds, we can substitute Equation~\eqref{eq:survivalapproximation} to get $|a \cos(x)| \geq 2|B(x)|$,
and this implies that a classification error will not occur. Therefore, the probability of classification error is \textit{upper bounded} by $\Pr\left[\mathcal{A} \cup \mathcal{B}\right]$, and by the union bound it suffices to upper bound the probability of each of these events individually.
We start with the ``weak signal'' event $\mathcal{A}$.
Because $\cos(x)$ is a function that is always differentiable in the neighborhood where $\cos(x) = 0$, this means that $\cos(\mathbf{X})$ as a random variable has a density\footnote{This is known as a shifted arc-sine distribution.} in the neighborhood of $0$.
Consequently, we have $\Pr\left[\mathcal{A} \right] = \int_{-n^{-\frac{\epsilon}{2}}}^{+n^{-\frac{\epsilon}{2}}} \frac{1}{\pi\sqrt{1-y^2}} dy = \frac{2}{\pi}\sin^{-1}(n^{-\frac{\epsilon}{2}})$ which goes to zero as $n \rightarrow \infty$.
We now turn to the ``unusually large contamination" event $\mathcal{B}$.
Because $q < (1-r) + \frac{(p-1)}{2}$, we have $\Pr\left[\mathcal{B}\right] = \Pr\left[|B(\mathbf{X})| > n^{-\frac{\epsilon}{2}} n^{-(q-(1-r))}\right] \leq \Pr\left[|B(\mathbf{X})| > n^{\frac{\epsilon}{2}} \sigma_{CN}\right]$.
By Chebyshev's inequality, we have $\Pr\left[\mathcal{B}\right] \leq n^{-\frac{\epsilon}{2}}$, which goes to zero as $n \rightarrow \infty$.

Since the probabilities of both events $\mathcal{A}$ and $\mathcal{B}$ have been shown to go to $0$ as $n \to \infty$, the limiting classification error will also be zero when $q <(1-r) + \frac{(p-1)}{2}$.
In fact, this argument can be extended to the case of interpolation of \textit{binary labels} by using the Fourier series representation of the underlying true label function.
Since there is now misspecification induced by the sign operator, this requires understanding the approximation-theoretic properties of the Fourier series by its first $s$ terms as $s \rightarrow \infty$.
Such a treatment is beyond the scope of this paper, and we defer it to a separate forthcoming note.

Finally, it is worth noting that the above calculation only \emph{upper bounds} the classification error; whether this upper bound is matched by a lower bound remains an open question.
This would shed light on whether, in fact, classification generalizes poorly when $q > (1 - r) + \frac{(p-1)}{2}$ for the case of Fourier featurization.
Figure~\ref{fig:spikedl2_weights_reg} illustrates the three regimes for Fourier featurization.
While the first two regimes display behavior that parallels the Gaussian-feature results in Theorem~\ref{theorem:threeregimes}, the third regime is inconclusive with respect to classification performance.
This is an important question for future work.

\section{Additional notation for proofs}\label{appendix:additionalnotation}

We consider zero-mean Gaussian featurization, i.e. $\phi(X_i) = \mathcal{N}(\mathbf{0}, \Sigmabold)$. For ease of exposition, we consider $\Sigmabold$ to be diagonal\footnote{This is without loss of generality: if $\Sigmabold$ were not diagonal, we could first do a coordinate transformation to the basis of the eigenvectors of $\Sigmabold$.}.
Corresponding to a given index $t \in \{1,\ldots,d\}$, we define the ``leave-one-out" matrix $\Sigmaboldtd$ whose eigenvalues are given by: $\eig_j(\Sigmaboldtd) = \lambdatd_j$ for $j  \in \{1, \ldots, d-1\}$.
The relation between the spectrum $\{\lambdatd_j\}_{j=1}^{d-1}$ and $\{\lambda_j\}_{j=1}^d$ is given by
\begin{align}
    \lambdatd_j = \begin{cases} \lambda_j, & j < \truek \\ \lambda_{j+1}, & j \geq \truek \end{cases}. \label{eq:lambdatd}
\end{align}
Consider $\{\z_i\}_{i=1}^d$ i.i.d. with $\z_i \sim \normal{\mathbf{0}}{\mathbf{I_n}}$.
Observe that we can write effective Gram matrices corresponding to the full as well as the ``leave-one-out" spectrum of the covariance matrix:
\begin{align*}
        \A &=  \sumjd \lambda_j \z_j \z_j\tran = \Atrain \Atrain\tran,
        \quad \quad \quad
     \Amk = \sumjdk \lambda_j \z_j \z_j\tran.
\end{align*}
Using Equation \eqref{eq:lambdatd}, we can also express the ``leave-one-out" Gram matrix $\Amk$ as follows:
\begin{align}
    \Amk = \sum_{j = 1}^{d-1} \lambdatd_j \z_j \z_j\tran. \label{eq:Amkformtd}
\end{align}
We will use both of the above expressions for the leave-one-out matrix $\Amk$ in our analysis.

\section{Support vector proofs and calculations}\label{appendix:svmproofs}

In this section, we collect proofs and calculations that accompany Section~\ref{sec:svm}, which links the SVM to the minimum-$\ell_2$-norm interpolation of binary labels.
We first prove Theorem~\ref{thm:everythingsupport}, and then collect auxiliary calculations for the bi-level ensemble.

\subsection{Proof of Theorem~\ref{thm:everythingsupport}}

Recall that we defined the random Gram matrix $\A$ as
\begin{align*}
    \A := \sum_{j=1}^d \lambda_j \z_j \z_j^\top ,
\end{align*}
where $\z_j \text{ i.i.d.} \sim \mathcal{N}(\mathbf{0},\mathbf{I}_n)$ reflects all the randomness in the matrix $\A$.
Note that the spectrum $\{\lambda_j\}_{j=1}^d$, and all functionals of it, are deterministic.

The dual to the optimization problem~\eqref{eq:l2SVM} can be expressed as below~\citep{boser1992training}:
    \begin{align} \max_{\betabold} \; &\Ytrain^\top \betabold - \frac{1}{2} \betabold^\top \A \betabold \nonumber \\
    &\text{ subject to } \nonumber \\
    Y_i \beta_i &\geq 0 \text { for all } i \in [n] \nonumber.
\end{align}
Note that the \textit{unconstrained} solution of the above is: $\betabold^* := \A^{-1} \Ytrain$.
By complementary slackness, all of the constraints in the optimization problem~\eqref{eq:l2SVM} will be satisfied with equality at optimum i.e.~all training points are support vectors, \textit{if} we have
\begin{align}\label{eq:slacknesscondition}
    Y_i \beta^*_i > 0 \quad \text{for all $i \in [n]$}.
\end{align}
Thus, it suffices to establish conditions under which Equation~\eqref{eq:slacknesscondition} holds with high probability.

We start by showing that this is the case, provided that the condition in  Equation~\eqref{eq:everythingsupportcondition} holds.
To do this, we use the following lemma.
\begin{lemma}\label{lem:opnormconcentration}
Let $\E := \A - ||\lambdabold||_1 \mathbf{I}_n$.
Then, for any choice of positive constant $0 < \epsilon < 1$ and $\tau > 0$, we have (for large enough $n$),
\begin{align*}
    ||\E||_{\mathsf{op}} &\leq \max\{f_1(\lambdabold;\epsilon,\tau), f_2(\lambdabold;\epsilon,\tau)\} \text{ where } \\
    f_1(\lambdabold;\epsilon,\tau) &:= \left(\frac{1}{(1 - \epsilon)^2} - 1\right) ||\lambdabold||_1 \\
     &\quad + \frac{1}{(1-\epsilon)^2}\left(\sqrt{2||\lambdabold||_2^2\left(\tau + n \ln\left(1 + \frac{2}{\epsilon}\right)\right)} + 2||\lambdabold||_{\infty} \cdot \left(\tau + n \ln(1 + \frac{2}{\epsilon}\right)\right) \\
    f_2(\lambdabold;\epsilon,\tau) &:=  \left(\frac{2\epsilon}{1 - \epsilon}\right) ||\lambdabold||_1 \\
     & \quad  + \frac{1 + \epsilon}{1-\epsilon}\left(\sqrt{2||\lambdabold||_2^2\left(\tau + n \ln\left(1 + \frac{2}{\epsilon}\right)\right)}\right) + \frac{2\epsilon}{1 - \epsilon} \left(2||\lambdabold||_{\infty} \cdot \left(\tau + n \ln\left(1 + \frac{2}{\epsilon}\right)\right)\right)
\end{align*}
with probability at least $(1 - 2e^{-\tau})$ over the randomness in the matrix $\A$.
\end{lemma}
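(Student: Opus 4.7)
\medskip

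\textbf{Proof proposal.} The plan is to bound $\|\E\|_{\mathsf{op}}$ via the classical symmetrization-plus-epsilon-net strategy, decomposed into three ingredients. First, I would observe that $\E$ is symmetric, so $\|\E\|_{\mathsf{op}} = \sup_{v \in S^{n-1}} |v^\top \E v|$, and that for any fixed unit vector $v$, the inner products $g_j := v^\top \z_j$ are i.i.d.\ standard normal across $j$ by isotropy of each $\z_j$. Hence
\begin{equation*}
v^\top \E v \;=\; v^\top \A v - \|\lambdabold\|_1 \;=\; \sum_{j=1}^d \lambda_j(g_j^2 - 1),
\end{equation*}
which is a weighted sum of centered $\chi^2_1$ variables with weight vector $\lambdabold$. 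Next, I would apply the standard Laurent--Massart (Bernstein-type) two-sided inequality for such sums, which yields
\begin{equation*}
\Pr\!\left[\,|v^\top \E v| \;\ge\; 2\sqrt{\|\lambdabold\|_2^2\, t}\; +\; 2\|\lambdabold\|_\infty\, t\,\right] \;\le\; 2e^{-t}
\end{equation*}
for each fixed $v$ and each $t > 0$.

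To upgrade this pointwise bound into a uniform one over the sphere, I would pick an $\epsilon$-net $\mathcal{N}_\epsilon \subset S^{n-1}$ of cardinality at most $(1 + 2/\epsilon)^n$ (the standard volumetric estimate) and take a union bound, choosing $t = \tau + n\ln(1 + 2/\epsilon)$. This produces, with probability at least $1 - 2e^{-\tau}$, a uniform bound
\begin{equation*}
B(\lambdabold;\epsilon,\tau) \;:=\; 2\sqrt{\|\lambdabold\|_2^2\bigl(\tau + n\ln(1+2/\epsilon)\bigr)} \;+\; 2\|\lambdabold\|_\infty\bigl(\tau + n\ln(1+2/\epsilon)\bigr)
\end{equation*}
on $|v^\top \E v|$ simultaneously for every $v \in \mathcal{N}_\epsilon$. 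Equivalently this bounds $v^\top \A v$ (a positive quantity since $\A$ is PSD) by $\|\lambdabold\|_1 + B$ over the net.

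The last step is the net-to-sphere discretization, which is where the two alternative bounds $f_1$ and $f_2$ arise from two different bookkeeping choices. For $f_1$, I would first convert the net bound for the PSD matrix $\A$ into a bound on $\|\A\|_{\mathsf{op}} = \sup_v v^\top \A v$ using the standard PSD inequality $\|\A\|_{\mathsf{op}} \le (1-\epsilon)^{-2} \sup_{v \in \mathcal{N}_\epsilon} v^\top \A v$, and then subtract $\|\lambdabold\|_1$ to recover $\|\E\|_{\mathsf{op}}$; the multiplicative shrinkage of the $\|\lambdabold\|_1$ term after subtraction is precisely what produces the $((1-\epsilon)^{-2}-1)\|\lambdabold\|_1$ coefficient. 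For $f_2$, I would instead argue directly on $\E$: for any $v \in S^{n-1}$ with approximant $v' \in \mathcal{N}_\epsilon$, the identity $v^\top \E v - v'^\top \E v' = v^\top \A v - v'^\top \A v'$ yields the Lipschitz-type bound $|v^\top \E v - v'^\top \E v'| \le 2\epsilon \|\A\|_{\mathsf{op}} \le 2\epsilon(\|\lambdabold\|_1 + \|\E\|_{\mathsf{op}})$; solving the resulting self-bounding inequality isolates the $\frac{2\epsilon}{1-\epsilon}\|\lambdabold\|_1$ term. Since both arguments yield valid upper bounds on $\|\E\|_{\mathsf{op}}$, we may take the minimum, though the lemma's statement (the maximum) reflects the worst-case presentation. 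Taking $\tau$ large enough absorbs the "$+1$" correction needed to accommodate the fact that the net-based estimates may fail with probability $2e^{-\tau}$ plus lower-order terms.

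The main obstacles are purely bookkeeping: keeping the $\chi^2$ concentration constants aligned with the Laurent--Massart normalization (the $2\sqrt{\cdot}$ vs.\ $2\|\lambdabold\|_\infty t$ split), and carefully tracking how the net-radius $\epsilon$ propagates through the PSD discretization inequality. There are no genuine technical difficulties beyond classical random matrix concentration; the novelty is packaging the two alternative discretizations so that the final lemma can be invoked flexibly in the downstream argument for Theorem~\ref{thm:everythingsupport}, where $\lambdabold$ may be anisotropic.
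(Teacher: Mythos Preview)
Your high-level strategy---Laurent--Massart for fixed $v$, then an $\epsilon$-net plus union bound---is exactly what the paper does. But you have misread the roles of $f_1$ and $f_2$, and this leads to a genuine gap.

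The matrix $\E$ is symmetric but \emph{indefinite}, so $\|\E\|_{\mathsf{op}} = \max\{\mu_{\max}(\E), -\mu_{\min}(\E)\}$. In the paper, $f_1$ and $f_2$ are \emph{not} two interchangeable bounds on $\|\E\|_{\mathsf{op}}$; rather, $f_1$ bounds $\mu_{\max}(\E)$ and $f_2$ bounds $-\mu_{\min}(\E)$, and the $\max$ is forced. The asymmetry between $f_1$ and $f_2$ reflects the asymmetry of Laurent--Massart itself: the upper tail carries the extra $2\|\lambdabold\|_\infty t$ sub-exponential term, the lower tail does not. Your sentence ``we may take the minimum, though the lemma's statement (the maximum) reflects the worst-case presentation'' is therefore wrong.

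Concretely, your $f_1$ derivation is fine for $\mu_{\max}(\E)$ and matches the paper: pass to $\A$, use the PSD net inequality $\|\A\|_{\mathsf{op}} \le (1-\epsilon)^{-2}\sup_{v\in\mathcal{N}_\epsilon} v^\top \A v$, then subtract $\|\lambdabold\|_1$. But subtracting $\|\lambdabold\|_1$ from $\|\A\|_{\mathsf{op}}$ does \emph{not} give $\|\E\|_{\mathsf{op}}$---it gives only $\mu_{\max}(\E)$. For the other half, the paper lower-bounds $\uvec^\top \A \uvec$ for arbitrary $\uvec$ by expanding $\uvec^\top \A \uvec \ge \uvec_{i^*}^\top \A \uvec_{i^*} - 2\|\A^{1/2}\uvec_{i^*}\|_2\,\|\A^{1/2}(\uvec-\uvec_{i^*})\|_2$, applying the net lower-tail bound to the first term and the already-established $\|\A\|_{\mathsf{op}}$ upper bound to the second; this yields $\mu_{\min}(\E) \ge -f_2$ with exactly the $\tfrac{2\epsilon}{1-\epsilon}$ and $\tfrac{1+\epsilon}{1-\epsilon}$ coefficients. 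Your self-bounding alternative for $f_2$ is a reasonable idea but would produce $\tfrac{1}{1-2\epsilon}$-type constants and a symmetric bound, not the stated $f_2$.
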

Lemma~\ref{lem:opnormconcentration}, which essentially controls the operator norm of the error matrix $\E$ using a union bound  with discretization (also known as the ``epsilon-net" argument), is proved in Appendix~\ref{appendix:lemmaproof:opnormconcentration}.
Now, substituting $\tau := \ln n$ and $\epsilon := \frac{1}{36\sqrt{n}}$, all of the following inequalities can be verified to hold for large enough $n$:
\begin{align*}
    \frac{1}{(1 - \epsilon)^2} - 1 &\leq \frac{1}{12\sqrt{n}} \\
    \frac{2\epsilon}{1 - \epsilon} &\leq  \frac{2}{35\sqrt{n}} \\
    \frac{1}{(1 - \epsilon)^2} \cdot \sqrt{2\left(\tau + n \ln\left(1 + \frac{2}{\epsilon}\right)\right)} &\leq 4\sqrt{n \ln n} \\
    \frac{1 + \epsilon}{1 - \epsilon}\cdot \sqrt{2\left(\tau + n \ln\left(1 + \frac{2}{\epsilon}\right)\right)} &\leq 4\sqrt{n \ln n} \\
    \frac{2}{(1 - \epsilon)^2} \cdot \left(\tau + n \ln\left(1 + \frac{2}{\epsilon}\right)\right) &\leq 8n \ln n \\
    \frac{4\epsilon}{1 - \epsilon} \cdot \left(\tau + n \ln\left(1 + \frac{2}{\epsilon}\right)\right) &\leq \frac{2n \ln n}{3\sqrt{n}} .
\end{align*}
Together, this gives us both of
\begin{align*}
    ||\E||_{\mathsf{op}} &\leq \frac{1}{12\sqrt{n}}\cdot ||\lambdabold||_1 + 4 \sqrt{n \ln n} \cdot ||\lambdabold||_2 + 8n\ln n \cdot ||\lambdabold||_{\infty} \text{ and } \\
    ||\E||_{\mathsf{op}} &\leq \frac{2}{35\sqrt{n}} \cdot ||\lambdabold||_1 + 4\sqrt{n \ln n} \cdot ||\lambdabold||_2 + \frac{2n\ln n}{3\sqrt{n}} \cdot ||\lambdabold||_{\infty}
\end{align*}
with probability at least $(1 - 2e^{-\ln n}) = \left(1 - \frac{2}{n}\right)$.
Now, observe that as a consequence of Equation~\eqref{eq:everythingsupportcondition}, we have
\begin{align*}
    ||\lambdabold||_2 &\leq \frac{||\lambdabold||_1}{72n\sqrt{\ln n}} \text{ and } \\
    ||\lambdabold||_{\infty} &\leq \frac{||\lambdabold||_1}{72 n\sqrt{n} \ln n} .
\end{align*}
Substituting these inequalities above finally gives us
\begin{align*}
    ||\E||_{\mathsf{op}} &\leq \left(\frac{1}{12\sqrt{n}} + \frac{4}{72\sqrt{n}} + \frac{8}{72\sqrt{n}}\right) ||\lambdabold||_1 = \frac{1}{4\sqrt{n}}||\lambdabold||_1 \text{ and } \\
    ||\E||_{\mathsf{op}} &\leq \left(\frac{2}{35\sqrt{n}} + \frac{4}{72\sqrt{n}} + \frac{2}{3\cdot 72n}\right) ||\lambdabold||_1 < \frac{1}{4\sqrt{n}}||\lambdabold||_1 .
\end{align*}
Thus, we have shown that for large enough $n$, we have
\begin{align}\label{eq:opnormupperboundE}
    ||\E||_{\mathsf{op}} \leq \frac{||\lambdabold||_1}{4\sqrt{n}}
\end{align}
with probability at least $\left(1 - \frac{2}{n}\right)$.

\noindent Now, we denote $\E' := \frac{1}{||\lambdabold||_1}\mathbf{I}_n - \A^{-1}$.
Observe that when Equation~\eqref{eq:opnormupperboundE} holds, we have
\begin{align*}
    \eig_{\mathsf{min}}(\A) &= \eig_{\mathsf{min}}(\E) + ||\lambdabold||_1 \\
    &\geq - ||\E||_{\mathsf{op}} + ||\lambdabold||_1 \\
    &\geq - \frac{||\lambdabold||_1}{4 \sqrt{n}} + ||\lambdabold||_1 \\
    &\geq 0.9||\lambdabold||_1 ,
\end{align*}
where the last inequality again holds for large enough $n$.
Thus, for large enough $n$, we have
\begin{align}\label{eq:eigmininequality}
    \eig_{\mathsf{min}}(\A) \geq 0.9||\lambdabold||_1 .
\end{align}
Furthermore, since we can write $\E' = \frac{1}{||\lambdabold||_1} \cdot \Ainv \cdot \E$, we get
\begin{align*}
    ||\E'||_{\mathsf{op}} &\stackrel{\1}{\leq} \frac{1}{||\lambdabold||_1} \cdot ||\Ainv||_{\mathsf{op}} \cdot ||\E||_{\mathsf{op}} \\
    &= \frac{1}{||\lambdabold||_1 \cdot \mu_{\mathsf{min}}(\A)} \cdot ||\E||_{\mathsf{op}} \\
    &\stackrel{\2}{\leq} \frac{1}{0.9 \cdot ||\lambdabold||_1^2} \cdot ||\E||_{\mathsf{op}} \\
    &\stackrel{\3}{\leq} \frac{1}{0.9 \cdot ||\lambdabold||_1} \cdot \frac{1}{4 \sqrt{n}} \\
    &\leq \frac{1}{||\lambdabold||_1} \cdot \frac{1}{2\sqrt{n}} ,
\end{align*}
where inequality $\1$ uses the standard inequality on product of operator norms, inequality $\2$ substitutes Equation~\eqref{eq:eigmininequality}, and inequality $\3$ substitutes Equation~\eqref{eq:opnormupperboundE}.
Thus, we get, for every $i \in [n]$,
\begin{align*}
    Y_i \beta^*_i &= Y_i \evec_i^\top \A^{-1} \Ytrain \\
    &= Y_i \evec_i^\top \left(\frac{1}{||\lambdabold||_1} \mathbf{I}_n + \E'\right) \Ytrain \\
    &= \frac{1}{||\lambdabold||_1} + Y_i \evec_i^\top \E' \Ytrain \\
    &\stackrel{\1}{\geq} \frac{1}{||\lambdabold||_1} - \sqrt{n} ||\E'||_{\mathsf{op}} \\
    &\stackrel{\2}{\geq} \frac{1}{||\lambdabold||_1} - \sqrt{n} \cdot \frac{1}{||\lambdabold||_1} \cdot \frac{1}{2\sqrt{n}} \\
    &= \frac{1}{2||\lambdabold||_1} > 0
\end{align*}
for large enough $n$ and with probability at least $\left(1 - \frac{2}{n}\right)$.
Here, inequality $\1$ follows from the inequality $\mathbf{a}^\top \mathbf{M} \mathbf{b} \geq - ||\mathbf{a}||_2 ||\mathbf{b}||_2 ||\mathbf{M}||_{\mathsf{op}} $, and inequality $\2$ follows by substituting the upper bound we just derived on $||\E'||_{\mathsf{op}}$.
This completes our proof for Part 1 of Theorem~\ref{thm:everythingsupport}.

Next, we show that Equation~\eqref{eq:slacknesscondition} holds with high probability under the condition provided in Equation~\eqref{eq:everythingsupportconditiongaussian}, which is the strictly sharper condition for the isotropic Gaussian case.
For every $i \in [n]$, we denote $\vvec_i := \sqrt{n} \cdot Y_i \evec_i$.
We add and subtract terms to get
\begin{align*}
    Y_i \beta^*_i &= \frac{1}{\sqrt{n}} \vvec_i^\top \A^{-1} \Ytrain \\
    &= \frac{1}{4\sqrt{n}} \left((\vvec_i + \Ytrain)^\top \A^{-1}(\vvec_i + \Ytrain) - (\vvec_i - \Ytrain)^\top \A^{-1} (\vvec_i - \Ytrain)\right) .
\end{align*}
We use the following technical lemma that shows concentration on quadratic forms of the inverse Wishart matrix $\A^{-1}$.
From here on, we denote $d'(n) := (d - n + 1)$ for shorthand.
\begin{lemma}\label{lem:invwishartconcentration}
Let $\A \sim \text{Wishart}(d,\mathbf{I}_n)$.
For any vector $\uvec \in S^{n-1}$ and any $t > 0$, we have
\begin{align*}
    \Pr\left[\frac{1}{\uvec^\top \A^{-1} \uvec} > d'(n) + \sqrt{2t \cdot d'(n)} + 2t \right] &\leq e^{-t} \\
        \Pr\left[\frac{1}{\uvec^\top \A^{-1} \uvec} < d'(n) - \sqrt{2t \cdot d'(n)}\right] &\leq e^{-t} .
\end{align*}
provided that $d'(n) > 2 \max\{t,1\}$.
\end{lemma}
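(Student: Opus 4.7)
The plan is to reduce the quadratic form $1/(\uvec^\top \A^{-1}\uvec)$ to a scalar chi-squared random variable and then apply a standard tail bound for chi-squared concentration.

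First, I would exploit rotational invariance. Writing $\A = Z Z^\top$ with $Z \in \R^{n \times d}$ having i.i.d.\ standard Gaussian entries, the distribution of $\A$ is invariant under left multiplication of $Z$ by an orthogonal matrix, hence $Q \A Q^\top$ has the same law as $\A$ for any orthogonal $Q$. Choosing $Q$ so that $Q \uvec = \evec_1$, we see that $\uvec^\top \A^{-1} \uvec$ has the same distribution as $\evec_1^\top \A^{-1} \evec_1 = (\A^{-1})_{11}$. So it suffices to analyze the top-left entry of $\A^{-1}$ when $\A$ is an unstructured Wishart.

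The next step is to use the Schur complement formula. Partition $Z$ by its first row: let $\z \in \R^d$ denote the first row of $Z$ (as a column vector) and let $Z_{-1} \in \R^{(n-1)\times d}$ denote the remaining rows. Then $\z$ is independent of $Z_{-1}$, and the block inverse formula yields
\[
  \frac{1}{(\A^{-1})_{11}} = \z^\top \left(\Ibold_d - Z_{-1}^\top (Z_{-1} Z_{-1}^\top)^{-1} Z_{-1}\right) \z = \z^\top P \z,
\]
where $P$ is the orthogonal projection onto the orthocomplement of the row span of $Z_{-1}$. Since $Z_{-1}$ has $n-1$ rows in general position almost surely, $P$ has rank exactly $d'(n) = d - n + 1$. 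Conditionally on $Z_{-1}$ (hence on $P$), rotational invariance of $\z \sim \mathcal{N}(\mathbf{0}, \Ibold_d)$ implies that $\z^\top P \z \sim \chi^2_{d'(n)}$. Because this conditional law does not depend on $Z_{-1}$, the unconditional distribution of $1/(\uvec^\top \A^{-1} \uvec)$ is exactly $\chi^2_{d'(n)}$.

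Finally, I would invoke standard chi-squared tail inequalities of Laurent--Massart type: for $X \sim \chi^2_k$ and any $t>0$,
\[
  \PP\!\left[X \geq k + \sqrt{2tk} + 2t\right] \leq e^{-t}, \qquad \PP\!\left[X \leq k - \sqrt{2tk}\right] \leq e^{-t}.
\]
Applying these with $k = d'(n)$ and $X = 1/(\uvec^\top \A^{-1}\uvec)$ gives the two claimed bounds; the side condition $d'(n) > 2\max\{t,1\}$ ensures the deviations are in the Gaussian regime where the $\sqrt{2tk}$ term is meaningful and the lower tail remains positive. The only real subtlety is making the reduction to a projected Gaussian quadratic form clean (i.e., verifying that the resulting distribution is genuinely $\chi^2_{d'(n)}$ rather than merely stochastically dominated by one); once that identification is established, the concentration step is a direct invocation of a textbook inequality.
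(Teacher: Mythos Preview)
Your proposal is correct and follows essentially the same route as the paper: reduce by rotational invariance to a single diagonal entry of $\A^{-1}$, identify $1/(\A^{-1})_{11}$ as $\chi^2_{d'(n)}$, and apply the Laurent--Massart tail bounds. The only difference is cosmetic: the paper simply invokes ``elementary properties of the inverse Wishart distribution'' for the $\chi^2_{d'(n)}$ identification, whereas you spell it out via the Schur complement and the projection argument.
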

Lemma~\ref{lem:invwishartconcentration} is proved in Appendix~\ref{appendix:lemmaproof:invwishartconcentration}.
Substituting the lower tail bound of Lemma~\ref{lem:invwishartconcentration} with $t := 2 \ln n$ gives us
\begin{align*}
    (\vvec_i + \Ytrain)^\top \A^{-1} (\vvec_i + \Ytrain) \geq \frac{||\vvec_i + \Ytrain||_2^2}{d'(n) + \sqrt{4\ln n \cdot d'(n)} + 4 \ln n }
\end{align*}
with probability at least $\left(1 - \frac{1}{n^2}\right)$.
Similarly, substituting the upper tail bound with $t := 2 \ln n$ gives us
\begin{align*}
    (\vvec_i - \Ytrain)^\top \A^{-1} (\vvec_i - \Ytrain) &\leq \frac{||\vvec_i - \Ytrain||_2^2}{d'(n) - \sqrt{4 \ln n \cdot d'(n)}}
\end{align*}
with probability at least $\left(1 - \frac{1}{n^2}\right)$.
Noting that $||\vvec_i + \Ytrain||_2^2 = 2(n + \sqrt{n})$ and $||\vvec_i - \Ytrain||_2^2 = 2(n - \sqrt{n})$, we then get
\begin{align*}
    Y_i \beta^*_i &\propto \frac{n + \sqrt{n}}{d'(n) + \sqrt{4\ln n \cdot d'(n)} + 4 \ln n } - \frac{n - \sqrt{n}}{d'(n) - \sqrt{4 \ln n \cdot d'(n)}} \\
    &= \frac{2\sqrt{n}d'(n) - 2n\sqrt{4\ln n \cdot d'(n)} - (4\ln n)(n - \sqrt{n}) }{(d'(n) + \sqrt{4\ln n \cdot d'(n)})(d'(n) - \sqrt{4 \ln n \cdot d'(n)})} \\
    &> \frac{2\sqrt{n}d'(n) - 2n\sqrt{4\ln n \cdot d'(n)} - 4 n \cdot \ln n }{(d'(n) + \sqrt{4\ln n \cdot d'(n)})(d'(n) - \sqrt{4 \ln n \cdot d'(n)})} \\
    &> 0
\end{align*}
\textit{if} we have
\begin{align*}
    d'(n) > 9n \ln n \iff d > 9n \ln n + n - 1 ,
\end{align*}
which is precisely the condition in Equation~\eqref{eq:everythingsupportconditiongaussian}.
Under this condition, we have proved that for any training data point corresponding to $i \in \{1,\ldots, n\}$, we have $Y_i \beta^*_i > 0$ with probability at least $\left(1 - \frac{2}{n^2}\right)$.
Finally, applying the union bound on all $n$ training data points gives us
\begin{align*}
    Y_i \beta^*_i > 0 \text{ for all } i \in \{1,\ldots,n\}
\end{align*}
with probability at least $\left(1 - \frac{2}{n}\right)$.
This completes the proof of Theorem~\ref{thm:everythingsupport}.
\qed

\subsection{Implications of Theorem~\ref{thm:everythingsupport} for the bi-level ensemble}\label{appendix:svm_bilevel}

In this section, we provide the calculations that help us understand the ramifications of Theorem~\ref{thm:everythingsupport} --- in particular, the condition in Equation~\eqref{eq:everythingsupportcondition} --- for the bi-level ensemble (Definition~\ref{def:bilevel_covariance}).
We reproduce Equation~\eqref{eq:everythingsupportcondition} below:
\begin{align*}
    ||\lambdabold||_1 \geq 72 \left(||\lambdabold||_2 \cdot n\sqrt{\ln n} + ||\lambdabold||_{\infty} \cdot n\sqrt{n} \ln n + 1\right) .
\end{align*}
We substitute the parameters of the bi-level ensemble into the left hand and right hand sides of the inequality.
Recall that, by definition, we have $||\lambdabold||_1 = d = n^p$ for the bi-level ensemble and so the left hand side is equal to $n^p$.
On the other hand, for the right hand side, a simple calculation shows that
\begin{align*}
    ||\lambdabold||_2 &= \sqrt{s \cdot \frac{a^2 d^2}{s^2} + (d - s) \cdot \frac{(1 - a)^2 d^2}{(d-s)^2}} \\
    &\stackrel{\1}{\asymp} \sqrt{\frac{a^2 d^2}{s} + d} \\
    &= \sqrt{n^{2p - 2q - r} + n^p} \\
    &\asymp n^{\max\{p - q - \frac{r}{2}, \frac{p}{2}\}} ,
\end{align*}
where the scaling in $\1$ follows because the bi-level ensemble defines $r < 1 < p$ and $q > 0$ (so $(1 - a) \asymp 1$ and $(d - s) \asymp d$).
Moreover, we have
\begin{align*}
    ||\lambdabold||_{\infty} = \frac{ad}{s} = n^{p - q - r} .
\end{align*}
Putting these together, the right hand side of Equation~\eqref{eq:everythingsupportcondition} scales as
\begin{align*}
    &72 \left(||\lambdabold||_2 \cdot n\sqrt{\ln n} + ||\lambdabold||_{\infty} \cdot n\sqrt{n} \ln n + 1\right) \\
    &\asymp n^{\max\{p - q - \frac{r}{2}, \frac{p}{2}\} + 1} \cdot \sqrt{\ln n} + n^{p + \frac{3}{2} - q - r} \cdot (\ln n) + 1 ,
\end{align*}
and so, for Equation~\eqref{eq:everythingsupportcondition} to hold, we get the following \textit{sufficient} conditions on the parameters $(p,q,r)$ of the bi-level ensemble for sufficiently large\footnote{The reason for requiring sufficiently large $n$ in these statements is the application of the $\asymp$ relation in multiple places.
(Also note that Theorem~\ref{thm:everythingsupport} also required sufficiently large $n$.)
Accordingly, we can also omit constants from consideration.} $n$:
\begin{align*}
    p &> \frac{p}{2} + 1 \implies p > 2 \\
    p &> p - q - \frac{r}{2} + 1 \implies q > \left(1 - \frac{r}{2}\right) \\
    p &> p + \frac{3}{2} - q - r \implies q > \left(\frac{3}{2} - r\right)
\end{align*}
Now, observe that $1 - \frac{r}{2} \leq \frac{3}{2} - r$ for all $0 \leq r \leq 1$, and so we get \textit{sufficient conditions} as follows:
\begin{align*}
p &> 2 \text{ and } q > \left(\frac{3}{2} -r \right) ,
\end{align*}
These are precisely the conditions in Equation~\eqref{eq:everythingsupportcondition_bilevel}.

\section{Proof of Theorem~\ref{theorem:threeregimes}: Bounds on survival and contamination}\label{appendix:gaussianproofs}

In this section, we obtain a general, non-asymptotic characterization of classification (and regression) error by bounding survival and contamination terms.
As described in Section~\ref{sec:pathtoanalysis}, this is then plugged into the expressions in Proposition~\ref{theorem:classloss} to prove Theorem~\ref{theorem:threeregimes}.

First, we define shorthand notation that is useful for this section, in addition to the notation already defined in Appendix~\ref{appendix:additionalnotation}.
For ease of notation, we denote the survival and contamination factors under the $1$-sparse model for the case where we interpolate binary labels as
\begin{align*}
    \survival(t) = \survivalb(\alphahatbold_{2, \mathsf{binary}}, t), \quad  \contamination(t) = \contaminationb(\alphahatbold_{2, \mathsf{binary}}, t),
\end{align*}
and for the case where we interpolate real output as
\begin{align*}
    \survivalr(t) = \survivalb(\alphahatbold_{2, \mathsf{real}}, t), \quad  \contaminationr(t) = \contaminationb(\alphahatbold_{2, \mathsf{real}}, t).
\end{align*}
Finally, for a given index $t \in \{1,\ldots, d\}$, we denote as shorthand $\ztk := \Ztrain$.
It is easy to verify that $\ztk \sim \mathcal{N}(\mathbf{0}, \mathbf{I}_n)$ under the $1$-sparse Assumption~\ref{assumptionsimple}.
We also denote $\ytk := \Ytrain$.
Recall that we consider the possibility of label noise probability equal to $\betastar$: from the generative model defined in Equation~\eqref{eq:generativemodel}, we have
\begin{align}\label{eq:ytkequation}
    y_{t,i} = \begin{cases}
    \sgn(z_{t,i}) \text{ with probability } (1 - \betastar) \\
    - \sgn(z_{t,i}) \text{ with probability } \betastar .
    \end{cases}
\end{align}
for every $i \in \{1,\ldots, n\}$.
Finally, for a given positive semi-definite matrix $\Mbold \in \mathbb{R}^{d \times d}$ and a given index\\ $k \in \{0,\ldots, (d-1)\}$, we define the \textit{effective rank}
\begin{align*}
    r_k(\Mbold) := \frac{\sum_{\ell > k} \mu_{\ell}(\Mbold)}{\mu_{k+1}(\Mbold)} .
\end{align*}
Recall that this is the precisely the definition of the first effective rank in~\cite{bartlett2020benign}, which dictates the contribution of pure signal to regression test error incurred by the minimum-$\ell_2$-norm interpolation.

\subsection{Bounds on survival and contamination}\label{appendix:survivalcontaminationtheoremstatements}

The notions of survival and contamination were first introduced in~\citet{muthukumar2020harmless}, and characterized there with equality for Fourier featurization on regularly spaced training data.
Here, we characterize these quantities for Gaussian features.
We state our upper and lower bounds on survival and contamination respectively for two cases --- when the output being interpolated is binary, and when the output being interpolated is real.
We start with upper and lower bounds on the survival factor.

\begin{theorem}[Upper and lower bounds on survival] \label{theorem:survivalbounds}

There exist universal positive \newline constants $(b, b_2, c, c_3, c_4)$ (that do not depend on parameters $(n,d,k,\Sigmabold)$) such that if $r_k(\Sigmabold) \geq bn$ and  $r_k(\Sigmaboldtd) \geq b_2n$,
 we have the following characterizations of the survival factor for any $k \geq t$:
\begin{enumerate}
    \item \textbf{Interpolation of binary labels:} The minimum-$\ell_2$-norm interpolation of binary labels, i.e.~$\alphahatbold_{2,\mathsf{binary}}$, satisfies each of
\begin{subequations}
\begin{align}
    \survival(\truek) &\geq  \sqrt{\frac{2}{\pi}} \cdot (1 - 2\betastar) \cdot \frac{\lambda_t\left(\frac{(n-k)}{c\widetilde{\lambda}_{k+1} r_k(\Sigmaboldtd)} - \frac{c_3 n^{3/4}}{\lambda_{k+1} r_k(\Sigmabold)} \right)}{1 + \lambda_t\left(\frac{cn}{\widetilde{\lambda}_{k+1} r_k(\Sigmaboldtd)} + \frac{c_4 n^{3/4}}{\lambda_{k+1} r_k(\Sigmabold)} \right)}, \text{ and } \label{eq:survivallowerbound} \\
    \survival(\truek) &\leq  \sqrt{\frac{2}{\pi}} \cdot (1 - 2\betastar) \cdot \frac{\lambda_t\left(\frac{cn}{\widetilde{\lambda}_{k+1} r_k(\Sigmaboldtd)} + \frac{c_3 n^{3/4}}{\lambda_{k+1} r_k(\Sigmabold)} \right)}{1 + \lambda_t\left(\frac{(n-k)}{c\widetilde{\lambda}_{k+1} r_k(\Sigmaboldtd)} - \frac{c_4 n^{3/4}}{\lambda_{k+1} r_k(\Sigmabold)} \right)} \label{eq:survivalupperbound}
\end{align}
\end{subequations}
with probability at least $(1 - 3e^{-\sqrt{n}} - 2e^{-\frac{n}{c}})$ over the randomness in the training data $\{X_i,Y_i\}_{i=1}^n$.
\item \textbf{Interpolation of real output:} The minimum-$\ell_2$-norm interpolation of real output, i.e.~$\alphahatbold_{2,\mathsf{real}}$, satisfies each of
\begin{subequations}
\begin{align}
    \survivalr(\truek) &\geq \frac{1}{1 + \frac{1}{\lambdatk\left(\frac{(n-k)}{c\lambdatd_{k+1}r_k(\Sigmaboldtd)} - \frac{c_4 n^{\frac{3}{4}}}{\lambda_{k+1}r_k((\Sigmabold)} \right)}}, \text{ and }
    \label{eq:survivallowerboundreal} \\
    \survivalr(\truek) &\leq \frac{1}{1 + \frac{1}{\lambdatk\left(\frac{cn}{\lambdatd_{k+1}r_k(\Sigmaboldtd)} + \frac{c_4 n^{\frac{3}{4}}}{\lambda_{k+1}r_k((\Sigmabold)} \right)}} \label{eq:survivalupperboundreal}
\end{align}
\end{subequations}
with probability at least $(1 - 2e^{-\sqrt{n}} - 2e^{-\frac{n}{c}})$ over the randomness in the training data $\{X_i,Y_i\}_{i=1}^n$.
\end{enumerate}
\end{theorem}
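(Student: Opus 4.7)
The starting point is the closed form $\alphahatbold = \Atrain^\top \A^{-1} \mathbf{u}$ for the minimum-$\ell_2$-norm interpolator, which gives $\alphahat_\truek = \sqrt{\lambda_\truek}\,\z_\truek^\top \A^{-1} \mathbf{u}$ and hence $\survivalb = \lambda_\truek\,\z_\truek^\top \A^{-1}\mathbf{u}$, where $\mathbf{u} = \ztk = \z_\truek$ in the real case and $\mathbf{u} = \ytk$ in the binary case (using Assumption~\ref{assumptionsimple}). To isolate the randomness in the $\truek$-th coordinate, I plan to apply the Sherman--Morrison identity to $\A = \Amk + \lambda_\truek \z_\truek \z_\truek^\top$, which yields
\begin{align*}
\survivalr(\truek) = \frac{\lambda_\truek\, \z_\truek^\top \Amkinv \z_\truek}{1 + \lambda_\truek\, \z_\truek^\top \Amkinv \z_\truek},
\qquad
\survival(\truek) = \frac{\lambda_\truek\, \z_\truek^\top \Amkinv \ytk}{1 + \lambda_\truek\, \z_\truek^\top \Amkinv \z_\truek}.
\end{align*}
Since $\Amk$ depends only on $\{\z_j\}_{j \neq \truek}$, it is independent of $\z_\truek$. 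Both cases then reduce to controlling scalar statistics in $\z_\truek$ with $\Amkinv$ acting as a fixed matrix.

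For the denominator and the real-case numerator, I will apply the Hanson--Wright inequality to the Gaussian quadratic form $\z_\truek^\top \Amkinv \z_\truek$, which concentrates it around $\tr(\Amkinv)$ with deviation controlled by $\|\Amkinv\|_{\mathsf{F}}\sqrt{\tau} + \|\Amkinv\|_{\mathsf{op}}\tau$ at confidence $e^{-\tau}$; choosing $\tau \asymp \sqrt{n}$ gives the target tail. I will then import the leave-one-out spectral estimates of \citet{bartlett2020benign}: under $r_k(\Sigmaboldtd) \geq b_2 n$, the trace $\tr(\Amkinv)$ is sandwiched between $(n-k)/(c\lambdatd_{k+1}r_k(\Sigmaboldtd))$ and $cn/(\lambdatd_{k+1}r_k(\Sigmaboldtd))$, and $\|\Amkinv\|_{\mathsf{op}} \lesssim 1/(\lambda_{k+1}r_k(\Sigmabold))$, so that the Hanson--Wright slack matches the $n^{3/4}/(\lambda_{k+1}r_k(\Sigmabold))$ correction in the statement. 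Substituting into the Sherman--Morrison expression yields~\eqref{eq:survivallowerboundreal}--\eqref{eq:survivalupperboundreal} directly.

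The binary case requires more. I will condition on $\z_\truek$ and decompose $\ytk = (1 - 2\betastar)\sgn(\z_\truek) + \boldsymbol{\xi}$, where $\boldsymbol{\xi}$ has conditionally independent, mean-zero, $[-2,2]$-bounded entries by~\eqref{eq:ytkequation}. The label-noise term $\z_\truek^\top \Amkinv \boldsymbol{\xi}$ is sub-Gaussian conditional on $(\z_\truek, \Amk)$ with variance $\lesssim \|\Amkinv\|_{\mathsf{op}}^2 \|\z_\truek\|_2^2 \asymp n\,\|\Amkinv\|_{\mathsf{op}}^2$, which is dominated by and folds into the $n^{3/4}/(\lambda_{k+1}r_k(\Sigmabold))$ slack already absorbed above. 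The signal piece $\z_\truek^\top \Amkinv \sgn(\z_\truek)$ is the genuinely new object: writing $z_{\truek,i} = |z_{\truek,i}|\,s_i$ with $s_i := \sgn(z_{\truek,i})$, the magnitudes $\{|z_{\truek,i}|\}$ (half-normal) and signs $\{s_i\}$ (Rademacher) are mutually independent, so the diagonal piece $\sum_i |z_{\truek,i}| (\Amkinv)_{ii}$ concentrates by Bernstein around $\sqrt{2/\pi}\,\tr(\Amkinv)$ (using $\EE|z_{\truek,i}| = \sqrt{2/\pi}$), while the off-diagonal chaos $\sum_{i\neq j}|z_{\truek,i}|s_i s_j (\Amkinv)_{ij}$ is conditionally mean-zero and concentrates at $0$ by a Hanson--Wright-for-Rademachers bound conditional on the magnitudes. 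Combining with the denominator analysis produces the $\sqrt{2/\pi}(1-2\betastar)$ prefactor and the bounds~\eqref{eq:survivallowerbound}--\eqref{eq:survivalupperbound}, with the $3e^{-\sqrt n}$ tail accumulating one factor each for the two conditional concentrations and one for the Hanson--Wright step.

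The main obstacle is the binary-label analysis, specifically the sharp concentration of $\z_\truek^\top \Amkinv \sgn(\z_\truek)$. The sign operator is non-Lipschitz, so Gaussian concentration applied as a function of $\z_\truek$ is unavailable, and $\sgn(\z_\truek)$ is strongly correlated with $\z_\truek$ itself. The planned workaround exploits the special Gaussian structure that coordinate signs and magnitudes are independent, and this is what lets the leave-one-out spectral machinery of \citet{bartlett2020benign} be imported wholesale while still producing matching upper and lower bounds --- including the $\sqrt{2/\pi}$ attenuation factor familiar from $1$-bit matched filtering --- for a genuinely misspecified model in which $\ytk - \Atrain\alphastar$ is neither centered nor independent of the features.
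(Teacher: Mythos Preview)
Your proposal is correct and the overall architecture --- pseudoinverse formula, Sherman--Morrison to isolate $\z_\truek$, Hanson--Wright on $\z_\truek^\top \Amkinv \z_\truek$, and the Bartlett--Long--Lugosi--Tsigler spectral lemmas for $\tr(\Amkinv)$ and $\|\Amkinv\|_{\mathsf{op}}$ --- matches the paper exactly, including the real-output case.

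Where you diverge is in the binary-label numerator $\z_\truek^\top \Amkinv \ytk$. The paper does \emph{not} decompose $\ytk$ or invoke sign--magnitude independence. Instead it uses the polarization identity
\[
\z_\truek^\top \Amkinv \ytk \;=\; \tfrac{1}{4}\Bigl( (\z_\truek + \ytk)^\top \Amkinv (\z_\truek + \ytk) \;-\; (\z_\truek - \ytk)^\top \Amkinv (\z_\truek - \ytk) \Bigr),
\]
observes that the entries of $\z_\truek \pm \ytk$ are i.i.d., zero-mean, and sub-Gaussian (sum of a standard Gaussian and a bounded, symmetric variable), and applies Hanson--Wright \emph{twice} to the two quadratic forms. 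The conditional expectation computation $\EE[\z_\truek^\top \Amkinv \ytk \mid \Amkinv] = (1-2\betastar)\sqrt{2/\pi}\,\tr(\Amkinv)$ then drops out directly, and the $3e^{-\sqrt{n}}$ tail is exactly one Hanson--Wright event for each of $(\z_\truek+\ytk)$, $(\z_\truek-\ytk)$, and $\z_\truek$ alone (denominator). Your route --- split off label noise, then split the sign term into diagonal and off-diagonal via $z_{\truek,i} = |z_{\truek,i}|\,s_i$ --- also works and perhaps makes the origin of $\sqrt{2/\pi} = \EE|z_{\truek,i}|$ more transparent, but it is more moving parts (three or four separate concentration steps instead of two) and picks up a stray $\sqrt{\log n}$ from $\max_i |z_{\truek,i}|$ in the off-diagonal bound that you would need to absorb. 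The polarization trick is shorter and avoids this.
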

We will see subsequently (in Appendix~\ref{appendix:bilevel}) that the survival bounds, whether binary labels or real output are interpolated, are matching in their dependence on $n$ up to constants.
We now state our characterization of the contamination factor.

\begin{theorem}[Upper and lower bounds on contamination]\label{theorem:contaminationbounds}
There exist universal \newline positive constants $b_2, c_5, c_6, c_7, c_8, c_9$ (that do not depend on parameters $(n,d,k,\Sigmabold)$) such that if $0 \leq k \leq n/c_5$ and $r_k(\Sigmaboldtd) \geq b_2$,  the following characterizations of  the contamination factor hold for any choice of $\ell \leq k$:
\begin{enumerate}
    \item \textbf{Interpolation of binary labels:} Provided that $n \geq c_6$ %
    , the minimum-$\ell_2$-norm interpolation of binary labels, i.e.~$\alphahatbold_{2,\mathsf{binary}}$, satisfies each of
    \begin{subequations}
    \begin{align}
      \contamination(\truek) &\leq c_7 \cdot \sqrt{\left(\frac{\ell}{n} + n \cdot  \frac{\sum_{j> \ell} \lambdatd_j^2}{\left(\sum_{j>k}\lambdatd_j \right)^2} \right) \cdot \ln n \cdot (1 + \survival(\truek)^2)} \text{, and }  \label{eq:contaminatonupperboundsu} \\
      \contamination(\truek) &\geq \sqrt{n}\cdot \frac{\sqrt{r_k\left(\Sigmaboldtd^2\right) \cdot \lambdatd_{k+1}^2 }}{c_9 \left(\sum\limits_{j=1}^d \lambda_j + \lambda_1 n\right)}\label{eq:contaminatorlowerbound}
    \end{align}
    \end{subequations}
    \textit{almost surely} for any realization of the random quantity $\survival(\truek)$, and with probability at least $\left(1 - \frac{3}{n}\right)$ and $(1 - 2e^{-\frac{n}{c_8}})$ respectively over the randomness in the training data $\{X_i,Y_i\}_{i=1}^n$.

    \item \textbf{Interpolation of real output:} Provided that $n \geq c_6$, the minimum-$\ell_2$-norm interpolation of real output, i.e.~$\alphahatbold_{2,\mathsf{real}}$, satisfies each of
    \begin{subequations}
    \begin{align}
        \contaminationr(\truek) &\leq c_7 \abs{1 - \survivalr(\truek)}\cdot \sqrt{\left(\frac{l}{n} + n \cdot \frac{\sum_{j> l} \lambdatd_j^2}{\left(\sum_{j>k}\lambdatd_j \right)^2} \right)\cdot  \ln n} \text{, and } \label{eq:contaminatonupperboundsureal} \\
        \contaminationr(\truek) &\geq \sqrt{n(1-\delta)}\cdot \frac{\sqrt{r_k\left(\Sigmaboldtd^2\right) \cdot \lambdatd_{k+1}^2 }}{c_9\left(\sum\limits_{j=1}^d \lambda_j + \lambda_1 n\right)}\label{eq:contaminatorlowerboundreal}
    \end{align}
    \end{subequations}
    \textit{almost surely} for any realization of the random quantity $\survival(\truek)$, and with probability at least $\left(1 - \frac{2}{n}\right)$ and $(1 - 2e^{-\frac{n}{c_8}} - e^{-n\delta^2})$ respectively over the randomness in the training data $\{X_i,Y_i\}_{i=1}^n$.
\end{enumerate}
\end{theorem}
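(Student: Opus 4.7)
Writing $\avec(X_i) = \Sigmabold^{1/2} \mathbf{W}_i$ makes the columns of $\Atrain$ take the form $\sqrt{\lambda_j}\,\z_j$ with $\z_j \stackrel{\text{iid}}{\sim} \mathcal{N}(\mathbf{0}, \mathbf{I}_n)$, so that $\hat{\alpha}_j = \sqrt{\lambda_j}\,\z_j^\top \A^{-1} \mathbf{b}$, where $\mathbf{b} = \ytk$ in the binary case and $\mathbf{b} = \ztk = \z_\truek$ in the real case. Applying Sherman--Morrison to $\A = \Amk + \lambdatk\, \z_\truek \z_\truek^\top$ and using the survival identity $\survival = \lambdatk\, \z_\truek^\top \A^{-1}\mathbf{b}$, the plan is to first derive the key identity
\begin{align*}
\hat{\alpha}_j = \sqrt{\lambda_j}\bigl(\z_j^\top \Amkinv \mathbf{b} - \survival \cdot \z_j^\top \Amkinv \z_\truek\bigr), \qquad j \neq \truek,
\end{align*}
which specializes to $\hat{\alpha}_j = \sqrt{\lambda_j}(1-\survivalr)\, \z_j^\top \Amkinv \z_\truek$ in the real case because then $\z_\truek^\top \Amkinv \mathbf{b} = \z_\truek^\top \Amkinv \z_\truek$. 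This routes all dependence on the signal feature through the scalar $\survival$, leaves $\Amk$ independent of $\z_\truek$ (and, conditionally on $\z_\truek$, of $\ytk$), and -- after expanding $\hat{\alpha}_j^2$ and applying $(a-b)^2 \leq 2(a^2+b^2)$ -- directly explains the $(1+\survival^2)$ factor in~\eqref{eq:contaminatonupperboundsu} and the $(1-\survivalr)^2$ factor in~\eqref{eq:contaminatonupperboundsureal}.

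\textbf{Upper bound.} Each remaining sum $S(\mathbf{v}) := \sum_{j \neq \truek} \lambda_j^2 (\z_j^\top \Amkinv \mathbf{v})^2$ with $\mathbf{v} \in \{\mathbf{b}, \z_\truek\}$ is split at a chosen threshold $\ell \leq k$ into a head $\{j \leq \ell\}$ and a tail $\{j > \ell\}$. For the head, a further leave-one-out (removing $\z_j$ from $\Amk$ to produce $\mathbf{A}_{-\truek,-j}$) makes $\z_j$ independent of the resulting matrix, after which Hanson--Wright applied to $\z_j^\top \mathbf{A}_{-\truek,-j}^{-1}\mathbf{v}$ together with the leave-one-out spectral bounds from \citet{bartlett2020benign} gives a high-probability per-term bound whose sum over $j \leq \ell$ matches the $\ell/n$ piece. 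For the tail, I would use the identity
\begin{align*}
S_{>\ell}(\mathbf{v}) = \mathbf{v}^\top \Amkinv \Bigl(\sum_{j > \ell,\, j \neq \truek} \lambda_j^2 \z_j \z_j^\top\Bigr) \Amkinv \mathbf{v},
\end{align*}
and combine (i) $\|\Amkinv\|_{\mathsf{op}} \lesssim 1/\sum_{j > k}\lambdatd_j$, (ii) a matrix-concentration bound on the inner random matrix whose relevant operator-norm contribution is $n\sum_{j>\ell}\lambdatd_j^2$, and (iii) $\|\mathbf{v}\|_2^2 \asymp n$ to obtain the claimed tail piece. Absorbing constants yields~\eqref{eq:contaminatonupperboundsu} and~\eqref{eq:contaminatonupperboundsureal}.

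\textbf{Lower bound.} I would discard the head and use the crude spectral inequality $\A^{-1} \succeq \mathbf{I}_n/\mu_{\max}(\A)$, which together with the positivity of $\sum_{j > k,\, j \neq \truek}\lambda_j^2 \z_j\z_j^\top$ gives
\begin{align*}
\contamination^2 \geq \frac{1}{\mu_{\max}(\A)^2} \sum_{j>k,\, j\neq \truek} \lambda_j^2 (\z_j^\top \mathbf{b})^2.
\end{align*}
The denominator is controlled by a one-sided Wishart-type operator-norm bound $\mu_{\max}(\A) \leq c_9(\sum_j \lambda_j + \lambda_1 n)$. For the numerator, $\mathbf{b}$ is independent of $\{\z_j\}_{j \neq \truek}$ (in the binary case, this uses conditional independence given $\z_\truek$, since $\ytk$ is $\sigma(\z_\truek,\xi)$-measurable where $\xi$ is the independent label-noise vector), and $\|\mathbf{b}\|_2$ equals $\sqrt{n}$ deterministically in the binary case while $\|\ztk\|_2 \geq \sqrt{n(1-\delta)}$ with probability $1-e^{-n\delta^2}$ in the real case. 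Conditioning on $\mathbf{b}$ then makes the scalars $\z_j^\top \mathbf{b}$ i.i.d.\ $\mathcal{N}(0,\|\mathbf{b}\|_2^2)$, and a one-sided Bernstein bound on the chi-square-like sum around its mean $\|\mathbf{b}\|_2^2 \cdot \sum_{j > k}\lambdatd_j^2 = \|\mathbf{b}\|_2^2 \cdot r_k(\Sigmaboldtd^2)\,\lambdatd_{k+1}^2$ yields~\eqref{eq:contaminatorlowerbound} and~\eqref{eq:contaminatorlowerboundreal}.

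\textbf{Main obstacle.} The principal difficulty is the binary case, where $\ytk$ is neither Gaussian nor mean zero and is entangled with $\z_\truek$ through the sign function. The Sherman--Morrison identity above cleanly reroutes this entanglement through the scalar $\survival$, but we still need concentration for quadratic forms whose ``label" argument is the fixed bounded vector $\ytk$ rather than a Gaussian; thus the leave-one-out concentration lemmas of \citet{bartlett2020benign} must be re-derived with an arbitrary deterministic $\mathbf{b}$ satisfying $\|\mathbf{b}\|_2 = \sqrt{n}$, using the conditional independence of $\ytk$ from $\{\z_j\}_{j \neq \truek}$ given $\z_\truek$ to justify this substitution. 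A secondary obstacle is matching the constants between upper and lower bounds: the head scales as $\ell/n$ and the tail as $n \sum_{j>\ell}\lambdatd_j^2 / (\sum_{j>k}\lambdatd_j)^2$, so $\ell$ must be chosen to balance them, and the effective-rank hypotheses $r_k(\Sigmaboldtd) \geq b_2$ and $k \leq n/c_5$ are precisely what is needed to make the per-term concentration sharp in the head and the operator-norm concentration valid in the tail.
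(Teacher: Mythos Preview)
Your setup and Sherman--Morrison reduction to $\widetilde{\ytk}=\ytk-\survival\,\ztk$ match the paper's Lemma~\ref{lemma:contaminationformula} exactly, and your explanation of where the $(1+\survival^2)$ and $(1-\survivalr)^2$ factors come from is correct.

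For the upper bound, your approach would work but is more laborious than the paper's. The paper observes that $\Ctd:=\Amkinv\bigl(\sum_{j\neq t}\lambda_j^2\z_j\z_j^\top\bigr)\Amkinv$ is independent of $\widetilde{\ytk}$ (since $\Ctd$ depends only on $\{\z_j\}_{j\neq t}$), and then applies a \emph{single} Hanson--Wright step to the quadratic form $\widetilde{\ytk}^\top\Ctd\widetilde{\ytk}$, reducing the entire problem to bounding $\tr(\Ctd)$. The head/tail split at $\ell$ that you describe is exactly what \citet[Lemma~11]{bartlett2020benign} does internally to bound this trace, so the paper simply cites that result. Your route---split first, then do a second leave-one-out on each head term and matrix concentration on the tail---re-derives that lemma inline and multiplies the number of concentration events; the reduction-to-trace is cleaner and avoids the second leave-one-out entirely.

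Your lower bound has a genuine gap. The step ``$\Ainv\succeq\mathbf{I}_n/\mu_{\max}(\A)$ together with positivity of $M':=\sum_{j>k,\,j\neq t}\lambda_j^2\z_j\z_j^\top$ gives $\mathbf{b}^\top\Ainv M'\Ainv\mathbf{b}\geq\mathbf{b}^\top M'\mathbf{b}/\mu_{\max}(\A)^2$'' is false: an operator inequality on $\Ainv$ does not propagate through the sandwich $\Ainv M'\Ainv$ that way. For a concrete counterexample take $\A=\operatorname{diag}(\epsilon,1)$, $M'=\mathbf{v}\mathbf{v}^\top$ with $\mathbf{v}=(1,-1)$, and $\mathbf{b}=(\epsilon,1)$; then $\Ainv\mathbf{b}=(1,1)$ is orthogonal to $\mathbf{v}$, so the left side is $0$ while the right side is $(\epsilon-1)^2>0$. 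The paper instead uses the eigenvalue route: $\mathbf{b}^\top\C\mathbf{b}\geq\eig_n(\C)\|\mathbf{b}\|_2^2$, then $\eig_n(\Ainv M\Ainv)\geq(\eig_n(\Ainv))^2\eig_n(M)$ (a standard product-of-PSD-matrices fact), and finally applies the eigenvalue lower bound from Lemma~\ref{lemma:eigboundsAmiinv} to $\eig_n(M)=\eig_n\bigl(\sum_{j\neq t}\lambdatd_j^2\z_j\z_j^\top\bigr)$, yielding $\gtrsim r_k(\Sigmaboldtd^2)\lambdatd_{k+1}^2$. This lands on the same final expression you were aiming for, but validly; your Bernstein-on-$(\z_j^\top\mathbf{b})^2$ step is then not needed at all.
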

Observe that the high-probability characterizations of contamination in Theorem~\ref{theorem:contaminationbounds} themselves hold almost surely for every realization of the respective survival factors for binary and real interpolation, which are random variables.
In Appendix~\ref{appendix:bilevel}, these expressions will be used together (with a simple union bound) with the matching high-probability characterization of survival factor in Theorem~\ref{theorem:survivalbounds}.
Unlike for the case of survival, the upper and lower bounds for contamination are not necessarily matching --- however, as we will see in Appendix~\ref{appendix:bilevel}, they turn out to match for all parameterizations of the bi-level ensemble.

As a final remark, in both theorem statements, the only randomness over which all probabilities are taken is solely in the training data $\{X_i,Y_i\}_{i=1}^n$.
Further, all universal positive constants are taken to be independent of the parameters $(n,d,k,\Sigmabold)$, which entirely describe the problem.
In the proofs of Theorems~\ref{theorem:survivalbounds} and~\ref{theorem:contaminationbounds}, we will follow these conventions unless specified otherwise.

\subsection{Background lemmas}\label{appendix:usefullemmas}

We begin our proofs of Theorems~\ref{theorem:survivalbounds} and~\ref{theorem:contaminationbounds} by stating lemmas that serve as background for our analysis.
The first lemma is from~\citet{bartlett2020benign}.

\begin{lemma}~\textbf{\citep[Concentration of eigenvalues, Lemmas 9 and 10 in][]{bartlett2020benign}}
There exist universal positive constants $(b, c)$ such that:
\begin{enumerate}
\item For any $k \geq 0$ such that $r_k(\Sigmabold) \geq bn$, we have
\begin{align}
    \frac{1}{c}\lambda_{k+1} r_k(\Sigmabold) &\leq \eig_n\left(\A\right) \leq \eig_1\left(\A\right) \leq c\left( \sum\limits_{j=1}^d\lambda_j + \lambda_1 n \right)   \text{ and } \label{eq:A_eigbounds}\\
    \eig_{k+1}(\A) &\leq c \lambda_{k+1} r_k(\Sigmabold) \label{eq:suproofeigvaluebound}
\end{align}
with probability at least $(1 - 2e^{-\frac{n}{c}})$ over the random matrix $\A$.
\item For any $k \geq t$ such that $r_k(\Sigmabold) \geq bn$, we have
\begin{align}\label{eq:Ami_eigbounds}
    \frac{1}{c}\lambda_{k+1} r_k(\Sigmabold) &\leq \eig_n\left(\Amk\right) \leq \eig_1\left(\Amk\right) \leq c\left( \sum\limits_{j=1}^d\lambda_j + \lambda_1 n \right)
\end{align}
with probability at least $(1 - 2e^{-\frac{n}{c}})$ over the random matrix $\Amk$.
\end{enumerate}{}
Further, as corollaries to the above, we have the following statements:{}
\begin{enumerate}
\item For any $k \geq 0$ such that $r_k(\Sigmabold) \geq b n$, we have
\begin{align}
    \frac{1}{c\left( \sum\limits_{j=1}^d\lambda_j + \lambda_1 n \right)} \leq \eig_n\left(\A^{-1}\right) \leq \eig_1\left(\A^{-1}\right) \leq \frac{c}{\lambda_{k+1} r_k(\Sigmabold)} \label{eq:Ainv_eigbounds}
\end{align}
with probability at least $(1 - 2e^{-\frac{n}{c}})$ over the random matrix $\A$.
\item For any $k \geq t$ such that $r_k(\Sigmabold) \geq b n$,  we have
\begin{align}
    \frac{1}{c\left( \sum\limits_{j=1}^d\lambda_j + \lambda_1 n \right)} \leq \eig_n\left(\Amkinv\right) \leq \eig_1\left(\Amkinv\right) \leq \frac{c}{\lambda_{k+1} r_k(\Sigmabold)} \label{eq:Amiinv_eigbounds}
\end{align}
with probability at least $(1 - 2e^{-\frac{n}{c}})$ over the random matrix $\Amk$.
\end{enumerate}
\label{lemma:eigboundsAmiinv}
\end{lemma}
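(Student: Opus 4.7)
The statement is essentially a direct importation of Lemmas~9 and~10 of \citet{bartlett2020benign}, together with two purely algebraic corollaries obtained by matrix inversion. The plan is therefore first to verify the two primary two-sided eigenvalue bounds, and second to derive the corollaries for $\A^{-1}$ and $\Amkinv$ from the identity $\eig_i(\Mbold^{-1}) = 1/\eig_{n+1-i}(\Mbold)$ valid for any positive definite $\Mbold$.

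For the primary bounds on $\A$, I would write $\A = \sum_{j=1}^d \lambda_j \z_j \z_j^\top$ with $\z_j \sim \mathcal{N}(\mathbf{0}, \mathbf{I}_n)$ i.i.d., and decompose it into a ``head'' and ``tail'' contribution,
\begin{equation*}
\A = \A_{\leq k} + \A_{> k}, \qquad \A_{\leq k} := \sum_{j \leq k} \lambda_j \z_j \z_j^\top, \qquad \A_{>k} := \sum_{j>k} \lambda_j \z_j \z_j^\top.
\end{equation*}
The head is a sum of at most $k$ rank-one Gaussian terms and contributes an operator-norm contribution of at most $c \lambda_1 (n + k + \sqrt{nk})$ by standard Gaussian matrix concentration; this bounds $\eig_1(\A_{\leq k})$ from above by roughly $c\lambda_1 n$ on the $k \leq n$ regime, which after adding $\sum_j \lambda_j$ yields the claimed upper bound on $\eig_1(\A)$. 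For the tail, the effective rank condition $r_k(\Sigmabold) \geq b n$ makes $\A_{>k}$ essentially isotropic: a matrix Bernstein / sub-exponential concentration argument on $\A_{>k} - \bigl(\sum_{j>k}\lambda_j\bigr) \mathbf{I}_n$ shows that with probability $1 - 2e^{-n/c}$ this tail is within a constant factor of $\lambda_{k+1} r_k(\Sigmabold) \mathbf{I}_n$ in operator norm, which furnishes both the lower bound on $\eig_n(\A)$ and the matching upper bound on $\eig_{k+1}(\A)$. The analogous statement for $\Amk$ is produced by the identical decomposition, only using the deflated spectrum $\{\lambdatd_j\}$ from Equation~\eqref{eq:lambdatd}; the assumption $r_k(\Sigmabold) \geq bn$ together with $k \geq t$ transfers to the required condition on $\Sigmaboldtd$ up to an absolute constant because deleting one index changes $r_k$ by at most a factor of two.

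The corollaries are then immediate. Since $\A$ is a.s.\ positive definite in the regime considered (owing to $d \geq n$), we have $\eig_1(\A^{-1}) = 1/\eig_n(\A)$ and $\eig_n(\A^{-1}) = 1/\eig_1(\A)$, so inverting both sides of Equation~\eqref{eq:A_eigbounds} yields Equation~\eqref{eq:Ainv_eigbounds} on exactly the same high-probability event. The same applies verbatim to $\Amkinv$ from Equation~\eqref{eq:Ami_eigbounds}. No fresh randomness and no additional union bound is introduced at this step.

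The only genuinely delicate piece of this plan is the sharp tail concentration for $\A_{>k}$ under only the mild condition $r_k(\Sigmabold) \geq bn$; the head and the inversion steps are standard. Since \citet{bartlett2020benign} carry out exactly this argument in the course of proving their Lemmas~9 and~10, the cleanest presentation is to cite those lemmas for the first half of the statement, and then include the two-line derivation of the inverse eigenvalue bounds so that the precise constants quoted in Equations~\eqref{eq:Ainv_eigbounds} and~\eqref{eq:Amiinv_eigbounds} are self-contained.
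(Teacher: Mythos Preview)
Your proposal is correct and matches the paper's treatment: the paper simply cites Lemmas~9 and~10 of \citet{bartlett2020benign} for the primary eigenvalue bounds and then obtains the inverse bounds in Equations~\eqref{eq:Ainv_eigbounds} and~\eqref{eq:Amiinv_eigbounds} as immediate corollaries via $\eig_i(\Mbold^{-1}) = 1/\eig_{n+1-i}(\Mbold)$, exactly as you describe. Your additional sketch of the head/tail decomposition underlying the Bartlett et al.\ argument is accurate but goes beyond what the paper itself includes, since it treats this lemma as a black-box import.
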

Note that using Equation \eqref{eq:Amkformtd} to express $\Amk$, we can rewrite the bounds in the above lemma in terms of the quantities $\Sigmaboldtd$ and $\lambdatd_j$.
In particular, it follows that each of
\begin{subequations}
\begin{align}
    \frac{1}{c}\lambdatd_{k+1} r_k(\Sigmaboldtd) \leq \eig_n\left(\Amk\right) &\leq \eig_1\left(\Amk\right) \leq c\left( \sum\limits_{j=1}^{d-1}\lambdatd_j + \lambdatd_1 n \right) \text{ and } \label{eq:Amk_eigbounds_td} \\
    \frac{1}{c\left( \sum\limits_{j=1}^{d-1}\lambdatd_j + \lambdatd_1 n \right)} \leq \eig_n\left(\Amk^{-1}\right) &\leq \eig_1\left(\Amk^{-1}\right) \leq \frac{c}{\lambdatd_{k+1} r_k(\Sigmaboldtd)}.\label{eq:Amkinv_eigbounds_td}
\end{align}
\end{subequations}
holds with probability at least $(1 - 2e^{-\frac{n}{c}})$.
We will also apply Equation~\eqref{eq:suproofeigvaluebound} with $\Amk$ instead of $\A$, and use the corresponding condition $r_k(\Sigmabold) \geq b_2 n$.

\noindent The next lemma is the Hanson-Wright inequality, which shows that the quadratic form of a (sub)-Gaussian random vector concentrates around its expectation.

\begin{lemma}\textbf{~\citep[Hanson-Wright inequality,][]{rudelson2013hanson}}\label{lem:hansonwright}
Let $\z$ be a random vector composed of i.i.d.~random variables that are zero mean and  sub-Gaussian with parameter at most $1$.
Then, there exists universal constant $c > 0$ such that for any positive semi-definite matrix $\Mbold$ and for every $t \geq 0$, we have
\begin{align*}
    \Pr\left[|\z^\top \Mbold \z - \EE[\z^\top \Mbold \z ]| > t \right] \leq 2 \exp\left\{-c \min\left\{\frac{t^2}{||\Mbold||_{\mathsf{F}}^2}, \frac{t}{||\Mbold||_{\mathsf{op}}}\right\}\right\}.
\end{align*}
\end{lemma}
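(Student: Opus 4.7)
The plan is to decompose $Q := \z^\top \Mbold \z - \EE[\z^\top \Mbold \z]$ into its diagonal and off-diagonal parts, namely $Q = D + S$ with $D := \sum_i M_{ii}(z_i^2 - \EE z_i^2)$ and $S := \sum_{i \neq j} M_{ij} z_i z_j$, and then bound $\Pr[|D| > t/2]$ and $\Pr[|S| > t/2]$ separately before applying a union bound. This decomposition is natural because the diagonal and off-diagonal pieces are of fundamentally different probabilistic character: $D$ is a sum of independent centered sub-exponentials, whereas $S$ is a quadratic chaos.

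For the diagonal piece, since each $z_i$ is sub-Gaussian with parameter at most $1$, the centered variable $z_i^2 - \EE z_i^2$ is sub-exponential with Orlicz $\psi_1$-norm bounded by a universal constant. The scaled summands $M_{ii}(z_i^2 - \EE z_i^2)$ are therefore independent and centered, and Bernstein's inequality yields a tail of the form $\exp(-c'\min(t^2/\sum_i M_{ii}^2,\,t/\max_i|M_{ii}|))$. The routine bounds $\sum_i M_{ii}^2 \leq \|\Mbold\|_{\mathsf F}^2$ and $\max_i |M_{ii}| \leq \|\Mbold\|_{\mathsf{op}}$ match the target form.

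For the off-diagonal chaos, the main move is decoupling: introducing an independent copy $\z'$, the classical decoupling inequality of de la Pe\~na and Montgomery-Smith bounds the moments (equivalently, the MGF) of $S$ by those of the decoupled bilinear form $S' := \z^\top \Mbold \z'$ up to a universal constant. Conditional on $\z$, the decoupled form $S' = \sum_j (\Mbold^\top \z)_j\, z_j'$ is a sum of independent sub-Gaussians with variance proxy $\|\Mbold^\top \z\|_2^2$, giving $\EE[\exp(\lambda S')\mid \z] \leq \exp(C\lambda^2\|\Mbold^\top \z\|_2^2)$. Taking the outer expectation reduces everything to controlling the MGF of the non-negative quadratic form $\z^\top \Mbold\Mbold^\top \z$.

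The main obstacle is precisely this last step, since we are back to a quadratic form in $\z$; the way through is to spectrally diagonalize $\Mbold\Mbold^\top$, exploit independence of the coordinates of $\z$ to factorize the MGF over eigendirections, and use the scalar sub-exponential MGF bound on each factor. This yields $\EE[\exp(\lambda \z^\top\Mbold\Mbold^\top\z)] \leq \exp(C\lambda \|\Mbold\|_{\mathsf F}^2)$ for all $|\lambda|$ smaller than a constant multiple of $1/\|\Mbold\|_{\mathsf{op}}^2$, where $\|\Mbold\|_{\mathsf F}^2 = \operatorname{tr}(\Mbold\Mbold^\top)$ and $\|\Mbold\|_{\mathsf{op}}^2 = \|\Mbold\Mbold^\top\|_{\mathsf{op}}$ enter naturally. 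Feeding this back, one obtains $\EE[\exp(\lambda S)] \leq \exp(C'\lambda^2\|\Mbold\|_{\mathsf F}^2)$ for $|\lambda| \lesssim 1/\|\Mbold\|_{\mathsf{op}}$, and optimizing $\lambda$ in a Chernoff bound gives $\Pr[|S| > t/2] \leq 2\exp(-c''\min(t^2/\|\Mbold\|_{\mathsf F}^2,\,t/\|\Mbold\|_{\mathsf{op}}))$. Combining with the diagonal tail and adjusting the universal constant $c$ accordingly produces the stated inequality.
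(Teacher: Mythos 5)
The paper does not prove this lemma at all: it is quoted directly from Rudelson and Vershynin (2013) and used as a black box, so the only meaningful comparison is with the standard proof of Hanson--Wright. Your outline follows that standard route (diagonal/off-diagonal split, Bernstein for the diagonal part, de la Pe\~na--Montgomery-Smith decoupling and conditional sub-Gaussian MGF bounds for the chaos) faithfully up to the final step, but that final step contains a genuine gap. After conditioning you need to control $\EE\bigl[\exp\bigl(C\lambda^2\, \z^\top \Mbold\Mbold^\top \z\bigr)\bigr]$, and you propose to diagonalize $\Mbold\Mbold^\top = \sum_k \mu_k \mathbf{u}_k\mathbf{u}_k^\top$ and ``exploit independence of the coordinates of $\z$ to factorize the MGF over eigendirections.'' This factorization is invalid for general sub-Gaussian $\z$: writing $\z^\top\Mbold\Mbold^\top\z = \sum_k \mu_k (\mathbf{u}_k^\top\z)^2$, the variables $(\mathbf{u}_k^\top\z)^2$ are each sub-exponential but are \emph{not} independent across $k$, because independence of the entries of $\z$ is not preserved under the rotation to the eigenbasis unless $\z$ is Gaussian. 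So the product bound on the MGF, which is exactly what produces the constraint $|\lambda|\lesssim 1/\|\Mbold\|_{\mathsf{op}}^2$ together with the factor $\|\Mbold\|_{\mathsf{F}}^2$, does not follow.

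Attempted repairs that stay with the non-Gaussian vector do not recover the full statement: for instance, the Jensen/convexity trick $\exp\bigl(\lambda\sum_k\mu_k Y_k\bigr)\le \sum_k \tfrac{\mu_k}{T}\exp(\lambda T Y_k)$ with $T=\sum_k\mu_k=\|\Mbold\|_{\mathsf{F}}^2$ only gives the desired MGF bound in the range $|\lambda|\lesssim 1/\|\Mbold\|_{\mathsf{F}}^2$, which after the Chernoff step degrades the second branch of the tail from $t/\|\Mbold\|_{\mathsf{op}}$ to $t/\|\Mbold\|_{\mathsf{F}}$. The way Rudelson and Vershynin close this gap is to perform the sub-Gaussian-to-Gaussian comparison on \emph{both} copies in the decoupled bilinear form: conditioning on $\z'$ the form is linear in $\z$, so its conditional MGF is dominated by that of the Gaussian analogue, and one is left with a bilinear form $\mathbf{g}^\top\Mbold\,\mathbf{g}'$ in independent Gaussian vectors, for which rotation invariance makes the diagonalization-and-factorization step legitimate. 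With that replacement inserted before the quadratic-form step, the rest of your argument goes through and yields the stated inequality.
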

We will apply this inequality in two ways.
First, we will note that $||\Mbold||_{\mathsf{F}}^2 \leq n ||\Mbold||_{\mathsf{op}}^2$ and substitute $t := c_1 ||\Mbold||_{\mathsf{op}} \cdot n^{3/4}$ (where $c_1^2 = \frac{1}{c}$) to get
\begin{align}\label{eq:hansonwright}
    |\z^\top \Mbold \z - \EE[\z^\top \Mbold \z ]| \leq c_1 ||\Mbold||_{\mathsf{op}} \cdot n^{3/4}
\end{align}
with probability at least $(1 - 2e^{-\sqrt{n}})$.
Second, we will note that $||\Mbold||_{\mathsf{op}} \leq \tr(\Mbold)$ and moreover, $||\Mbold||_{\mathsf{F}}^2 = \tr(\Mbold^2) \leq (\tr(\Mbold))^2$.
Then, substituting $t := \frac{1}{c} \cdot \tr(\Mbold) \cdot (\ln n)$, we get
\begin{align}\label{eq:hansonwright2}
    \z^\top \Mbold \z \leq \EE[\z^\top \Mbold \z ] + \frac{1}{c} \cdot \tr(\Mbold) \cdot (\ln n) \leq \left(1 + \frac{1}{c}\right) \cdot \tr(\Mbold) \cdot (\ln n)
\end{align}
with probability at least $(1 - \frac{1}{n})$.
Finally, note that all probabilities are only over the random vector $\z$.
We will frequently apply Lemma~\ref{lem:hansonwright} as a high-probability statement conditioned on the realization of a \textit{random}, almost surely positive semi-definite matrix $\Mbold$ which is independent of $\z$.

Finally, the following lemma bounds the squared norm of a Gaussian random vector by a standard tail bound on chi-squared random variables ~\citep[for e.g. see][Chapter~2]{wainwright2019high}, stated for completeness.
\begin{lemma}
Let $\z \sim \mathcal{N}(\mathbf{0},\mathbf{I}_n)$.
Then, for any $\delta \in (0,1)$, we have
\begin{align}
    n(1 - \delta) \leq \|\z \|_2^2 \leq n( 1+ \delta)
\end{align}
with probability at least $(1-2e^{-n\delta^2})$.
\label{lemma:chisquaredtailbound}
\end{lemma}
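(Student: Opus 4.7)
The plan is to recognize $\|\z\|_2^2$ as a chi-squared random variable with $n$ degrees of freedom, since $\z = (Z_1,\dotsc,Z_n)$ with $Z_i \sim \mathcal{N}(0,1)$ i.i.d.~gives $\|\z\|_2^2 = \sum_{i=1}^n Z_i^2$. The coordinates $Z_i^2$ are sub-exponential with mean $1$, so any of the standard tools (Chernoff with the explicit $\chi^2$ MGF, Bernstein for sub-exponential sums, or Laurent--Massart) will produce the desired Gaussian-type tail in $\delta$.

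Concretely, I would use the explicit moment generating function $\EE[\exp(\lambda \|\z\|_2^2)] = (1 - 2\lambda)^{-n/2}$, valid for all $\lambda < 1/2$, together with Markov's inequality to get $\Pr[\|\z\|_2^2 \geq n(1+\delta)] \leq \inf_{0 < \lambda < 1/2} (1-2\lambda)^{-n/2} \exp(-\lambda n(1+\delta))$. Optimizing at $\lambda = \delta/(2(1+\delta))$ and applying the elementary inequality $-\tfrac{1}{2}\ln(1-u) - \tfrac{u}{2} \leq \tfrac{u^2}{4(1-u)}$ yields an upper tail of the form $\exp(-c n \delta^2)$ for a universal constant $c$, with $\delta \in (0,1)$ keeping the denominator bounded away from zero. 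The lower tail is symmetric: take $\lambda > 0$ and use $\EE[\exp(-\lambda \|\z\|_2^2)] = (1+2\lambda)^{-n/2}$, which is valid for every $\lambda > 0$, and optimize analogously. A union bound over the two tails produces the factor of $2$ in the stated failure probability.

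The main ``obstacle'' is purely bookkeeping --- pinning down the constant so that the exponent matches the claimed $\exp(-n\delta^2)$ rather than $\exp(-c n\delta^2)$ for some smaller $c$. This is why the lemma is customarily cited (e.g.~to Laurent--Massart, or to the sub-exponential Bernstein bound applied to $Z_i^2 - 1$) and only stated here for completeness; no new ideas are required beyond the standard derivation.
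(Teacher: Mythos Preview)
Your proposal is correct and matches the paper's own treatment: the paper does not prove this lemma at all, merely stating it ``for completeness'' with a citation to Chapter~2 of \citet{wainwright2019high}, which is exactly the standard Chernoff/sub-exponential derivation you outline. Your flag about the exact constant in the exponent is well-taken --- the optimized Chernoff bound on $\chi^2_n$ yields roughly $e^{-n\delta^2/4}$ rather than $e^{-n\delta^2}$ --- but the paper only uses this lemma to show that certain events hold with probability tending to one, so the precise constant is immaterial.
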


\subsection{Proof of Theorem \ref{theorem:survivalbounds}}\label{appendix:proofsurvivalbounds}

We first prove Theorem~\ref{theorem:survivalbounds}, i.e.~upper and lower bounds on survival when binary labels \textit{or} real output are interpolated.
We start with the slightly more difficult case of interpolation of binary labels (Equations~\eqref{eq:survivallowerbound} and~\eqref{eq:survivalupperbound}).

\subsubsection{Interpolation of binary labels}\label{appendix:binarylabelsurvival}

Recall that, by Assumption~\ref{assumptionsimple}, we have $\alpha^*_t = \frac{1}{\sqrt{\lambda_t}}$.
A standard argument based on Moore-Penrose pseudoinverse calculations shows that $\alphahatbold_{2,\mathsf{binary}} = \Atrain\tran (\Atrain \Atrain\tran)^{-1} \Ytrain$.
We get
\begin{align*}
    \survival(\truek) &= \frac{\alphahat_{\truek,2,\mathsf{binary}}}{\alpha^*_{\truek}}\\
    &= \sqrt{\lambda_{\truek}} \alphahat_{\truek,2,\mathsf{binary}}\\
    &= \sqrt{\lambdatk} \e_{\truek} \tran \Atrain\tran (\Atrain \Atrain\tran)^{-1} \Ytrain \\
    &= \lambda_t \ztk^\top \Ainv \ytk ,
\end{align*}
where $\ztk, \ytk$ are as defined at the beginning of Appendix~\ref{appendix:gaussianproofs}, and $\A$ is the Gram matrix defined in Appendix~\ref{appendix:additionalnotation}.
Next, we use the Sherman-Morrison-Woodbury identity to get
\begin{align}
    \Ainv &= (\lambdatk \ztk \ztk \tran + \Amk)^{-1} \nonumber \\
    &= \Amkinv - \frac{\lambdatk \Amkinv \ztk \ztk \tran \Amkinv}{1 + \lambdatk \ztk \tran \Amkinv \ztk} . \label{eq:ShermanMorisson}
\end{align}
Using this, we obtain
\begin{align}
    \survival(\truek) &=  \frac{\lambdatk \ztk \tran \Amkinv \ytk}{1 + \lambdatk \ztk \tran \Amkinv \ztk} \label{eq:suform} .
\end{align}
Adding and subtracting terms to the numerator, we get
\begin{align*}
    \ztk^\top \Amkinv \ytk &= \frac{1}{4} \left((\ztk + \ytk)^\top \Amkinv (\ztk + \ytk) - (\ztk - \ytk)^\top \Amkinv (\ztk - \ytk) \right) .
\end{align*}
Because of the ``leave-one-out" property, note that $\Amkinv \perp \{\ztk, \ytk\}$.
Also note that $\Amkinv$ is almost surely positive semidefinite.
Thus, we can upper \textit{and} lower bound the numerator of Equation~\eqref{eq:suform} around its expectation using the Hanson-Wright inequality.
First, we calculate the conditional expectation:
\begin{align*}
    \EE\left[\ztk^\top \Amkinv \ytk \Big{|} \Amkinv \right] &= \EE\left[\tr(\Amkinv \ytk \ztk^\top) \Big{|} \Amkinv \right] \\
    &= \tr\left(\Amkinv \cdot \EE\left[\ytk \ztk^\top \right]\right) .
\end{align*}
Recalling the expression for $\ytk$ from Equation~\eqref{eq:ytkequation}, a simple calculation yields that
\begin{align*}
    \EE\left[\ytk \ztk^\top\right] &= \EE\left[y_{t,1} z_{t,1}^\top \right] \cdot \mathbf{I}_n \\
    &= \left((1 - \betastar) \EE\left[\sgn(z_{t,1}) z_{t,1}^\top\right] + \betastar \EE\left[- \sgn(z_{t,1}) z_{t,1}^\top\right]\right) \cdot \mathbf{I}_n \\
    &= (1 - 2\betastar) \EE\left[\sgn(z_{t,1}) z_{t,1}^\top\right] \cdot \mathbf{I}_n \\
    &= (1 - 2\betastar) \cdot \sqrt{\frac{2}{\pi}} \cdot \mathbf{I}_n ,
\end{align*}
where the last step follows because $z_{t,1} \sim \mathcal{N}(0,1)$.

Now, we apply Equation~\eqref{eq:hansonwright} (the Hanson-Wright inequality) \textit{almost surely} for every realization of the random matrix $\Amkinv$, and simultaneously to the quadratic forms $(\ztk + \ytk)^\top \Amkinv (\ztk + \ytk)$ and $(\ztk - \ytk)^\top \Amkinv (\ztk - \ytk)$.
Thus, we have each of
\begin{align*}
    \ztk^\top \Amkinv \ytk &\geq \left((1 - 2\betastar) \sqrt{\frac{2}{\pi}} \tr(\Amkinv) - 2c_1 ||\Amkinv||_{\mathsf{op}} \cdot n^{3/4}\right) \text{ and } \\
    \ztk^\top \Amkinv \ytk &\leq \left((1 - 2\betastar) \sqrt{\frac{2}{\pi}} \tr(\Amkinv) + 2c_1 ||\Amkinv||_{\mathsf{op}} \cdot n^{3/4}\right)
\end{align*}
with probability at least $(1 - 2e^{-\sqrt{n}})$ over the randomness in $\{\ztk,\ytk\}$.
Similarly, to bound the the denominator, we have each of
\begin{subequations}
\begin{align}
    \ztk^\top \Amkinv \ztk &\geq \tr(\Amkinv) - c_1 ||\Amkinv||_{\mathsf{op}} \cdot n^{3/4} \text{ and } \label{eq:zAzlb} \\
    \ztk^\top \Amkinv \ztk &\leq \tr(\Amkinv) + c_1 ||\Amkinv||_{\mathsf{op}} \cdot n^{3/4}   \label{eq:zAzub}
\end{align}
\end{subequations}
with probability at least $(1 - e^{-\sqrt{n}})$ over the randomness in $\{\ztk,\ytk\}$.
Substituting these bounds into Equation~\eqref{eq:suform}, we get each of
\begin{align*}
    \survival(\truek) &\geq \frac{\lambda_{\truek} \cdot \left(\sqrt{\frac{2}{\pi}}(1 - 2\betastar) \tr(\Amkinv) - 2c_1 ||\Amkinv||_{\mathsf{op}} \cdot n^{3/4}\right)}{1 + \lambda_{\truek}\left(\tr(\Amkinv) + c_1 ||\Amkinv||_{\mathsf{op}} \cdot n^{3/4}\right)} \text{ and } \\
    \survival(\truek) &\leq \frac{\lambda_{\truek} \cdot \left(\sqrt{\frac{2}{\pi}}(1 - 2\betastar) \tr(\Amkinv) + 2c_1 ||\Amkinv||_{\mathsf{op}} \cdot n^{3/4}\right)}{1 + \lambda_{\truek}\left(\tr(\Amkinv) - c_1 ||\Amkinv||_{\mathsf{op}} \cdot n^{3/4}\right)} ,
\end{align*}
with probability at least $(1 - 3e^{-\sqrt{n}})$ over the randomness in $\{\ztk,\ytk\}$.
\noindent It remains to obtain high-probability bounds on the random quantities $\tr(\Amkinv)$ and $||\Amkinv||_{\mathsf{op}}$.
Note that we need both lower bounds and upper bounds on the quantity $\tr(\Amkinv)$, but we only need an upper bound on the quantity $||\Amkinv||_{\mathsf{op}}$.

\noindent We assume that we can choose $k \geq t$ such that $r_k(\Sigmabold) \geq bn$ and $r_k(\Sigmaboldtd) \geq b_2 n $ for universal positive constants $(b,b_2)$.
Consider any such choice of $k$ (which in general could depend on $(n,d)$).
First, we use Equation~\eqref{eq:Amiinv_eigbounds} from Lemma~\ref{lemma:eigboundsAmiinv} to upper bound the quantity $||\Amkinv||_{\mathsf{op}}$ as
\begin{align}
    ||\Amkinv||_{\mathsf{op}} = \mu_1(\Amkinv) &\leq \frac{c}{\lambda_{k+1} r_k(\Sigmabold)} \label{eq:2normAmkinvub}
\end{align}
with probability at least $(1 - e^{-\frac{n}{c}})$ over the random matrix $\A$.
Next, we turn to the quantity $\tr(\Amkinv)$.
To lower bound this quantity, we notice that
\begin{align*}
    \tr(\Amkinv) &= \sum_{j=1}^n \frac{1}{\mu_j(\Amk)} \\
    &\geq \sum_{j=k}^n \frac{1}{\mu_j(\Amk)} \\
    &\geq \frac{(n-k)}{\mu_{k+1}(\Amk)} .
\end{align*}
Now, from Equation~\eqref{eq:suproofeigvaluebound} in Lemma~\ref{lemma:eigboundsAmiinv} applied with $\Amk$, we have
\begin{align*}
    \mu_{k+1}(\Amk) \leq c\widetilde{\lambda}_{k+1} r_k(\Sigmaboldtd)
\end{align*}
with probability at least $(1 - e^{-\frac{n}{c}})$ provided that $r_k(\Sigmaboldtd) \geq b_2n$.
This gives us:
\begin{align}
    \tr(\Amkinv) &\geq \frac{(n-k)}{c\widetilde{\lambda}_{k+1} r_k(\Sigmaboldtd)} .\label{eq:traceAmkinvlb}
\end{align}
with probability at least $(1 - e^{-\frac{n}{c}})$.
On the other hand, the upper bound on the trace follows simply by
\begin{align}
    \tr(\Amkinv) &\leq \frac{n}{\mu_n(\Amk)}\nonumber \\
    &\leq \frac{cn}{\widetilde{\lambda}_{k+1} r_k(\Sigmaboldtd)} ,\label{eq:traceAmkinvub}
\end{align}
where the last inequality substitutes Equation~\eqref{eq:Amk_eigbounds_td}, which again holds with probability at least $(1 - e^{-\frac{n}{c}})$.
Noting that the upper bound on $\survival(\truek)$ is monotonically increasing in both $\tr(\Amkinv)$ and $||\Amkinv||_{\mathsf{op}}$, and the lower bound on $\survival(\truek)$ is monotonically increasing in $\tr(\Amkinv)$ but decreasing in $||\Amkinv||_{\mathsf{op}}$, we can substitute the above bounds on these quantities.
This completes our characterization of survival when binary labels are interpolated, with the probability of this characterization lower bounded by taking a union bound over the complement of all the above events.
After taking this union bound, the probability of each of the lower bound (Equation~\eqref{eq:survivallowerbound}) and upper bound (Equation~\eqref{eq:survivalupperbound}) holding is \textit{at least} $(1 - 3e^{-\sqrt{n}} - 2e^{-\frac{n}{c}})$.

\subsubsection{Interpolation of real output}

For completeness, we also include the proof of Theorem~\ref{theorem:survivalbounds} for the simpler case of interpolation of real-valued output (Equations~\eqref{eq:survivallowerboundreal} and~\eqref{eq:survivalupperboundreal}).
By the same standard argument, we can characterize the minimum-$\ell_2$-norm interpolator of real output as $\alphahatbold_{2,\mathsf{real}} = \Atrain\tran (\Atrain \Atrain\tran)^{-1} \Ztrain$.
By a similar argument to the case of binary labels, we have
\begin{align*}
    \survivalr(\truek) &= \sqrt{\lambda_{\truek}} \alphahat_{\truek}\\
    &= \sqrt{\lambdatk} \e_{\truek} \tran \Atrain\tran (\Atrain \Atrain\tran)^{-1} \Ztrain \\
    &= \lambda_{\truek}\ztk\tran \Ainv \ztk.
\end{align*}
Again, using the Sherman-Morrison-Woodbury identity, we have
\begin{align*}
    \Ainv  &= \Amkinv - \frac{\lambdatk \Amkinv \ztk \ztk \tran \Amkinv}{1 + \lambdatk \ztk \tran \Amkinv \ztk} ,
\end{align*}

which gives us
\begin{align}
    \survivalr(\truek) &=  \frac{\lambdatk \ztk \tran \Amkinv \ztk}{1 + \lambdatk \ztk \tran \Amkinv \ztk} \nonumber \\
    &= \frac{1}{1 + \frac{1}{\lambdatk \ztk \tran \Amkinv \ztk}} .  \label{eq:suformreal}
\end{align}

From Equations~\eqref{eq:zAzlb} and \eqref{eq:zAzub} above, the following statements each hold with probability at least $(1 - e^{-\sqrt{n}})$ over the randomness in $\ztk$ and for every realization of the random matrix $\Amkinv$:
\begin{align*}
    \ztk^\top \Amkinv \ztk &\geq \tr(\Amkinv) - c_2 ||\Amkinv||_{\mathsf{op}} \cdot n^{3/4} \text{ and } \\
    \ztk^\top \Amkinv \ztk &\leq \tr(\Amkinv) + c_2 ||\Amkinv||_{\mathsf{op}} \cdot n^{3/4} .
\end{align*}
Here, $c_2$ is a universal positive constant.

Observe that the right hand side of Equation~\eqref{eq:suformreal} is increasing in the quantity $\ztk^\top \Amkinv \ztk$.
Thus, substituting the lower bound for $\tr(\Amkinv)$ from Equation~\eqref{eq:traceAmkinvlb} and  the upper bound for $||\Amkinv||_{\mathsf{op}}$ from Equation~\eqref{eq:2normAmkinvub} lower bounds the quantity $\ztk^\top \Amkinv \ztk$, yielding the lower bound for $\survivalr(t)$.
Similarly, substituting the upper bound for $\tr(\Amkinv)$ from Equation~\eqref{eq:traceAmkinvub} and  the upper bound for $||\Amkinv||_{\mathsf{op}}$ from Equation~\eqref{eq:2normAmkinvub} upper bounds the quantity $\ztk^\top \Amkinv \ztk$, yielding the upper bound for $\survivalr(t)$.
This completes the proof of Theorem~\ref{theorem:survivalbounds}.
Again, a simple application of the union bound shows that each of the lower bound (Equation~\eqref{eq:survivallowerboundreal}) and the upper bound (Equation~\eqref{eq:survivalupperboundreal}) hold with probability at least $(1 - 2e^{-\sqrt{n}} - 2e^{-\frac{n}{c}})$.

\subsection{Proof of Theorem \ref{theorem:contaminationbounds}}\label{appendix:proofcontaminationbounds}

We next prove Theorem~\ref{theorem:contaminationbounds}, i.e.~upper and lower bounds on contamination, for the cases of interpolating binary labels and real output.
Since the contamination factor is intricately related to the contribution of additive noise to regression test error, the proof primarily consists of refinements of the arguments in~\citet{bartlett2020benign}.

\subsubsection{Interpolation of binary labels}
We start with a useful set of expressions for the contamination factor in the following lemma.
The proof of this lemma is contained in Appendix \ref{appendix:lemmaproof:contaminationformula}.

\begin{lemma}
The contamination of the minimum-$\ell_2$-norm interpolation of binary labels, denoted by $\alphahatbold_{2,\mathsf{binary}}$, can be written in the following two forms:
\begin{subequations}
\begin{align}
    \contamination(\truek) &= \sqrt{\ytk^\top \C \ytk}, \label{eq:contaminonform1}\\
    &=   \sqrt{\widetilde{\ytk} \tran \Ctd \widetilde{\ytk}} \label{eq:contaminonform2} ,
\end{align}
\end{subequations}
where we denote
\begin{align*}
\widetilde{\ytk} &:= \ytk - \survival(t) \ztk \text{ , } \\
    \C &:= \Ainv \left( \sumjdk \lambda_j^2 \z_j \z_j\tran \right)  \Ainv \text{ , and}\\
        \Ctd &:= \Amkinv \left( \sumjdk \lambda_j^2 \z_j \z_j\tran \right)  \Amkinv.
\end{align*}

\label{lemma:contaminationformula}
\end{lemma}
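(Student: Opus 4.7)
The plan is to start from the explicit closed form of the minimum-$\ell_2$-norm interpolator and unpack each coefficient $\alphahat_j$ in terms of the Gram matrix $\A$. Using the pseudoinverse, $\alphahatbold_{2,\mathsf{binary}} = \Atrain^\top \A^{-1} \ytk$, and noting that under diagonal $\Sigmabold$ the $j$-th column of $\Atrain$ is $\sqrt{\lambda_j}\,\z_j$, the $j$-th coefficient is $\alphahat_j = \sqrt{\lambda_j}\, \z_j^\top \A^{-1}\ytk$. Substituting this into the definition $\contamination(\truek)^2 = \sumjdk \lambda_j \alphahat_j^2$ gives $\sumjdk \lambda_j^2\, \ytk^\top \A^{-1}\z_j\z_j^\top \A^{-1}\ytk$, and pulling the sum inside the bilinear form yields Equation~\eqref{eq:contaminonform1}, $\contamination(\truek)^2 = \ytk^\top \C \ytk$, by pure linear algebra.

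For Equation~\eqref{eq:contaminonform2}, the key move is to re-express $\A^{-1}\ytk$ through the leave-one-out inverse $\Amkinv$. I would invoke the Sherman--Morrison identity already recorded in Equation~\eqref{eq:ShermanMorisson} to write
\begin{align*}
\A^{-1}\ytk
&= \Amkinv \ytk - \frac{\lambdatk\, \Amkinv \ztk \ztk^\top \Amkinv \ytk}{1 + \lambdatk\, \ztk^\top \Amkinv \ztk}
= \Amkinv \ytk - \Amkinv \ztk \cdot \frac{\lambdatk\, \ztk^\top \Amkinv \ytk}{1 + \lambdatk\, \ztk^\top \Amkinv \ztk} .
\end{align*}
The scalar multiplier on $\Amkinv\ztk$ is precisely the survival factor derived in Equation~\eqref{eq:suform}, namely $\survival(\truek)$, so the display collapses to the clean identity
\begin{align*}
\A^{-1}\ytk = \Amkinv\bigl(\ytk - \survival(\truek)\,\ztk\bigr) = \Amkinv \widetilde{\ytk}.
\end{align*}
Plugging this back into the bilinear form from the first part,
$\ytk^\top \A^{-1}\bigl(\sumjdk \lambda_j^2 \z_j\z_j^\top\bigr)\A^{-1}\ytk$, and recognizing the sandwiched matrix as $\Ctd$, immediately produces Equation~\eqref{eq:contaminonform2}.

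The whole argument is a deterministic identity; there is no probabilistic content and no concentration needed. The one step that requires care is lining up the Sherman--Morrison bookkeeping with the definition of $\survival(\truek)$, so that the rank-one correction factors cleanly into the definition of $\widetilde{\ytk}$. Once the identity $\A^{-1}\ytk = \Amkinv\widetilde{\ytk}$ is in hand, both forms of the lemma follow from a single substitution, so this is really the only nontrivial step in the proof.
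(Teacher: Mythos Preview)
Your proposal is correct and matches the paper's proof essentially step for step: the paper also writes $\alphahat_j = \sqrt{\lambda_j}\,\z_j^\top \Ainv \ytk$, substitutes into $\sumjdk \lambda_j \alphahat_j^2$ for the first form, and then applies the Sherman--Morrison identity together with Equation~\eqref{eq:suform} to obtain $\alphahat_j = \sqrt{\lambda_j}\,\z_j^\top \Amkinv \widetilde{\ytk}$ for the second form. Your packaging of the Sherman--Morrison step as the single vector identity $\A^{-1}\ytk = \Amkinv\widetilde{\ytk}$ is slightly cleaner than the paper's coefficient-by-coefficient bookkeeping, but the content is identical.
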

We will use the expression in Equation \eqref{eq:contaminonform2} to prove an upper bound on contamination, and the expression in Equation \eqref{eq:contaminonform1} for the lower bound.

\subsubsection{Upper bound on $\contamination(\truek)$}
We start with the proof for the upper bound on contamination for interpolation of binary labels (Equation~\eqref{eq:contaminatonupperboundsu}).
From Equation \eqref{eq:contaminonform2} in Lemma \ref{lemma:contaminationformula}, we have $\contamination^2(\truek) = \widetilde{\ytk} \tran \Ctd \widetilde{\ytk}
$.
Note that by construction, $\Ctd$ has no dependence on $\{\ztk, \ytk\}$ and thus $\Ctd \perp \widetilde{\ytk}$.
The next lemma upper bounds the term $ \widetilde{\ytk} \tran \Ctd \widetilde{\ytk}$ in terms of $\tr(\Ctd)$ and is proved in Appendix \ref{appendix:lemmaproof:epsilontrace}.

\begin{lemma}
There exists universal positive constant $c_6$ such that when $n\geq c_6$, we have
\begin{align*}
    \widetilde{\ytk} \tran  \Ctd \widetilde{\ytk} \leq  2 \left(1 + \frac{1}{c}\right) \cdot (1 + \survival(\truek)^2) \cdot \tr(\Ctd) \cdot \ln n
\end{align*}
almost surely for every realization of the random matrix $\Ctd$, and with probability at least $\left(1 - \frac{2}{n}\right)$ over the randomness in $\widetilde{\ytk}$.
\label{lemma:epsilontrace}
\end{lemma}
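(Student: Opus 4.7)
The plan is to work conditionally on the matrix $\Ctd$. The key structural observation is that $\Ctd$ depends only on $\{\z_j\}_{j\neq \truek}$ (through both $\Amkinv$ and the sum $\sum_{j\neq \truek} \lambda_j^2 \z_j \z_j\tran$), so $\Ctd$ is \emph{independent} of the pair $(\ztk,\ytk)$. Since the residual $\widetilde{\ytk} = \ytk - \survival(\truek)\ztk$ is a function of $(\ztk,\ytk)$ together with $\survival(\truek)$, and $\survival(\truek)$ (as given by Equation~\eqref{eq:suform}) is a function of $(\ztk,\ytk,\Amkinv)$, after conditioning on $\Ctd$ the only remaining randomness lies in $(\ztk,\ytk)$.

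First, I would bound the cross term by splitting. Since $\Ctd$ is almost surely positive semidefinite, the triangle inequality in the $\Ctd^{1/2}$-weighted norm together with $(u+v)^2 \le 2u^2+2v^2$ gives
\begin{align*}
\widetilde{\ytk}\tran \Ctd \widetilde{\ytk} \;\le\; 2\, \ytk\tran \Ctd \ytk \;+\; 2\, \survival(\truek)^2\, \ztk\tran \Ctd \ztk .
\end{align*}
This is the main structural step: it cleanly separates the two quadratic forms so that each can be handled with Hanson--Wright, and the (random) factor $\survival(\truek)^2$ is exactly what produces the $(1 + \survival(\truek)^2)$ on the right-hand side of the lemma.

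Second, I would apply the trace-form Hanson--Wright inequality (Equation~\eqref{eq:hansonwright2}) to each of $\ytk\tran \Ctd \ytk$ and $\ztk\tran \Ctd \ztk$ conditionally on $\Ctd$. The random vector $\ztk$ has i.i.d.\ $\mathcal{N}(0,1)$ entries, so it is sub-Gaussian with parameter $1$ and $\EE[\ztk\tran \Ctd \ztk\mid \Ctd] = \tr(\Ctd)$. For $\ytk$, the crucial observation is that marginally the entries $y_{\truek,i}$ are i.i.d.\ Rademacher: by the generative model~\eqref{eq:ytkequation} together with the symmetry of $z_{\truek,i}\sim \mathcal{N}(0,1)$, a direct computation gives $\Pr[y_{\truek,i}=+1]=\tfrac{1}{2}(1-\nu^*)+\tfrac{1}{2}\nu^*=\tfrac{1}{2}$, and independence across $i$ follows from the independence of the training points and of the label noise. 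Hence $\ytk$ also has i.i.d.\ zero-mean sub-Gaussian entries with parameter $1$ and $\EE[\ytk\tran \Ctd \ytk\mid \Ctd] = \tr(\Ctd)$. Equation~\eqref{eq:hansonwright2} then yields, for each quadratic form individually,
\begin{align*}
\ytk\tran \Ctd \ytk \;\le\; \left(1+\tfrac{1}{c}\right)\tr(\Ctd)\ln n, \qquad \ztk\tran \Ctd \ztk \;\le\; \left(1+\tfrac{1}{c}\right)\tr(\Ctd)\ln n,
\end{align*}
each with probability at least $1-1/n$ over $(\ytk,\ztk)$ (provided $n \ge c_6$ so that $\ln n \ge 1$, which is what the constant $c_6$ is absorbing).

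Finally, taking the union bound over these two events, with probability at least $1 - 2/n$ both hold simultaneously, and substituting into the split gives
\begin{align*}
\widetilde{\ytk}\tran \Ctd \widetilde{\ytk} \;\le\; 2\left(1+\tfrac{1}{c}\right)(1+\survival(\truek)^2)\, \tr(\Ctd)\,\ln n,
\end{align*}
as claimed. The only mildly subtle point, which I view as the main conceptual obstacle, is the verification that $\ytk$ has i.i.d.\ Rademacher marginals despite its strong joint dependence with $\ztk$; once this symmetry is noted, Hanson--Wright applies to $\ytk$ exactly as it does to the Gaussian vector $\ztk$, and no separate treatment of the label noise is needed.
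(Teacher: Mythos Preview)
Your proposal is correct and follows essentially the same approach as the paper. The paper obtains the same decomposition $\widetilde{\ytk}\tran \Ctd \widetilde{\ytk} \le 2\,\ytk\tran \Ctd \ytk + 2\,\survival(\truek)^2\,\ztk\tran \Ctd \ztk$ (via adding the nonnegative quantity $(\ytk+\survival(\truek)\ztk)\tran \Ctd (\ytk+\survival(\truek)\ztk)$, which is equivalent to your $(u+v)^2\le 2u^2+2v^2$ step), then applies the trace-form Hanson--Wright bound~\eqref{eq:hansonwright2} separately to each quadratic form and union-bounds; your justification that $\ytk$ has i.i.d.\ Rademacher marginals is in fact slightly more explicit than the paper's, which only notes $y_{\truek,i}^2=1$ and $\EE[y_{\truek,i}y_{\truek,j}]=0$.
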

Applying Lemma~\ref{lemma:epsilontrace}, we get
\begin{align}
 \contamination^2(\truek) &\leq 2 \left(1 + \frac{1}{c}\right) \cdot \tr(\Ctd) \cdot \ln n \label{eq:cnproofpart2}
\end{align}
almost surely for every realization of the random matrix $\Ctd$, and with probability at least $\left(1 - \frac{2}{n}\right)$ over the randomness in $\widetilde{\ytk}$.
The next lemma, which is taken from~\citet{bartlett2020benign}, provides a high-probability upper bound on the quantity $\tr(\Ctd)$.

\begin{lemma} (From Lemma 11 in \citet{bartlett2020benign})
There exist universal constants $(b_2,c_5, c_{10} \geq 1)$ such that whenever $0 \leq k \leq n/c_5$ and $r_k(\Sigmaboldtd) \geq b_2n$, we have
\begin{align*}
\tr(\Ctd) \leq c_{10} \cdot \left( \frac{l}{n} + n  \cdot \frac{\sum_{j>l } \lambdatd_j^2}{\left(\sum_{j>k}\lambdatd_j\right)^2} \right)
\end{align*}
for any choice of $l \leq k$, with probability at least $(1 - 6 e^{-\frac{n}{c_5}})$ over the randomness in $\Ctd$.
\label{lemma:traceupperbound}
\end{lemma}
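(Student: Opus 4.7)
The plan is to decompose $\tr(\Ctd) = \sum_{j \neq t} \lambda_j^2 \|\Amkinv \z_j\|_2^2$ at the cutoff $\ell$, separating the $\ell$ ``high-energy'' directions $j \leq \ell$ (which produce the $\ell/n$ term) from the many ``low-energy'' directions $j > \ell$ (which produce the second term). The two halves require rather different techniques: a second application of the leave-one-out trick for the head, and a blunt operator-norm bound for the tail.

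For the head, I would introduce, for each $j \leq \ell$, the doubly leave-one-out matrix $\M_j := \Amk - \lambda_j \z_j \z_j^\top$. Applying the Sherman-Morrison identity to the rank-one update $\Amk = \M_j + \lambda_j \z_j \z_j^\top$ gives
\begin{equation*}
    \lambda_j^2 \|\Amkinv \z_j\|_2^2 \;=\; \frac{\lambda_j^2\, \z_j^\top \M_j^{-2} \z_j}{\bigl(1 + \lambda_j \z_j^\top \M_j^{-1} \z_j\bigr)^2}.
\end{equation*}
Since $\z_j$ is independent of $\M_j$, the Hanson-Wright inequality (Lemma~\ref{lem:hansonwright}) yields $\z_j^\top \M_j^{-1} \z_j \asymp \tr(\M_j^{-1})$ and $\z_j^\top \M_j^{-2} \z_j \asymp \tr(\M_j^{-2})$ with high probability. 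The effective rank of $\M_j$ differs from that of $\Amk$ only by the removal of a single top-$\ell$ direction, so a brief verification shows that Lemma~\ref{lemma:eigboundsAmiinv} applies to $\M_j$ at level $k$ as well, giving $\tr(\M_j^{-1}) \asymp n/(\lambdatd_{k+1} r_k(\Sigmaboldtd))$ and $\tr(\M_j^{-2}) \asymp n/(\lambdatd_{k+1} r_k(\Sigmaboldtd))^2$. Substituting into the Sherman-Morrison expression, the $\lambda_j$ factors cancel from numerator and denominator in the regime where $\lambda_j \z_j^\top \M_j^{-1} \z_j \geq 1$, and bound the numerator directly in the complementary regime; in either case, each summand is of order $1/n$, and summing over the $\ell$ indices produces the advertised $\ell/n$ contribution.

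For the tail, the deterministic operator-norm bound from Equation~\eqref{eq:Amkinv_eigbounds_td}, $\|\Amkinv\|_{\mathsf{op}} \leq c/(\lambdatd_{k+1} r_k(\Sigmaboldtd))$, combined with the chi-squared concentration $\|\z_j\|_2^2 \leq 2n$ from Lemma~\ref{lemma:chisquaredtailbound} (union-bounded across $j$), gives
\begin{equation*}
    \sum_{j > \ell,\, j \neq t} \lambda_j^2 \|\Amkinv \z_j\|_2^2 \;\leq\; \frac{2 c^2\, n \sum_{j > \ell} \lambdatd_j^2}{\bigl(\lambdatd_{k+1} r_k(\Sigmaboldtd)\bigr)^2}.
\end{equation*}
The identity $\lambdatd_{k+1} r_k(\Sigmaboldtd) = \sum_{j > k} \lambdatd_j$ then delivers the stated second term.

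The main technical obstacle is the coupled probabilistic control required in the head. One must apply Lemma~\ref{lemma:eigboundsAmiinv} to each of the $\ell$ perturbed matrices $\M_j$ and invoke Hanson-Wright independently for every $j \leq \ell$, verifying that the effective-rank condition is preserved uniformly after peeling off each top direction. Since only a single direction is removed each time, the tail effective rank loses at most one unit and remains at least $b_2 n - 1$, so the conditions go through uniformly as long as $\ell \leq k \leq n/c_5$ with $c_5$ chosen large enough. The resulting union bound over $O(n)$ high-probability events incurs only constant-factor slack in the probability, yielding the stated $6 e^{-n/c_5}$ failure probability.
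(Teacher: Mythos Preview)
The paper does not prove this lemma; it quotes it directly from Lemma~11 of \citet{bartlett2020benign} and uses it as a black box. Your sketch therefore has nothing in this paper to be compared against, but it does follow the structure of the cited argument: split at $\ell$, handle each of the $\ell$ head directions by a second leave-one-out and Sherman--Morrison, and control the tail via the operator-norm bound on $\Amkinv$.

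One point in the tail deserves tightening. You control $\sum_{j>\ell}\lambdatd_j^2\|\z_j\|_2^2$ by union-bounding $\|\z_j\|_2^2\le 2n$ across all $j>\ell$; that is up to $d$ events, and the lemma hypotheses impose no polynomial relationship between $d$ and $n$, so this union bound can exhaust the $e^{-n/c_5}$ failure budget. The cleaner route is to treat $\sum_{j>\ell}\lambdatd_j^2\|\z_j\|_2^2$ as a single weighted chi-square sum with mean $n\sum_{j>\ell}\lambdatd_j^2$ and apply one concentration inequality (e.g.\ Lemma~\ref{lem:laurentmassart}), which gives $e^{-\Omega(n)}$ failure probability irrespective of $d$. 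For the head, your claim that the $O(n)$ Hanson--Wright applications each succeed with probability $1-e^{-\Omega(n)}$ is correct but does not follow from the particular consequences recorded in this paper (Equations~\eqref{eq:hansonwright}--\eqref{eq:hansonwright2}); it requires observing that on the eigenvalue event for $\M_j$ one has $\|\M_j^{-1}\|_{\mathsf{op}}/\tr(\M_j^{-1})=O(1/n)$, which is precisely what unlocks the sharper tail in Lemma~\ref{lem:hansonwright}. With these two adjustments your outline is sound.
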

Substituting the upper bound from Lemmas \ref{lemma:traceupperbound} and  into Equation \eqref{eq:cnproofpart2}, and taking the square root on both sides, we have
\begin{align*}
       \contamination(\truek) \leq  \sqrt{2 \left(1 + \frac{1}{c}\right) \cdot  c_{10} \cdot \left(\frac{l}{n} + n\cdot \frac{\sum_{j>l} \lambdatd_j^2}{\left(\sum_{j>k}\lambdatd_j\right)^2}  \right)\cdot \left(1 + \survival(\truek)^2 \right) \cdot \ln n}.
\end{align*}
with probability at least $\left(1 - \frac{2}{n} - 6e^{-\frac{n}{c_2}}\right)$ over the training data.
Taking $c_7 = \sqrt{2 \left(1 + \frac{1}{c}\right) c_{10}}$, the upper bound on $\contamination(\truek)$ in Equation~\eqref{eq:contaminatonupperboundsu} follows.
Noting that~$\left(1 - \frac{2}{n} - 6e^{-\frac{n}{c_2}}\right) \geq \left(1 - \frac{3}{n}\right)$ for large enough $n$, this completes the proof of the upper bound.

\subsubsection{Lower bound on $\contamination(\truek)$}
Now we move on to the proof for the lower bound on contamination for interpolation of binary labels (Equation~\eqref{eq:contaminatorlowerbound}). Using Equation \eqref{eq:contaminonform1} from  Lemma \ref{lemma:contaminationformula}, we get
\begin{align*}
    \contamination^2(\truek) &= \ytk \tran \C \ytk \\
    & \geq \eig_n(\C) \norm{\ytk}_2^2 = n \eig_n(\C) .
\end{align*}
The next lemma lower bounds the minimum eigenvalue of $\C$ and is proved in Appendix \ref{appendix:lemmaproofs:eigClowerbound}.

\begin{lemma}
Let $k \geq 0$ and $r_k\left(\Sigmaboldtd^2\right) \geq b_4n$.
Then, we have
 \begin{align*}
     \eig_n(\C) \geq \frac{r_k\left(\Sigmaboldtd^2\right) \cdot\lambdatd_{k+1}^2}{c_{11} \cdot c^2\cdot\left(\sumjd \lambda_j + \lambda_1 n \right)^2} \end{align*}
with probability at least $(1 - e^{-\frac{n}{c}} - e^{-\frac{n}{c_{11}}})$.= over the randomness in $\C$.
Here, $(b_4,c,c_{11})$ are universal positive constants.
\label{lemma:eigClowerbound}
\end{lemma}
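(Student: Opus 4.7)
The plan is to exploit the factored form $\C = \Ainv \Mbold \Ainv$, where $\Mbold := \sum_{j \neq \truek} \lambda_j^2 \z_j \z_j^\top$, and reduce the problem to two independent applications of Lemma~\ref{lemma:eigboundsAmiinv}: one controlling the largest eigenvalue of $\A$, and one controlling the smallest eigenvalue of $\Mbold$.

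First I would use the variational characterization of the smallest eigenvalue. For any unit vector $\vvec \in \R^n$, writing $\uvec := \Ainv \vvec$ gives
\begin{align*}
  \vvec^\top \C \vvec
  \;=\; \uvec^\top \Mbold \uvec
  \;\geq\; \eig_n(\Mbold) \, \|\uvec\|_2^2
  \;\geq\; \frac{\eig_n(\Mbold)}{\eig_1(\A)^2} \, \|\vvec\|_2^2,
\end{align*}
since $\Ainv$ is symmetric positive definite with $\|\Ainv \vvec\|_2 \geq \vvec/\eig_1(\A)$. Minimizing over unit $\vvec$ yields the deterministic bound $\eig_n(\C) \geq \eig_n(\Mbold)/\eig_1(\A)^2$, and it remains to bound the two random quantities on the right in the advertised direction.

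For $\eig_1(\A)$, I would invoke Equation~\eqref{eq:A_eigbounds} of Lemma~\ref{lemma:eigboundsAmiinv} (which only requires the existence of some index at which the effective-rank condition holds), giving $\eig_1(\A) \leq c\bigl(\sum_j \lambda_j + \lambda_1 n\bigr)$ with probability at least $1 - 2e^{-n/c}$. For $\eig_n(\Mbold)$, the key observation is that $\Mbold$ has exactly the same structural form as the Gram matrix $\A$ in Appendix~\ref{appendix:additionalnotation}, except that the weights are $\widetilde{\lambda}_j^2$ rather than $\lambda_j$ and one index is dropped; equivalently, $\Mbold$ is a Gram matrix built from i.i.d.\ standard Gaussian vectors $\z_j$ with covariance spectrum $\{\widetilde{\lambda}_j^2\}_{j=1}^{d-1}$, i.e.\ the spectrum of $\Sigmaboldtd^2$. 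Applying the lower bound in Equation~\eqref{eq:A_eigbounds} of Lemma~\ref{lemma:eigboundsAmiinv} to this matrix at the prescribed index $k$, under the hypothesis $r_k(\Sigmaboldtd^2) \geq b_4 n$ (which plays the role of the universal constant $b$), yields
\begin{align*}
  \eig_n(\Mbold) \;\geq\; \tfrac{1}{c}\, \widetilde{\lambda}_{k+1}^2 \, r_k(\Sigmaboldtd^2)
\end{align*}
with probability at least $1 - 2e^{-n/c}$.

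Combining the two bounds by a union bound gives
\begin{align*}
  \eig_n(\C)
  \;\geq\; \frac{\widetilde{\lambda}_{k+1}^2 \, r_k(\Sigmaboldtd^2)}
                 {c \cdot c^2 \cdot \bigl(\sum_j \lambda_j + \lambda_1 n\bigr)^2},
\end{align*}
which matches the claimed bound after absorbing the leading $c$ into $c_{11}$; the failure probability is at most $4 e^{-n/c}$, which can be rewritten as $e^{-n/c} + e^{-n/c_{11}}$ for a suitable universal $c_{11}$. The main conceptual step is recognizing $\Mbold$ as a Gram matrix with squared-spectrum covariance so that Lemma~\ref{lemma:eigboundsAmiinv} can be reused; the only mild subtlety is verifying that the omitted index $t$ does not spoil the effective-rank hypothesis, which is harmless since we are free to take $k \geq t$ (or note that dropping one term perturbs $r_k$ by at most a constant factor that can be absorbed into $b_4$).
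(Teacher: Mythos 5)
Your proposal is correct and follows essentially the same route as the paper: the paper also factors $\C = \Ainv \Mbold \Ainv$, bounds $\eig_n(\C) \geq \eig_n(\Ainv)^2\,\eig_n(\Mbold)$ (your variational argument is just an elementary proof of that same product fact), and then applies the eigenvalue concentration of Lemma~\ref{lemma:eigboundsAmiinv} to $\eig_1(\A)$ and to $\Mbold$ viewed as a Gram matrix with spectrum $\{\lambdatd_j^2\}$, i.e.\ that of $\Sigmaboldtd^2$, followed by a union bound. The only cosmetic slip is writing $\|\Ainv\vvec\|_2 \geq \vvec/\eig_1(\A)$ instead of $\|\vvec\|_2/\eig_1(\A)$, and your worry about the omitted index is unnecessary since the hypothesis is already stated in terms of $r_k(\Sigmaboldtd^2)$.
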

A direct substitution of the above gives us
\begin{align*}
    \contamination(\truek) \geq \sqrt{n} \cdot \frac{\sqrt{r_k\left(\Sigmaboldtd^2\right) \cdot \lambdatd_{k+1}^2}}{c \cdot \sqrt{c_{11}} \cdot \left(\sum\limits_{j=1}^d \lambda_j + \lambda_1 n\right)}
\end{align*}
with probability at least $(1 - e^{-\frac{n}{c}} -e^{-\frac{n}{c_{11}}})$ over the training data.
Taking $c_9 = c\sqrt{c_{11}}$ and $c_8$ such that $\frac{1}{c_8} = \min( \frac{1}{c},\frac{1}{c_{11}})$ holds, the lower bound in Equation  \eqref{eq:contaminatorlowerbound} follows. This completes the characterization of the contamination factor  when we interpolate binary labels.

\subsubsection{Interpolation of real output}\label{appendix:reallabelsurvival}

For completeness, we also provide the proof of Theorem~\ref{theorem:contaminationbounds} for the simpler case of interpolation of real output.
We start with a useful set of expressions for the contamination factor in the following lemma.
The proof of this lemma is contained in Appendix \ref{appendix:lemmaproof:contaminationformula}.

\begin{lemma}
The contamination of the minimum-$\ell_2$-norm interpolator of binary labels, denoted by $\alphahatbold_{2,\mathsf{real}}$, can be written in the following two forms:
\begin{subequations}
\begin{align}
    \contaminationr(\truek) &= \sqrt{\ztk \tran \C \ztk}, \label{eq:contaminonform1real}\\
    &=  \abs{1 - \survivalr(\truek)} \sqrt{\ztk \tran \Ctd \ztk} \label{eq:contaminonform2real},
\end{align}
\end{subequations}
where we denote
\begin{align*}
    \C &= \Ainv \left( \sumjdk \lambda_j^2 \z_j \z_j\tran \right)  \Ainv \text{ , and } \\
        \Ctd &= \Amkinv \left( \sumjdk \lambda_j^2 \z_j \z_j\tran \right)  \Amkinv.
\end{align*}

\label{lemma:contaminationformulareal}
\end{lemma}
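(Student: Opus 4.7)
The plan is to derive both expressions by direct computation, leveraging the closed form of the minimum-$\ell_2$-norm interpolator together with the Sherman--Morrison identity already used in Appendix~\ref{appendix:proofsurvivalbounds}. First I would note that under Assumption~\ref{assumptionsimple} the real-output training vector simplifies: since $\alphastar = \frac{1}{\sqrt{\lambda_t}}\evec_t$ and $\phibold_t(X_i) = \sqrt{\lambda_t} z_{t,i}$, we have $\Ztrain = \ztk$. Then the pseudoinverse formula gives $\alphahatbold_{2,\mathsf{real}} = \Atrain^\top \A^{-1} \ztk$, so for any $j \neq \truek$ the recovered coefficient is $\alphahat_j = \sqrt{\lambda_j}\, \z_j^\top \A^{-1} \ztk$.

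From here, Equation~\eqref{eq:contaminonform1real} follows directly from the definition $\contaminationr(\truek)^2 = \sum_{j\neq \truek}\lambda_j \alphahat_j^2$: substituting the expression for $\alphahat_j$ and pulling $\ztk^\top \A^{-1}$ and $\A^{-1} \ztk$ outside the sum over $j \neq \truek$ yields exactly $\ztk^\top \C \ztk$.

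For Equation~\eqref{eq:contaminonform2real}, the key algebraic step is to re-express $\A^{-1} \ztk$ in terms of the leave-one-out resolvent $\Amkinv$. Applying the Sherman--Morrison identity (Equation~\eqref{eq:ShermanMorisson}) and collecting terms, one finds $\A^{-1}\ztk = \frac{1}{D}\,\Amkinv \ztk$ where $D := 1 + \lambdatk\, \ztk^\top \Amkinv \ztk$. Squaring and re-expanding $\contaminationr(\truek)^2 = \ztk^\top \A^{-1}\bigl(\sum_{j\neq \truek}\lambda_j^2 \z_j\z_j^\top\bigr)\A^{-1}\ztk$ then gives $\contaminationr(\truek)^2 = \frac{1}{D^2}\,\ztk^\top \Ctd \ztk$.

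Finally, I would identify $1/D$ with $1 - \survivalr(\truek)$: from Equation~\eqref{eq:suformreal} in Appendix~\ref{appendix:reallabelsurvival}, $\survivalr(\truek) = \frac{D-1}{D}$, so $1 - \survivalr(\truek) = 1/D$. Substituting and taking square roots produces Equation~\eqref{eq:contaminonform2real}. There is no real obstacle here beyond careful bookkeeping; the proof is essentially a direct mirror of the survival derivation in Appendix~\ref{appendix:proofsurvivalbounds}, and is in fact simpler than the binary-label analogue (Lemma~\ref{lemma:contaminationformula}) because the output vector $\Ztrain$ is proportional to $\ztk$ itself rather than to its sign, so no decomposition of the form $\widetilde{\ytk} = \ytk - \survival(\truek)\ztk$ is needed.
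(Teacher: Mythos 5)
Your proposal is correct and follows essentially the same route as the paper's proof: express $\alphahat_j = \sqrt{\lambda_j}\,\z_j^\top \Ainv \ztk$, obtain the first form directly, and use the Sherman--Morrison identity together with Equation~\eqref{eq:suformreal} to identify the prefactor with $1 - \survivalr(\truek)$ for the second form. The only cosmetic difference is that you apply the rank-one update to the vector $\Ainv\ztk$ while the paper applies it coefficient-wise, which is the same computation.
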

We will use the form in Equation \eqref{eq:contaminonform2real} to prove an upper bound on contamination and the form in Equation \eqref{eq:contaminonform1real} for the lower bound.

\subsubsection{Upper bound on $\contaminationr(\truek)$}
We start with the proof for the upper bound on contamination for interpolation of real output (Equation~\eqref{eq:contaminatonupperboundsureal}.
From Equation \eqref{eq:contaminonform1real} in Lemma \ref{lemma:contaminationformulareal}, we get
\begin{align}
    \contaminationr^2(\truek) &=(1 - \survivalr(\truek))^2 \ztk \tran \Ctd \ztk. \label{eq:contreal1}
\end{align}
From Equation~\eqref{eq:zCtdzub} in Appendix \ref{appendix:lemmaproof:epsilontrace} (proof of Lemma~\ref{lemma:epsilontrace}), we can upper bound the quadratic form $\ztk \tran \Ctd \ztk $ as
\begin{align*}
    \ztk \tran \Ctd \ztk \leq  7\tr(\Ctd) \ln n
\end{align*}
with probability at least $\left(1 - \frac{1}{n}\right)$ over the randomness in $\ztk$.
Then, substituting the upper bound on $\tr(\Ctd)$ from Lemma \ref{lemma:traceupperbound} directly gives us the expression for the upper bound on $\contaminationr(t)$.
Noting again that $\left(1 - \frac{1}{n} - 6e^{-\frac{n}{c_2}}\right) \geq \left(1 - \frac{2}{n}\right)$ for large enough $n$, this completes the proof for the upper bound.

\subsubsection{Lower bound}
We conclude this section by proving the lower bound on contamination for interpolation of real output (Equation~\eqref{eq:contaminatorlowerboundreal}).
We directly apply Equation \eqref{eq:contaminonform1real} (from Lemma \ref{lemma:contaminationformulareal}) to get
\begin{align*}
    \contaminationr^2(\truek) &= \ztk \tran \C \ztk \\
    & \geq \eig_n(\C) \norm{\ztk}_2^2 \\
    & \stackrel{\1}{\geq} n(1 - \delta) \eig_n(\C)
\end{align*}
with probability at least $(1 - e^{-n \delta^2})$ over the randomness in $\ztk$ for any $\delta \in (0,1)$. Here, inequality $\1$ follows from the lower bound in Lemma~\ref{lemma:chisquaredtailbound}.
Finally, substituting the lower bound for $\eig_n(\C)$ from Lemma~\ref{lemma:eigClowerbound} gives us the desired expression for the lower bound on $\contaminationr(\truek)$.
Note that by the union bound, this expression will hold with probability at least $(1 - e^{-n \delta^2} - e^{-\frac{n}{c}} - e^{-\frac{n}{c_{11}}}) = (1 - 2e^{-\frac{n}{c_8}} - e^{-n \delta^2})$ over the randomness in the training data.
This completes the proof of~Theorem \ref{theorem:contaminationbounds}.
\qed

\section{Implications for bi-level covariance: Proof of Theorem \ref{theorem:threeregimes}}
\label{appendix:bilevel}

In this section, we follow the \textit{path to analysis} described in Section~\ref{sec:pathtoanalysis} and prove Theorem~\ref{theorem:threeregimes} for the bi-level ensemble (Definition~\ref{def:bilevel_covariance}) in the following series of steps:
\begin{enumerate}
    \item We substitute the spectrum of the bi-level ensemble into Theorems~\ref{theorem:survivalbounds} and~\ref{theorem:contaminationbounds} to get asymptotic expressions for survival and contamination.
    \item We substitute these expressions into the expressions for regression and classification test loss (Proposition~\ref{theorem:classloss}) to characterize the regimes for good generalization of classification and regression.
\end{enumerate}
For convenience of notation, we consider $\truek = 1$.
(Note, however, that the analysis holds for any $1 \leq \truek \leq s$ since the first $s$ eigenvalues of $\Sigmabold$ are equal.)
Further, to emphasize that the survival and contamination quantities depend on $n$, in this section we refer to them as $\survivaln, \contaminationn, \survivalnreal, \text{ and } \contaminationnreal$ for interpolators of binary and real output respectively.

First, we characterize some useful quantities for the bi-level ensemble.
Recall that the bi-level ensemble is parameterized by $p > 1$, $0 < q \leq (p-r)$ and $0 < r \leq 1$.
We first compute the effective ranks $r_k(\Sigmabold)$ and $r_k(\Sigmaboldtd)$ for two choices of $k$.
First, we have
\begin{align*}
     r_s(\Sigmabold) = \frac{1}{\frac{(1-a) d}{d-s}} .\frac{(1-a)d }{d-s}.(d-s) = d - s .
\end{align*}
Substituting $d = n^p$ and $s = n^r$, we have, for sufficiently large $n$,
\begin{align}\label{eq:rk}
    r_s(\Sigmabold) \asymp n^p \gg n .
\end{align}
Similarly because $1 \leq \truek \leq s$, we have, for sufficiently large $n$,
\begin{align}
r_{s}(\Sigmaboldtd) = d-s-1 \asymp n^{p} \gg n. \label{eq:rktd}
\end{align}
Moreover, we get
\begin{align}\label{eq:rk2}
    r_0(\Sigmabold) &= \frac{1}{\frac{ad}{s}} \cdot d = \frac{s}{a} = n^{q + r}
    \gg n \text{ iff } (q + r ) > 1 .
\end{align}
and by a similar argument, provided that $r > 0$, we can show that (for large enough $n$),
\begin{align}\label{eq:rktd2}
    r_0(\Sigmaboldtd) = \frac{1}{\frac{ad}{s}} \cdot \left(d - \frac{ad}{s}\right) = \frac{s}{a} - 1 = n^{q + r} - 1 \gg n \text{ iff } (q + r) > 1.
\end{align}
We will apply Equations~\eqref{eq:rk} and~\eqref{eq:rktd} for bounding survival in general, as well as contamination when we have $q \leq (1 - r)$, and Equations~\eqref{eq:rk2} and~\eqref{eq:rktd2} for bounding contamination when we have $q > (1 - r)$.
Now, we state and prove our matching upper and lower bounds for survival for the bi-level ensemble.
\begin{lemma}[Survival for interpolation of binary labels]
There exist universal positive constants $(L_1, U_1, L_2, U_2)$ such that for sufficiently large $n$, we have
\begin{align*}
    \survivallowerbound \leq \survivaln \leq \survivalupperbound,
\end{align*}
with probability at least $(1 - 10e^{-\sqrt{n}})$ over the training data $\{X_i,Y_i\}_{i=1}^n$, where we denote
\begin{subequations}
\begin{align}
        \survivallowerbound &:= \begin{cases} \sqrt{\frac{2}{\pi}}(1 - 2\betastar) \left(1 + L_1 n^{q - (1 - r)}\right)^{-1}, & q < (1 - r)\\
    \sqrt{\frac{2}{\pi}}(1 - 2\betastar) \cdot L_2n^{(1 - r) - q}, & q > (1 - r)
    \end{cases}, \label{eq:slsl2}\\
    \survivalupperbound &:= \begin{cases}\sqrt{\frac{2}{\pi}}(1 - 2\betastar) \left(1 + U_1 n^{q - (1 - r)}\right)^{-1}, & q < (1 - r)\\
    \sqrt{\frac{2}{\pi}}(1 - 2\betastar) \cdot U_2n^{(1 - r) - q}, & q > (1 - r)
    \end{cases} \label{eq:susl2}.
\end{align}
\end{subequations}
\label{lemma:survivalspikedl2}
\end{lemma}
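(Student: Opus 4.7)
The plan is to instantiate the non-asymptotic bounds of Theorem~\ref{theorem:survivalbounds} at a well-chosen ``effective rank'' cutoff $k$, and then read off the asymptotic scalings from the bi-level parameters. The natural choice is $k=s=n^r$: this is the largest $k$ for which $\lambda_{k+1}$ coincides with the \emph{low} eigenvalue level $\frac{(1-a)d}{d-s}$, so that $r_k(\Sigmabold)$ and $r_k(\Sigmaboldtd)$ are as large as possible (both $\asymp n^p$, by Equations~\eqref{eq:rk} and~\eqref{eq:rktd}). This choice immediately verifies the preconditions $r_k(\Sigmabold)\geq bn$ and $r_k(\Sigmaboldtd)\geq b_2 n$ for $n$ sufficiently large (since $p>1$), so Theorem~\ref{theorem:survivalbounds} applies, and the $(1-3e^{-\sqrt n}-2e^{-n/c})$ probability for each of the upper and lower bounds combines, via a union bound, into the $(1-10e^{-\sqrt n})$ bound claimed in the lemma once $n$ is large enough that $e^{-n/c}\ll e^{-\sqrt n}$.

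Next I would compute the four basic quantities appearing in the bounds. With $k=s$, $\lambda_t=\frac{ad}{s}=n^{p-q-r}$, $\lambda_{k+1}=\widetilde{\lambda}_{k+1}\asymp 1$, and $r_k(\Sigmabold),r_k(\Sigmaboldtd)\asymp n^p$. Therefore the ``main'' term $\frac{n}{\widetilde{\lambda}_{k+1}r_k(\Sigmaboldtd)}\asymp n^{1-p}$ and the ``error'' term $\frac{n^{3/4}}{\lambda_{k+1}r_k(\Sigmabold)}\asymp n^{3/4-p}$. Because $p>1$ implies $1-p>3/4-p$, the Hanson--Wright error term is of strictly lower order and is absorbed into universal constants; in particular the inner numerator and denominator factors of Theorem~\ref{theorem:survivalbounds} both lie in a constant-multiple range of $n^{1-p}$. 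Multiplying through by $\lambda_t$, the dominant quantity controlling survival is
\begin{equation*}
\lambda_t \cdot \frac{n}{\widetilde{\lambda}_{k+1}r_k(\Sigmaboldtd)} \;\asymp\; n^{p-q-r}\cdot n^{1-p} \;=\; n^{(1-r)-q}.
\end{equation*}

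Now I would split into the two regimes. When $q<(1-r)$, the above quantity diverges, so both the lower and upper bounds of Theorem~\ref{theorem:survivalbounds} have the form $\sqrt{2/\pi}(1-2\betastar)\cdot\frac{\Theta(n^{(1-r)-q})}{1+\Theta(n^{(1-r)-q})}$, which rearranges exactly to $\sqrt{2/\pi}(1-2\betastar)(1+\Theta(1)\cdot n^{q-(1-r)})^{-1}$; tracking the separate upper and lower constant multiples gives distinct $L_1,U_1>0$ as required. When $q>(1-r)$, the same quantity tends to zero, so the ``$1$'' dominates the denominator and both bounds collapse to $\sqrt{2/\pi}(1-2\betastar)\cdot\Theta(n^{(1-r)-q})$, again with distinct constants $L_2,U_2$ providing the lower and upper envelopes.

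The only mildly delicate step is book-keeping the two subtracted ``error'' contributions $\frac{c_3 n^{3/4}}{\lambda_{k+1}r_k(\Sigmabold)}$ and $\frac{c_4 n^{3/4}}{\lambda_{k+1}r_k(\Sigmabold)}$ that appear in the denominator of the upper bound and numerator of the lower bound: one must check that, for large $n$, these do not flip signs or swamp the leading $\frac{n-k}{c\widetilde\lambda_{k+1}r_k(\Sigmaboldtd)}$ term. This reduces to the elementary inequality $n^{3/4-p}\ll n^{1-p}$ used above, so once $n\geq n_0$ for a suitable $n_0=n_0(p,q,r)$, the leading terms control, the ratios have matching order, and the absorbed constants determine $(L_1,U_1,L_2,U_2)$. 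I do not expect any genuine difficulty beyond this routine verification; the nontrivial content has already been carried by Theorem~\ref{theorem:survivalbounds}.
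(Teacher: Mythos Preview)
Your proposal is correct and follows essentially the same approach as the paper: set $k=s$, verify the effective-rank preconditions via Equations~\eqref{eq:rk} and~\eqref{eq:rktd}, substitute the bi-level parameters into Theorem~\ref{theorem:survivalbounds} to see that the dominant scale is $\lambda_t\cdot n/(\widetilde{\lambda}_{k+1}r_k(\Sigmaboldtd))\asymp n^{(1-r)-q}$, and then split into the two cases $q\lessgtr(1-r)$. Your handling of the lower-order $n^{3/4}$ error terms and the union-bound arithmetic for the probability statement is exactly the routine verification the paper leaves implicit.
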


\begin{proof}
Note that Equations~\eqref{eq:rk} and~\eqref{eq:rktd} imply that the conditions $r_s(\Sigmabold) \geq bn$ and $r_s(\Sigmaboldtd) \geq b_2 n$ are clearly satisfied for large enough $n$.
Thus, we can apply Equation~\eqref{eq:survivallowerbound} of Theorem \ref{theorem:survivalbounds} setting $k = s$ to get
\begin{align*}
\survivaln \geq  \sqrt{\frac{2}{\pi}}(1 - 2\betastar)\frac{\lambda_1\left(\frac{(n-s)}{\widetilde{c\lambda}_{s+1} r_s(\Sigmabold_{-1})} - \frac{c_3 n^{3/4}}{\lambda_{s+1} r_s(\Sigmabold)} \right)}{1 + \lambda_1\left(\frac{cn}{\widetilde{\lambda}_{s+1} r_s(\Sigmabold_{-1})} + \frac{c_4 n^{3/4}}{\lambda_{s+1} r_s(\Sigmabold)} \right)}
\end{align*}
with probability at least $(1 - 5e^{-\sqrt{n}})$ over the training data.
Substituting $s = n^r$ and $a = n^{-q}$, note that
\begin{align*}
   \frac{\lambda_{s+1}r_s(\Sigmabold)}{\lambda_1} = \frac{\widetilde{\lambda}_{s+1}r_s(\Sigmabold_{-1})}{\lambda_1}\asymp \frac{\lambdasmall n^p}{\lambdabig} \asymp \frac{n^{p + r}}{n^{p-q}}\asymp  n^{q+r}.
\end{align*}
Substituting this above yields
\begin{align*}
    \survivaln &\geq \sqrt{\frac{2}{\pi}}(1 - 2\betastar)\left( \frac{\frac{(n-n^r)}{cn^{q + r}} - \frac{c_3 n^{3/4}}{n^{q + r}}}{1 + \frac{cn}{n^{q + r}} +\frac{c_4 n^{3/4}}{n^{q + r}} }\right) \\
    &= \sqrt{\frac{2}{\pi}} (1 - 2\betastar) \left(\frac{\frac{1}{c} \cdot (n^{(1-r) - q} - n^{-q}) - c_3 \cdot n^{(3/4 - r) - q}}{1 + cn^{(1 - r) - q} + c_4 \cdot n^{(3/4 - r) - q}}\right) .
\end{align*}
Thus, there are two cases:
\begin{enumerate}
    \item $0 < q \leq (1-r)$, in which case the terms corresponding to $n^{q - (1-r)}$ dominate, and there exists universal constant $L_1$ such that
    \begin{align*}
        \survivaln \geq \sqrt{\frac{2}{\pi}}(1 - 2\betastar) \left(1 + L_1 n^{q - (1-r)}\right)^{-1} .
    \end{align*}
    \item $q > (1-r)$, in which case the numerator goes to $0$ but the denominator goes to $1$ as $n \to \infty$, and so there exists universal constant $L_2$ such that
    \begin{align*}
        \survivaln \geq \sqrt{\frac{2}{\pi}}(1 - 2\betastar) \cdot L_2 n^{(1-r) - q} .
    \end{align*}
\end{enumerate}
This completes the proof of the lower bound.
An almost identical argument gives the proof of the upper bound, so we omit it here.
\end{proof}
Observe that for $q > (1-r)$, the true signal does not survive at all, i.e.~$\survivalnreal \to 0$ as $n \to \infty$.
Interestingly, for $q \leq (1-r)$, there is also non-trivial attenuation of signal when binary labels are interpolated, i.e.~$\survivalnreal \to \sqrt{\frac{2}{\pi}} \cdot (1 - 2\betastar) < 1$ as $n \to \infty$.
At a high level, this is a consequence of effective misspecification induced by the sign operator on real output.
As mentioned in the discussion in Section~\ref{sec:pathtoanalysis}, this is also spiritually related to the attenuation factor of signal that has been traditionally been observed as a result of $1$-bit quantization applied to a matched filter~\citep{1454543,chang1982presampling}.

As we will see in the following lemma, the corresponding case leads to zero attenuation of signal when real output is interpolated., i.e.~$\survivalnreal \to 1$.

\begin{lemma}[Survival for interpolation of real output]
There exist universal positive constants $(L_1, U_1, L_2, U_2, \widebar{L_1}, \widebar{U_1}, \widebar{L_2}, \widebar{U_2})$ such that for sufficiently large $n$, we have
\begin{align*}
    \survivallowerboundreal \leq \survivalnreal \leq \survivalupperboundreal,
\end{align*}
with probability at least $(1 - 8e^{-\sqrt{n}})$ over the randomness in the training data $\{X_i,Y_i\}_{i=1}^n$, where we denote
\begin{subequations}
\begin{align}
        \survivallowerboundreal &:= \begin{cases} \left(1 + L_1 n^{q - (1-r)}\right)^{-1}, & q < (1-r)\\
    L_2n^{(1-r) - q}, & q > (1-r)
    \end{cases}, \label{eq:slsl2-real}\\
    \survivalupperboundreal &:= \begin{cases}\left(1 + U_1 n^{q - (1-r)}\right)^{-1}, & q < (1-r)\\
    U_2n^{(1-r) - q}, & q > (1-r)
    \end{cases} \label{eq:susl2-real}.
\end{align}
\end{subequations}
Equivalently, we can write
\begin{align*}
   \oneminussurvivallowerboundreal \leq 1 - \survivalnreal \leq  \oneminussurvivalupperboundreal,
\end{align*}
where we denote
\begin{subequations}
\begin{align}
    \oneminussurvivallowerboundreal &:=\begin{cases} \widebar{L_1}n^{q-(1-r)}, & q < (1-r) \\
    \left(1 + \widebar{L_2} n^{ (1-r)-q}\right)^{-1}, & q > (1-r)
    \end{cases} \label{eq:slbarsl2real}, \\
        \oneminussurvivalupperboundreal &:= \begin{cases} \widebar{U_1}n^{q-(1-r)}, & q < (1-r) \\
     \left(1 + \widebar{U_2} n^{ (1-r)-q}\right)^{-1}, & q > (1-r)
    \end{cases} \label{eq:subarsl2real}
\end{align}
\end{subequations}
\label{lemma:survivalspikedl2real}
\end{lemma}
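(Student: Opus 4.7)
The plan is to mirror the proof of Lemma~\ref{lemma:survivalspikedl2} almost verbatim, but starting from the real-output survival bounds (Equations~\eqref{eq:survivallowerboundreal} and~\eqref{eq:survivalupperboundreal}) of Theorem~\ref{theorem:survivalbounds} rather than the binary-label versions. First I would verify applicability of that theorem with $t=1$ and $k=s$: Equations~\eqref{eq:rk} and~\eqref{eq:rktd} already established that $r_s(\Sigmabold) \asymp n^p \gg n$ and $r_s(\Sigmaboldtd) \asymp n^p \gg n$, so the required conditions $r_s(\Sigmabold)\geq bn$ and $r_s(\Sigmaboldtd)\geq b_2 n$ hold once $n$ is large enough.

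Next I would plug the bi-level spectrum into both bounds. As computed in the proof of Lemma~\ref{lemma:survivalspikedl2}, the crucial ratios are
\begin{align*}
\frac{\lambdatd_{s+1} r_s(\Sigmaboldtd)}{\lambda_1} \;\asymp\; \frac{\lambda_{s+1} r_s(\Sigmabold)}{\lambda_1} \;\asymp\; n^{q+r},
\end{align*}
so the bracket appearing in the lower bound of Theorem~\ref{theorem:survivalbounds} becomes
\begin{align*}
\lambdatk\!\left(\frac{n-s}{c\,\lambdatd_{s+1}r_s(\Sigmaboldtd)} - \frac{c_4 n^{3/4}}{\lambda_{s+1} r_s(\Sigmabold)}\right) \;=\; \tfrac{1}{c}\bigl(n^{1-r-q} - n^{-q}\bigr) - c_4\,n^{3/4-r-q},
\end{align*}
whose dominant term is $n^{1-r-q}$ for every admissible $(p,q,r)$ (since $1>3/4$ and $1>r$). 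The same substitution controls the upper-bound bracket by $\asymp n^{1-r-q}$, only with a different leading constant. Hence, writing $x$ for this bracket, the real-output bounds reduce to $\survivalnreal \asymp (1+1/x)^{-1}$ with $1/x \asymp n^{q-(1-r)}$.

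The case split then falls out mechanically. When $q<(1-r)$, the exponent $q-(1-r)$ is negative, so $1/x\to 0$ and $\survivalnreal\to 1$ at rate $(1+L_1 n^{q-(1-r)})^{-1}$. When $q>(1-r)$, $1/x\to\infty$ and $(1+1/x)^{-1}\asymp x\asymp n^{(1-r)-q}$. This gives Equations~\eqref{eq:slsl2-real} and~\eqref{eq:susl2-real}, with the constants $(L_1,U_1,L_2,U_2)$ coming directly from the universal constants in Theorem~\ref{theorem:survivalbounds}. For the complementary form, I would use the identity
\begin{align*}
1-\survivalnreal \;=\; 1-\frac{1}{1+1/x} \;=\; \frac{1/x}{1+1/x},
\end{align*}
which for $q<(1-r)$ is $\asymp 1/x \asymp n^{q-(1-r)}$, giving Equation~\eqref{eq:slbarsl2real} in the first regime; for $q>(1-r)$ it equals $(1+x)^{-1}\asymp (1+\widebar{L_2} n^{(1-r)-q})^{-1}$, giving the second regime. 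The constants $\widebar{L_1},\widebar{U_1},\widebar{L_2},\widebar{U_2}$ are simple algebraic reparameterizations of $L_i,U_i$.

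The probability bookkeeping is the only mildly subtle point: Theorem~\ref{theorem:survivalbounds} guarantees each of the two inequalities with probability at least $(1-2e^{-\sqrt{n}}-2e^{-n/c})$, and $e^{-n/c}\leq e^{-\sqrt n}$ for large $n$, so a union bound over the two events yields the stated $(1-8e^{-\sqrt n})$. There is no real obstacle here; all steps are parallel to Lemma~\ref{lemma:survivalspikedl2} except that the binary-label Hanson--Wright step is replaced by the cleaner quadratic-form concentration for $\ztk^\top\Amkinv\ztk$, which explains why the leading factor $\sqrt{2/\pi}\,(1-2\betastar)$ drops out and is replaced by $1$ in the limiting first regime.
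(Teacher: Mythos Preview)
Your proposal is correct and takes essentially the same approach as the paper: the paper's own proof simply states that the argument follows by substituting the bi-level spectrum into Equations~\eqref{eq:survivallowerboundreal} and~\eqref{eq:survivalupperboundreal} and is ``essentially an identical argument to the proof of Lemma~\ref{lemma:survivalspikedl2},'' which is exactly what you have spelled out in detail. Your probability bookkeeping and the derivation of the complementary $1-\survivalnreal$ form are also sound.
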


\begin{proof}
The proof follows by substituting the spectrum of the bi-level covariance model into the upper and lower bounds of survival from Equations~\eqref{eq:survivalupperboundreal} and \eqref{eq:survivallowerboundreal}.
This is essentially an identical argument to the proof of Lemma~\ref{lemma:survivalspikedl2}, and so we omit it here.
\end{proof}
Observe that for the case of interpolation of real output, we have additionally computed bounds on the quantity $(1- \survivalnreal)$, which will subsequently be useful for the computation of bounds on contamination. We have not stated this here to avoid complicating the proof, but it is interesting to note that if the real-valued output had a non-zero level of independent additive zero-mean Gaussian noise, then this would not matter for the scaling of the survival results asymptotically --- this is a consequence of the range of parameter choices that we have chosen for our bi-level ensemble. Such label noise would effectively be completely absorbed by the excess features.

We now state an upper bound on contamination for the bi-level ensemble.
\begin{lemma}[Contamination for interpolation of binary labels]
There are universal positive constants $(U_3,U_4$ and $U_5)$ such that for large enough $n$, we have $\contaminationn \leq \contaminationupperbound$  with probability at least $\left(1 - \frac{4}{n}\right)$ over the randomness in the training data $\{X_i,Y_i\}_{i=1}^n$, where we denote
\begin{align}
    \contaminationupperbound =  \begin{cases}U_3 n^{\frac{- \min\{(p - 1), (1 - r)\}}{2}} \cdot \sqrt{\ln n} \text{ if } q < (1 - r) \\
    U_4 n^{\frac{- \min\{(p - 1), (2q + r - 1)\}}{2}} \cdot \sqrt{\ln n} \text{ if } q > (1 - r)
    \end{cases}
    \label{eq:cusl2}
\end{align}
\label{lemma:contaminationupperboundspikedl2}
\end{lemma}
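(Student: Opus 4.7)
The plan is to specialize the universal contamination upper bound in Equation~\eqref{eq:contaminatonupperboundsu} to the bi-level spectrum, choosing the free indices $k$ and $\ell$ differently in the two regimes. First, because Equation~\eqref{eq:contaminatonupperboundsu} multiplies the bracket by the random factor $(1 + \survival(\truek)^2)$, I control it by invoking Lemma~\ref{lemma:survivalspikedl2}: in either regime the survival is upper bounded with high probability by $\sqrt{2/\pi}(1 - 2\betastar) \leq 1$, so $(1 + \survival(\truek)^2) \leq 2$ contributes only a constant. Both events will be combined by a union bound at the end.

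Next, I substitute the leave-one-out bi-level spectrum (with $\truek = 1$, so $\lambdatd_j \asymp n^{p-q-r}$ for $1 \leq j \leq s-1$ and $\lambdatd_j \asymp 1$ for $s \leq j \leq d-1$) into the bracket $\frac{\ell}{n} + n \cdot \frac{\sum_{j > \ell}\lambdatd_j^2}{(\sum_{j > k}\lambdatd_j)^2}$. For the regime $q < (1-r)$, I set $k = \ell = s$: then $\sum_{j > s}\lambdatd_j \asymp n^p$ and $\sum_{j > s}\lambdatd_j^2 \asymp n^p$, so the bracket becomes $n^{r-1} + n^{1-p} \asymp n^{-\min(1-r,\, p-1)}$, matching the first case of \eqref{eq:cusl2} after the square root. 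For the regime $q > (1-r)$, I set $k = \ell = 0$: then $\sum_{j > 0}\lambdatd_j \asymp n^{p-q} + n^p \asymp n^p$ (since $q > 0$), while $\sum_{j > 0}\lambdatd_j^2 \asymp n^{2p-2q-r} + n^p \asymp n^{p + \max(0,\, p - 2q - r)}$; the bracket then reduces to $n^{1-p+\max(0,\, p-2q-r)} = n^{-\min(p-1,\, 2q+r-1)}$, matching the second case.

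The hypotheses of Theorem~\ref{theorem:contaminationbounds} (namely $0 \leq k \leq n/c_5$ and $r_k(\Sigmaboldtd) \geq b_2$) are satisfied in both regimes for sufficiently large $n$: for $k = s = n^r$ with $r < 1$ this follows from Equation~\eqref{eq:rktd}, and for $k = 0$ in the regime $q > (1-r)$ (in particular $q + r > 1$) this follows from Equation~\eqref{eq:rktd2}. Multiplying the bracket by $(2 \ln n)$ and taking square roots yields Equation~\eqref{eq:cusl2}. The overall probability is at least $\left(1 - \tfrac{3}{n} - 10 e^{-\sqrt{n}}\right) \geq \left(1 - \tfrac{4}{n}\right)$ for large $n$ by the union bound, the two failure events being those of Equation~\eqref{eq:contaminatonupperboundsu} and of Lemma~\ref{lemma:survivalspikedl2}, respectively.

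The main obstacle is not technical but combinatorial: one must identify the right $(k, \ell)$ in each regime and then check that the three exponents competing inside the bracket (namely $n^{r-1}$, $n^{1-p}$, and $n^{1-2q-r}$) collapse to the stated minimum-exponent expression. In particular, at $q = 1-r$ one transitions from the truncation $k=s$ (which controls the bracket via the abundant small eigenvalues) to the no-truncation choice $k = 0$ (which exploits the dominant contribution of the remaining large directions to $\sum \lambdatd_j^2$). Otherwise, the argument is a direct substitution into the already-established non-asymptotic bound, and the $(1 + \survival(\truek)^2)$ complication is absorbed harmlessly via the survival upper bound.
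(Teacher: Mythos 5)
Your proposal is correct and follows essentially the same route as the paper's proof: the paper also specializes Equation~\eqref{eq:contaminatonupperboundsu} with $k=\ell=s$ when $q<(1-r)$ and $k=\ell=0$ when $q>(1-r)$, evaluates the same bracket term under the bi-level spectrum, bounds $(1+\survival(\truek)^2)$ by a constant via the survival upper bound of Lemma~\ref{lemma:survivalspikedl2}, and concludes with the same union bound to get probability at least $1-\tfrac{4}{n}$.
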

\begin{proof}
We start by proving the statement for the case $q \leq (1 - r)$.
From Equations \eqref{eq:rk} and \eqref{eq:rktd}, we showed that for large enough $n$, we have $r_{s}(\Sigmabold_{-1}) \asymp n^p \gg n$. Substituting $k = l = s$ in Equation~\eqref{eq:contaminatonupperboundsu} from Theorem \ref{theorem:contaminationbounds}, we have
\begin{align}
        \contaminationn \leq c_7 \cdot \sqrt{\left(\frac{s}{n} + n \cdot \frac{\sum_{j> s} \lambdatd_j^2}{\left(\sum_{j>s}\lambdatd_j \right)^2} \right) \cdot \ln n \cdot (1 + \survivaln^2)}
        \label{eq:contaminationupperboundspikedl2}
\end{align}
almost surely for every realization of $\survivalb$ with probability at least $\left(1 - \frac{3}{n}\right)$ over the training data.
We first evaluate the term
\begin{align*}
    T_1  &:= \frac{s}{n} + n \cdot \frac{\sum_{j>s} \lambdatd_j^2}{\left(\sum_{j>s}\lambdatd_j\right)^2} .
\end{align*}
First, note that
\begin{align*}
    \sum_{j>s} \lambdatd_j^2 &= (d-s-1)\left(\lambdasmall\right)^2 \asymp d = n^p \text{ and } \\
    \left(\sum_{j>s}\lambdatd_j\right)^2 &= \left((d-s-1)\lambdasmall\right)^2 \asymp n^{2p}.
\end{align*}
Using this, we obtain
\begin{align}
    T_1 \asymp n^{(r - 1)} + n^{(1 - p)} \asymp n^{-\min\{(p - 1), (1 - r)\}} . \label{eq:T1}
\end{align}
Now, from Equation~\eqref{eq:susl2}, we get (for large enough $n$)
\begin{align}
    \survivaln \leq \mathds{1}_{q\leq (1-r)} \sqrt{\frac{2}{\pi}} \left(1 + U_1 n^{q - (1-r)}\right)^{-1} + \mathds{1}_{q > (1-r)}
    U_2n^{(1-r) - q} \leq \max\left\{U_2,\sqrt{\frac{2}{\pi}}\right\} \label{eq:subl}
\end{align}
with probability at least $(1 - 4e^{-p_1n})$ over the training data.
Substituting Equations~\eqref{eq:T1} and \eqref{eq:subl} in Equation~\eqref{eq:contaminationupperboundspikedl2}, we have
\begin{align*}
    \contaminationn \leq U_3 n^{-\frac{\min\{(p - 1), (1 - r)\}}{2}} \cdot \sqrt{\ln n}
\end{align*}
with probability at least $\left(1 - \frac{4}{n}\right)$ for appropriately defined positive constant $U_3$. This completes the proof for the first case.

\noindent Now, we move on to the second case, i.e.~$q > ( 1- r)$.
From Equations~\eqref{eq:rk2} and~\eqref{eq:rktd2}, we saw that in this case, we have $r_0(\Sigmabold_{-1}) \asymp n^{q + r} \gg n$.
Substituting $k = l = 0$ in Equation~\eqref{eq:contaminatonupperboundsu} from Theorem \ref{theorem:contaminationbounds}, we have
\begin{align*}
        \contaminationn \leq c_7 \cdot \sqrt{\left(n \cdot \frac{\sum_{j> 0} \lambdatd_j^2}{\left(\sum_{j>0}\lambdatd_j \right)^2} \right) \cdot \ln n \cdot (1 + \survivaln^2)}
\end{align*}
with probability at least $\left(1 - \frac{3}{n}\right)$ over the training data. As before, we evaluate the term
\begin{align*}
    T_1 := n \cdot \frac{\sum_{j > 0} \lambdatd_j^2}{(\sum_{j > 0} \lambdatd_j)^2}
\end{align*}
By a calculation very similar to the one in Appendix~\ref{appendix:svm_bilevel}, we get
\begin{align*}
    \sum_{j > 0} \lambdatd_j^2 = (s-1) \cdot \frac{a^2 d^2}{s^2} + (d - s - 1) \cdot \frac{(1 - a)^2 d^2}{(d-s)^2} \asymp n^{2p + 2q - r} + n^p .
\end{align*}
Moreover, we get $(\sum_{j > 0} \lambdatd_j)^2 = (d - \frac{ad}{s})^2 = (n^p - n^{p - (r + q)})^2 \asymp n^{2p}$ since $(q + r) > 0$.
Therefore, we get
\begin{align*}
    T_1 \asymp n^{(1-p)} + n^{(1 + 2q - r)} \asymp n^{-\min\{(p - 1), (2q + r - 1)\}} .
\end{align*}
The other steps proceed as for the first case, and substituting this expression for the term $T_1$ completes the proof for the second case.
\end{proof}
For some parameterizations of the bi-level ensemble, we can get a slightly more sophisticated upper bound on contamination when the labels interpolated are real, as detailed in the following lemma.
\begin{lemma}[Contamination for interpolation of real output] For universal positive constants $(U_3,U_4,U_5)$ and large enough $n$, we have $\contaminationnreal \leq \contaminationupperboundreal$ with probability at least $\left(1 - \frac{3}{n}\right)$ over the randomness in the training data $\{X_i,Y_i\}_{i=1}^n$, where we denote
\begin{align} \contaminationupperboundreal =   \begin{cases}
    U_3 n^{q - (1 - r) - \frac{\min\{(p - 1), (1 - r)\}}{2}} \cdot \sqrt{\ln n}, & q < (1-r),\\
    U_4 n^{-\frac{\min\{(p - 1), (2q + r - 1)\}}{2}} \cdot \sqrt{\ln n}, & q > (1-r)\end{cases}.
    \label{eq:cusl2real}
\end{align}
\label{lemma:contaminationupperboundspikedl2real}
\end{lemma}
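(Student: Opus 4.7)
The plan is to follow essentially the same template as Lemma~\ref{lemma:contaminationupperboundspikedl2}, substituting Equation~\eqref{eq:contaminatonupperboundsureal} (the real-output upper bound from Theorem~\ref{theorem:contaminationbounds}) in place of Equation~\eqref{eq:contaminatonupperboundsu}. The crucial change is that the prefactor $\sqrt{1+\survivaln^2}$ is replaced by $\abs{1-\survivalnreal}$, which is asymptotically $\Theta(1)$ in the binary case but goes to zero in the regime $q<(1-r)$ for real-output interpolation. This is what produces the sharper $n^{q-(1-r)}$ factor in the first branch of Equation~\eqref{eq:cusl2real}.

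First, I would split into the two regimes. In the regime $q<(1-r)$, set $k=l=s$, which is legal since by Equations~\eqref{eq:rk} and~\eqref{eq:rktd} we have $r_s(\Sigmabold),r_s(\Sigmaboldtd)\asymp n^p\gg n$ for large $n$, so the hypotheses of Theorem~\ref{theorem:contaminationbounds} are met (along with $s=n^r\leq n/c_5$ for large $n$ since $r<1$). The eigenvalue quantity inside the square root,
\begin{align*}
T_1 := \frac{s}{n} + n\cdot\frac{\sum_{j>s}\lambdatd_j^2}{\left(\sum_{j>s}\lambdatd_j\right)^2},
\end{align*}
is the identical quantity computed in the proof of Lemma~\ref{lemma:contaminationupperboundspikedl2}, and exactly the same arithmetic (using the bi-level tail eigenvalues $\lambdasmall\asymp 1$ and $d-s\asymp n^p$) yields $T_1\asymp n^{-\min\{(p-1),(1-r)\}}$. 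Then I would invoke the upper bound on $1-\survivalnreal$ from Equation~\eqref{eq:subarsl2real} of Lemma~\ref{lemma:survivalspikedl2real}, which gives $\abs{1-\survivalnreal}\leq \widebar{U_1}\,n^{q-(1-r)}$ for $q<(1-r)$ with probability at least $1-8e^{-\sqrt{n}}$. Plugging both into Equation~\eqref{eq:contaminatonupperboundsureal} and taking square roots produces the first branch of Equation~\eqref{eq:cusl2real}.

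In the regime $q>(1-r)$, I would instead set $k=l=0$, which requires $r_0(\Sigmaboldtd)\asymp n^{q+r}\gg n$; this is exactly where the condition $q+r>1$ from Equations~\eqref{eq:rk2} and~\eqref{eq:rktd2} is used. The eigenvalue computation for
\begin{align*}
T_1 := n\cdot\frac{\sum_{j>0}\lambdatd_j^2}{\left(\sum_{j>0}\lambdatd_j\right)^2}
\end{align*}
is again identical to the binary case: $\sum_{j>0}\lambdatd_j^2\asymp n^{2p+2q-r}+n^p$ and $(\sum_{j>0}\lambdatd_j)^2\asymp n^{2p}$, giving $T_1\asymp n^{-\min\{(p-1),(2q+r-1)\}}$. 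Here the sharper factor $\abs{1-\survivalnreal}$ is of order $\Theta(1)$ (bounded above by $1$ via Equation~\eqref{eq:subarsl2real}), so it only contributes a constant, and the resulting bound matches the second branch of Equation~\eqref{eq:cusl2real}. A final union bound over the failure events from Theorem~\ref{theorem:contaminationbounds} and Lemma~\ref{lemma:survivalspikedl2real}, absorbing $O(e^{-\sqrt n})$ and $O(e^{-n/c_8})$ terms into $O(1/n)$ for large enough $n$, gives the claimed probability at least $1-3/n$.

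The proof is essentially mechanical once the binary-labels template is in hand; the one place that requires care is the first regime, where the cancellation between the $n^{q-(1-r)}$ decay of $\abs{1-\survivalnreal}$ and the $n^{-\min\{(p-1),(1-r)\}/2}$ from $\sqrt{T_1}$ must be tracked cleanly. I expect no substantive obstacle beyond bookkeeping, since the eigenvalue computations transfer verbatim from Lemma~\ref{lemma:contaminationupperboundspikedl2} and the survival bound is already in hand.
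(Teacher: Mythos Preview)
Your proposal is correct and matches the paper's approach essentially verbatim: bound the eigenvalue term $T_1$ exactly as in Lemma~\ref{lemma:contaminationupperboundspikedl2} (with $k=l=s$ for $q<1-r$ and $k=l=0$ for $q>1-r$), then replace the prefactor by the bound on $\abs{1-\survivalnreal}$ from Equation~\eqref{eq:subarsl2real}. The only cosmetic difference is that in the second regime the paper invokes the deterministic fact $\survivalnreal\leq 1$ (which follows directly from Equation~\eqref{eq:suformreal}) rather than the high-probability bound, but this changes nothing in the final accounting.
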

\begin{proof}
We follow an identical approach as in the proof of
 Lemma~\ref{lemma:contaminationupperboundspikedl2} to bound the term $T_1$.
Substituting this along with the upper bound on the quantity $(1 - \survivalnreal)$ from Equation~\eqref{eq:subarsl2real} (Lemma~\ref{lemma:survivalspikedl2real})  in Equation~\eqref{eq:contaminatonupperboundsureal}, and using the fact that $\survivalnreal \leq 1$, Equation~\eqref{eq:cusl2real} follows for appropriately defined positive constants $(U_3,U_4)$.  This completes the proof.
\end{proof}
Finally, we state and prove our lower bounds on contamination together for interpolation of binary labels as well as real output.
\begin{lemma}[Lower bounds on contamination]
There are universal positive constants $(L_3,L_4, p_2)$ such that for large enough $n$, we have   $\contaminationn, \contaminationnreal \geq \mathsf{CN}^L(n)$ with probability at least $(1 - 2e^{-p_2n})$ over the randomness in the training data $\{X_i,Y_i\}_{i=1}^n$, where we define
\begin{align} \mathsf{CN}^L(n) :=  \begin{cases}
       L_3 n^{q - (1 - r) -\frac{p-1}{2}}, & q < (1-r)\\
       L_4 \npm{(p-1)}{2}, & q > (1-r)
       \end{cases}. \label{eq:clsl2}
\end{align}
\label{lemma:contaminationlowerboundspikedl2}
\end{lemma}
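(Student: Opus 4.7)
The plan is to instantiate the lower-bound half of Theorem~\ref{theorem:contaminationbounds} on the bi-level ensemble with the single truncation index $k = s = n^r$, which turns out to work uniformly in both regimes, and then match the resulting rates to the two cases stated. Recall from Theorem~\ref{theorem:contaminationbounds} that, under the conditions $k\ge 0$ and $r_k(\Sigmaboldtd^2)\geq b_4 n$,
\[
\contaminationn \;\geq\; \sqrt{n}\cdot\frac{\sqrt{r_k(\Sigmaboldtd^2)\,\lambdatd_{k+1}^2}}{c_9\bigl(\sum_j\lambda_j+\lambda_1 n\bigr)}, \qquad \contaminationnreal \;\geq\; \sqrt{n(1-\delta)}\cdot\frac{\sqrt{r_k(\Sigmaboldtd^2)\,\lambdatd_{k+1}^2}}{c_9\bigl(\sum_j\lambda_j+\lambda_1 n\bigr)},
\]
and note the crucial simplification $r_k(\Sigmaboldtd^2)\,\lambdatd_{k+1}^2=\sum_{\ell>k}\lambdatd_\ell^2$.

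First I would take $k=s$ and evaluate the two pieces for the bi-level spectrum. For the numerator, since $\lambdatd_\ell = (1-a)d/(d-s) \asymp 1$ for $\ell>s$, we have $\sum_{\ell>s}\lambdatd_\ell^2 \asymp (d-s) \asymp n^p$, giving $\sqrt{\sum_{\ell>s}\lambdatd_\ell^2}\asymp n^{p/2}$. Simultaneously, $r_s(\Sigmaboldtd^2)\asymp n^p\gg n$ for large $n$, so the hypothesis of Lemma~\ref{lemma:eigClowerbound} (and hence of the lower-bound portion of Theorem~\ref{theorem:contaminationbounds}) is satisfied. For the denominator, $\sum_j\lambda_j = d = n^p$ and $\lambda_1 n = (ad/s)\,n = n^{p-q-r+1}$, so
\[
\sum_j\lambda_j+\lambda_1 n \;\asymp\; \begin{cases} n^{p-q-r+1}, & q<(1-r),\\ n^p, & q>(1-r). \end{cases}
\]

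Plugging these into the displayed bound and carrying the $\sqrt{n}$ factor through yields, for the binary case,
\[
\contaminationn \;\gtrsim\; \frac{\sqrt{n}\cdot n^{p/2}}{n^{p-q-r+1}} \;=\; n^{q-(1-r)-(p-1)/2} \quad\text{when } q<(1-r),
\]
and $\contaminationn \gtrsim \sqrt{n}\cdot n^{p/2}/n^p = n^{-(p-1)/2}$ when $q>(1-r)$, matching the two pieces of $\mathsf{CN}^L(n)$ in Equation~\eqref{eq:clsl2} up to absorbed constants $L_3,L_4$. The argument for $\contaminationnreal$ is identical except for the harmless factor $\sqrt{1-\delta}$; fixing any $\delta\in(0,1)$ folds into the constants.

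For the probability bookkeeping, the binary lower bound in Theorem~\ref{theorem:contaminationbounds} holds with probability $\ge 1-2e^{-n/c_8}$ and the real one with probability $\ge 1-2e^{-n/c_8}-e^{-n\delta^2}$; taking $\delta$ a suitable constant and a union bound gives a joint bound of the form $1-2e^{-p_2 n}$ after adjusting $p_2$, as claimed. The only mild subtlety I anticipate is in the boundary/degenerate case $r=0$ (so $s=1$): after removing the one preferred direction one must verify that $r_s(\Sigmaboldtd^2)\asymp n^p$ still holds with $s=1$, but since the remaining $d-1$ eigenvalues are all $\Theta(1)$ this is immediate. No other step requires work beyond arithmetic of exponents, so the entire proof is essentially a substitution once $k=s$ has been identified as the right choice in both regimes.
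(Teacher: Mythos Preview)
Your proposal is correct and follows essentially the same route as the paper: both take $k=s$, verify the effective-rank hypothesis $r_s(\Sigmaboldtd^2)\asymp n^p\gg n$, and substitute the bi-level parameters into the lower bound of Theorem~\ref{theorem:contaminationbounds} to obtain the two rates. Your treatment is slightly more complete in that you explicitly handle the real-output case with the $\sqrt{1-\delta}$ factor and the probability union bound, and you flag the $r=0$ edge case, whereas the paper only writes out the binary case and leaves the rest implicit.
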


\begin{proof}
Using Equation \eqref{eq:rktd} we have, for large enough $n$, $r_{s}\left(\Sigmabold_{-1}^2\right)  \asymp n^p \gg n$. Taking $k = s$  in Equation \eqref{eq:contaminatorlowerbound} from  Theorem \ref{theorem:contaminationbounds}, for universal constants $c_8, c_9$, with probability at least $(1 - 2e^{-\frac{n}{c_8}})$, we have
\begin{align*}
       \contaminationn & \geq  \sqrt{n}.\frac{\sqrt{r_{s}\left(\Sigmabold_{-1}^2\right) \lambdatd_{s+1}^2 }}{c_9\left(\sum\limits_{j=1}^d \lambda_j + \lambda_1 n\right)} \\
       &\asymp \np{1}{2} \cdot \frac{\sqrt{n^p \left(\lambdasmall \right)^2}}{d + n\lambdabig} \\
       &\asymp  \frac{n^{-\frac{(p-1)}{2}}}{1 + n^{(1-r) - q}},\\
       &\asymp \begin{cases}
       n^{q - (1-r) - \frac{(p-1)}{2}}, & q < (1-r)\\
        \npm{(p-1)}{2}, & q > (1-r)
       \end{cases}.
\end{align*}
Thus Equation~\eqref{eq:cusl2} follows by choosing appropriate constants $p_2, L_3$ and $L_4$, completing the proof.

\end{proof}
Comparing the upper bound (Equation~\eqref{eq:cusl2real}) and lower bound (Equation~\eqref{eq:clsl2}) for the case of interpolating real output, we observe that these bounds would be matching up to constant factors \textit{iff} $(p-1) \leq (1 - r)$.
In addition to the above condition, the upper bound for interpolation of binary labels (Equation~\eqref{eq:contaminationupperboundspikedl2}) will match the lower bound \textit{iff} $q > (1-r)$.

Finally, we compute bounds on the ratio of survival to contamination, \newline $\survivaln/\contaminationn$, for the interpolation of binary labels.
A directly substitution of the upper and lower bounds for $\survivaln$ and $\contaminationn$ from  Equations \eqref{eq:slsl2}, \eqref{eq:susl2} in Lemma \ref{lemma:survivalspikedl2}, Equations \eqref{eq:cusl2} in  Lemma \ref{lemma:contaminationupperboundspikedl2} and Equation  \eqref{eq:clsl2} in Lemma \ref{lemma:contaminationlowerboundspikedl2}, gives us (for large enough $n$)
\begin{align}
    \SUClower \leq \frac{\survivaln}{\contaminationn} \leq \SUCupper, \label{eq:snrbounds}
\end{align}
with probability at least $\left(1 - \frac{16}{n}\right)$ over the training data, where we denote
\begin{subequations}
\begin{align}
    \SUClower &:= \begin{cases}
    L_5 \cdot n^{\frac{\min\{(p-1),(1-r)\}}{2}} \cdot (\ln n)^{-\frac{1}{2}}, & 0 < q < (1-r)\\ 
    L_6 \cdot n^{\frac{\min\{(p-1),(2q + r - 1)\}}{2} + (1-r) - q}\cdot (\ln n)^{-\frac{1}{2}}, & q > (1-r)  \end{cases}.\label{eq:suclower}\\
    \SUCupper &= U_5 \cdot n^{\frac{p-1}{2} +(1 - r) - q}.\label{eq:sucupper}
\end{align}
\end{subequations}

\subsection{Proof of Theorem \ref{theorem:threeregimes} }

We are now ready to complete the proof of Theorem \ref{theorem:threeregimes}.
First we compute a lower bound on regression test loss. From Equations \eqref{eq:reglosssuc}, \eqref{eq:slbarsl2real} and \eqref{eq:clsl2}, we have (for large enough $n$)
\begin{align*}
     \reglossn &= (1 - \survivalnreal)^2 + (\contaminationnreal)^2\\
     &\geq (\oneminussurvivallowerboundreal)^2 + (\contaminationlowerboundreal)^2 \\
     &=\begin{cases}\widebar{L_1}^2 n^{ 2(q-(1-r))} + L_3^2 n^{-2(1-r) - (p-1) + 2q}, & q < (1-r)\\   \left(1 + \widebar{L_2} n^{ (1-r)-q}\right)^{-2} + L_4^2 n^{-(p-1)}, &  q > (1-r) \end{cases}\\
\end{align*}
with probability at least $(1 - 2e^{-\sqrt{n}} - 2e^{-p_2n})$.
Thus, we have
\begin{align*}
    \liminf\limits_{n \rightarrow \infty} \reglossn \geq \begin{cases} 0, & q < (1-r)\\ 1, & q > (1-r) \end{cases}
\end{align*}
with probability equal to $1$.
Next, we compute an upper bound on regression test loss. From Equations \eqref{eq:reglosssuc}, \eqref{eq:subarsl2real} and \eqref{eq:cusl2real}, we have (for large enough $n$)
\begin{align*}
     \reglossn
     &\leq (\oneminussurvivalupperboundreal)^2 + (\contaminationupperboundreal)^2 \\
     &= \begin{cases}\widebar{U_1}^2 n^{2(q-(1-r))} + U_3^2 n^{-2(1-r) - \min\{(p-1),(1-r)\} + 2q}\ln n, & q < (1-r)\\ 
     \left(1 + \widebar{U_2} n^{ (1-r)-q}\right)^{-2} + U_4^2 n^{-(p-1)}\ln n, & q > (1-r) \end{cases}
\end{align*}
with probability at least $\left(1 -  2e^{-\sqrt{n}} - \frac{3}{n}\right)$.
Thus, we have
\begin{align*}
   \operatorname{\lim \sup}_n \reglossn \leq \begin{cases} 0, & q < (1-r)\\ 1, & q > (1-r) \end{cases}
\end{align*}
with probability equal to $1$.
By the sandwich theorem, we get
\begin{align*}
    \lim_{n \rightarrow \infty} \reglossn = \begin{cases} 0, & q < (1-r) \\ 1, & q > (1-r)  \end{cases}
\end{align*}
with probability $1$, completing our characterization of regression.

\noindent We now move on to our final characterization of classification test loss, starting with the upper bound.
By Proposition~\ref{theorem:classloss}, we have
\begin{align*}
   \classlossn &=  \frac{1}{2} - \frac{1}{\pi}\taninv{\frac{\survivaln}{\contaminationn}}.
\end{align*}
From Equation \eqref{eq:snrbounds}, we get
\begin{align*}
      \frac{1}{2} - \frac{1}{\pi}\taninv{\SUCupper} \leq \classlossn \leq \frac{1}{2} - \frac{1}{\pi}\taninv{\SUClower}.
\end{align*}
Taking the limit as $n \to \infty$ in Equation~\eqref{eq:suclower}, we have
\begin{align*}
    {\lim \inf}_{n\rightarrow\infty} \SUClower  = \begin{cases}
    \infty, & q < \frac{\min\{(p-1),(2q + r - 1)\}}{2} + (1-r) \\
    0, & q > \frac{\min\{(p-1),(2q + r - 1)\}}{2} + (1-r)
    \end{cases}.
\end{align*}
with probability $1$.
Thus, we have
\begin{align*}
    \operatorname{\lim \sup}_n \classlossn \leq \begin{cases}
    0, &  q < \frac{\min\{(p-1),(2q + r - 1)\}}{2} + (1-r)\\ 
    \frac{1}{2}, & q >  \frac{\min\{(p-1),(2q + r - 1)\}}{2} + (1-r)
    \end{cases}.
\end{align*}
with probability $1$.
To simplify further, consider the case for which $(2q + r - 1) < (p - 1)$.
Then, the condition becomes $q < q + \frac{(r-1)}{2} + (1-r) = \frac{(1-r)}{2} \implies \frac{(1-r)}{2} > 0$, which is always true under the bi-level ensemble (as $r < 1$).
Thus, we can effectively ignore this argument, and simply write
\begin{align*}
    \operatorname{\lim \sup}_n \classlossn \leq \begin{cases}
    0, &  q < \frac{(p-1)}{2} + (1-r)\\ 
    \frac{1}{2}, & q > \frac{(p-1)}{2} + (1-r)
    \end{cases}.
\end{align*}
On the other hand, we can also compute the limiting upper bound on SNR:
\begin{align*}
    \operatorname{\lim\sup}_n \SUCupper = \begin{cases}
    \infty, & 0 < q < \frac{(p-1)}{2} + (1-r) \\
    0, & q >  \frac{(p-1)}{2} + (1-r) .
    \end{cases}
\end{align*}
and so the classification test loss is \textit{lower bounded} by:
\begin{align*}
    \operatorname{\lim\inf}_n \classlossn \geq \begin{cases}
    0,& 0 < q < \frac{(p-1)}{2} + (1-r) \\
    \frac{1}{2},&q > \frac{(p-1)}{2} + (1-r) .
    \end{cases}
\end{align*}
Putting these together, we get
\begin{align*}
    \lim_{n \to \infty} \classlossn = \begin{cases}
    0,& 0 < q < \frac{(p-1)}{2} + (1-r) \\
    \frac{1}{2},&q > \frac{(p-1)}{2} + (1-r) .
    \end{cases}
\end{align*}
This completes the proof.
\qed

\section{Technical lemmas}\label{appendix:technicallemmas}

\subsection{Proof of Lemma~\ref{lem:opnormconcentration}}\label{appendix:lemmaproof:opnormconcentration}

In this subsection, we prove Lemma~\ref{lem:opnormconcentration}, i.e.~concentration on the operator norm of the random matrix $\E := \A - ||\lambdabold||_1 \mathbf{I}_n$.
Recall that $\A$ is the random Gram matrix as defined in Appendix~\ref{appendix:additionalnotation}.
It is easy to verify that $\mathbb{E}[\A] = ||\lambdabold||_1 \mathbf{I}_n$.
We start by recalling the following lemma by Laurent and Massart~\citep{laurent2000adaptive}.
\begin{lemma} \textbf{~\citep{laurent2000adaptive}}\label{lem:laurentmassart}
For any $t > 0$, and any $\uvec \in \mathbb{R}^n$, we have
\begin{align*}
    \Pr\left[\uvec^\top \E \uvec > \sqrt{2 ||\lambdabold||_2 \cdot t} + 2 ||\lambdabold||_{\infty} \cdot t \right] &\leq e^{-t} \\
    \Pr\left[\uvec^\top \E \uvec < - \sqrt{2 ||\lambdabold||_2 \cdot t}\right] &\leq e^{-t} ,
\end{align*}
where the probability is taken over the randomness in the matrix $\E$.
\end{lemma}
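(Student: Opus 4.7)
The plan is to recognize $\uvec^\top \E \uvec$ as a weighted sum of centered, independent $\chi^2_1$ random variables and apply the standard Chernoff--Cram\'er argument in the Bernstein style --- this is exactly the content of the Laurent--Massart inequality. Assuming $\uvec \in S^{n-1}$ (as is the case when Lemma~\ref{lem:opnormconcentration} invokes it through an $\varepsilon$-net on the sphere), each projection $\uvec^\top \z_j$ is $\mathcal{N}(0,1)$ and the $\z_j$'s are mutually independent, so $W_j := (\uvec^\top \z_j)^2$ are i.i.d.\ $\chi^2_1$ and
\[
\uvec^\top \E \uvec \;=\; \sum_{j=1}^{d} \lambda_j\bigl(W_j - 1\bigr).
\]

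For the upper tail, the first step is bounding the log-MGF. Using the classical identity $\mathbb{E}[e^{s(W_j - 1)}] = (1 - 2s)^{-1/2} e^{-s}$ for $s < 1/2$, combined with the elementary inequality $-\tfrac{1}{2}\log(1 - x) - x/2 \leq \tfrac{(x/2)^2}{1 - x}$ for $0 \leq x < 1$ (verified by comparing Taylor coefficients termwise), independence and summation yield
\[
\log \mathbb{E}\!\left[e^{s\, \uvec^\top \E \uvec}\right] \;\leq\; \frac{s^2 \|\lambdabold\|_2^2}{1 - 2 s \|\lambdabold\|_\infty}, \qquad 0 < s < \frac{1}{2\|\lambdabold\|_\infty}.
\]
Applying Markov's inequality to $e^{s \uvec^\top \E \uvec}$ and carrying out the standard Bernstein optimization in $s$ then produces the upper-tail bound stated in the lemma.

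For the lower tail I would rerun the argument with $s = -u$, $u > 0$. The log-MGF becomes $\sum_j \bigl[u \lambda_j - \tfrac{1}{2}\log(1 + 2u\lambda_j)\bigr]$, and since $1 + 2u\lambda_j \geq 1$ there is no denominator blow-up; the clean sub-Gaussian estimate $\log \mathbb{E}[e^{-u\, \uvec^\top \E \uvec}] \leq u^2 \|\lambdabold\|_2^2$ holds for all $u > 0$. Optimizing yields a one-sided bound with no linear-in-$t$ term, which accounts for the asymmetry between the two tail statements in the lemma.

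The only delicate part is the elementary scalar MGF inequality controlling $-\tfrac{1}{2}\log(1 - x) - x/2$; once that is in hand, the rest is routine Chernoff bookkeeping. In fact, since the lemma is lifted verbatim from Laurent and Massart (1998, Lemma~1), I would not reprove it --- the expected treatment is simply to cite that result (with $\uvec$ a unit vector, so that $(\uvec^\top \z_j)^2$ are genuinely $\chi^2_1$).
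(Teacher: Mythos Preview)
Your proposal is correct and matches the paper's treatment: the paper does not prove this lemma at all but simply cites it as a known result from \citet{laurent2000adaptive}, exactly as you anticipate in your final paragraph. Your observation that the statement implicitly requires $\uvec \in S^{n-1}$ (so that $(\uvec^\top \z_j)^2$ is genuinely $\chi^2_1$) is well taken and consistent with how the paper actually invokes the lemma---only on vectors from an $\epsilon$-net of the unit sphere.
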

We use this lemma together with a discretization and covering argument.
Let $\mathcal{U} := \{\uvec_1, \ldots, \uvec_N\}$ be an $\epsilon$-net for the unit sphere $\mathcal{S}^{n-1}$ in $\mathbb{R}^n$, i.e., we have $\min_{i \in \{1,\ldots,N\}} ||\uvec - \uvec_i||_2 \leq \epsilon \text{ for all } \uvec \in \mathcal{S}^{n-1}$.
It is easy to show (for e.g. according to the covering arguments provided in~\citet[Chapter 4]{wainwright2019high}) that we can pick $\mathcal{U}$ to be an $\epsilon$-net such that $N \leq \left(1 + \frac{2}{\epsilon}\right)^n$.

We set $t := \tau + n \ln \left(1 + \frac{2}{\epsilon}\right)$ for some $\tau > 0$.
Then, by a union bound over the set $\mathcal{U}$, we have
\begin{subequations}
\begin{align}
    \max_{i \in [N]} \uvec_i^\top \E \uvec_i &\leq \sqrt{2 ||\lambdabold||_2 \cdot t} + 2 ||\lambdabold||_{\infty} \cdot t \text{ and } \label{eq:eigmaxepsilonnet}\\
     \min_{i \in [N]} \uvec_i^\top \E \uvec_i &\geq - \sqrt{2 ||\lambdabold||_2 \cdot t} \label{eq:eigminepsilonnet}
\end{align}
\end{subequations}
with probability at least $(1 - 2e^{-\tau})$ over the randomness in the matrix $\E$.
It now remains to remove the discretization in both directions.
Let $\widehat{\uvec} := {\arg \max}_{\uvec \in \mathcal{S}^{n-1}} \uvec^\top \A \uvec = {\arg \max}_{\uvec \in \mathcal{S}^{n-1}} ||\A^{1/2} \uvec||_2$, and let $i_0 := {\arg \min}_{i \in \{1,\ldots,N\}} ||\widehat{\uvec} - \uvec_i||_2$ denote the nearest neighbor of $\widehat{\uvec}$.
Then, we have
\begin{align*}
    ||\A^{1/2} \widehat{\uvec}||_2 &= ||\A^{1/2}\uvec_{i_0} + \A^{1/2}(\widehat{\uvec} - \uvec_{i_0})||_2^2 \\
    &\stackrel{\1}{\leq} ||\A^{1/2}\uvec_{i_0}||_2 + ||\widehat{\uvec} - \uvec_{i_0}||_2 ||\A^{1/2} \widehat{\uvec}||_2 \\
    &\stackrel{\2}{\leq} ||\A^{1/2}\uvec_{i_0}||_2 + \epsilon ||\A^{1/2} \widehat{\uvec}||_2 ,
\end{align*}
where inequality $\1$ is the triangle inequality on the $\ell_2$-norm, and inequality $\2$ follows from the definition of the $\epsilon$-net.
Thus, we get
\begin{align*}
    \max_{\uvec \in \mathcal{S}^{n-1}} \uvec^\top \A \uvec &\leq \frac{1}{(1 - \epsilon)^2} \uvec_{i_0}^\top \A \uvec_{i_0} \\
    &\leq \frac{1}{(1 - \epsilon)^2}\left(||\lambdabold||_1 +\sqrt{2 ||\lambdabold||_2 \cdot t} + 2 ||\lambdabold||_{\infty} \cdot t \right) .
\end{align*}
Noting that $\mu_{\mathsf{max}}(\E) = \mu_{\mathsf{max}}(\A) - ||\lambdabold||_1$ gives us
\begin{align}\label{eq:maxevalupperbound}
    \mu_{\mathsf{max}}(\E) \leq f_1(\lambdabold;\epsilon,\tau) .
\end{align}
On the other side, for any $\uvec \in \mathcal{S}^{n-1}$, let $i^*$ be the index of its nearest neighbor in $\mathcal{U}$.
Then, we have
\begin{align*}
    \uvec^\top \A \uvec &= \uvec_{i^*}^\top \A \uvec_{i^*} + 2\uvec_{i^*}^\top \A (\uvec - \uvec_{i^*}) + (\uvec - \uvec_{i^*})^\top \A (\uvec - \uvec_{i^*}) \\
    &\stackrel{\1}{\geq} \uvec_{i^*}^\top \A \uvec_{i^*} + 2\uvec_{i^*}^\top \A (\uvec - \uvec_{i^*}) \\
    &\stackrel{\2}{\geq} \uvec_{i^*}^\top \A \uvec_{i^*} - 2||\A^{1/2} \uvec_{i^*}||_2 \cdot ||\A^{1/2}(\uvec - \uvec_{i^*})||_2 \\
    &\stackrel{\3}{\geq} \uvec_{i^*}^\top \A \uvec_{i^*} - 2 ||\uvec - \uvec_{i^*}||_2 \cdot ||\A^{1/2} \uvec_{i^*}||_2 \cdot ||\A^{1/2} \widehat{\uvec}||_2 \\
    &\stackrel{\4}{\geq} \uvec_{i^*}^\top \A \uvec_{i^*} - 2 \epsilon \cdot ||\A^{1/2} \uvec_{i^*}||_2 \cdot ||\A^{1/2} \widehat{\uvec}||_2 \\
    &\geq  \uvec_{i^*}^\top \A \uvec_{i^*} - \left(\frac{2\epsilon}{1 - \epsilon}\right) \cdot ||\A^{1/2} \uvec_{i_0}||_2 \cdot ||\A^{1/2} \uvec_{i^*}||_2 \\
    &\geq ||\lambdabold||_1 - \sqrt{2 ||\lambdabold||_2 \cdot t} - \left(\frac{2\epsilon}{1 - \epsilon}\right) \cdot \left(||\lambdabold||_1 + \sqrt{2 ||\lambdabold||_2 \cdot t} + 2||\lambdabold||_{\infty} \cdot t\right)
\end{align*}
where inequalities $\1$ and $\2$ follow from the positive semidefiniteness of $\A$ and the Cauchy-Schwarz inequality respectively, and inequalities $\3$ and $\4$ follow from the definition of the operator norm and the $\epsilon$-net respectively.
The last two inequalities follow since we recall that $||\A^{1/2} \widehat{\uvec}||_2 \leq \frac{1}{(1 - \epsilon)} ||\A^{1/2} \uvec_{i_0}||_2$.
Then, we substitute Equation~\eqref{eq:eigmaxepsilonnet} twice for indices $i^*$ and $i_0$ respectively.
Again, noting that $\mu_{\min}(\E) = \mu_{\min}(\A) - ||\lambdabold||_1$, and substituting $t := \tau + n \ln(1 + \frac{2}{\epsilon})$, gives us
\begin{align}\label{eq:minevallowerbound}
    \mu_{\min}(\E) \geq -f_2(\lambdabold;\epsilon,\tau) .
\end{align}
Finally, using Equations~\eqref{eq:maxevalupperbound} and~\eqref{eq:minevallowerbound}, we have
\begin{align*}
    ||\E||_{\mathsf{op}} = \max\{\mu_{\max}(\E), -\mu_{\min}(\E)\} \leq \max\{f_1(\lambdabold;\epsilon,\tau), f_2(\lambdabold;\epsilon,\tau)\} ,
\end{align*}
completing the proof.
\qed

\subsection{Proof of Lemma~\ref{lem:invwishartconcentration}}\label{appendix:lemmaproof:invwishartconcentration}

In this subsection, we prove Lemma~\ref{lem:invwishartconcentration}, i.e.~concentration on the quantity $\frac{1}{\uvec^\top \Ainv \uvec}$ for the inverse Wishart matrix $\Ainv$.
Because $\A$ is a Wishart matrix, we can use rotational invariance of the distribution of the random variable $\uvec^\top \Ainv \uvec$ for any $\uvec \in \mathcal{S}^{n-1}$.
Thus, it suffices to prove the concentration bound for $\uvec := \evec_n$, i.e.~study the random variable $\Ainv_{n,n} = \evec_n^\top \Ainv \evec_n$.

From elementary properties of the inverse Wishart distribution, we know that the quantity $\frac{1}{\Ainv_{n,n}} \sim \chi^2(d - n + 1)$.
Recall that we denoted $d'(n) := (d - n + 1)$ for shorthand.
Therefore, substituting Lemma~\ref{lem:laurentmassart} (with $\lambdabold := \mathbf{1}$), we get
\begin{align*}
    \Pr\left[\frac{1}{\Ainv_{n,n}} > \sqrt{2d'(n)t} + 2t\right] &\leq e^{-t} \\
    \Pr\left[\frac{1}{\Ainv_{n,n}} < - \sqrt{2d'(n)t}\right] \leq e^{-t} .
\end{align*}
Since $\Ainv_{n,n}$ is identically distributed to $\uvec^\top \Ainv \uvec$ for any $\uvec \in \mathcal{S}^{n-1}$, the above concentration inequalities hold for the random variable $\frac{1}{\uvec^\top \Ainv \uvec}$.
This completes the proof.
\qed

\subsection{Proof of Lemma \ref{lemma:contaminationformula}}
\label{appendix:lemmaproof:contaminationformula}

In this subsection, we prove Lemma~\ref{lemma:contaminationformula}, i.e.~equivalent quadratic form expressions for the contamination factor when binary labels are interpolated.
As argued in Appendix~\ref{appendix:binarylabelsurvival}, for any $j \in \{1,\ldots, d\}$, the coefficient $\alphahat_j$ is given by
\begin{align*}
    \alphahat_j = \e_j \tran \Atrain \Ainv \Ytrain &= \sqrt{\lambda_j} \z_j \tran \Ainv \ytk.
\end{align*}
From the Sherman-Morrison-Woodbury identity, we have
\begin{align*}
    \Ainv &=\Amkinv - \frac{\lambdatk \Amkinv \ztk \ztk \tran \Amkinv}{1 + \lambdatk \ztk \tran \Amkinv \ztk}.
\end{align*}
Using this, we can rewrite $\alphahat_j$ as
\begin{align*}
     \alphahat_j &= \sqrt{\lambda_j} \z_j^\top \left(\Amkinv - \frac{\lambdatk \Amkinv \ztk \ztk \tran \Amkinv}{1 + \lambdatk \ztk \tran \Amkinv \ztk} \right) \ytk \nonumber \\
    &= \sqrt{\lambda_j} \cdot \left(1 - \frac{1}{1 + \lambdatk \ztk \tran \Amkinv \ztk}\right) \cdot \z_j \tran \Amkinv \ytk \nonumber \\
    &= {\sqrt{\lambda_j}} \cdot \z_j \tran \Amkinv \left(\ytk - \survival(t) \ztk\right)
\end{align*}
where the last equality follows from Equation~\eqref{eq:suform}.\\
Using the definition of contamination (Equation~\eqref{eq:contaminationformula}) and the above expressions, we get
\begin{align*}
    \contamination^2(\truek) = \sumjdk \lambda_j \alphahat^2_j
    &= \sumjdk \lambda_j^2 \ytk \tran \Ainv \z_j \z_j\tran \Ainv \ytk \\
    &= \ytk \tran \Ainv \left(\sumjdk \lambda_j^2 \z_j \z_j\tran \right) \Ainv \ytk \\
    &= \ytk \tran \C \ytk.
\end{align*}
Now, we denote $\widetilde{\ytk} := \ytk - \survival(t) \ztk$.
To prove the second form of contamination, we use the following sequence of equalities:
\begin{align*}
    \contamination^2(\truek) = \sumjdk \lambda_j \alphahat^2_j   &= \sumjdk \lambda_j  \left({\sqrt{\lambda_j}} \z_j \tran \Amkinv \widetilde{\ytk}\right)^2\\
    &= \sumjdk \lambda_j^2 \widetilde{\ytk} \tran \Amkinv \z_j \z_j\tran \Amkinv \widetilde{\ytk} \\
    &= \widetilde{\ytk} \tran \Amkinv \left(\sumjdk \lambda_j^2 \z_j \z_j\tran \right) \Amkinv \widetilde{\ytk} \\
    &=  \widetilde{\ytk} \tran \Ctd \widetilde{\ytk}.
\end{align*}
This completes the proof of Lemma~\ref{lemma:contaminationformula}.
\qed

\subsection{Proof of Lemma \ref{lemma:epsilontrace}}
\label{appendix:lemmaproof:epsilontrace}

In this subsection, we prove Lemma~\ref{lemma:epsilontrace}, i.e.~a high-probability upper bound on the quadratic forms $\widetilde{\ytk}^\top \Ctd \widetilde{\ytk}$ and $\ztk^\top \Ctd \ztk$ over only the randomness in $\{\ztk,\ytk\}$.
Recall that we defined the random variables $\{\ztk,\ytk\}$ in Appendix~\ref{appendix:gaussianproofs}.
Note that $\Ctd$ is almost surely positive definite and $\{\ztk,\widetilde{\ytk}\}$ are both pairwise independent of $\Ctd$.
Further, note that
\begin{align*}
    \widetilde{\ytk}^\top \Ctd \widetilde{\ytk} &= (\ytk - \survival(t) \ztk)^\top \Ctd (\ytk - \survival(t) \ztk) \\
    &\leq (\ytk - \survival(t) \ztk)^\top \Ctd (\ytk - \survival(t) \ztk) + (\ytk + \survival(t) \ztk)^\top \Ctd (\ytk + \survival(t) \ztk) \\
    &= 2 \ytk^\top \Ctd \ytk + 2 \survival(t)^2 \ztk^\top \Ctd \ztk .
\end{align*}
From Equation~\eqref{eq:hansonwright2}, we have
\begin{align*}
    \ztk \tran  \Ctd \ztk &\leq \tr(\Ctd) \left(1 + \frac{1}{c}\right) \cdot (\ln n)
\end{align*}
almost surely for every realization of the random matrix $\Ctd$, and with probability at least $\left(1 - \frac{1}{n}\right)$ over the randomness in $\ztk$.
By an identical argument (noting that $y_{t,i}^2 = 1$ almost surely, and that $\EE\left[y_{t,i} y_{t,j}\right] = 0$ for any $i \neq j$), we can show that
\begin{align}\label{eq:zCtdzub}
    \ztk \tran  \Ctd \ztk &\leq \tr(\Ctd) \left(1 + \frac{1}{c}\right) \cdot (\ln n)
\end{align}
Substituting these inequalities in the expression for $\widetilde{\ytk}^\top \Ctd \widetilde{\ytk}$ completes the proof.
\qed

\subsection{Proof of Lemma
\ref{lemma:eigClowerbound}}
\label{appendix:lemmaproofs:eigClowerbound}

In this subsection, we prove Lemma~\ref{lemma:eigClowerbound}, i.e.~a high-probability lower bound on the minimum eigenvalue of the random (almost surely positive semidefinite) matrix $\C$.
Recall that we defined
 \begin{align*}
     \C   &:= \Ainv \left(\sumjdk \lambda_j^2 \z_j \z_j\tran \right) \Ainv,\\
     &= \Ainv \left(\sum_{j=1}^{d-1} \lambdatd_j^2 \z_j \z_j\tran \right) \Ainv .
 \end{align*}
Using the mathematical fact from Appendix~\ref{appendix:mf:mineigproduct}, we have
 \begin{align*}
      \eig_n(\C) \geq (\eig_n(\Ainv))^2 \eig_n\left(\sum_{j=1}^{d-1} \lambdatd_j^2 \z_j \z_j\tran \right) .
 \end{align*}
Now, Equations  \eqref{eq:Amiinv_eigbounds} and  \eqref{eq:Amk_eigbounds_td} from Lemma \ref{lemma:eigboundsAmiinv} can be used to lower bound the terms $(\eig_n(\Ainv))^2$ and $\eig_n\left(\sum_{j=1}^{d-1} \lambdatd_j^2 \z_j \z_j\tran \right)$ respectively.
Substituting these lower bounds into the above bound completes the proof.
\qed

\subsection{Proof of Lemma
\ref{lemma:contaminationformulareal}}
\label{appendix:lemmaproofs:contaminationformulareal}

In this subsection, we prove Lemma~\ref{lemma:contaminationformulareal}, i.e.~equivalent quadratic form expressions for the contamination factor when real output is interpolated.
This proof closely mirrors the proof of Lemma~\ref{lemma:contaminationformula}.

\noindent Let $\alphahat_j$ denote the $j^{\text{th}}$ component of $\alphahatbold_{2, \mathsf{real}}$.
As argued in Appendix~\ref{appendix:reallabelsurvival}, for any $j \in \{1,\ldots, d\}$, the coefficient $\alphahat_j$ is given by
\begin{align}
    \alphahat_j = \e_j \tran \Atrain \Ainv \Ztrain &= \sqrt{\lambda_j} \z_j \tran \Ainv \ztk. \label{eq:alphahat1real}
\end{align}
By the Sherman-Morrison-Woodbury Identity, we have
\begin{align*}
    \Ainv &=\Amkinv - \frac{\lambdatk \Amkinv \ztk \ztk \tran \Amkinv}{1 + \lambdatk \ztk \tran \Amkinv \ztk}.
\end{align*}
Using this, we can rewrite $\alphahat_j$ as
\begin{align}
     \alphahat_j &= \sqrt{\lambda_j} \left(1 - \frac{\lambdatk \ztk \tran \Amkinv \ztk}{1 + \lambdatk \ztk \tran \Amkinv \ztk} \right) \z_j \tran \Amkinv \ztk \nonumber \\ 
    &= {\sqrt{\lambda_j}} (1 - \survivalr(\truek)) \z_j \tran \Amkinv \ztk, \label{eq:alphahat2real}
\end{align}
where the last equality follows from Equation~\eqref{eq:suformreal}.\\
Finally, using the definition of contamination (Equation \eqref{eq:contaminationformula}) together with Equation~\eqref{eq:alphahat1real} gives us
\begin{align*}
    \contaminationr^2(\truek) = \sumjdk \lambda_j \alphahat^2_j
    &= \sumjdk \lambda_j^2 \ztk \tran \Ainv \z_j \z_j\tran \Ainv \ztk \\
    &= \ztk \tran \Ainv \left(\sumjdk \lambda_j^2 \z_j \z_j\tran \right) \Ainv \ztk \\
    &= \ztk \tran \C \ztk.
\end{align*}
Similarly, applying Equation~\eqref{eq:alphahat2real} gives us
\begin{align*}
    \contaminationr^2(\truek) = \sumjdk \lambda_j \alphahat^2_j   &= \sumjdk \lambda_j  \left({\sqrt{\lambda_j}}(1 - \survivalr(\truek)) \z_j \tran \Amkinv \ztk\right)^2\\
    &= (1 - \survivalr(\truek))^2 \sumjdk \lambda_j^2 \ztk \tran \Amkinv \z_j \z_j\tran \Amkinv \ztk \\
    &=  (1 - \survivalr(\truek))^2 \ztk \tran \Amkinv \left(\sumjdk \lambda_j^2 \z_j \z_j\tran \right) \Amkinv \ztk \\
    &=  (1 - \survivalr(\truek))^2 \ztk \tran \Ctd \ztk.
\end{align*}
This completes the proof.
\qed

\section{Mathematical Facts}
\label{appendix:mathfacts}
\subsection{Upper bound on maximum eigenvalue of product of positive definite matrices}
\label{appendix:mf:maxeigproduct}

Let $\A, \B \in \real{n \times n}$ be symmetric positive definite matrices and let $\C = \A\B$. It is a well known fact that for positive definite matrix $\Mbold$,  $\eig_1(\Mbold) = \norm{\M}_2$, i.e the largest eigenvalue is the operator norm.
Using this,
\begin{align*}
    \eig_1(\C) = \norm{\C}_2 = \norm{\A \B}_2 \leq \norm{\A}_2 \norm{\B}_2 = \eig_1(\A) \eig_1(\B),
\end{align*}
where the inequality follows from the sub-multiplicativity of operator norm.

\subsection{Lower bound on minimum eigenvalue of product of positive definite matrices}

Let $\A, \B \in \real{n \times n}$ be symmetric positive definite matrices and let $\C = \A\B$. Note that since inverses exist for positive definite matrices we can write,
\begin{align*}
    \eig_n(\C) = \frac{1}{\eig_1(\C^{-1})} \geq \frac{1}{\eig_1(\A^{-1}) \eig_1(\B^{-1})} = \eig_n(\A) \eig_n(\B),
\end{align*}
where the inequality follows by applying the upper bound for eigenvalue of product of two positive definite matrices from Appendix \ref{appendix:mf:maxeigproduct}.

\label{appendix:mf:mineigproduct}

\section{Normalized margin calculations}
\label{appendix:spectral_margin}

In this section, we verify that the statement of Equation~\eqref{eq:spectral_gen_bound} exactly matches with the statement in~\citet[Theorem 21]{bartlett2002rademacher}, using the notation from that paper.
Observe that the first and third terms in the generalization bound exactly match.
We only need to verify the second term.
Note that the linear kernel is precisely
\begin{align*}
    k(X,X') := \avec(X)^\top \avec(X') .
\end{align*}
Therefore, we get
\begin{align*}
    \sqrt{\sum_{i=1}^n k(X_i,X_i)} &= \sqrt{\sum_{i=1}^n ||\avec(X_i)||_2^2} \\
    &= ||\Atrain||_{\mathsf{F}} .
\end{align*}
Similarly, using the kernel trick (see the discussion just below Theorem $21$ in the paper), we can verify that the term $B$ is an upper bound on the quantity $||\alphahatbold||_2$.
Substituting these equivalences into the original statement completes the verification.

\vskip 0.2in
\bibliography{reference}

\end{document}